\newcolumntype{P}[1]{>{\centering\arraybackslash}p{#1}}
\newcolumntype{M}[1]{>{\centering\arraybackslash}m{#1}}
\newcommand{\etal}{\textit{et. al.}~}
\DeclareMathOperator*{\argmin}{arg\,min}
\definecolor{green}{RGB}{0,150,10}
\newcommand*{\pd}[3][]{\ensuremath{\frac{\partial^{#1} #2}{\partial #3}}}
\newcommand{\xmark}{\ding{54}}%
\newcommand{\figLabel}{Fig.~}
\newcommand{\eg}{\textit{e.g.~}}
\newcommand{\ie}{\textit{i.e.~}}
\newcommand{\eqLabel}[1]{{Eq (#1)}}
\newcommand{\secLabel}{Section~}
\newcommand{\mysection}[1]{\vspace{3pt}\noindent\textbf{#1.}}
\newcommand{\supp}{\textbf{supplement~}}
\newcommand{\specialcell}[2][c]{%
  \begin{tabular}[#1]{@{}c@{}}#2\end{tabular}}
\newcommand\blfootnote[1]{%
  \begingroup
  \renewcommand\thefootnote{}\footnote{#1}%
  \addtocounter{footnote}{-1}%
  \endgroup
}
\begin{document}
\pagestyle{headings}
\mainmatter
\def\ECCVSubNumber{11}  %

\title{Towards Analyzing Semantic Robustness of Deep Neural Networks} %

\titlerunning{Towards Analyzing Semantic Robustness of Deep Neural Networks}
\author{Abdullah Hamdi \and Bernard Ghanem}
\authorrunning{A. Hamdi, B. Ghanem}
\institute{King Abdullah University of Science and Technology (KAUST), Thuwal, Saudi Arabia\\
\email{\{abdullah.hamdi, bernard.ghanem\}@kaust.edu.sa}}
\maketitle

\begin{abstract}
Despite the impressive performance of Deep Neural Networks (DNNs) on various vision tasks, they still exhibit erroneous high sensitivity toward semantic primitives (\eg object pose). We propose a theoretically grounded analysis for DNNs robustness in the semantic space. We qualitatively analyze different DNNs semantic robustness by visualizing the DNN global behavior as semantic maps and observe interesting behavior of some DNNs. Since generating these semantic maps does not scale well with the dimensionality of the semantic space, we develop a bottom-up approach to detect robust regions of DNNs. To achieve this, we formalize the problem of finding robust semantic regions of the network as optimization of integral bounds and develop expressions for update directions of the region bounds. We use our developed formulations to quantitatively evaluate the semantic robustness of different famous network architectures. We show through extensive experimentation that several networks, though trained on the same dataset and while enjoying comparable accuracy, they do not necessarily perform similarly in semantic robustness. For example, InceptionV3 is more accurate despite being less semantically robust than ResNet50. We hope that this tool will serve as the first milestone towards understanding the semantic robustness of DNNs.
\end{abstract}
\linespread{0.98}

\section{Introduction} \label{sec:intro}
As a result of recent advances in machine learning and computer vision, deep neural networks (DNNS) have become an essential part of our lives.\blfootnote{The code is available at \url{https://github.com/ajhamdi/semantic-robustness}.} DNNs are used to suggest articles to read, detect people in surveillance cameras, automate big machines in factories, and even diagnose X-rays for patients in hospitals. So, what is the catch here? These DNNs struggle with a detrimental weakness on specific naive scenarios, despite having strong on-average performance. \figLabel{\ref{fig:intro_fig}} shows how a small perturbation in the view angle of the teapot object results in a drop in InceptionV3 \cite{inception} confidence score from 100\% to almost 0\%. The \emph{softmax} confidence scores are plotted against one semantic parameter (\ie the azimuth angle around the teapot) and it fails in such a simple task. Similar behaviors are consistently observed across different DNNs (trained on ImageNet \cite{IMAGENET}) as noted by other concurrent works \cite{strike}.
\begin{figure}[t]
\tabcolsep=0.03cm
   \begin{tabular}{ccc}
      \includegraphics[trim={1.5cm 1.5cm 1.5cm 1.5cm},clip, width = 2cm]{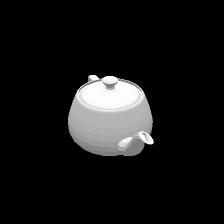} &
\includegraphics[trim={1.5cm 1.5cm 1.5cm 1.5cm},clip, width = 2cm]{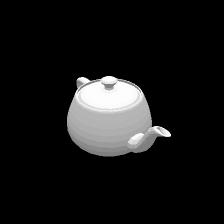}  &

      \includegraphics[trim={0.7cm 1.2cm 0.7cm 1.72cm},clip, width=0.65\linewidth]{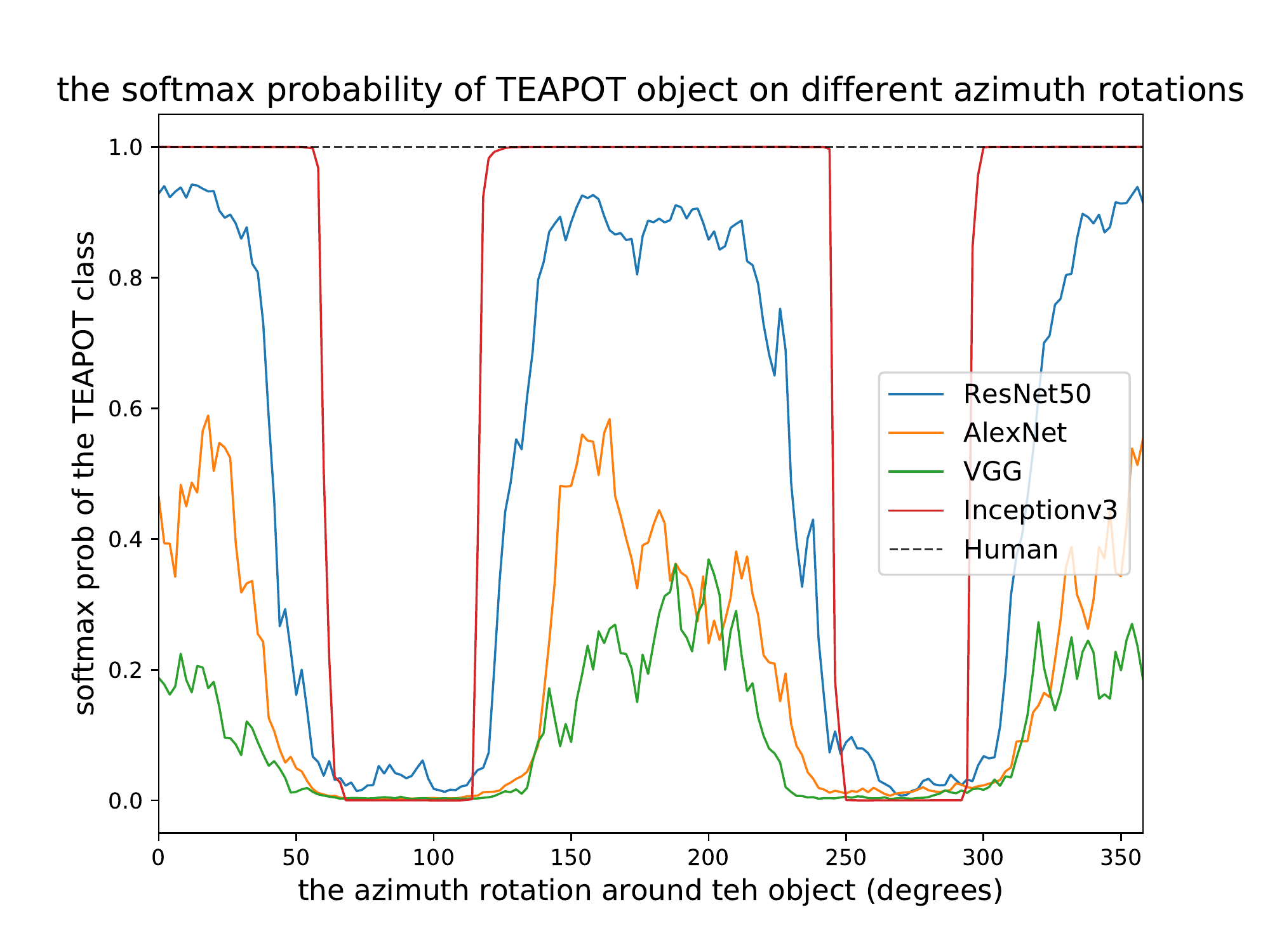}
       \\
       angle:\ang{68} & angle:\ang{55} &  \scriptsize The azimuth angle around the TEAPOT object (degrees) \\
       \textcolor{red}{teapot:0.01\%} &\textcolor{green}{teapot:99.89\%} & \\
\end{tabular}
   \caption{\small \textbf{Semantic Robustness of Deep Networks}. Trained neural networks can perform poorly when subject to small perturbations in the semantics of the image. (\textit{left}): We show how perturbing the azimuth viewing angle of a simple \emph{teapot} object can dramatically affect the score of a pretrained InceptionV3 \cite{inception} for the \emph{teapot} class. (\textit{right}): We plot the softmax confidence scores of different DNNs on the same \emph{teapot} object viewed from 360 degrees around the object. For comparison, lab researchers identified the object from all angles.}
   \label{fig:intro_fig}
\end{figure}

Furthermore, because DNNs are not easily interpretable, they work well without a complete understanding of \textit{why} they behave in such a manner. A whole research direction is dedicated to studying and analyzing DNNs. Examples of such analysis include activation visualization \cite{unn-visual1,unn-visual3,unn-visual2}, noise injection \cite{unn-robustness-noise1,unn-universal,unn-modar}, and studying the effect of image manipulation on DNNs \cite{unn-texture,unn-texture,unn-robustness-geometry}. By leveraging a differentiable renderer $\mathbf{R}$ and evaluating rendered images for different semantic parameters $\mathbf{u}$, we provide a new lens of semantic robustness analysis for such DNNs as illustrated in \figLabel{\ref{fig:nimi-pipeline}}. These Network Semantic Maps (NSM) demonstrate unexpected behavior of some DNNs, in which adversarial regions lie inside a very confident region of the semantic space. This constitutes a ``trap'' that is hard to detect without such analysis and can lead to catastrophic failure for the DNN.

Recent work in adversarial network attacks explores DNN sensitivity and performs gradient updates to derive targeted perturbations \cite{first-attack,fast-sign,carlini,projected-gradient}. In practice, such attacks are less likely to naturally occur than semantic attacks, such as changes in camera viewpoint and lighting conditions. The literature on semantic attacks is sparser, since they are more subtle and challenging to analyze \cite{renderer-attack,sada,physicalattack,strike}. This is due to the fact that we are unable to distinguish between failure cases that result from the network structure, and learning, or from the data bias \cite{bias}. Current methods for adversarial semantic attacks either work on individual examples \cite{strike}, or try to find distributions but rely on sampling methods, which do not scale with dimensionality \cite{sada}. We present a novel approach to find robust/adversarial regions in the n-dimensional semantic space. The proposed method scales better than sampling-based methods \cite{sada}. We use this method to quantify semantic robustness of popular DNNs on a collected dataset.

\mysection{Contributions}
\textbf{(1)} We analyze popular deep networks from a semantic lens showing unexpected behavior in the 1D and 2D semantic space. \textbf{(2)} We develop a novel bottom-up approach to detect robust/adversarial regions in the semantic space of a DNN, which scales well with increasing dimensionality. The method specifically optimizes for the region's bounds in semantic space (around a point of interest), such that the continuous region offers semantic parameters that confuse the network. \textbf{(3)} We develop a new metric to quantify semantic robustness of DNNs that we dub Semantic Robustness Volume Ratio (SRVR), and we use it to benchmark popular DNNs on a collected dataset. 

\begin{figure}[t]
    \centering
  \includegraphics[width=0.8\linewidth]{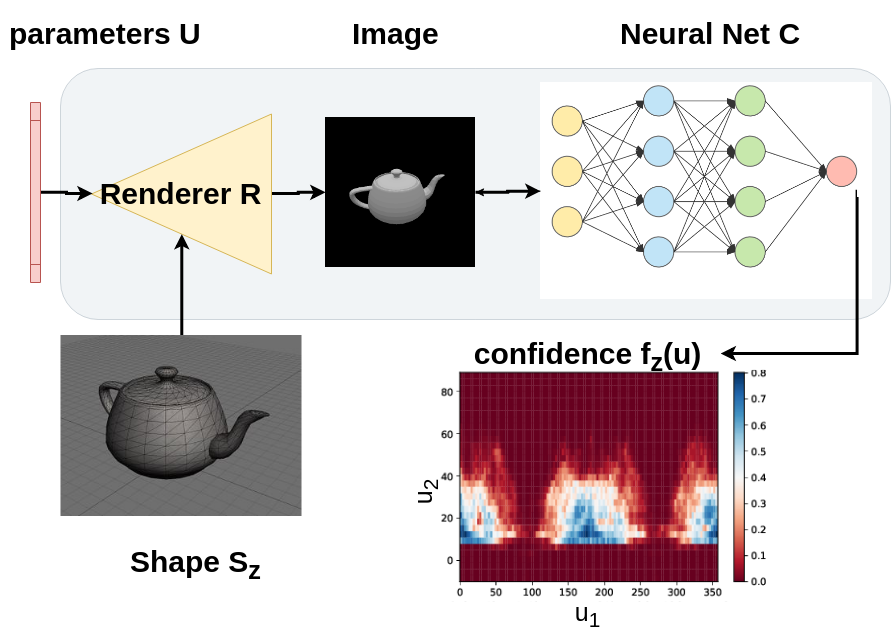}
  \caption{\small \textbf{Analysis Pipeline}: We leverage neural mesh renderer $\mathbf{R}$ \cite{neral-renderer} to render shape $\mathbf{S}_{z}$ of class $z$ according to semantic scene parameters $\mathbf{u}$. The resulting image is passed to trained network $\mathbf{C}$ that is able to identify the class $z$. The behaviour of the softmax score at label $z$ (dubbed $f_{z}(\mathbf{u})$) is analyzed for different parameters $\mathbf{u}$ and for the specific shape $\mathbf{S}_{z}$. In our experiments, we pick $u_1$ and $u_2$ to be the azimuth and elevation angles, respectively.}
  \label{fig:nimi-pipeline}
\end{figure}
\section{Related Work}
\subsection{Understanding Deep Neural Networks}
 There are different lenses to analyze DNNs depending on the purpose of analysis. A popular line of work tries to visualize the network hidden layers by inverting the activations to get a visual image that represents a specific activation  \cite{unn-visual1,unn-visual2,unn-visual3}. Others observe the behavior of these networks under injected noise \cite{unn-robustness-noise1,unn-robustness-noise2,unn-robustness-noise3,unn-robustness-noise4,unn-robustness-noise5,unn-modar}. Geirhos \etal show that changing the texture of the object while keeping the borders can hugely deteriorate the recognizability of the object by the DNN \cite{unn-texture}. More closely related to our work is the work of Fawzi \etal, which shows that geometric changes in the image greatly affect the performance of the classifier \cite{unn-robustness-geometry}. The work of Engstorm \etal \cite{spatial-robustness} studies the robustness of a network under natural 2D transformations (\ie translation and planar rotation).
\subsection{Adversarial Attacks on Deep Neural Networks}
\mysection{Pixel-based Adversarial Attacks}
The way that DNNs fail for some noise added to the image motivated the adversarial attacks literature. Several works formulate attacking neural networks as an optimization on the input pixels \cite{first-attack,fast-sign,projected-gradient,deepfool,carlini}. However, all these methods are limited to pixel perturbations and only fool classifiers, while we consider more general cases of attacks, \eg changes in camera viewpoint to fool a DNN by finding adversarial regions. %
Most attacks are white-box attacks, in which the attack algorithm has access to network gradients. Another direction of adversarial attacks treats the classifier as a black-box, where the adversary can only probe the network and get a score from the classifier without backpropagating through the DNN \cite{reduc-black,zeroth-order-attack}. We formulate the problem of finding the robust/adversarial region as an optimization of the corners of a hyper-rectangle in the semantic space for both black-box and white-box attacks.

\mysection{Semantic Adversarial Attacks}
Other works tried to move away from pixel perturbation to semantic 3D scene parameters and 3D attacks \cite{physicalattack,sada,strike,renderer-attack,advpc}. Zeng \etal \cite{physicalattack} generate attacks on deep classifiers by perturbing scene parameters like lighting and surface normals. Hamdi \etal propose generic adversarial attacks that incorporate semantic and pixel attacks, in which an adversary is sampled from some latent distribution that is produced from a GAN trained on example semantic adversaries \cite{sada}. However, their work used a sampling-based approach to learn these adversarial regions, which does not scale with the dimensionality of the problem. Another recent work by Alcorn \etal \cite{strike} tries to fool trained DNNs by changing the pose of the object. They used the Neural Mesh renderer (NMR) by Kato \etal \cite{vig-nmr} to allow for a fully differentiable pipeline that performs adversarial attacks based on the gradients to the parameters. Our work differs in that we use NMR to obtain gradients to the parameters $\mathbf{u}$ not to attack the model, but to detect and quantify the robustness of different networks as shown in \secLabel{\ref{sec:application}}. Furthermore, Dreossi \etal \cite{semantic-deep-learning} used adversarial training in the semantic space for self-driving, whereas Liu \etal \cite{renderer-attack} proposed a differentiable renderer to perform parametric attacks and the \textit{parametric-ball} as an evaluation metric for physical attacks. The work by Shu \etal \cite{adv-examiner} used an RL agent and a Bayesian optimizer to asses the DNNs behaviour under good/bad physical parameters for the network. While we share similar insights as \cite{adv-examiner}, we try to study the global behaviour of DNNs as collections of regions, whereas \cite{adv-examiner} tries to find individual points that pose difficulty for the DNN.    
\subsection{Optimizing Integral Bounds}
\mysection{Naive Approach} 
To develop an algorithm for robust region finding, we adopt an idea from weakly supervised activity detection in videos by Shou \etal \cite{ioc}. The idea is to find bounds that maximize the inner average of a continuous function while minimizing the outer average in a region. This is achieved because optimizing the bounds to exclusively maximize the area  can lead to diverging bounds of $\{-\infty,\infty\}$. To solve the issue of diverging bounds, the following naive formulation is to simply regularize the loss by adding a penalty on the region size. The expressions for the loss of $n$=1 dimension is: $L = -\text{Area}_{\text{in}} + \frac{\lambda}{2} \left| b-a\right|_{2}^{2} = \int_{a}^{b} f(u)du ~+ \frac{\lambda}{2} \left| b-a\right|_{2}^{2}$, where $f: \mathbb{R}^{1} \rightarrow (0,1)$ is the function of interest and $(a,b)$ are the left and right bounds respectively and $\lambda$ is a hyperparameter. The update directions to minimize the loss are: $\pd{L}{a} = f(a) - \lambda (b-a)~;~\pd{L}{b} = - f(b) + \lambda (b-a)$. The regularizer will prevent the region from growing to $\infty$ and the best bounds will be found if the loss is minimized with gradient descent or any similar approach.

\mysection{Trapezoidal Approximation}
To extend the naive approach to $n$-dimensions, we face more integrals in the update directions (hard to compute). Therefore, we deploy the following first-order trapezoid approximation of definite integrals. The Newton-Cortes formula for numerical integration \cite{numerical} states that: $\int_{a}^{b}f(u)du \approx (b-a)\frac{f(a)+f(b)}{2}$.
An asymptotic error estimate is given by $-\frac { ( b - a ) ^ { 2 } } { 48 } \left[ f ^ { \prime } ( b ) - f ^ { \prime } ( a ) \right] + \mathcal{O} \left( \frac{1}{8} \right)$. So, as long as the derivatives are bounded by some Lipschitz constant $\mathbb{L}$, then the error becomes bounded such that $|\text{error}| \leq \mathbb{L}( b - a ) ^ { 2 }  $. 

\begin{figure}[t]
\centering
 \begin{tabular}{cc}
 \small Network Confidence under 1 Parameter &  \small Network Confidence under 2 Parameters \\
     \includegraphics[trim={1cm 0 1.0cm 1.1cm},clip,width=.48\linewidth]{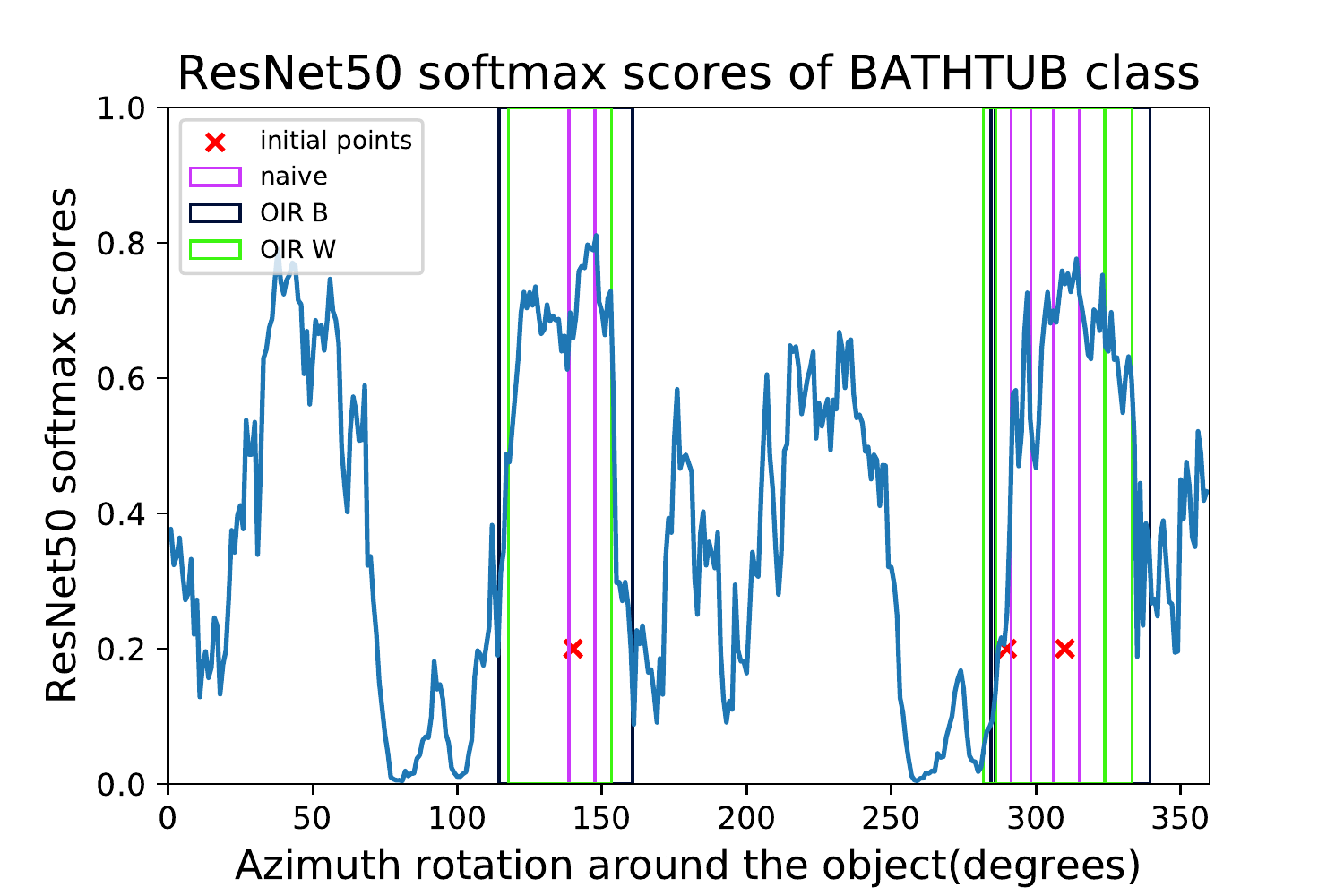} 
  &\includegraphics[trim={2.2cm 0 1.2cm 1.1cm},clip,width=.51\linewidth]{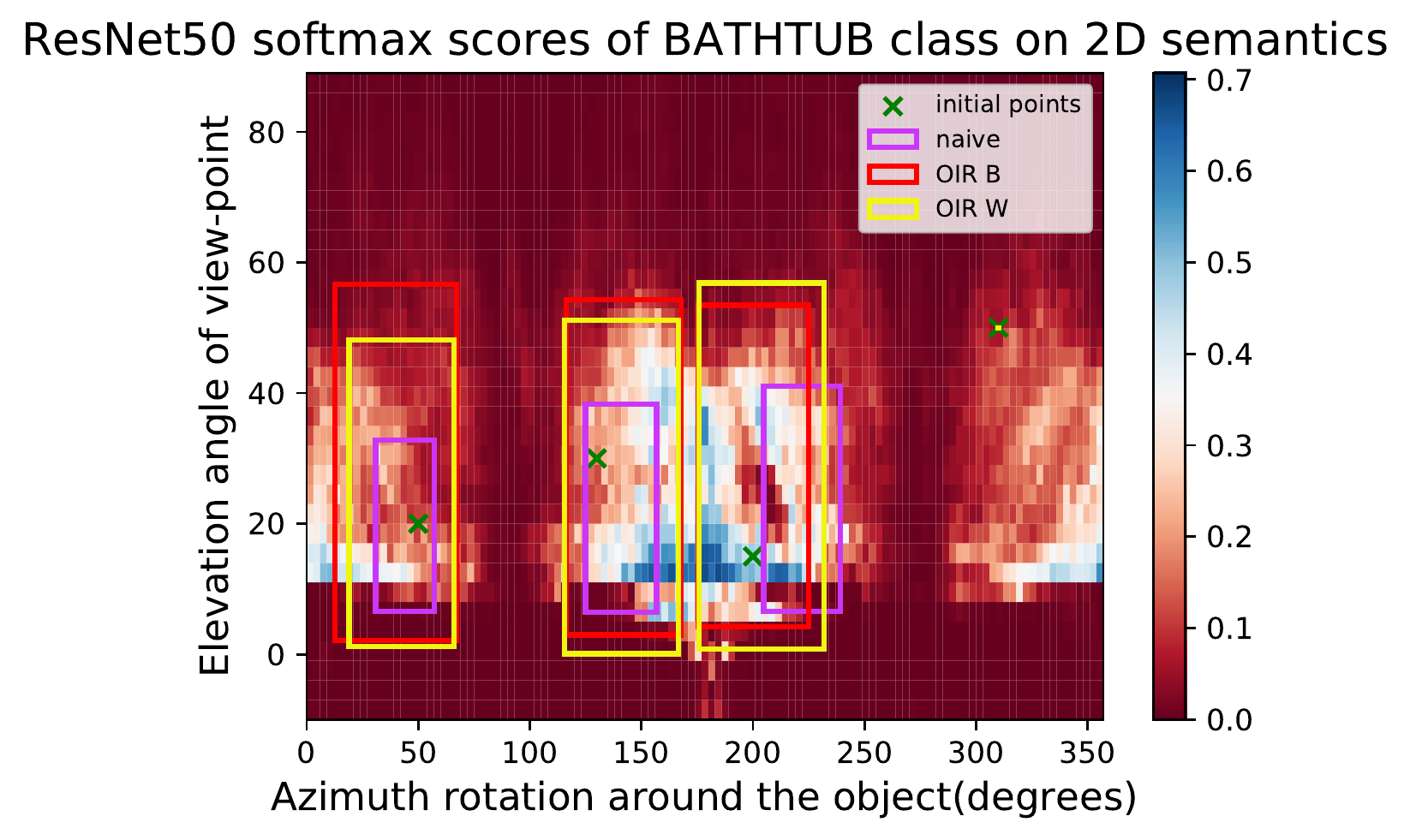}
 \end{tabular}

\caption{ \small \textbf{Semantic Robust Region Finding}: We find robust regions of semantic parameters for Rsnet50 \cite{resnet} and for a \emph{bathtub} object by the three bottom-up formulations (naive , OIR\_W , and OIR\_B). (\textit{left}): Semantic space is 1D (azimuth angle of camera) with three initial points. (\textit{right}): Semantic space is 2D  (azimuth angle and elevation angle of camera) with four initial points. We note that the naive approach usually predicts smaller regions, while the OIR formulations find more comprehensive regions.}
\label{fig:operator}
\end{figure}

\begin{figure}[h]
\centering
\tabcolsep=0.08cm
  \begin{tabular}{c|c|c|c}
  \textbf{AlexNet}\cite{AlexNet} & \textbf{VGG}\cite{vgg} & \textbf{Resnet50}\cite{resnet} & \textbf{InceptionV3}\cite{inception} \\
  \includegraphics[trim={3cm 1.4cm 2.8cm 1.2cm},clip, width=.24\linewidth]{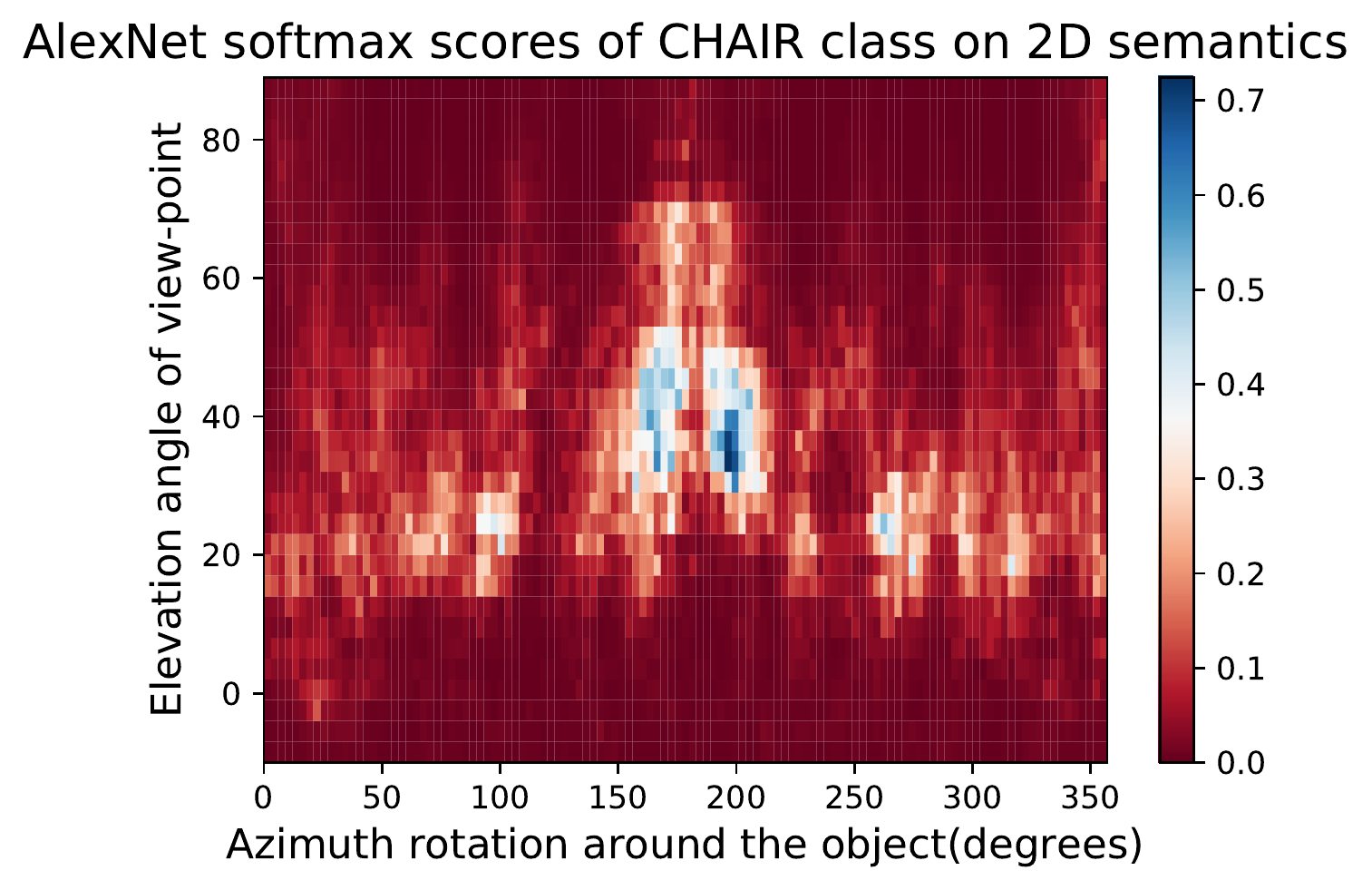}&
  \includegraphics[trim={3cm 1.8cm 2.8cm 1.2cm},clip, width=.23\linewidth]{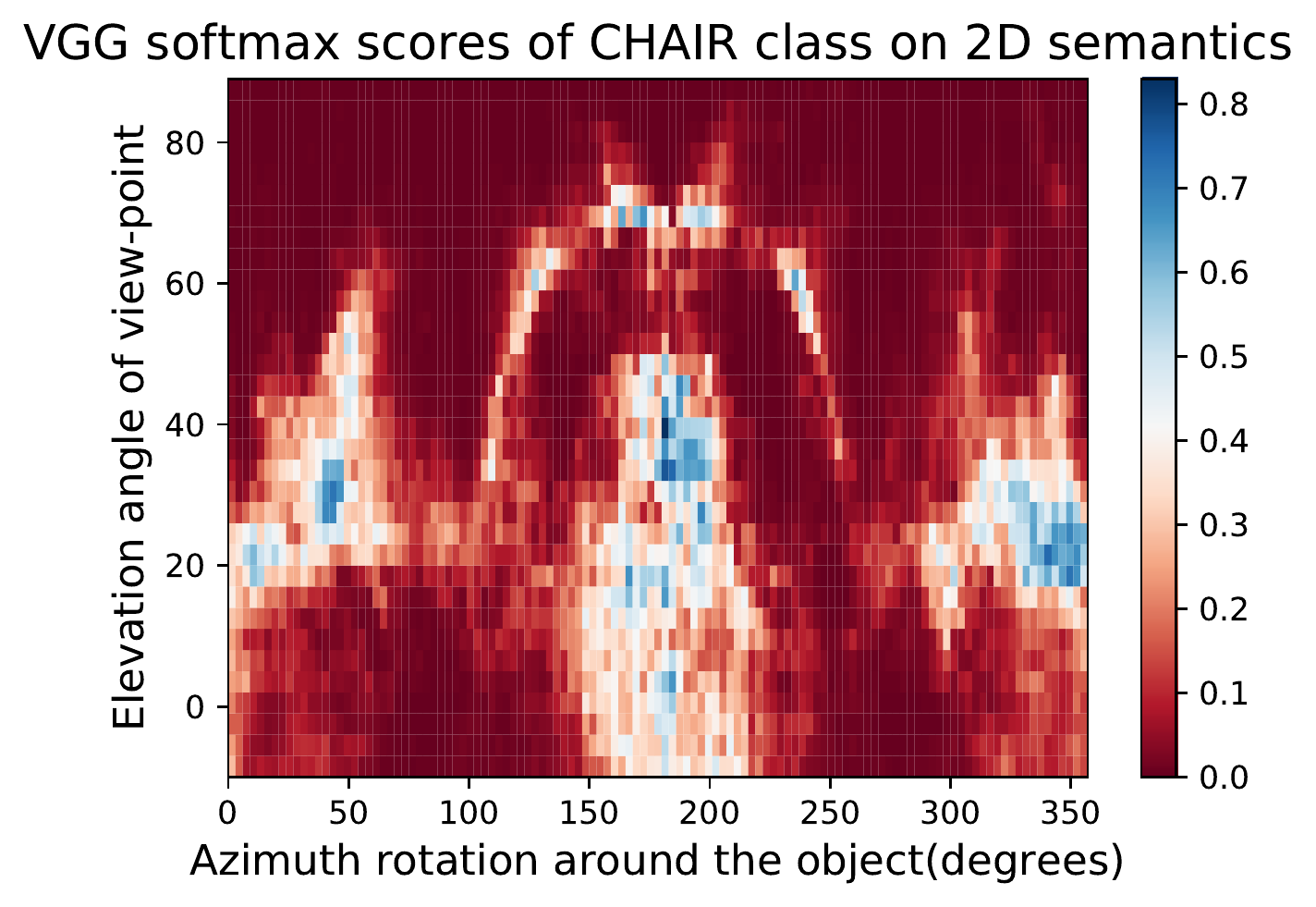}&
  \includegraphics[trim={3.2cm 1.4cm 2.8cm 1.4cm},clip, width=.257\linewidth]{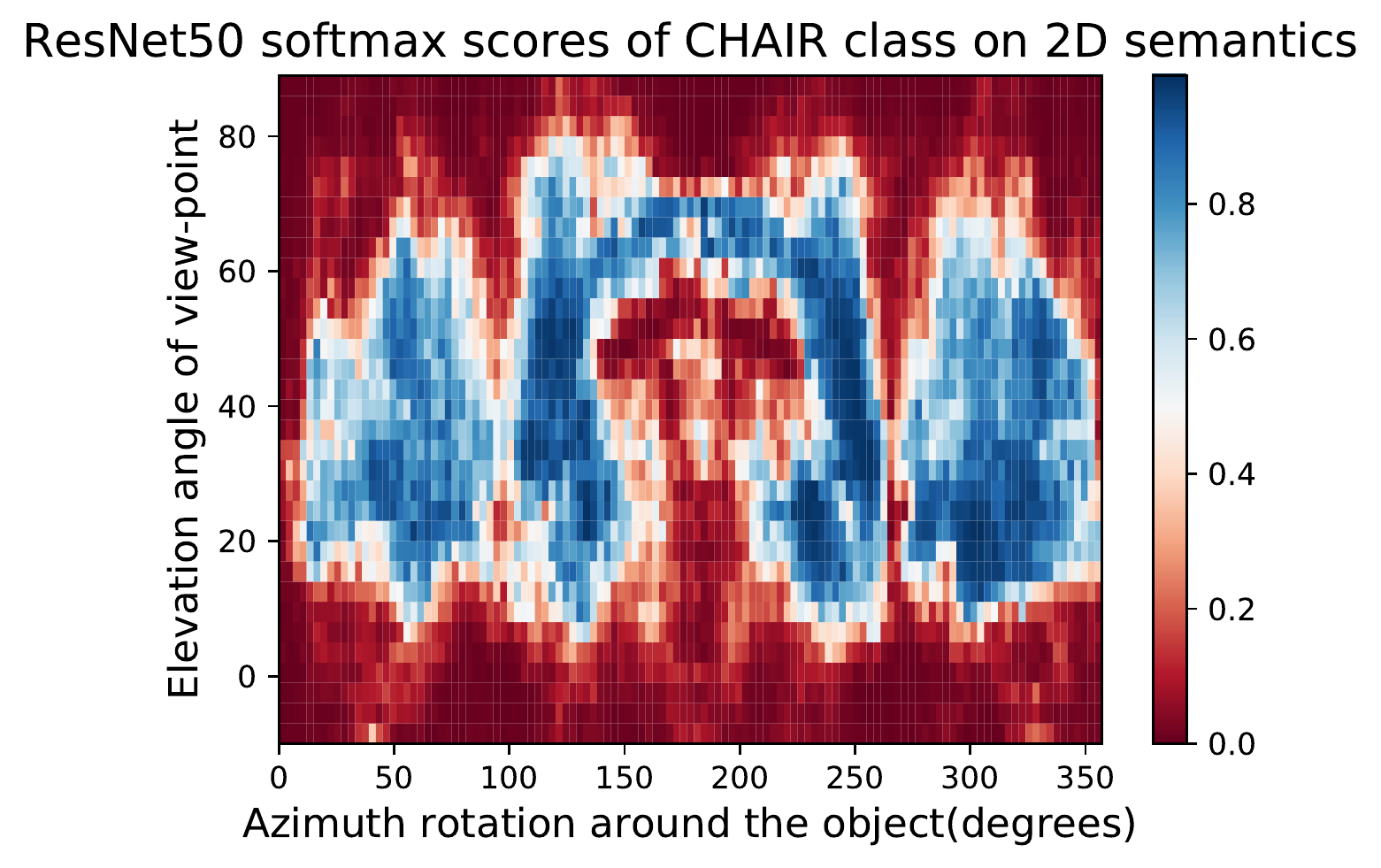}&
  \includegraphics[trim={3.5cm 1.4cm 3cm 1.4cm},clip, width=.259\linewidth]{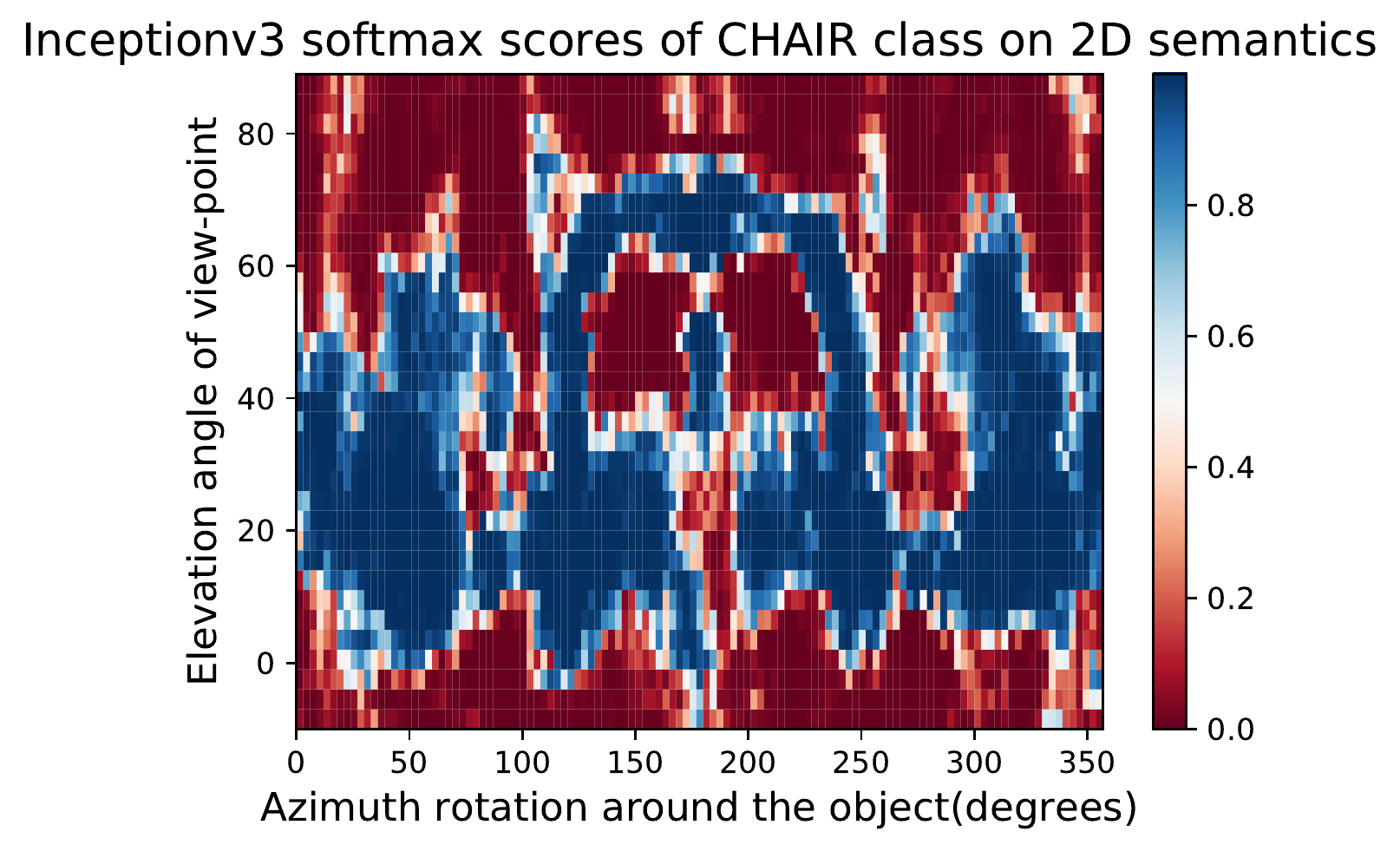} \\ \hline
  \includegraphics[trim={3cm 1.4cm 2.8cm 1.2cm},clip, width=.24\linewidth]{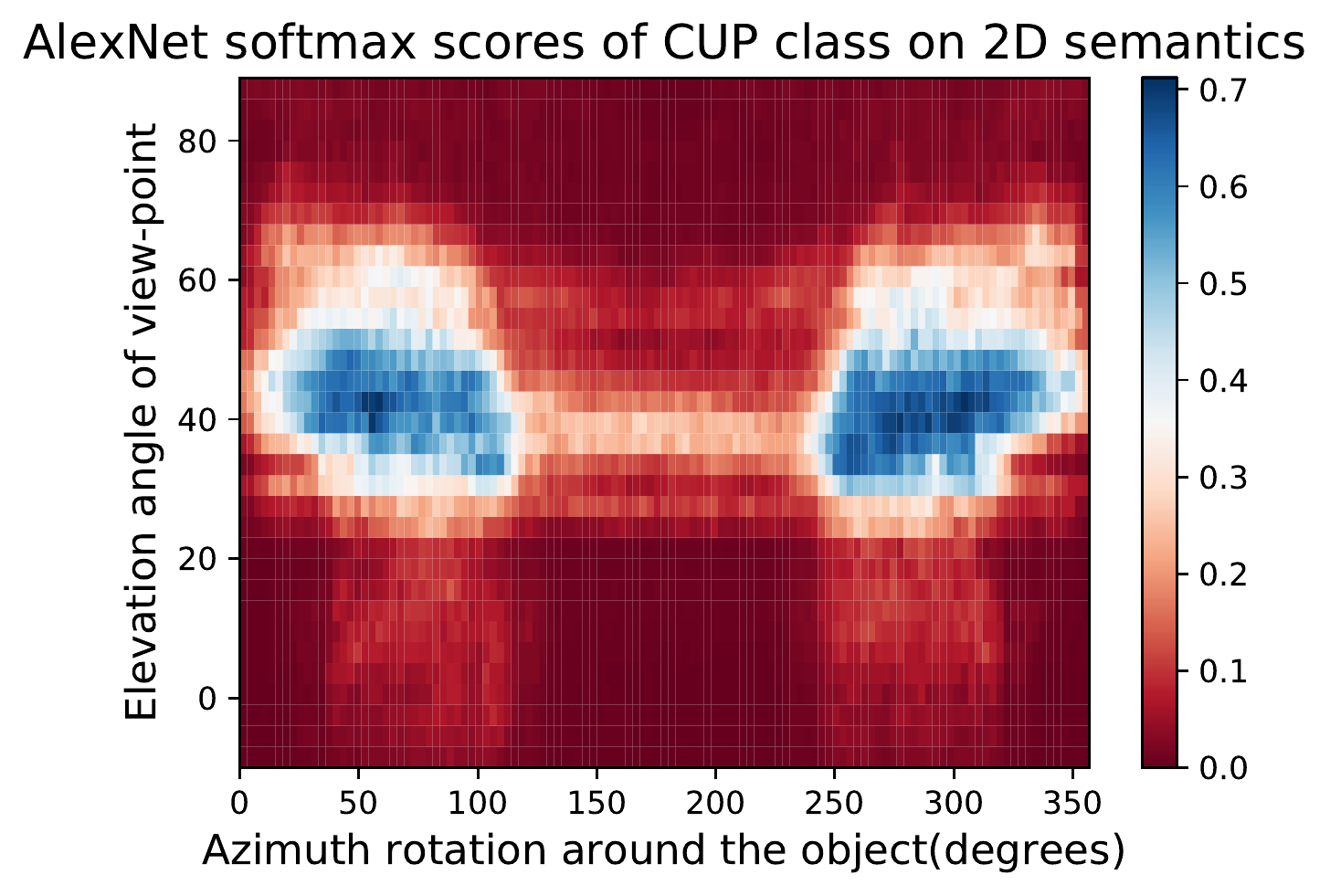}&
  \includegraphics[trim={3cm 1.85cm 2.8cm 1.2cm},clip, width=.23\linewidth]{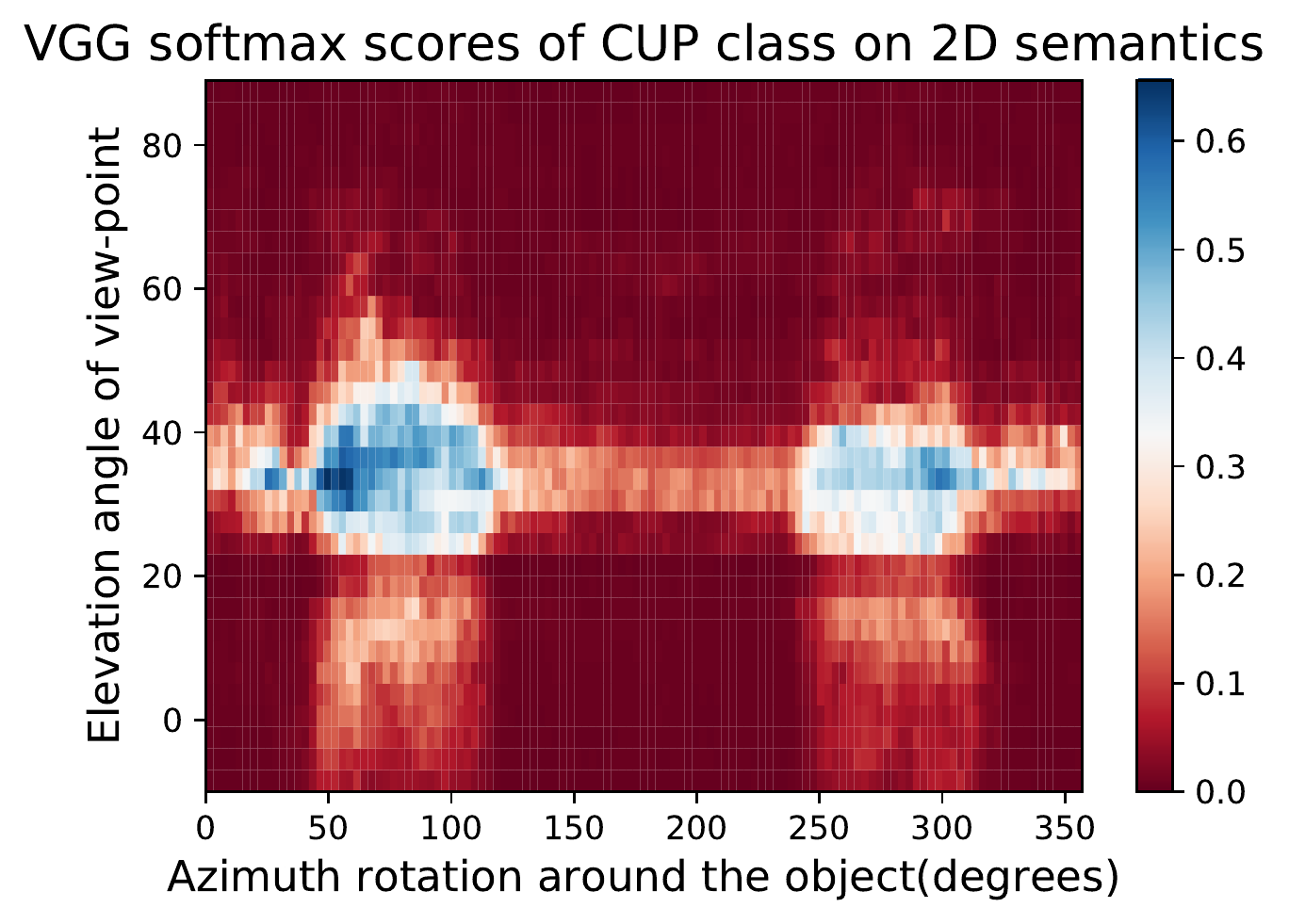}&
  \includegraphics[trim={3cm 1.4cm 2.8cm 1.2cm},clip, width=.25\linewidth]{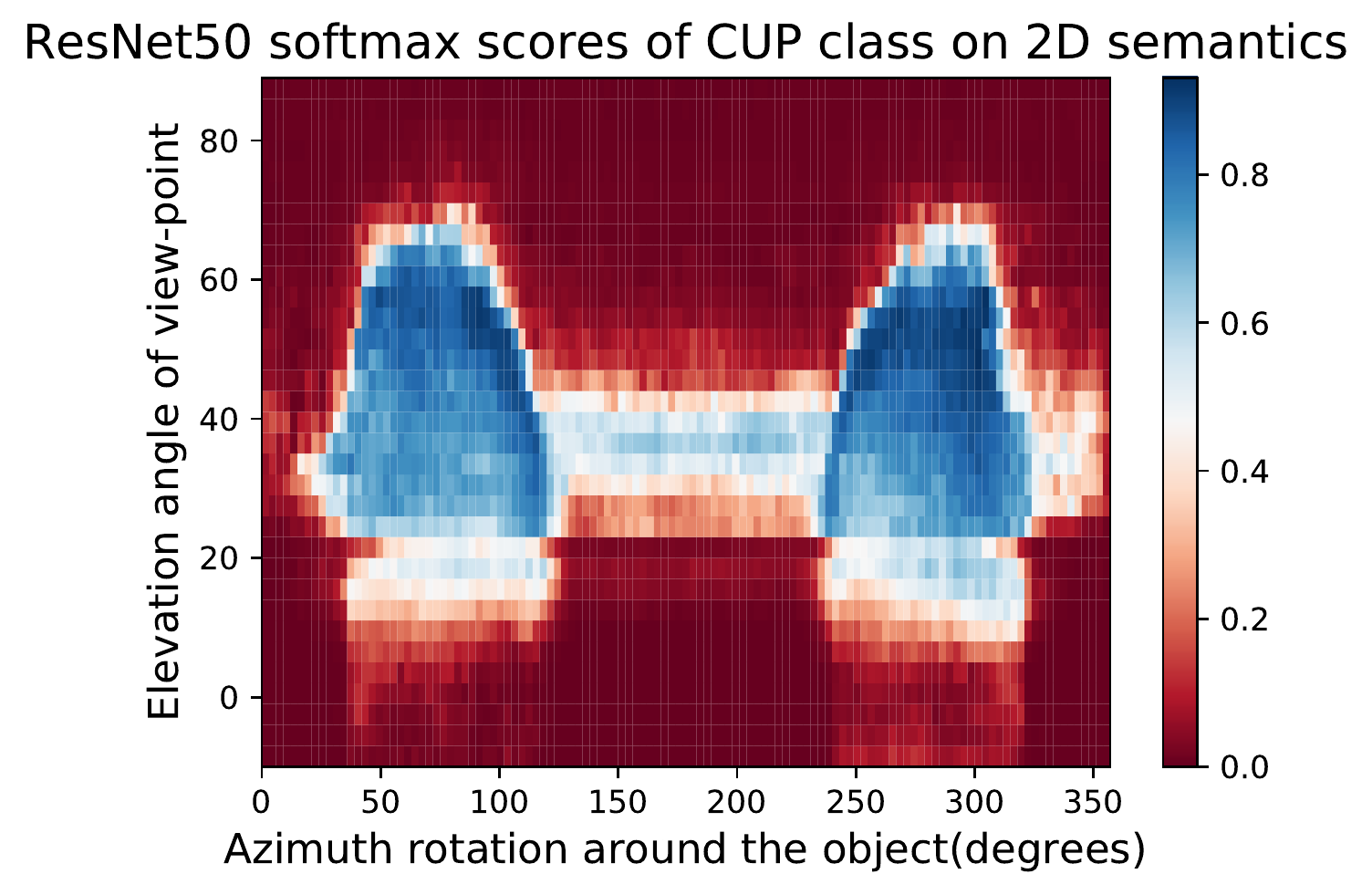} &
  \includegraphics[trim={3.2cm 1.4cm 2.8cm 1.2cm},clip, width=.259\linewidth]{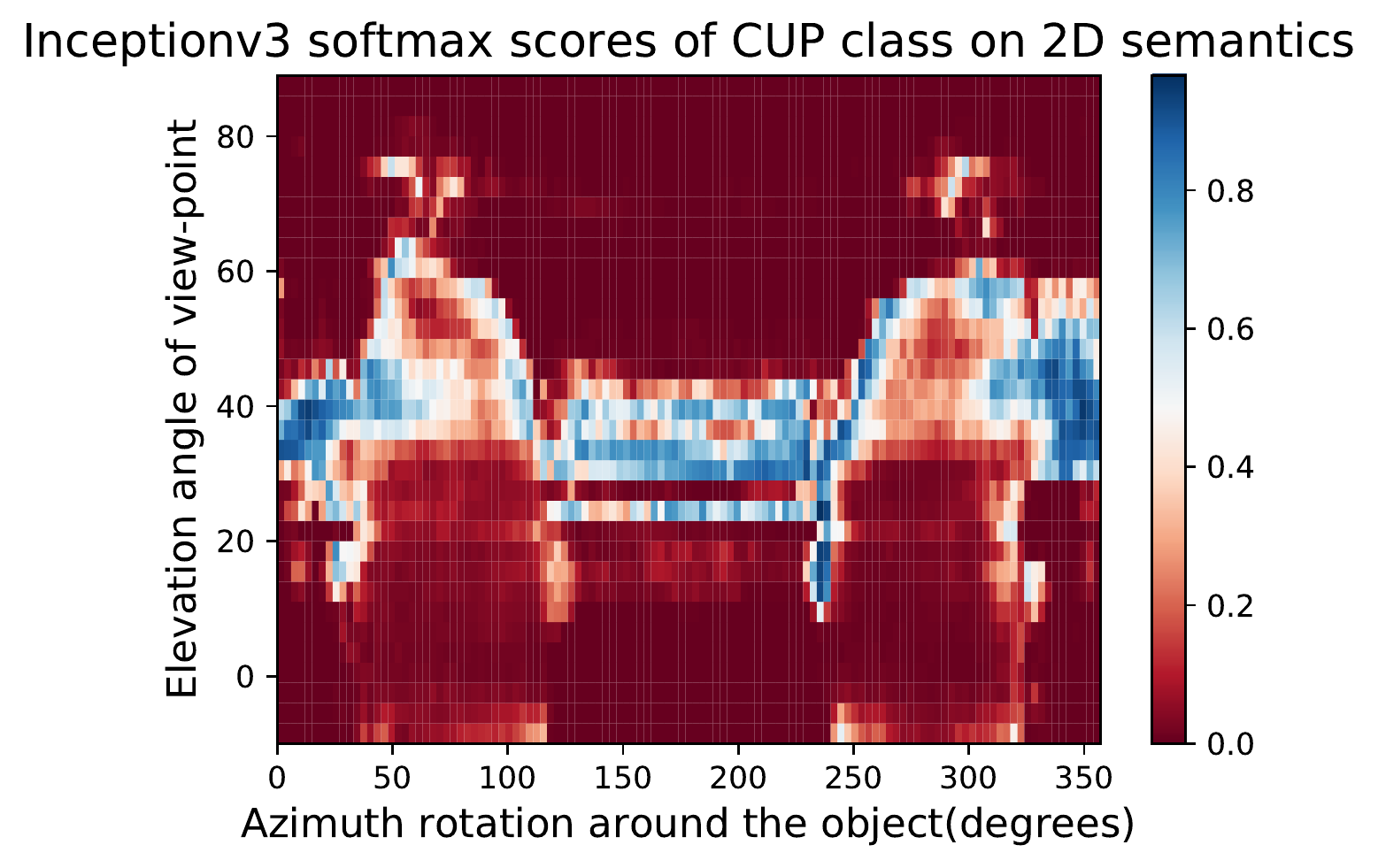} \\
   \end{tabular}
\caption{ \small \textbf{Network Semantic Maps}: We plot the 2D semantic maps (as in \figLabel{\ref{fig:operator} \textit{right}}) of four different networks on two shapes of a \emph{chair} class (\textit{top}) and \emph{cup} class (\textit{bottom}). %
InceptionV3 is very confident about its decision, but at the cost of creating semantic ``traps", where sharp performance degradation happens in the middle of a robust region. This behaviour is more apparent for complex shapes (\eg the chair in \textit{top} row).}
\label{fig:NMS}
\end{figure}
\section{Methodology} \label{sec:methodology}
Typical adversarial pixel attacks involve a neural network $\mathbf{C}$ (\eg classifier or detector) that takes an image $\mathbf{x} \in [0,1]^{d}$ as input and outputs a multinoulli distribution over $K$ class labels with softmax values $[l_{1}, l_{2}, ... ,l_{K}]$, where $l_{j}$ is the softmax value for class $j$. The adversary (attacker) tries to produce a perturbed image $\mathbf{x'} \in [0,1]^{d}$ that is as close as possible to $\mathbf{x}$, such that $\mathbf{x}$ to $\mathbf{x'}$ have different class predictions through $\mathbf{C}$.  %
In this work, we consider a more general case where we are interested in parameters $\mathbf{u} \in \Omega \subset \mathbb{R}^{n}$, a latent parameter that generates the image via a scene generator (\eg a renderer function $\mathbf{R}$). This generator/renderer takes the parameter $\mathbf{u}$ and an object shape $\mathbf{S}$ of a class that is identified by $\mathbf{C}$. $\Omega$ is the continuous semantic space for the parameters that we intend to study. The renderer creates the image $\mathbf{x} \in \mathbb{R}^{d}$, and then we study the behavior of a classifier $\mathbf{C}$ of that image across multiple shapes and multiple popular DNN architectures. Now, this function of interest is defined as follows:
    \begin{equation}
\begin{aligned} 
 f(\mathbf{u}) = \mathbf{C}_{z}(\mathbf{R}(\mathbf{S}_{z},\mathbf{u})) ~, ~~ 0\leq  f(\mathbf{u}) \leq 1
\label{eq:f}
\end{aligned}
\end{equation}
where $z$ is a class label of interest to study and we observe the network score for that class by rendering a shape $\mathbf{S}_{z}$ of the same class. The shape and class labels are constants and only the parameters $\mathbf{u}$ vary for $f$ during analysis. 

\subsection{Region Finding as an Operator} \label{sec:operator}
We can visualize the function in \eqLabel{\ref{eq:f}} for any shape $\mathbf{S}_{z}$ as long as the DNN can identify the shape at some region in the semantic space $\Omega$ of interest, as we show in \figLabel{\ref{fig:intro_fig}}. However, plotting these figures is expensive and the complexity of plotting them increases exponentially with a big base. The complexity of plotting these plots of semantic maps, which we call Network Semantic Maps (NSM), is $N$ for $n=1$, where $N$ is the number of samples needed for that dimension to be fully characterized. The complexity is $N^{2}$ for $n=2$, and we can see that for a general dimension $n$, the complexity of plotting the NMS to adequately fill the semantic space $\Omega$ is $N^{n}$. This number is intractable even if we have only moderate dimensionality. To tackle this issue, we use a bottom-up approach to detect regions around some initial parameters $\mathbf{u}_{0}$, instead of sampling in the entire space of parameters $\Omega$.
Explicitly, we define region finding as an operator $\mathbf{\Phi}$ that takes the function of interest in \eqLabel{\ref{eq:f}}, initial point in the semantic space $\mathbf{u}_{0} \in \Omega $, and a shape $\mathbf{S}_{z}$ of some class $z$. The operator will return the hyper-rectangle $\mathbb{D} \subset \Omega $, where the DNN is robust in the region and does not sharply drop the score of the intended class. It also keeps identifying the shape with label $z$ as illustrated in \figLabel{\ref{fig:NMS}}. The robust-region-finding operator is then defined as follows:
\begin{equation}
\begin{aligned} 
& \mathbf{\Phi}_{\text{robust}}(f(\mathbf{u}),\mathbf{S}_{z},\mathbf{u}_{0}) = \mathbb{D} = \{\mathbf{u}: \mathbf{a} \leq \mathbf{u} \leq \mathbf{b}\} \\
  \text{s.t.}&~~ \mathbb{E}_{\mathbf{u}\sim \mathbb{D}} [f(\mathbf{u})] \ge 1-\epsilon_{m}~, ~~ \mathbf{u}_{0} \in \mathbb{D} ~, ~ \text{VAR}[f(\mathbf{u})] \le \epsilon_{v}
\label{eq:phi-rob}
\end{aligned}
\end{equation}
where the left and right bounds of $\mathbb{D}$ are $\mathbf{a} = [a_{1},a_{2},...,a_{n}]$ and $\mathbf{b} = [b_{1},b_{2},...,b_{n}]$, respectively. The two small thresholds $(\epsilon_{m},\epsilon_{v})$ are needed to ensure high performance and low variance of the DNN  in that robust region. We can define the complementary operator, which finds adversarial regions as:
\begin{equation}
\begin{aligned} 
& \mathbf{\Phi}_{\text{adv}}(f(\mathbf{u}),\mathbf{S}_{z},\mathbf{u}_{0}) = \mathbb{D} = \{\mathbf{u}: \mathbf{a} \leq \mathbf{u} \leq \mathbf{b}\} \\
 & \text{s.t.}~~ \mathbb{E}_{\mathbf{u}\sim \mathbb{D}} [f(\mathbf{u})] \leq \epsilon_{m}~, ~~ \mathbf{u}_{0} \in \mathbb{D} ~, ~ \text{VAR}[f(\mathbf{u})] \ge \epsilon_{v}
\label{eq:phi-adv}
\end{aligned}
\end{equation}
We can clearly show  that $\mathbf{\Phi}_{\text{adv}}$ and $\mathbf{\Phi}_{\text{robust}}$ are related:
\begin{equation}
\begin{aligned} 
& \mathbf{\Phi}_{\text{adv}}(f(\mathbf{u}),\mathbf{S}_{z},\mathbf{u}_{0}) = \mathbf{\Phi}_{\text{robust}}(1-f(\mathbf{u}),\mathbf{S}_{z},\mathbf{u}_{0})
\label{eq:phi-adv-robust}
\end{aligned}
\end{equation}
So, we can just focus our attention on $\mathbf{\Phi}_{\text{robust}}$ to find robust regions, and the adversarial regions follow directly from \eqLabel{\ref{eq:phi-adv-robust}}. We need to ensure that $\mathbb{D}$ has a positive size: $\mathbf{r} =  \mathbf{b} -  \mathbf{a} > \mathbf{0} $. The volume of $\mathbb{D}$ normalized by the exponent of dimension $n$ is expressed as follows: %
\begin{equation}
\begin{aligned} 
\text{volume}(\mathbb{D}) = \triangle = \frac{1}{2^{n}}\prod_{i=1}^{n}\mathbf{r}_{i} 
\label{eq:n-vol}
\end{aligned}
\end{equation}
The region $\mathbb{D}$ can also be defined in terms of the matrix $\mathbf{D}$ of all the corner points $\{\mathbf{d}^{i}\}_{i=1}^{2^{n}}$ as follows:
\begin{equation}
\begin{aligned} 
\text{corners}&(\mathbb{D}) = \mathbf{D}_{n\times 2^{n}} = \left[\mathbf{d}^{1} | \mathbf{d}^{2} |.. | \mathbf{d}^{2^{n}}\right] = \mathbf{1}^{T}\mathbf{a}~ +~ \mathbf{M}^{T} \odot (\mathbf{1}^{T}\mathbf{r}) \\
\mathbf{M}_{n\times 2^{n}} = &\left[ \mathbf{m}^{0}| \mathbf{m}^{1} |.. | \mathbf{m}^{2^{n}-1} \right], ~ \text{where}~~ \mathbf{m}^{i} = \text{binary}_{n}(i)
\label{eq:n-corners}
\end{aligned}
\end{equation}
and $\mathbf{1}$ is the all-ones vector of size $2^{n}$, $\odot$ is the Hadamard (element-wise) product of matrices, and $\mathbf{M}$ is a constant  masking matrix defined as the permutation matrix of binary numbers of $n$ bits that range from 0 to $2{n} - 1 $
\subsection{Deriving Update Directions}
\mysection{Extending Naive to $n$-dimensions}
We start by defining the function vector $\mathbf{f}_{\mathbb{D}}$ of all function evaluations at all corner points of $\mathbb{D}$.
\begin{equation}
\begin{aligned} 
\mathbf{f}_{\mathbb{D}} &= \left[f(\mathbf{d}^{1}), f(\mathbf{d}^{2}),...,f(\mathbf{d}^{2^{n}}) \right]^{T} , ~ \mathbf{d^{i}} = \mathbf{D}_{:,i}
\label{eq:n-function}
\end{aligned}
\end{equation}
Then, using Trapezoid approximation and Leibniz rule of calculus, the loss expression and the update directions become as follows:
\begin{equation}
\begin{aligned} 
L(\mathbf{a},\mathbf{b}) &= - \idotsint_\mathbb{D} f(u_1,\dots,u_n) \,du_1 \dots du_n  + \frac{\lambda}{2} \left| \mathbf{r}\right|^{2}
~\approx~ -\triangle\mathbf{1}^{T}\mathbf{f}_{\mathbb{D}} ~+~ \frac{\lambda}{2} \left| \mathbf{r}\right|^{2}\\ 
\nabla_{\mathbf{a}}L  &\approx~ 2\triangle\text{diag}^{-1}(\mathbf{r}) \overline{\mathbf{M}}\mathbf{f}_{\mathbb{D}} + \lambda \mathbf{r}
~~~;~~~ \nabla_{\mathbf{b}}L  \approx~ -2\triangle\text{diag}^{-1}(\mathbf{r}) \mathbf{M}\mathbf{f}_{\mathbb{D}} - \lambda \mathbf{r}
\label{eq:n-loss-update-naive}
\end{aligned}
\end{equation}
We show all the derivations for $n\in\{1,2\}$ and for general $n$ in the \supp.

\mysection{Outer-Inner Ratio Loss (OIR)}
We introduce an outer region $(A,B)$ with that contains the small region $(a,b)$. We follow the following assumption to ensure that the outer area is always positive: $A =  a - \alpha \frac{b-a}{2} ;  B =  b + \alpha \frac{b-a}{2}$. Here, $\alpha$ is the small boundary factor of the outer area to the inner.
We formulate the problem as a ratio of outer over inner areas and we try to make this ratio ($L =  \frac{\text{Area}_{\text{out}}}{\text{Area}_{\text{in}}}  $) as close as possible to 0 . 
We utilize the Dinkelbach technique for solving non-linear fractional programming problems \cite{dinckl} to transform $L$ as follows.
\begin{equation}
\begin{aligned} 
L &= \frac{\text{Area}_{\text{out}}}{\text{Area}_{\text{in}}} ~ =~ \text{Area}_{\text{out}} ~-~ \lambda ~ \text{Area}_{\text{in}} \\
&= \int_{A}^{B}f(a)du ~ -~ \int_{a}^{b}f(a)du ~ -~ \lambda ~\int_{a}^{b}f(a)du
\label{eq:loss-oir}
\end{aligned}
\end{equation}
where $\lambda^{*} = \frac{\text{Area}_{\text{out}}^{*}}{\text{Area}_{\text{in}}^{*}}$ is the Dinkelbach factor that is the best objective ratio.

\mysection{Black-Box (OIR\_B)}
Here we set $\lambda = 1$ to simplify the problem. This yields the following expression of the loss $L =  \text{Area}_{\text{out}} - \text{Area}_{\text{in}} =  \int_{A}^{B}f(u)du  - 2\int_{a}^{b}f(u)du$, which is similar to the area contrastive loss in \cite{ioc}. The update rules would be $\pd{L}{a} = -(1+ \frac{\alpha}{2})f(A) - \frac{\alpha}{2}f(B) + 2f(a) ~;~ \pd{L}{b} = (1+ \frac{\alpha}{2})f(B) + \frac{\alpha}{2}f(A) - 2f(b)$. To extend to $n$-dimensions, we define an outer region $\mathbb{Q}$ that includes the smaller region $\mathbb{D} $ and defined as: $\mathbb{Q} = \{\mathbf{u}: \mathbf{a} - \frac{\alpha}{2}\mathbf{r} \leq \mathbf{u} \leq \mathbf{b} + \frac{\alpha}{2}\mathbf{r} \}$, where $(\mathbf{a},\mathbf{b},\mathbf{r})$ are defined as before, while $\alpha$ is defined as the boundary factor of the outer region for all the dimensions. The inner region $\mathbb{D}$ is defined as in \eqLabel{\ref{eq:n-corners}}, while the outer region can be defined in terms of the corner points as follows:
\begin{equation}
\begin{aligned} 
\text{corners}(\mathbb{Q}) &= \mathbf{Q}_{n\times 2^{n}} = \left[\mathbf{q}^{1} | \mathbf{q}^{2} |.. | \mathbf{q}^{2^{n}}\right] \\
\mathbf{Q} = \mathbf{1}^{T}(\mathbf{a}& - \frac{\alpha}{2}\mathbf{r})~ +~ (1+\alpha)\mathbf{M}^{T} \odot (\mathbf{1}^{T}\mathbf{r}) 
\label{eq:n-corners2}
\end{aligned}
\end{equation}
Let $\mathbf{f}_{\mathbb{D}}$ be a function vector as in \eqLabel{\ref{eq:n-function}} and $\mathbf{f}_{\mathbb{Q}}$ be another function vector evaluated at all possible outer corner points: $\mathbf{f}_{\mathbb{Q}} = \left[f(\mathbf{q}^{1}), f(\mathbf{q}^{2}),...,f(\mathbf{q}^{2^{n}}) \right]^{T} , ~ \mathbf{q^{i}} = \mathbf{Q}_{:,i}$. 

Now, the loss and update directions for the $n$-dimensional case becomes:
\begin{equation}
\begin{aligned} 
L(\mathbf{a},\mathbf{b}) &= \idotsint_\mathbb{Q} f(u_1,\dots,u_n) \,du_1 \dots du_n - 2 \idotsint_\mathbb{D} f(u_1,\dots,u_n) \,du_1 \dots du_n \\ 
&\approx \triangle\left((1+\alpha)^{n} \mathbf{1}^{T}\mathbf{f}_{\mathbb{Q}} ~-~ 2 ~ \mathbf{1}^{T}\mathbf{f}_{\mathbb{D}}\right)\\ 
\nabla_{\mathbf{a}}L  ~\approx~ &2\triangle\text{diag}^{-1}(\mathbf{r}) \left(2\overline{\mathbf{M}}\mathbf{f}_{\mathbb{D}} ~-~ \overline{\mathbf{M}}_{\mathbb{Q}}\mathbf{f}_{\mathbb{Q}}  \right); ~~
\nabla_{\mathbf{b}}L  ~\approx~ 2\triangle\text{diag}^{-1}(\mathbf{r}) \left(-2\mathbf{M}\mathbf{f}_{\mathbb{D}} ~+~ \mathbf{M}_{\mathbb{Q}}\mathbf{f}_{\mathbb{Q}}  \right),
\label{eq:n-loss-update-outer}
\end{aligned}
\end{equation}
where diag(.) is the diagonal matrix of the vector argument or the diagonal vector of the matrix argument. $\overline{\mathbf{M}}_{\mathbb{Q}}$ is the outer region scaled mask defined as follows:
\begin{equation}
\begin{aligned} 
\overline{\mathbf{M}}_{\mathbb{Q}} = ~  (1+\alpha)^{n-1}\left((1+\frac{\alpha}{2})\overline{\mathbf{M}}+\frac{\alpha}{2}\mathbf{M}\right) ~;~
\mathbf{M}_{\mathbb{Q}} = ~  (1+\alpha)^{n-1}\left((1+\frac{\alpha}{2})\mathbf{M}+\frac{\alpha}{2}\overline{\mathbf{M}}\right)
\label{eq:n-mask-outer}
\end{aligned}
\end{equation}
\mysection{White-Box OIR (OIR\_W)}
 Here, we present the white-box formulation of Outer-Inner-Ratio. This requires access to the gradient of the function $f $ in order to update the current estimates of the bound. As we show in \secLabel{\ref{sec:experiments}}, access to gradients enhance the quality of the detected regions.
 To derive the formulation, We set $\lambda = \frac{\alpha}{\beta}$ in \eqLabel{\ref{eq:loss-oir}}, where $\alpha$ is the small boundary factor of the outer area and $\beta$ is the gradient emphasis factor. Hence, the objective in \eqLabel{\ref{eq:loss-oir}} becomes:
\begin{equation}
\begin{aligned} 
\argmin_{a,b} L = &\argmin_{a,b}~ \text{Area}_{\text{out}} - \lambda ~ \text{Area}_{\text{in}} \\
=& \argmin_{a,b}~ \int_{A}^{a}f(u)du + \int_{b}^{B}f(u)du - \frac{\alpha}{\beta} \int_{a}^{b}f(u)du \\
=& \argmin_{a,b}~ \frac{\beta}{\alpha} \int_{a - \alpha \frac{b-a}{2}}^{b + \alpha \frac{b-a}{2} }f(u)du  - (1+\frac{\beta}{\alpha})\int_{a}^{b}f(u)du \\
\pd{L}{a} =& \frac{\beta}{\alpha}\left(f(a) - f\left(a - \alpha \frac{b-a}{2}\right) \right) \\&~- \frac{\beta}{2}f\left(b + \alpha \frac{b-a}{2}\right) ~ - \frac{\beta}{2}f\left(a - \alpha \frac{b-a}{2}\right) + f(a) 
\label{eq:loss-oir2}
\end{aligned}
\end{equation}
Now, since $\lambda^{*}$ should be small for the optimal objective as $\lambda \rightarrow 0 ,~ \alpha \rightarrow 0$ and hence the derivative in \eqLabel{\ref{eq:loss-oir2}} becomes the following:
\begin{equation}
\begin{aligned} 
\lim_{\alpha \to 0}~\pd{L}{a} &= \frac{\beta}{2}\left((b-a)f^{\prime}(a) + f(b)\right) ~+~ (1-\frac{\beta}{2})f(a) \\
\lim_{\alpha \to 0}~\pd{L}{b} &= \frac{\beta}{2}\left((b-a)f^{\prime}(b) + f(a)\right) ~-~ (1-\frac{\beta}{2})f(b) 
\label{eq:update-oir-3}
\end{aligned}
\end{equation}
We can see that the update rule for $a$ and $b$ depends on the function value \textbf{and} the derivative of $f$ at the boundaries $a$ and $b$ respectively, with $\beta$ controlling the dependence.
If $\beta \rightarrow 0 $, the update directions in \eqLabel{\ref{eq:update-oir-3}} collapse to the unregularized naive update. To extend to $n$-dimensions, we have to define a term that involves the gradient of the function, \ie the all-corners gradient matrix  $\mathbf{G}_{\mathbb{D}}$.
\begin{equation}
\begin{aligned} 
\mathbf{G}_{\mathbb{D}} &= \left[\nabla f(\mathbf{d}^{1})~|~\nabla f(\mathbf{d}^{2})~|~...~|~\nabla f(\mathbf{d}^{2^{n}}) \right]^{T} 
\label{eq:n-gradient}
\end{aligned}
\end{equation}
\begin{algorithm}[t] 
\caption{Robust $n$-dimensional Region Finding for Black-Box DNNs by Outer-Inner Ratios}\label{alg: black}
\small
\SetAlgoLined
  \textbf{Requires: } Semantic Function of a DNN $f(\mathbf{u})$ in \eqLabel{\ref{eq:f}}, initial semantic parameter $\mathbf{u}_{0}$, number of iterations T , learning rate $\eta$ , object shape $\mathbf{S}_{z}$ of class label $z$, boundary factor $\alpha$, Small $\epsilon$ \\
   Form constant binary matrices $\mathbf{M}, \overline{\mathbf{M}},\mathbf{M}_{\mathbb{Q}},\overline{\mathbf{M}_{\mathbb{Q}}}, \mathbf{M}_{\mathbb{D}},\overline{\mathbf{M}_{\mathbb{D}}} $ \\
   Initialize bounds $\mathbf{a}_{0}\leftarrow \mathbf{u}_{0} - \epsilon \mathbf{1} $, $\mathbf{b}_{0} \leftarrow \mathbf{u}_{0}+- \epsilon \mathbf{1}$ \\
    $\mathbf{r}_{0} \leftarrow \mathbf{a}_{0}-\mathbf{b}_{0} $ , update region volume $ \triangle_{0} $ as in \eqLabel{\ref{eq:n-vol}}\\
  \For{$t \leftarrow 1$ \KwTo $T$}{
   form the all-corners function vectors ${f}_{\mathbb{D}},{f}_{\mathbb{Q}}$ as in \eqLabel{\ref{eq:n-function},\ref{eq:n-corners2}}\\
    $\nabla_{\mathbf{a}}L  \leftarrow 2\triangle_{t-1}\text{diag}^{-1}(\mathbf{r}_{t-1}) \left(2\overline{\mathbf{M}}\mathbf{f}_{\mathbb{D}} ~-~ \overline{\mathbf{M}}_{\mathbb{Q}}\mathbf{f}_{\mathbb{Q}}  \right)$ \\
$\nabla_{\mathbf{b}}L  \leftarrow 2\triangle_{t-1}\text{diag}^{-1}(\mathbf{r}_{t-1}) \left(-2\mathbf{M}\mathbf{f}_{\mathbb{D}} ~+~ \mathbf{M}_{\mathbb{Q}}\mathbf{f}_{\mathbb{Q}}  \right)$\\
    update bounds: $\mathbf{a}_{t}\leftarrow \mathbf{a}_{t-1} - \eta \nabla_{\mathbf{a}}L$, $\mathbf{b}_{t}\leftarrow \mathbf{b}_{t-1} - \eta \nabla_{\mathbf{b}}L$ \\
     $\mathbf{r}_{t} \leftarrow \mathbf{a}_{t}-\mathbf{b}_{t} $ , update region volume $ \triangle_{t} $ as in \eqLabel{\ref{eq:n-vol}}
    }
    \textbf{Returns: }robust region bounds: $ \mathbf{a}_{T},\mathbf{b}_{T}$ .
    
\end{algorithm}
\begin{algorithm}[h] 
\caption{Robust $n$-dimensional Region Finding for White-Box DNNs by Outer-Inner Ratios}\label{alg: white}
\small
\SetAlgoLined
  \textbf{Requires: }  Semantic Function of a DNN $f(\mathbf{u})$ in \eqLabel{\ref{eq:f}}, initial semantic parameter $\mathbf{u}_{0}$, , learning rate $\eta$ , object shape $\mathbf{S}_{z}$ of class label $z$, emphasis factor $\beta$, Small $\epsilon$ \\
   Form constant binary matrices $\mathbf{M}, \overline{\mathbf{M}}, \mathbf{M}_{\mathbb{D}},\overline{\mathbf{M}_{\mathbb{D}}} $ \\
   Initialize bounds $\mathbf{a}_{0}\leftarrow \mathbf{u}_{0} - \epsilon \mathbf{1} $, $\mathbf{b}_{0} \leftarrow \mathbf{u}_{0}+- \epsilon \mathbf{1}$ \\
    $\mathbf{r}_{0} \leftarrow \mathbf{a}_{0}-\mathbf{b}_{0} $ , update region volume $ \triangle_{0} $ as in as in \eqLabel{\ref{eq:n-vol}}\\
  \For{$t \leftarrow 1$ \KwTo $T$}{
   form the all-corners function vector ${f}_{\mathbb{D}}$ as in \eqLabel{\ref{eq:n-function}}\\
   form the all-corners gradients matrix $\mathbf{G}_{\mathbb{D}}$ as in \eqLabel{\ref{eq:n-gradient}}\\ 
   form the gradient selection vectors $\mathbf{s} ,\overline{\mathbf{s}}$ as in \eqLabel{\ref{eq:n-update-grad-selection}}
   $\nabla_{\mathbf{a}}L \leftarrow  \triangle_{t-1} \left(\text{diag}^{-1}(\mathbf{r}_{t-1})\overline{\mathbf{M}}_{\mathbb{D}}\mathbf{f}_{\mathbb{D}} + \beta\text{diag}(\overline{\mathbf{M}}\mathbf{G}_{\mathbb{D}}+ \beta \overline{\mathbf{s}}  \right)$  \\
$\nabla_{\mathbf{b}}L  \leftarrow  \triangle_{t-1} \left(- \text{diag}^{-1}(\mathbf{r}_{t-1})\mathbf{M}_{\mathbb{D}}\mathbf{f}_{\mathbb{D}} + \beta\text{diag}(\mathbf{M}\mathbf{G}_{\mathbb{D}})+ \beta \mathbf{s}  \right)$\\
    update bounds: $\mathbf{a}_{t}\leftarrow \mathbf{a}_{t-1} - \eta \nabla_{\mathbf{a}}L$, $\mathbf{b}_{t}\leftarrow \mathbf{b}_{t-1} - \eta \nabla_{\mathbf{b}}L$ \\
     $\mathbf{r}_{t} \leftarrow \mathbf{a}_{t}-\mathbf{b}_{t} $ , update region volume $ \triangle_{t} $ as in \eqLabel{\ref{eq:n-vol}}
    }
    \textbf{Returns: }robust region bounds: $ \mathbf{a}_{T},\mathbf{b}_{T}$ .
\end{algorithm}
Now, the loss and update directions are given as follows.
\begin{equation}
\begin{aligned} 
L(\mathbf{a},\mathbf{b})  \approx&~~ \frac{(1+\alpha)^{n} \mathbf{1}^{T}\mathbf{f}_{\mathbb{Q}}}{\mathbf{1}^{T}\mathbf{f}_{\mathbb{D}}} ~-~ 1\\ 
\nabla_{\mathbf{a}}L  \approx & ~\triangle  \left(\text{diag}^{-1}(\mathbf{r})\overline{\mathbf{M}}_{\mathbb{D}}\mathbf{f}_{\mathbb{D}} ~+~ \beta\text{diag}(\overline{\mathbf{M}}\mathbf{G}_{\mathbb{D}})~+ \beta \overline{\mathbf{s}}  \right)  \\
\nabla_{\mathbf{b}}L  \approx & ~\triangle  \left(- \text{diag}^{-1}(\mathbf{r})\mathbf{M}_{\mathbb{D}}\mathbf{f}_{\mathbb{D}} ~+~ \beta\text{diag}(\mathbf{M}\mathbf{G}_{\mathbb{D}})~+ \beta \mathbf{s}  \right)
\label{eq:n-loss-update-grad}
\end{aligned}
\end{equation}
where the mask is the special mask 
\begin{equation}
\begin{aligned} 
 \overline{\mathbf{M}}_{\mathbb{D}} =&~ \left( \gamma_n \overline{\mathbf{M}} ~-~\beta \mathbf{M}  \right) ~;~
  \mathbf{M}_{\mathbb{D}} =~ \left( \gamma_n \mathbf{M} ~-~\beta \overline{\mathbf{M}}  \right) ~;~
  \gamma_n =~ 2~-~\beta(2n-1) 
\label{eq:n-mask-grad}
\end{aligned}
\end{equation}
$\mathbf{s}$ is a weighted sum of the gradient from other dimensions ($i \neq k$) contributing to the update direction of dimension $k$, where $k \in \{1 , 2 ,...,n\}$.
\begin{equation}
\begin{aligned} 
\mathbf{s}_{k} &= \frac{1}{\mathbf{r}_{k}}\sum_{i=1, i\neq k }^{n}\mathbf{r}_{i}( (\overline{\mathbf{M}}_{i,:} - \mathbf{M}_{i,:})\odot \overline{\mathbf{M}}_{k,:} ) \mathbf{G}_{:,i} \\
\overline{\mathbf{s}}_{k} &= \frac{1}{\mathbf{r}_{k}}\sum_{i=1, i\neq k }^{n}\mathbf{r}_{i}( ( \mathbf{M}_{i,:} - \overline{\mathbf{M}}_{i,:} )\odot \mathbf{M}_{k,:} ) \mathbf{G}_{:,i}
\label{eq:n-update-grad-selection}
\end{aligned}
\end{equation}
Algorithms \ref{alg: black}, and \ref{alg: white} summarize the techniques explained above, which we implement in \secLabel{\ref{sec:experiments}}. The derivation of the 2-dimensional case and $n$-dimensional case of the OIR formulation, as well as other unsuccessful formulations are all included in the \supp\hspace{-4pt}.

\section{Experiments} \label{sec:experiments}
\subsection{Setup and Data} \label{sec:setup}
In this paper, we chose the semantic parameters $\mathbf{u}$ to be the azimuth rotations of the viewpoint and the elevation angle from the horizontal plane, where the object is always at the center of the rendering. This is common practise in the literature \cite{sada,semantic-attack}. We use 100 shapes from 10 different classes from ShapeNet \cite{shapenet}, the largest dataset for 3D models that are normalized from the semantic lens. We pick these 100 shapes specifically such that: (1) the class label is available in ImageNet \cite{IMAGENET} and that ImageNet classifiers can identify the exact class, and (2) the selected shapes are identified by the classifiers at some part of the semantic space. To do this, we measured the average score in the space and accepted the shape only if its average Resnet softmax score is 0.1. To render the images, we use a differentiable renderer NMR \cite{vig-nmr}, which allows obtaining the gradient to the semantic input parameters. The networks of interest are Resnet50 \cite{resnet}, VGG \cite{vgg}, AlexNet \cite{AlexNet}, and InceptionV3 \cite{inception}. We use the official PyTorch implementation for each network \cite{paszke2017pytorch}.

\subsection{Mapping the Networks} \label{sec:maps}
Similar to \figLabel{\ref{fig:intro_fig}}, we map the networks for all 100 shapes on the first semantic parameter (the azimuth rotation), as well as the joint (azimuth and elevation). We show these results in \figLabel{\ref{fig:NMS}}. The ranges for the two parameters were [\ang{0},\ang{360}],[\ang{-10},\ang{90}], with a 3$\times$3 grid. The total number of network evaluations is 4K forward passes from each network for every shape (total of 1.6M forward passes). We show all of the remaining results in the \supp\hspace{-4pt}.

\subsection{Growing Semantic Robust Regions} \label{sec:regions}
We implement the three bottom-up approaches in Table \ref{tbl:complexity} and Algorithms \ref{alg: black} and \ref{alg: white}.
The hyper-parameters were set to $\eta =0.1, \alpha=0.05,  \beta=0.0009, \lambda =0.1, T =800$. We can observe in \figLabel{\ref{fig:operator}} that multiple initial points inside the same robust region converge to the same boundary. One key difference to be noted between the naive approach in \eqLabel{\ref{eq:n-loss-update-naive}} and the OIR formulations in \eqLabel{\ref{eq:n-loss-update-outer},\ref{eq:n-loss-update-grad}} is that the naive approach fails to capture robust regions in some scenarios and fall for trivial regions (see \figLabel{\ref{fig:operator}}).
\begin{table}[t]
\small
\tabcolsep=0.09cm
\centering
\begin{tabular}{c|c|cc} 
\toprule
Deep Networks & SRVR & Top-1 error & Top-5 Error \\ 
\midrule
AlexNet \cite{AlexNet} & 8.87\% &43.45   &  20.91 \\
VGG-11 \cite{vgg}& 9.72\%  &30.98  &  11.37   \\
ResNet50 \cite{resnet} &  \textbf{16.79}\% & 23.85      & 7.13  \\
Inceptionv3 \cite{inception}&  7.92\% & \textbf{22.55}  & \textbf{6.44}  \\
\bottomrule
\end{tabular}
\caption{\small \textbf{Benchmarking popular DNNs in Semantic Robustness vs error rate}. We develop the Semantic Robustness Volume Ratio (SRVR) metric to quantify and compare the semantic robustness of well-known DNNs in \secLabel{\ref{sec:application}}. We see that semantic robustness does not necessarily depend on the accuracy of the DNN. This motivates studying it as an independent metric from the classification accuracy. Results are reported based on the official PyTorch implementations of these networks \cite{paszke2017pytorch}.}
\label{tbl:benchmarking}
\end{table}

\begin{table}[t]
\footnotesize
\setlength{\tabcolsep}{6pt} %
\renewcommand{\arraystretch}{1} %
\centering
\resizebox{\hsize}{!}{
\begin{tabular}{c||c|c|c|c|c|c} 
\toprule
\specialcell{\textbf{Analysis}\\ \textbf{Approach}} & \textbf{Paradigm}& \specialcell{\textbf{Total} \\ \textbf{Sampling} \\ \textbf{complexity} } & \specialcell{\textbf{Black} \\\textbf{-box}\\ \textbf{Functions} } & \specialcell{\textbf{Forward} \\ \textbf{pass} \\\textbf{/step} } & \specialcell{\textbf{Backward} \\ \textbf{pass} \\\textbf{/step} } & \specialcell{\textbf{Identification} \\ \textbf{Capabaility} } \\
\midrule
\textbf{Grid Sampling} &top-down &\specialcell{ $\mathcal{O}(N^{n})$\\$ N \gg 2$  }  & \textcolor{green}{\checkmark} & - & - & \specialcell{Fully identifies the \\semantic map of DNN}\\ \hline
\textbf{Naive} & bottom-up &$\mathcal{O}(2^{n})$  & \textcolor{green}{\checkmark}& $2^{n}$ &0 & \specialcell{finds strong robust\\ regions only around $\mathbf{u}_{0}$}  \\ \hline
\textbf{OIR\_B}& bottom-up &$\mathcal{O}(2^{n+1})$ & \textcolor{green}{\checkmark} & $2^{n+1}$ &0 & \specialcell{finds strong and\\ week robust regions\\ around $\mathbf{u}_{0}$ }  \\ \hline
\textbf{OIR\_W}& bottom-up &$\mathcal{O}(2^{n})$ & \textcolor{red}{\xmark}  & $2^{n}$& $2^{n}$ & \specialcell{finds strong and\\ week robust regions\\ around $\mathbf{u}_{0}$ }  \\ 
 \bottomrule
\end{tabular}
}
\caption{\small \textbf{Semantic Analysis Techniques}: We compare different approaches to analyse the semantic robustness of DNNs.}
\label{tbl:complexity}
\end{table}
\subsection{Applications} \label{sec:application}
\mysection{Quantifying Semantic Robustness}\\
Looking at these NSM can lead to insights about the network, but we would like to develop a systemic approach to quantify the robustness of these DNNs. To do this, we develop the Semantic Robustness Volume Ratio (SRVR) metric. The SRVR of a network is the ratio between the expected size of the robust region obtained by Algorithms \ref{alg: black} and \ref{alg: white} over the nominal total volume of the semantic map of interest. Explicitly, the SRVR of network $\mathbf{C}$ for class label $z$ is defined as follows:

\begin{equation}
\begin{aligned} 
 \text{SRVR}_{z} =  \frac{\mathbb{E}[\text{Vol}(\mathbb{D})]}{\text{Vol}(\Omega)} = \frac{\mathbb{E}_{\mathbf{u}_{0}\sim \Omega,\mathbf{S}_{z}\sim \mathbb{S}_{z}} [\text{Vol}(\mathbf{\Phi}(f,\mathbf{S}_{z},\mathbf{u}_{0}))]}{\text{Vol}(\Omega)},
\label{eq:SRVR}
\end{aligned}
\end{equation}
where $f,\Phi$ are defined in \eqLabel{\ref{eq:f},\ref{eq:phi-rob}} respectively. We take the average volume of all the adversarial regions found for multiple initializations and multiple shapes of the same class $z$. Then, we divide by the volume of the entire space. This provides a percentage of how close the DNN is from the ideal behaviour of identifying the object robustly in the entire space. The SRVR metric is not strict in its value, since the analyzer defines the semantic space of interest and the shapes used. However, comparing SRVR scores among DNNs is of extreme importance, as this relative analysis conveys insights about a network that might not be evident by only observing the accuracy of this network. For example, we can see in Table \ref{tbl:benchmarking} that while InceptionV3 \cite{inception} is the best in terms of accuracy, it lags behind Resnet50 \cite{resnet} in terms of semantic robustness. This observation is also consistent with the qualitative NSMs in \figLabel{\ref{fig:NMS}}, in which we can see that while Inception is very confident, it can fail completely inside these confident regions. Note that the reported SRVR results are averaged over all 10 classes and over all 100 shapes. We use 4 constant initial points for all experiments and the semantic parameters are the azimuth and elevation as in \figLabel{\ref{fig:operator}} and \ref{fig:NMS}. As can be seen in \figLabel{\ref{fig:operator}}, different methods predict different regions, so we take the average size of the the three methods used (naive, OIR\_W, and OIR\_B) to give an overall estimate of the volume used in the SRVR results reported in Table \ref{tbl:benchmarking}.

\begin{figure}[!t]
  \includegraphics[width=0.7\linewidth]{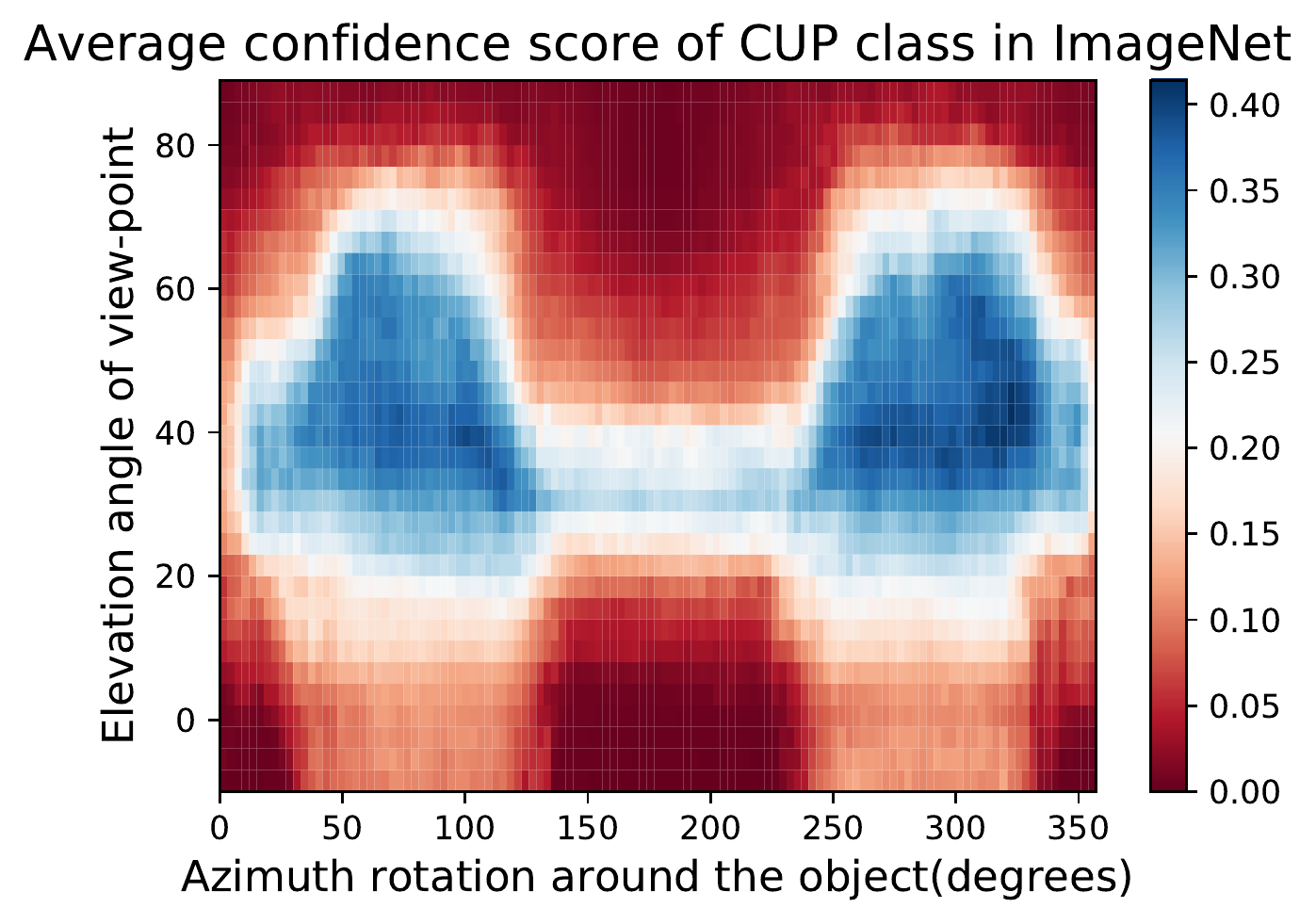}
  \caption{\small \textbf{Semantic Bias in ImageNet}. By taking the average semantic maps over 10 shapes of \emph{cup} class and over different networks, we can visualize the bias of the training data. The angles of low score are probably not well-represented in ImageNet \cite{IMAGENET}.}
  \label{fig:2d-semantic}
\end{figure}
\mysection{Finding Semantic Bias in the Data}\label{sec:data-bias}\\
While observing the above figures uncovers interesting insights about the DNNs and the training data of ImageNet \cite{IMAGENET}, it does not allow to make a conclusion about the network nor about the data. Therefore, we can average these semantic maps of these networks to factor out the effect of the network structure and training and maintain only the effect of training data. We show two such maps called Data Semantic Maps (DSMs). We observe that networks have holes in their semantic maps and these holes are shared among DNNs, indicating bias in the data. Identifying this gap in the data can help train a more semantically robust network. This can be done by leveraging adversarial training on these data-based adversarial regions as performed in the adversarial attack literature \cite{fast-sign}. \figLabel{\ref{fig:2d-semantic}} shows an example of this semantic map, which points to the possibility that ImageNet \cite{IMAGENET} may not contain angles of the easy \emph{cup} class.

\section{Analysis} \label{sec:analysis}
First, we observe in \figLabel{\ref{fig:NMS}} that some red areas (adversarial regions the DNN can not identify the object within) are surrounded by blue areas (robust regions the DNN can identify the object within). These ``semantic traps'' are dangerous in ML, since they are hard to identify (without NSM) and they can cause failure cases for some models. These traps can be attributed to either the model architecture, training, and loss, or bias in the dataset, on which the model was trained (\ie ImageNet \cite{IMAGENET}. 

Note that plotting NMS is extremely expensive even for a moderate dimensionality \eg $n=8$. For example, for the plot in \figLabel{\ref{fig:intro_fig}}, we use $N=180$ points in the range of $360$ degrees. If all the other dimensions require the same number of samples for their individual range, the total joint  space requires $180^{n} = 180^{8} = 1.1 \times 10^{18}$ samples, which is enormous. Evaluating the DNN for that many forward passes is intractable. Thus, we follow a bottom-up approach instead, where we start from one point in the semantic space $\mathbf{u}_{0}$ and we grow an $n$-dimensional hyper-rectangle around that point to find the robust ``neighborhood'' of that point for this specific DNN. 
Table \ref{tbl:complexity} compares different analysis approaches for semantic robustness of DNNs.

\section{Conclusion}%
We analyse DNN robustness with a semantic lens and show how more confident networks tend to create adversarial semantic regions inside highly confident regions. We developed a bottom-up approach to semantically analyse networks  by growing adversarial regions. This approach scales well with dimensionality, and we use it to benchmark the semantic robustness of several well-known and popular DNNs.

\mysection{Acknowledgments}
This work was supported by the King Abdullah University of Science and Technology (KAUST) Office of Sponsored Research under Award No. OSR-CRG2018-3730 
\clearpage
\bibliographystyle{splncs04}
\bibliography{egbib}

\begin{thebibliography}{10}
\providecommand{\url}[1]{\texttt{#1}}
\providecommand{\urlprefix}{URL }
\providecommand{\doi}[1]{https://doi.org/#1}

\bibitem{strike}
Alcorn, M.A., Li, Q., Gong, Z., Wang, C., Mai, L., Ku, W.S., Nguyen, A.: Strike
  (with) a pose: Neural networks are easily fooled by strange poses of familiar
  objects. In: The IEEE Conference on Computer Vision and Pattern Recognition
  (CVPR) (June 2019)

\bibitem{unn-robustness-noise2}
An, G.: The effects of adding noise during backpropagation training on a
  generalization performance. Neural computation  \textbf{8}(3),  643--674
  (1996)

\bibitem{unn-modar}
Bibi, A., Alfadly, M., Ghanem, B.: Analytic expressions for probabilistic
  moments of pl-dnn with gaussian input. In: The IEEE Conference on Computer
  Vision and Pattern Recognition (CVPR) (June 2018)

\bibitem{unn-robustness-noise3}
Bishop, C.M.: Training with noise is equivalent to tikhonov regularization.
  Training  \textbf{7}(1) (2008)

\bibitem{Boyd}
Boyd, S., Vandenberghe, L.: Convex Optimization. Cambridge University Press,
  New York, NY, USA (2004)

\bibitem{carlini}
Carlini, N., Wagner, D.: Towards evaluating the robustness of neural networks.
  In: IEEE Symposium on Security and Privacy (SP) (2017)

\bibitem{shapenet}
Chang, A.X., Funkhouser, T., Guibas, L., Hanrahan, P., Huang, Q., Li, Z.,
  Savarese, S., Savva, M., Song, S., Su, H., Xiao, J., Yi, L., Yu, F.:
  {ShapeNet: An Information-Rich 3D Model Repository}. Tech. Rep.
  arXiv:1512.03012 [cs.GR], Stanford University --- Princeton University ---
  Toyota Technological Institute at Chicago (2015)

\bibitem{zeroth-order-attack}
Chen, P.Y., Zhang, H., Sharma, Y., Yi, J., Hsieh, C.J.: Zoo: Zeroth order
  optimization based black-box attacks to deep neural networks without training
  substitute models. In: Proceedings of the 10th ACM Workshop on Artificial
  Intelligence and Security. pp. 15--26. AISec '17, ACM, New York, NY, USA
  (2017)

\bibitem{unn-visual1}
Dosovitskiy, A., Brox, T.: Inverting visual representations with convolutional
  networks. In: Proceedings of the IEEE Conference on Computer Vision and
  Pattern Recognition. pp. 4829--4837 (2016)

\bibitem{semantic-deep-learning}
Dreossi, T., Jha, S., Seshia, S.A.: Semantic adversarial deep learning. In:
  International Conference on Computer Aided Verification. pp. 3--26. Springer
  (2018)

\bibitem{spatial-robustness}
Engstrom, L., Tran, B., Tsipras, D., Schmidt, L., Madry, A.: Exploring the
  landscape of spatial robustness. In: Proceedings of the 36th International
  Conference on Machine Learning (ICML) (2019)

\bibitem{unn-robustness-noise1}
Fawzi, A., Moosavi-Dezfooli, S.M., Frossard, P.: Robustness of classifiers:
  from adversarial to random noise. In: Advances in Neural Information
  Processing Systems (2016)

\bibitem{unn-robustness-geometry}
Fawzi, A., Moosavi~Dezfooli, S.M., Frossard, P.: The robustness of deep
  networks - a geometric perspective. IEEE Signal Processing Magazine  (2017)

\bibitem{unn-texture}
Geirhos, R., Rubisch, P., Michaelis, C., Bethge, M., Wichmann, F.A., Brendel,
  W.: Imagenet-trained cnns are biased towards texture; increasing shape bias
  improves accuracy and robustness. arXiv preprint arXiv:1811.12231  (2018)

\bibitem{fast-sign}
Goodfellow, I., Shlens, J., Szegedy, C.: Explaining and harnessing adversarial
  examples. In: International Conference on Learning Representations (ICLR)
  (2015)

\bibitem{unn-robustness-noise4}
Grandvalet, Y., Canu, S., Boucheron, S.: Noise injection: Theoretical
  prospects. Neural Computation  \textbf{9}(5),  1093--1108 (1997)

\bibitem{sada}
Hamdi, A., Muller, M., Ghanem, B.: {SADA:} semantic adversarial diagnostic
  attacks for autonomous applications. In: AAAI Conference on Artificial
  Intelligence (2020)

\bibitem{advpc}
Hamdi, A., Rojas, S., Thabet, A., Ghanem, B.: Advpc: Transferable adversarial
  perturbations on 3d point clouds (2019)

\bibitem{resnet}
He, K., Zhang, X., Ren, S., Sun, J.: Deep residual learning for image
  recognition. CoRR  \textbf{abs/1512.03385} (2015)

\bibitem{semantic-attack}
Hosseini, H., Poovendran, R.: Semantic adversarial examples. CoRR
  \textbf{abs/1804.00499} (2018)

\bibitem{neral-renderer}
Kato, H., Ushiku, Y., Harada, T.: Neural 3d mesh renderer. In: The IEEE
  Conference on Computer Vision and Pattern Recognition (CVPR) (June 2018)

\bibitem{vig-nmr}
Kato, H., Ushiku, Y., Harada, T.: Neural 3d mesh renderer. In: Proceedings of
  the IEEE Conference on Computer Vision and Pattern Recognition. pp.
  3907--3916 (2018)

\bibitem{AlexNet}
Krizhevsky, A., Sutskever, I., Hinton, G.E.: Imagenet classification with deep
  convolutional neural networks. In: Pereira, F., Burges, C.J.C., Bottou, L.,
  Weinberger, K.Q. (eds.) Advances in Neural Information Processing Systems 25,
  pp. 1097--1105. Curran Associates, Inc. (2012)

\bibitem{projected-gradient}
Kurakin, A., Goodfellow, I.J., Bengio, S.: Adversarial machine learning at
  scale. CoRR  \textbf{abs/1611.01236} (2016)

\bibitem{renderer-attack}
Liu, H.T.D., Tao, M., Li, C.L., Nowrouzezahrai, D., Jacobson, A.: Beyond pixel
  norm-balls: Parametric adversaries using an analytically differentiable
  renderer. In: International Conference on Learning Representations (2019)

\bibitem{unn-visual2}
Mahendran, A., Vedaldi, A.: Understanding deep image representations by
  inverting them. CoRR  \textbf{abs/1412.0035} (2014)

\bibitem{unn-universal}
Moosavi-Dezfooli, S.M., Fawzi, A., Fawzi, O., Frossard, P.: Universal
  adversarial perturbations. In: The IEEE Conference on Computer Vision and
  Pattern Recognition (CVPR) (2017)

\bibitem{deepfool}
Moosavi-Dezfooli, S.M., Fawzi, A., Frossard, P.: Deepfool: A simple and
  accurate method to fool deep neural networks. In: The IEEE Conference on
  Computer Vision and Pattern Recognition (CVPR) (June 2016)

\bibitem{reduc-black}
Nitin~Bhagoji, A., He, W., Li, B., Song, D.: Practical black-box attacks on
  deep neural networks using efficient query mechanisms. In: The European
  Conference on Computer Vision (ECCV) (September 2018)

\bibitem{paszke2017pytorch}
Paszke, A., Gross, S., Chintala, S., Chanan, G., Yang, E., DeVito, Z., Lin, Z.,
  Desmaison, A., Antiga, L., Lerer, A.: Automatic differentiation in pytorch.
  In: NIPS-W (2017)

\bibitem{dinckl}
R{\'o}denas, R.G., L{\'o}pez, M.L., Verastegui, D.: Extensions of dinkelbach's
  algorithm for solving non-linear fractional programming problems. Top
  \textbf{7}(1),  33--70 (Jun 1999)

\bibitem{IMAGENET}
Russakovsky, O., Deng, J., Su, H., Krause, J., Satheesh, S., Ma, S., Huang, Z.,
  Karpathy, A., Khosla, A., Bernstein, M.S., Berg, A.C., Li, F.: Imagenet large
  scale visual recognition challenge. CoRR  \textbf{abs/1409.0575} (2014)

\bibitem{ioc}
Shou, Z., Gao, H., Zhang, L., Miyazawa, K., Chang, S.F.: Autoloc:
  weakly-supervised temporal action localization in untrimmed videos. In:
  Proceedings of the European Conference on Computer Vision (ECCV). pp.
  154--171 (2018)

\bibitem{adv-examiner}
Shu, M., Liu, C., Qiu, W., Yuille, A.: Identifying model weakness with
  adversarial examiner. Proceedings of the AAAI Conference on Artificial
  Intelligence  \textbf{34}(07),  11998–12006 (Apr 2020)

\bibitem{vgg}
Simonyan, K., Zisserman, A.: Very deep convolutional networks for large-scale
  image recognition. arXiv preprint arXiv:1409.1556  (2014)

\bibitem{numerical}
Stroud, A.H.: Methods of numerical integration (philip j. davis and philip
  rabinowitz). SIAM Review  \textbf{18}(3),  528--2 (07 1976)

\bibitem{inception}
Szegedy, C., Vanhoucke, V., Ioffe, S., Shlens, J., Wojna, Z.: Rethinking the
  inception architecture for computer vision. In: Proceedings of the IEEE
  conference on computer vision and pattern recognition. pp. 2818--2826 (2016)

\bibitem{unn-robustness-noise5}
Szegedy, C., Zaremba, W., Sutskever, I., Bruna, J., Erhan, D., Goodfellow, I.,
  Fergus, R.: Intriguing properties of neural networks. ICLR  (2013)

\bibitem{first-attack}
Szegedy, C., Zaremba, W., Sutskever, I., Bruna, J., Erhan, D., Goodfellow,
  I.J., Fergus, R.: Intriguing properties of neural networks. CoRR
  \textbf{abs/1312.6199} (2013)

\bibitem{bias}
Torralba, A., Efros, A.A.: Unbiased look at dataset bias. In: CVPR (2011)

\bibitem{unn-visual3}
Vondrick, C., Khosla, A., Malisiewicz, T., Torralba, A.: Hoggles: Visualizing
  object detection features. In: Proceedings of the IEEE International
  Conference on Computer Vision. pp.~1--8 (2013)

\bibitem{physicalattack}
Zeng, X., Liu, C., Wang, Y.S., Qiu, W., Xie, L., Tai, Y.W., Tang, C.K., Yuille,
  A.L.: Adversarial attacks beyond the image space. In: The IEEE Conference on
  Computer Vision and Pattern Recognition (CVPR) (June 2019)

\end{thebibliography}
\clearpage
\appendix

\section{Analyzing Deep Neural Networks}
Here we visualize different Network semantic maps generated during our analysis. In the 1D case we fix the elevation of the camera to a nominal angle of $\ang{35}$ and rotate around the object. In the 2D case, we change both the elevation and azimuth around the object. These maps can be generated to any type of semantic parameters that affect the generation of the image , and not viewing angle. 
\subsection{Networks Semantic Maps (1D)}
In \figLabel{\ref{fig:nsm1d-1},\ref{fig:nsm1d-2}} we visualize the 1D semantic maps of rotating around the object and recording different DNNs performance and averaging the profile over 10 different shapes per class.

\begin{figure*}[h]
\centering
\tabcolsep=0.03cm
   \begin{tabular}{c|c}
\includegraphics[width = 0.49\linewidth]{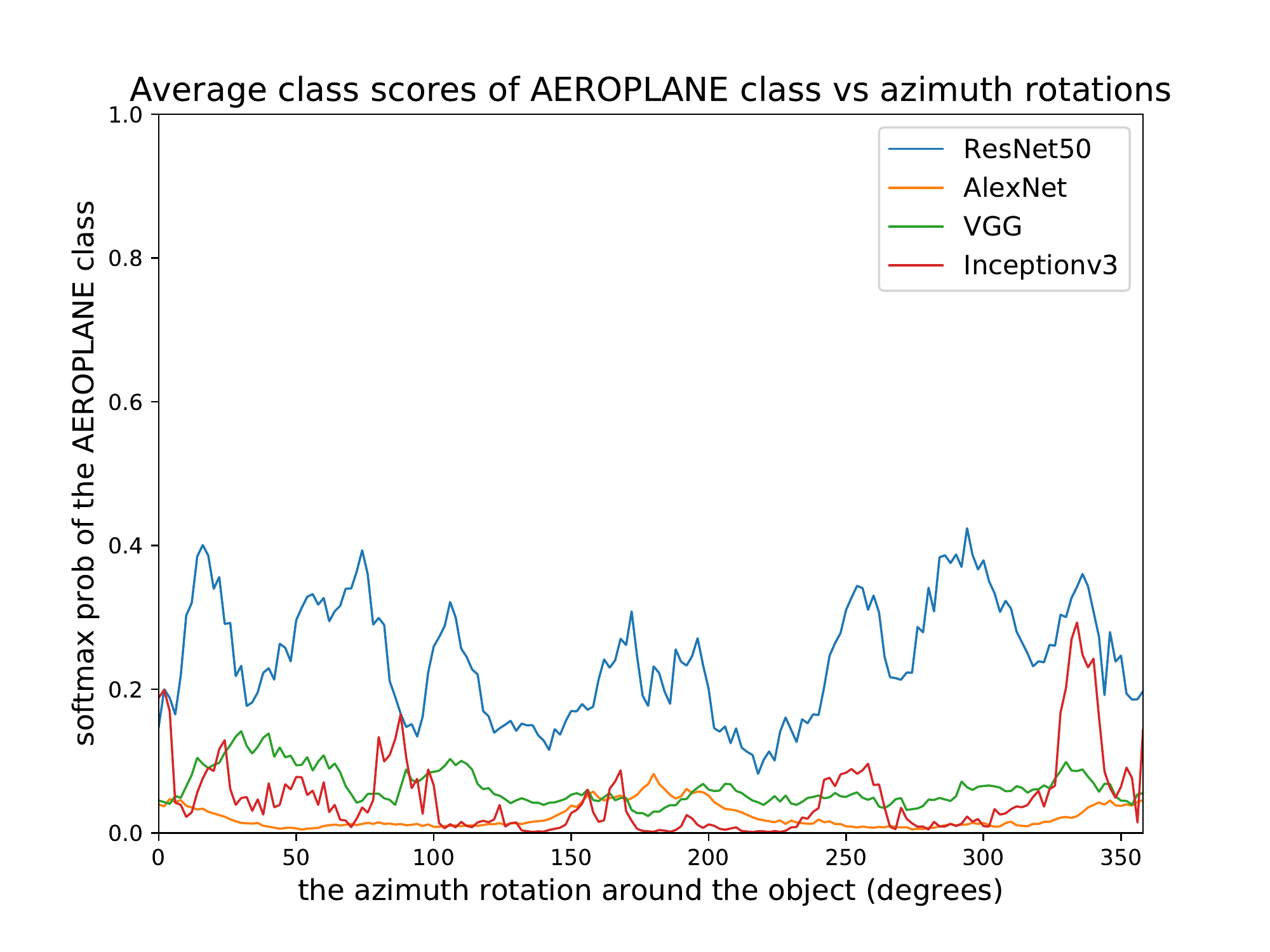}&
\includegraphics[width = 0.49\linewidth]{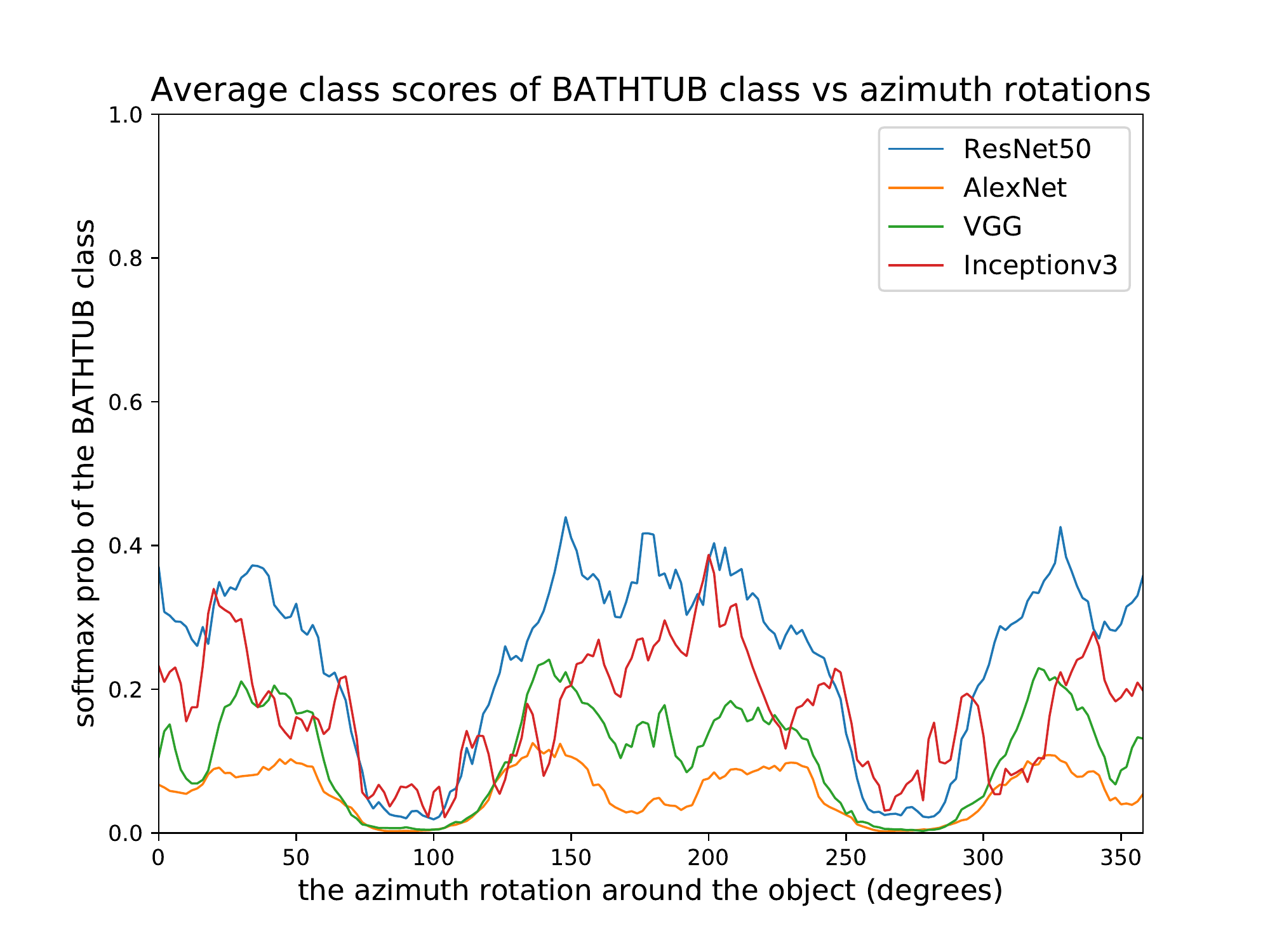}\\ \hline
\includegraphics[width = 0.49\linewidth]{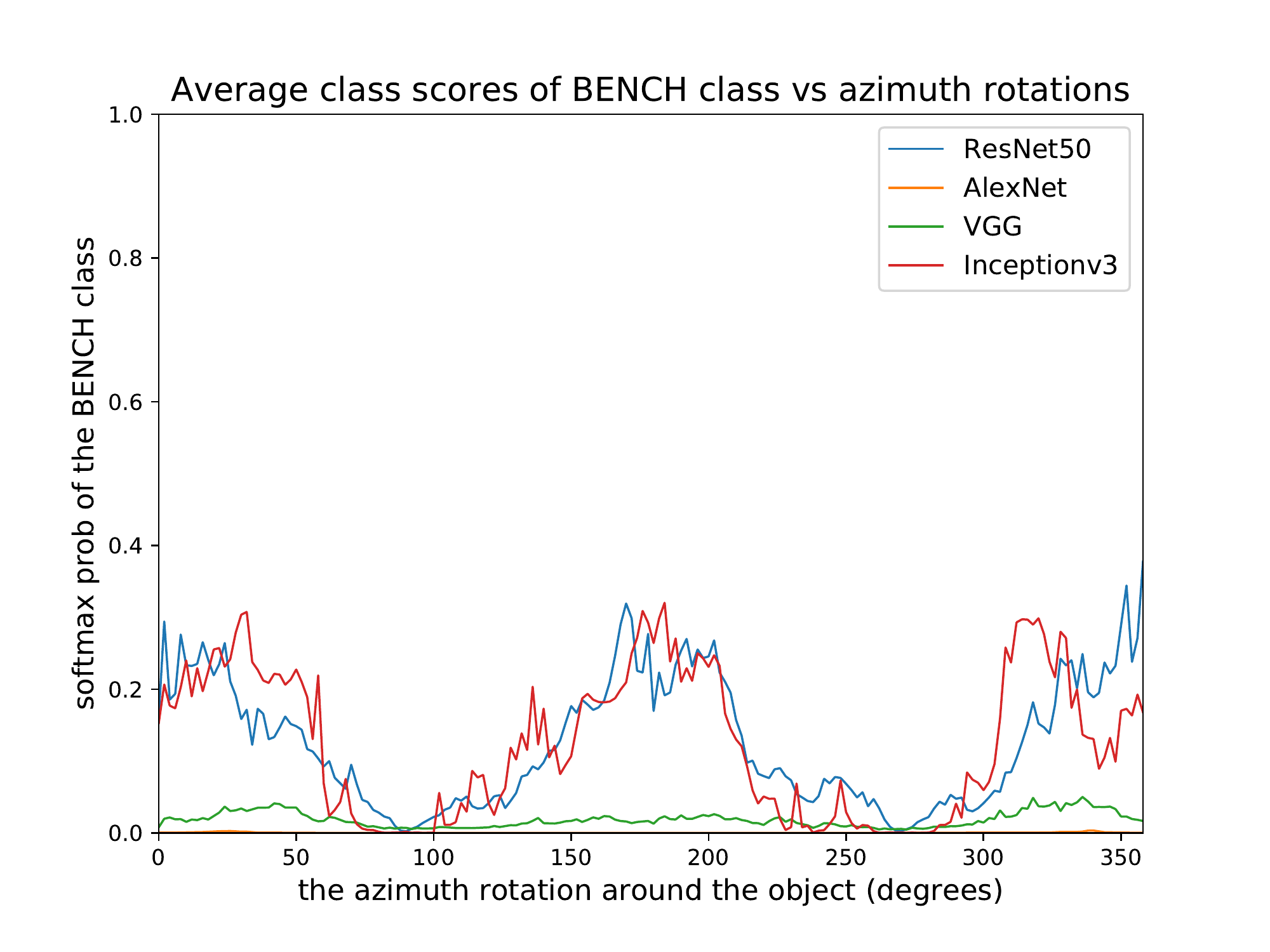}&
\includegraphics[width = 0.49\linewidth]{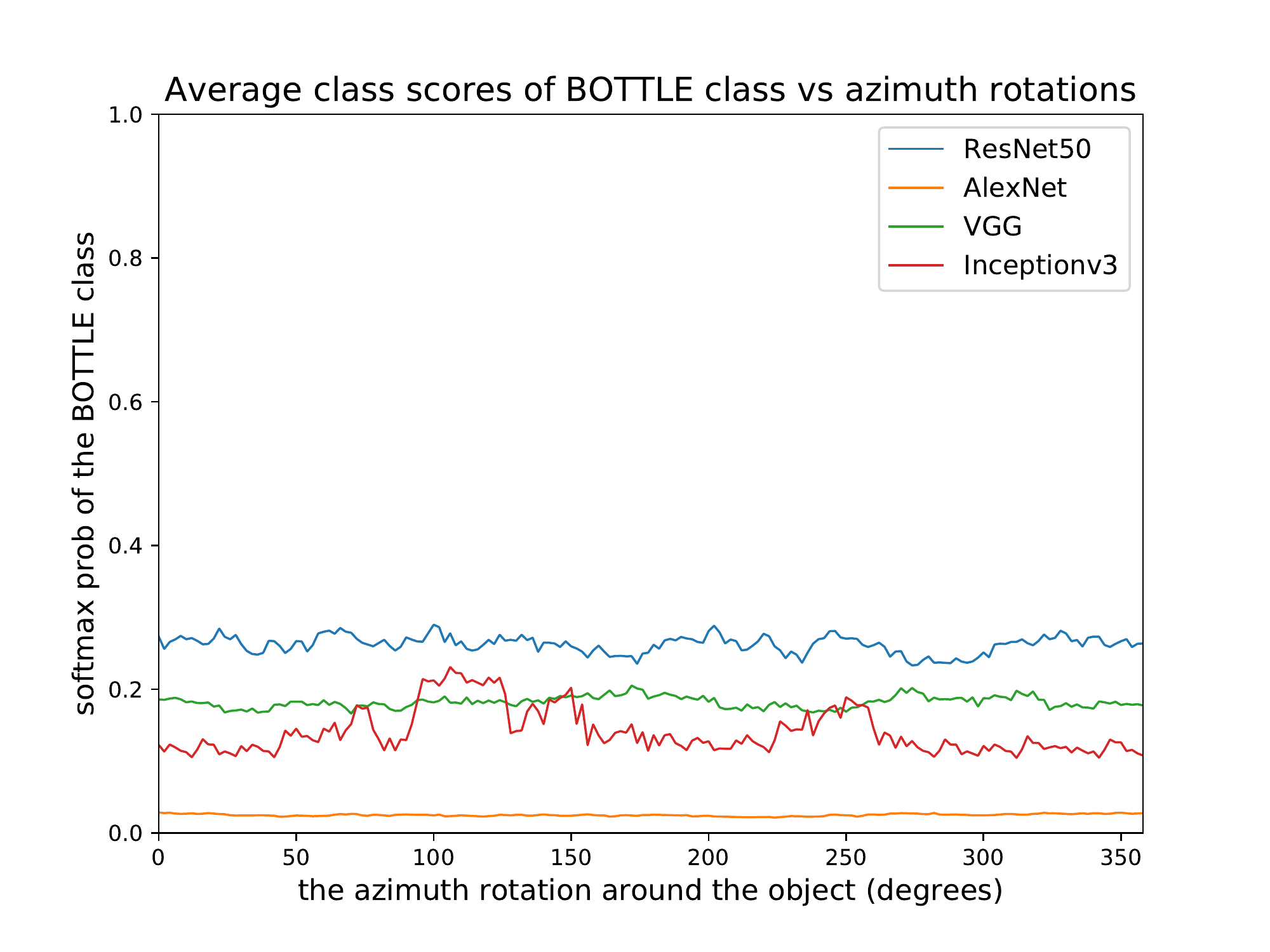}
\end{tabular}
   \caption{\small \textbf{1D Network Semantic Maps NMS-I}: visualizing 1D Semantic Robustness profile for different networks averaged over 10 different shapes. Observe that different DNNs profiles differ depending on the training , accuracy , and network architectures that all result in a unique ''signatures" for the DNN on that class. The correlation between the DNN profiles is due to the common data bias in ImageNet.}
   \vspace{-8pt}
   \label{fig:nsm1d-1}
\end{figure*}

\begin{figure*}[h]
\centering
\tabcolsep=0.03cm
   \begin{tabular}{c|c}
\includegraphics[width = 0.49\linewidth]{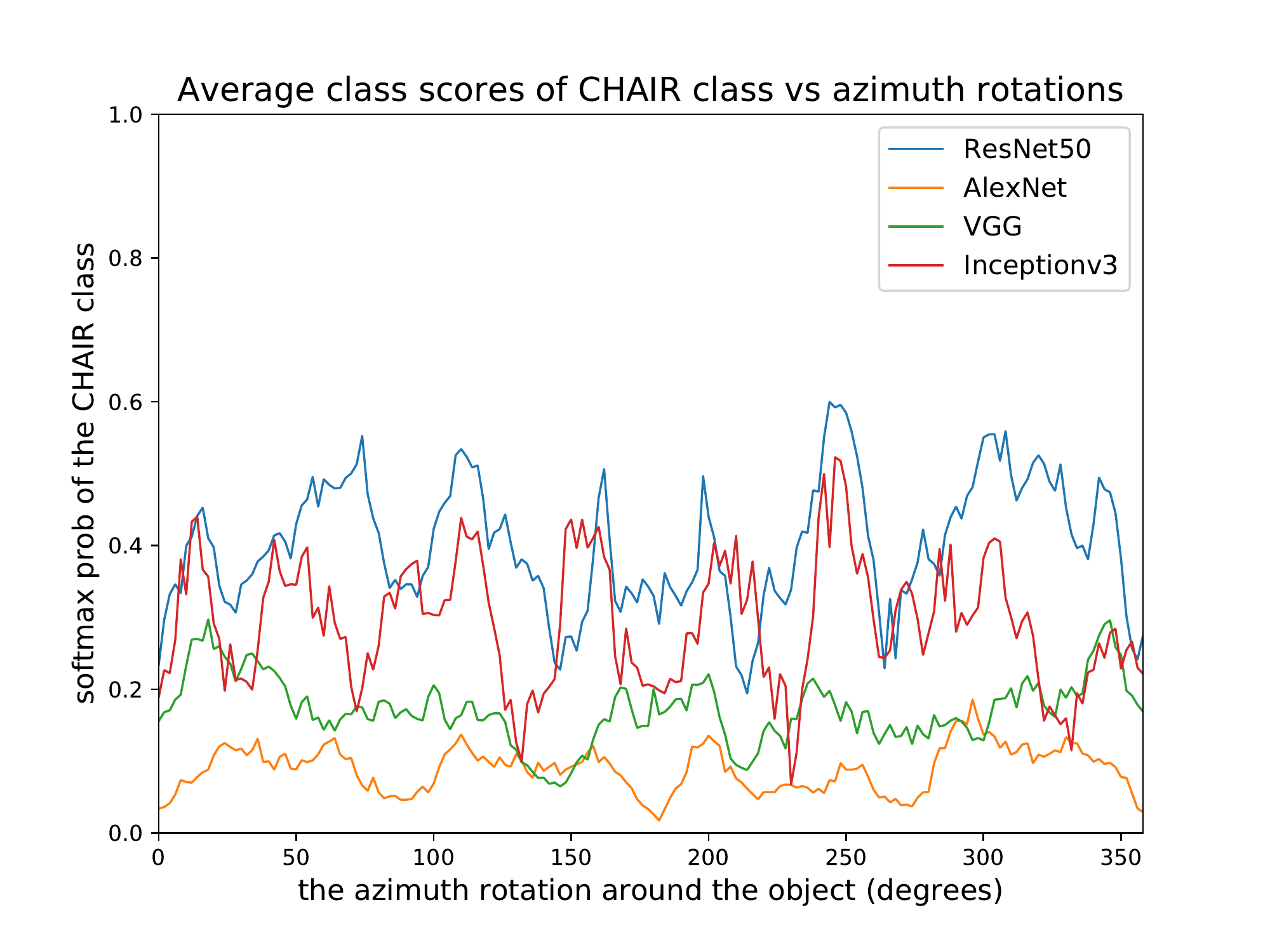}&
\includegraphics[width = 0.49\linewidth]{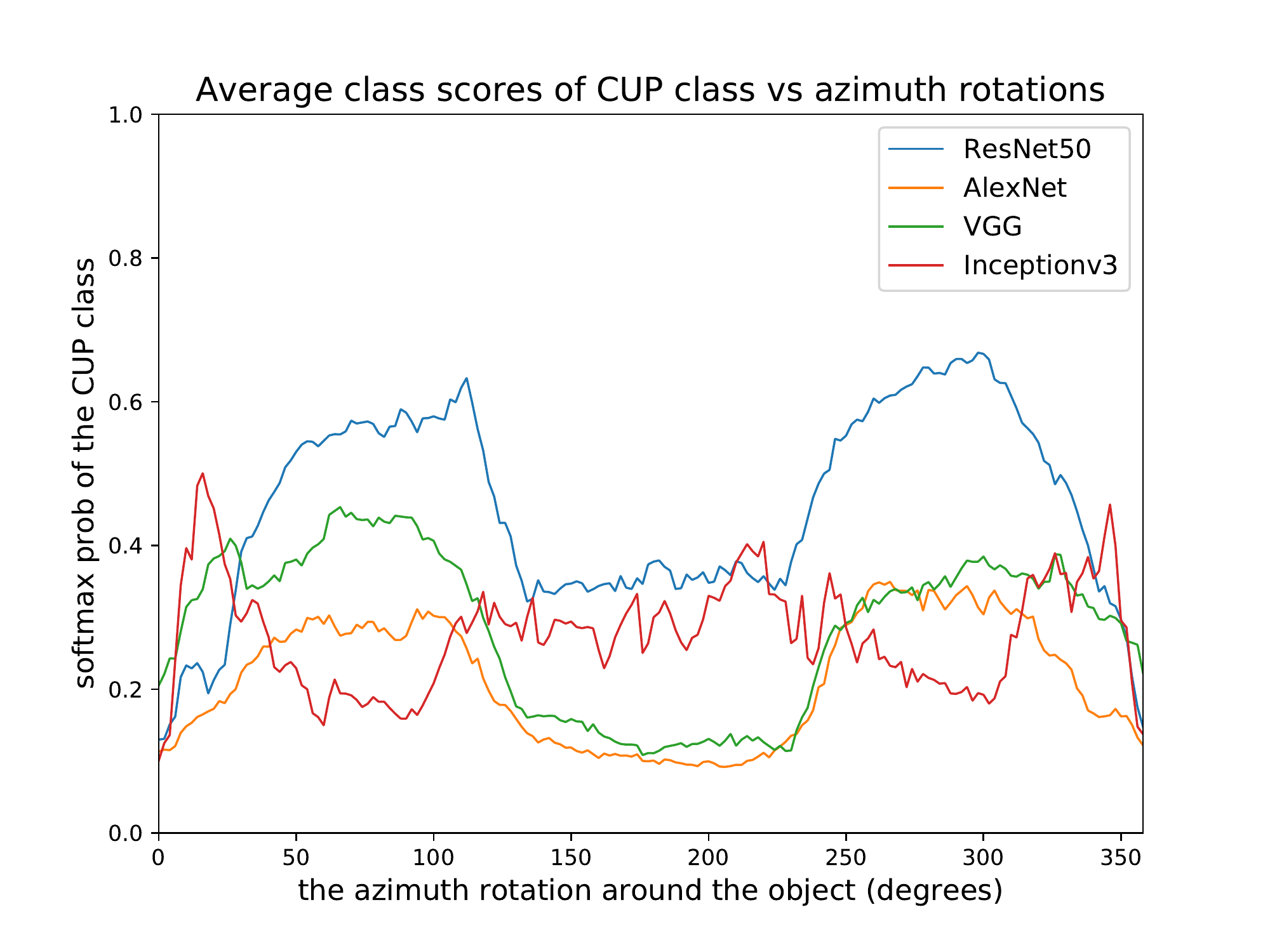}\\\hline
\includegraphics[width = 0.49\linewidth]{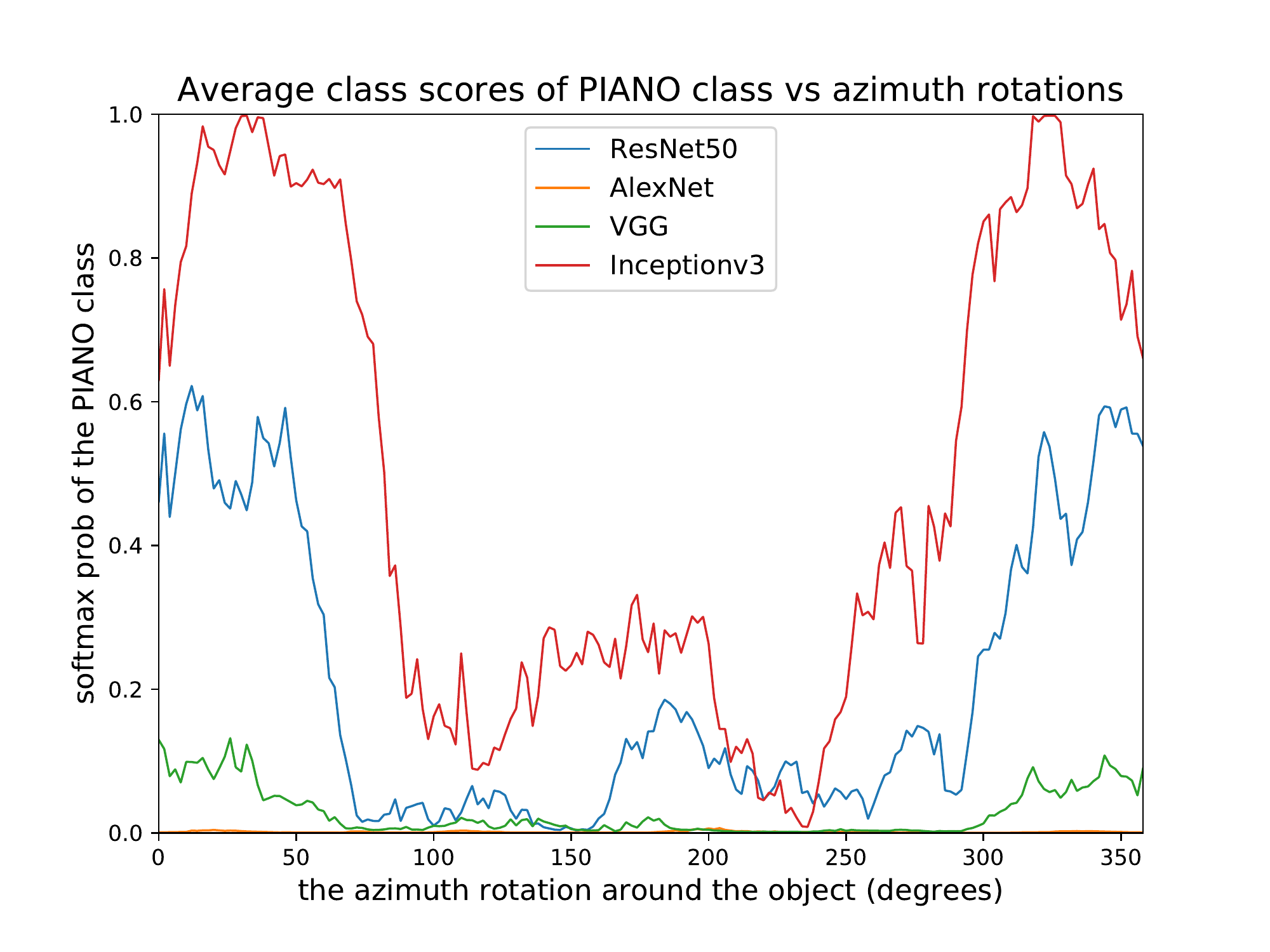}&
\includegraphics[width = 0.49\linewidth]{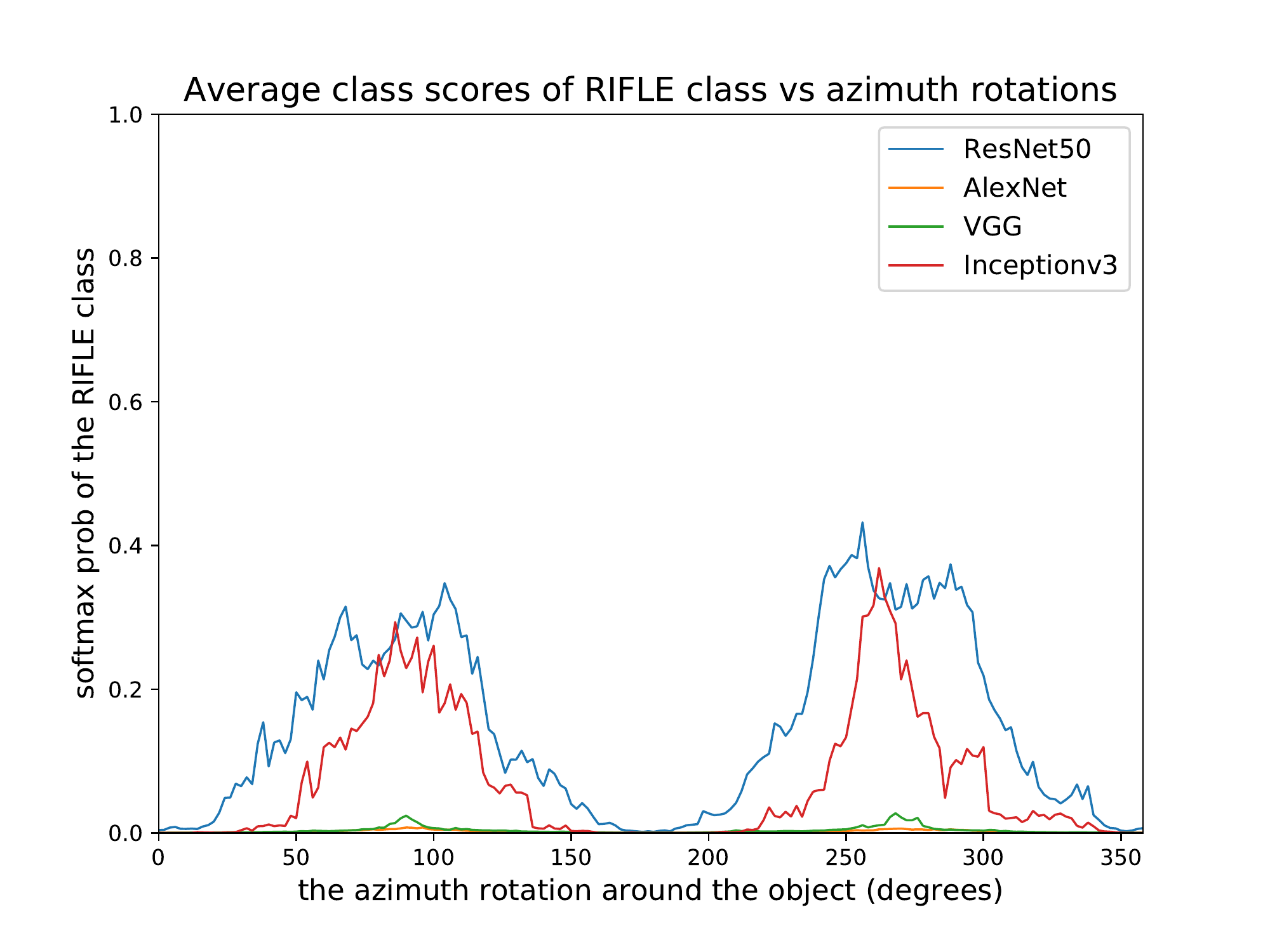}\\\hline
\includegraphics[width = 0.49\linewidth]{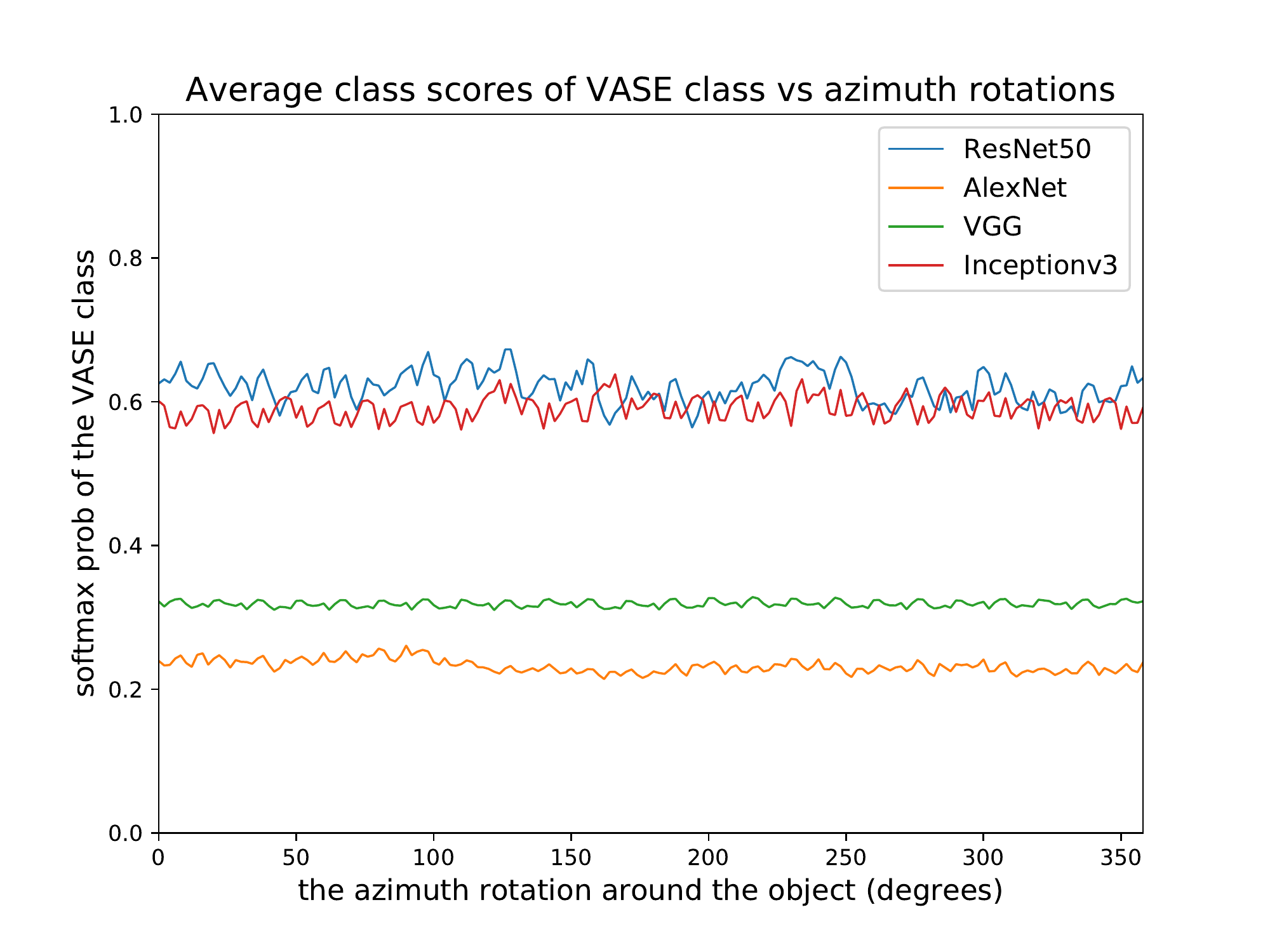}&
\includegraphics[width = 0.49\linewidth]{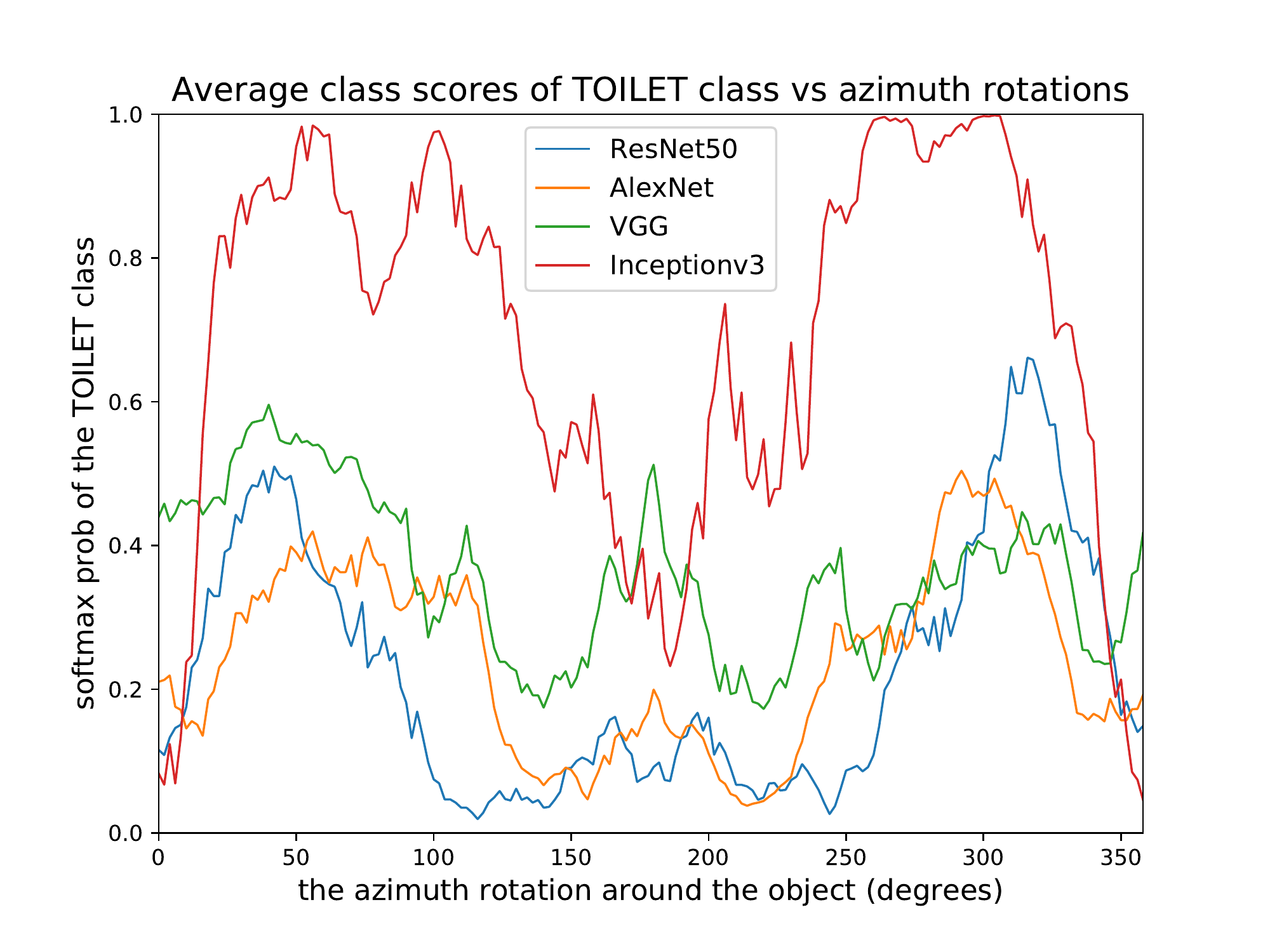}
\end{tabular}
   \caption{\small \textbf{1D Network Semantic Maps NMS-II}: visualizing 1D Semantic Robustness profile for different networks averaged over 10 different shapes. Observe that different DNNs profiles differ depending on the training , accuracy , and network architectures that all result in a unique ''signatures" for the DNN on that class. The correlation between the DNN profiles is due to the common data bias in ImageNet.}
   \vspace{-8pt}
   \label{fig:nsm1d-2}
\end{figure*}

\subsection{Networks Semantic Maps (2D)}
In \figLabel{\ref{fig:nsm2d-1},\ref{fig:nsm2d-2}} we visualize the 2D semantic maps of elevation angles and rotating around the object and recording different DNNs performance and averaging the maps over 10 different shapes per class.
\begin{figure*}[h]
\centering
\tabcolsep=0.03cm
   \begin{tabular}{||c|c|c|c||} \hline
   \textbf{AlexNet} & \textbf{VGG} &\textbf{ResNet50} & \textbf{InceptionV3} \\ \hline
\includegraphics[width = 0.24\linewidth]{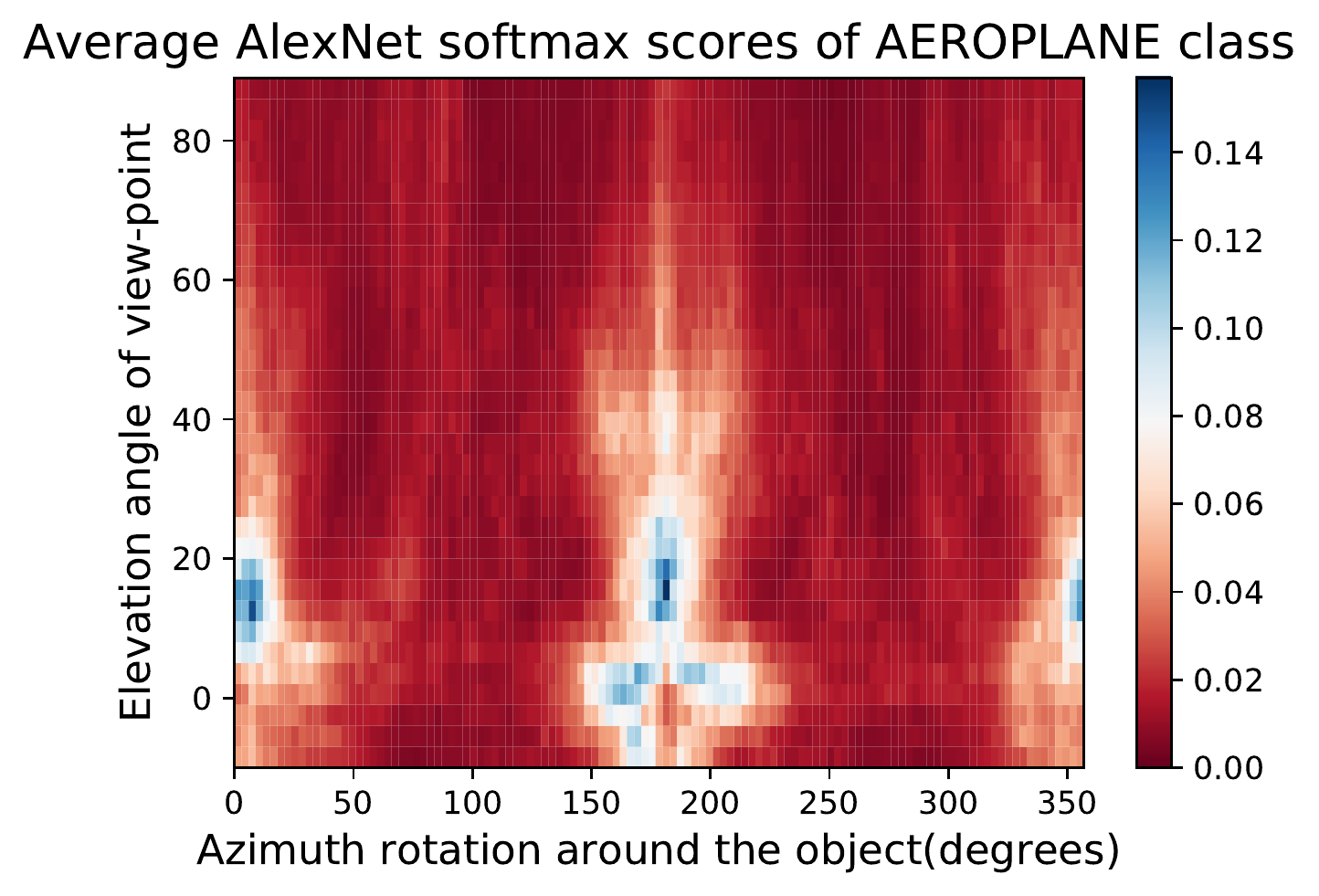}&
\includegraphics[width = 0.24\linewidth]{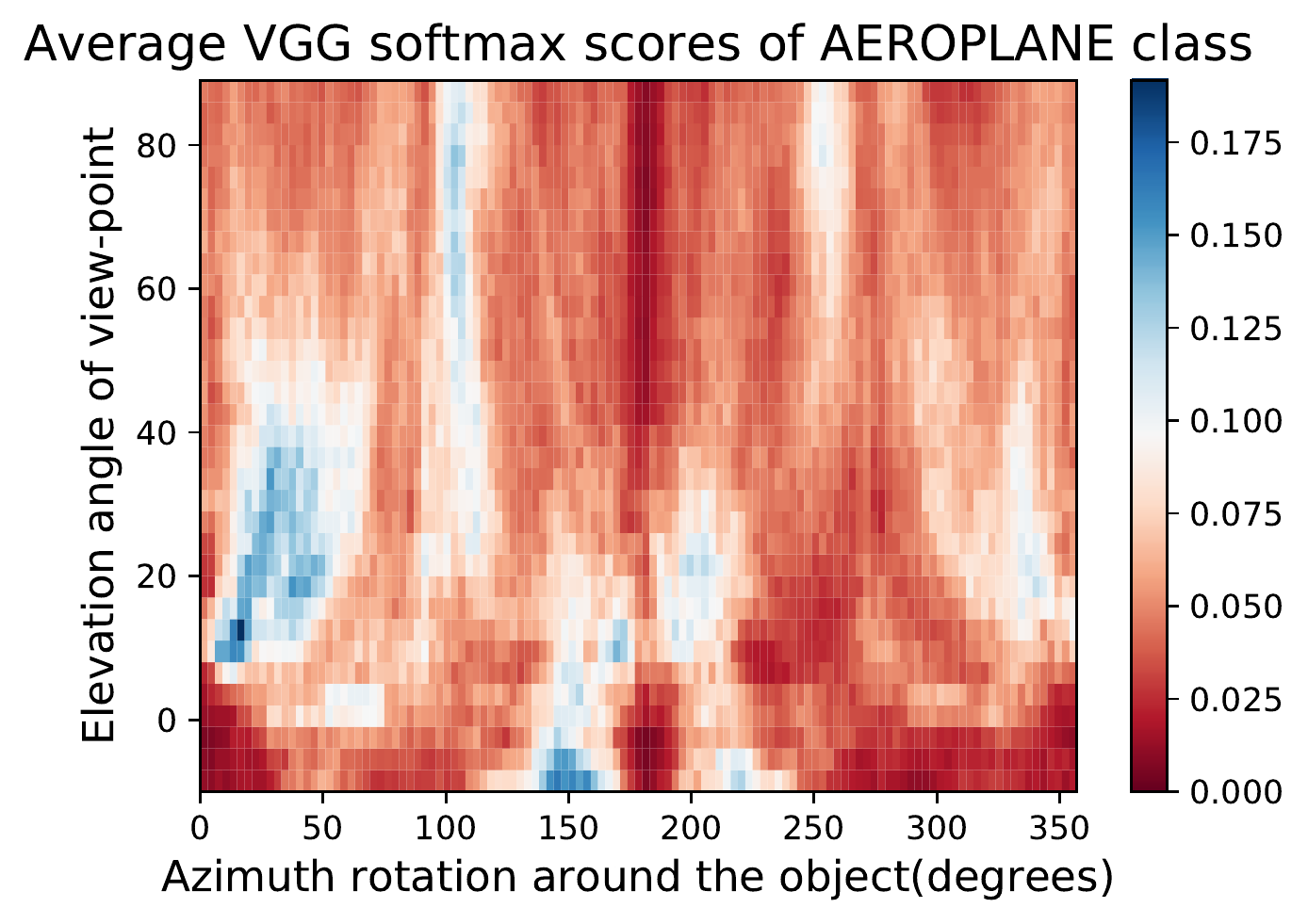}&
\includegraphics[width = 0.24\linewidth]{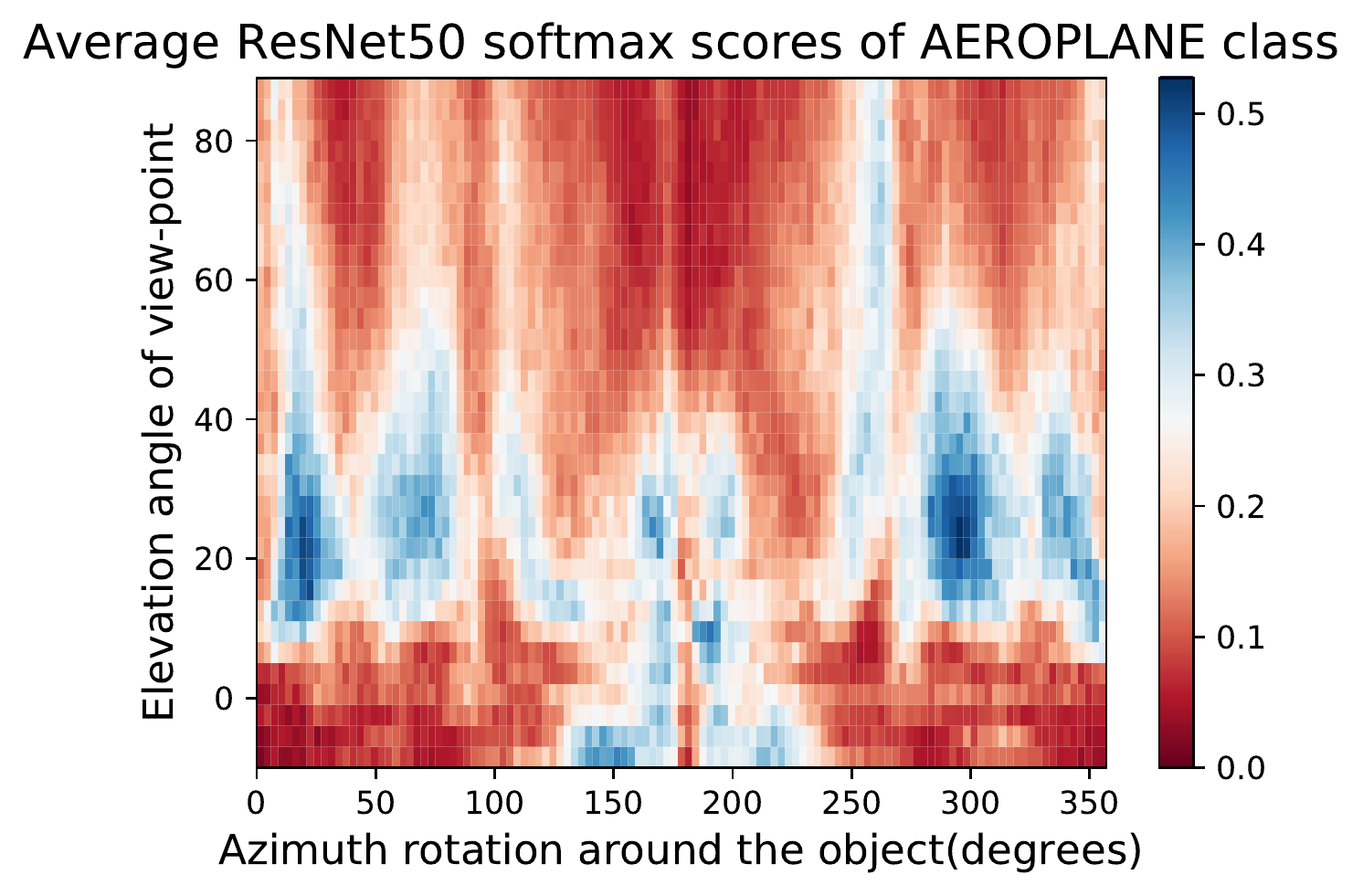}&
\includegraphics[width = 0.24\linewidth]{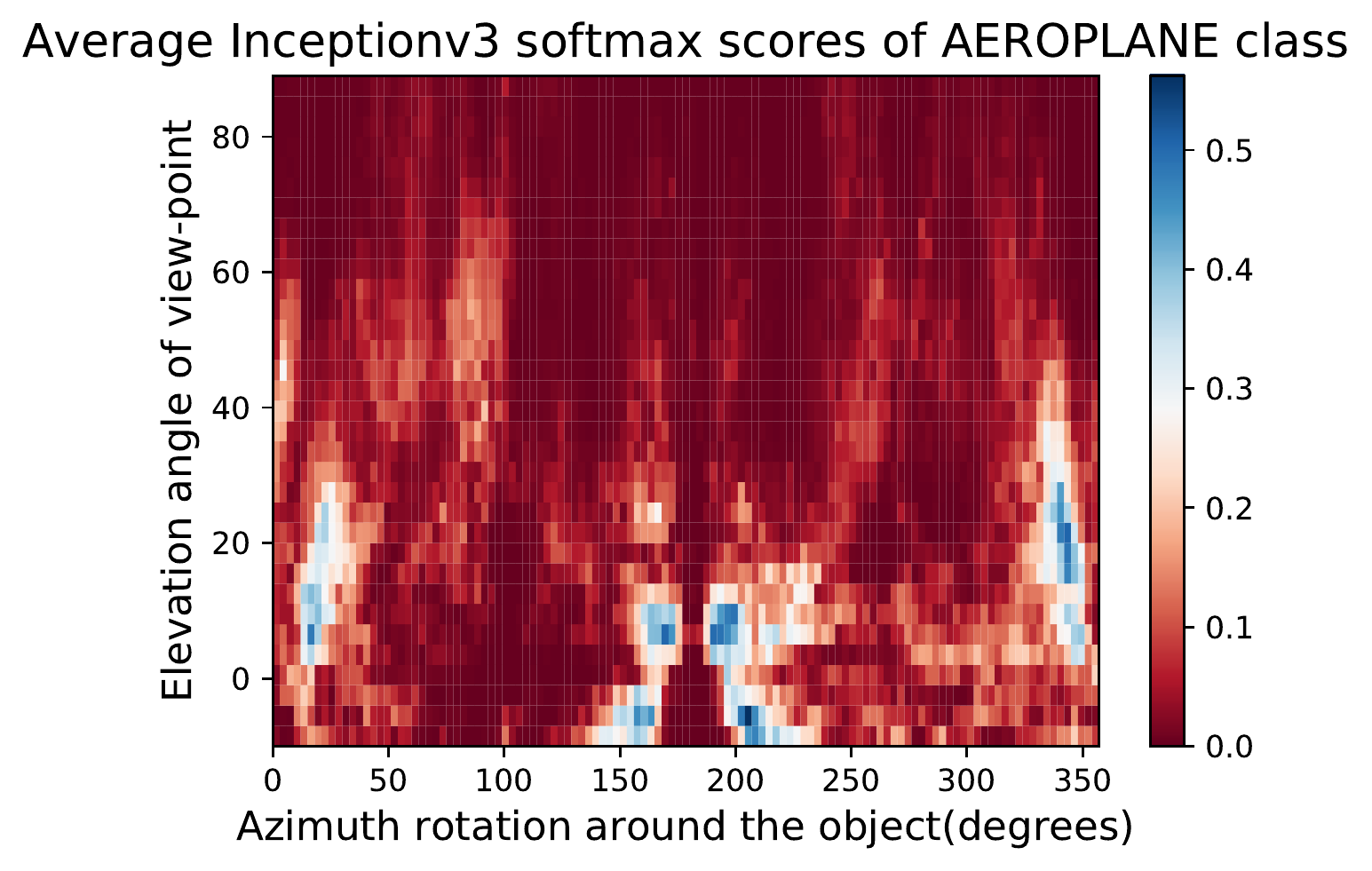}\\ \hline
\includegraphics[width = 0.24\linewidth]{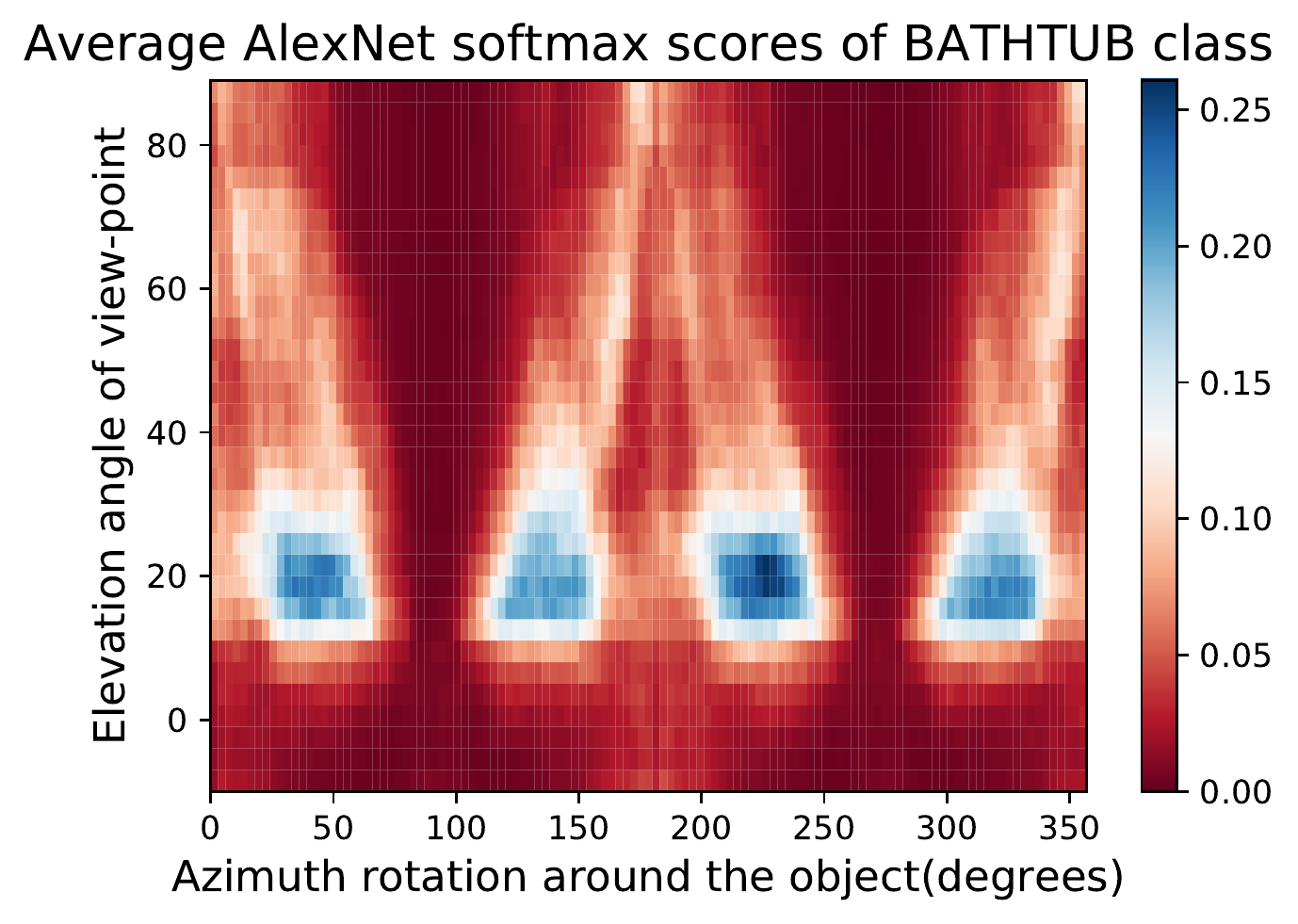}&
\includegraphics[width = 0.24\linewidth]{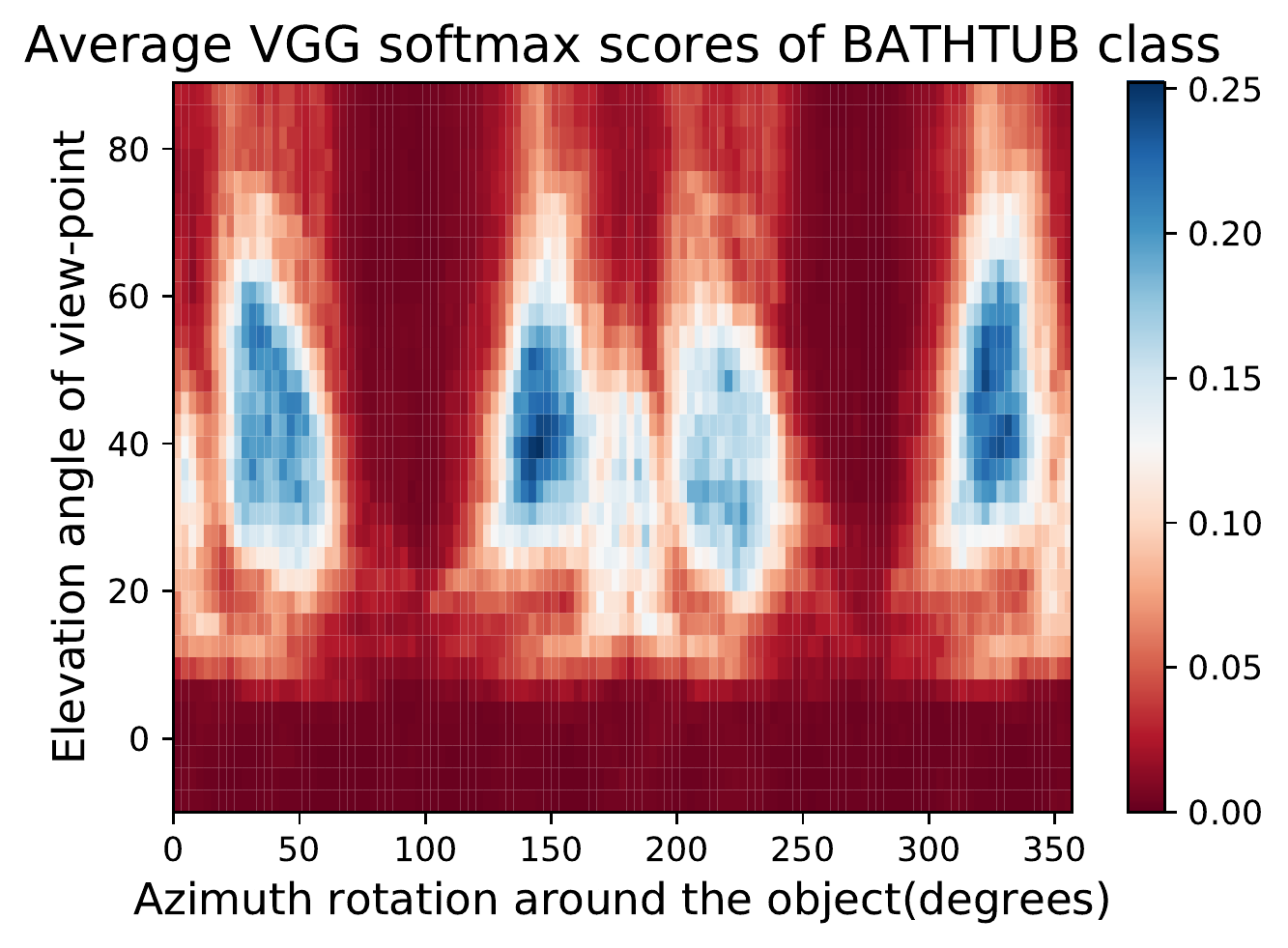}&
\includegraphics[width = 0.24\linewidth]{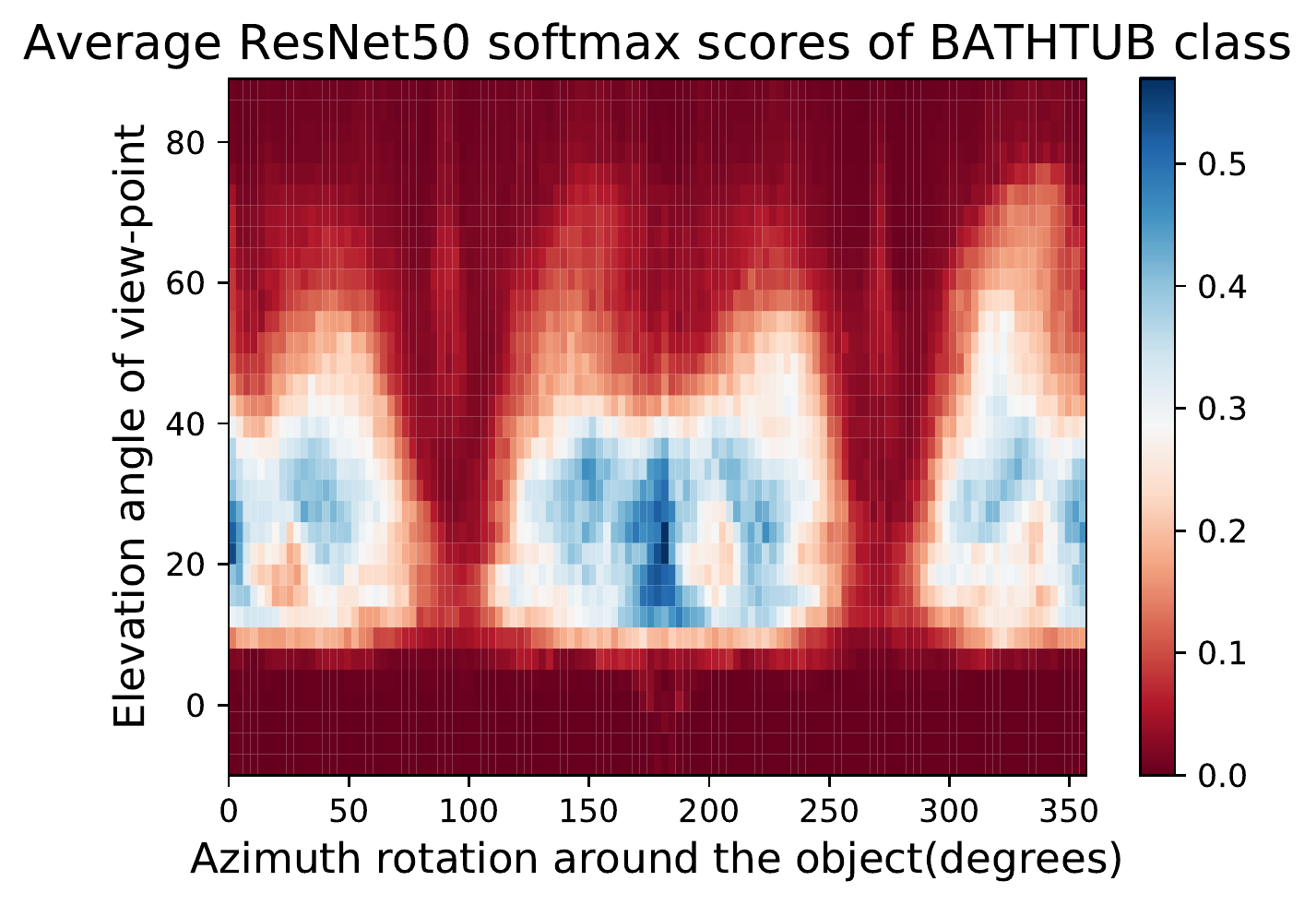}&
\includegraphics[width = 0.24\linewidth]{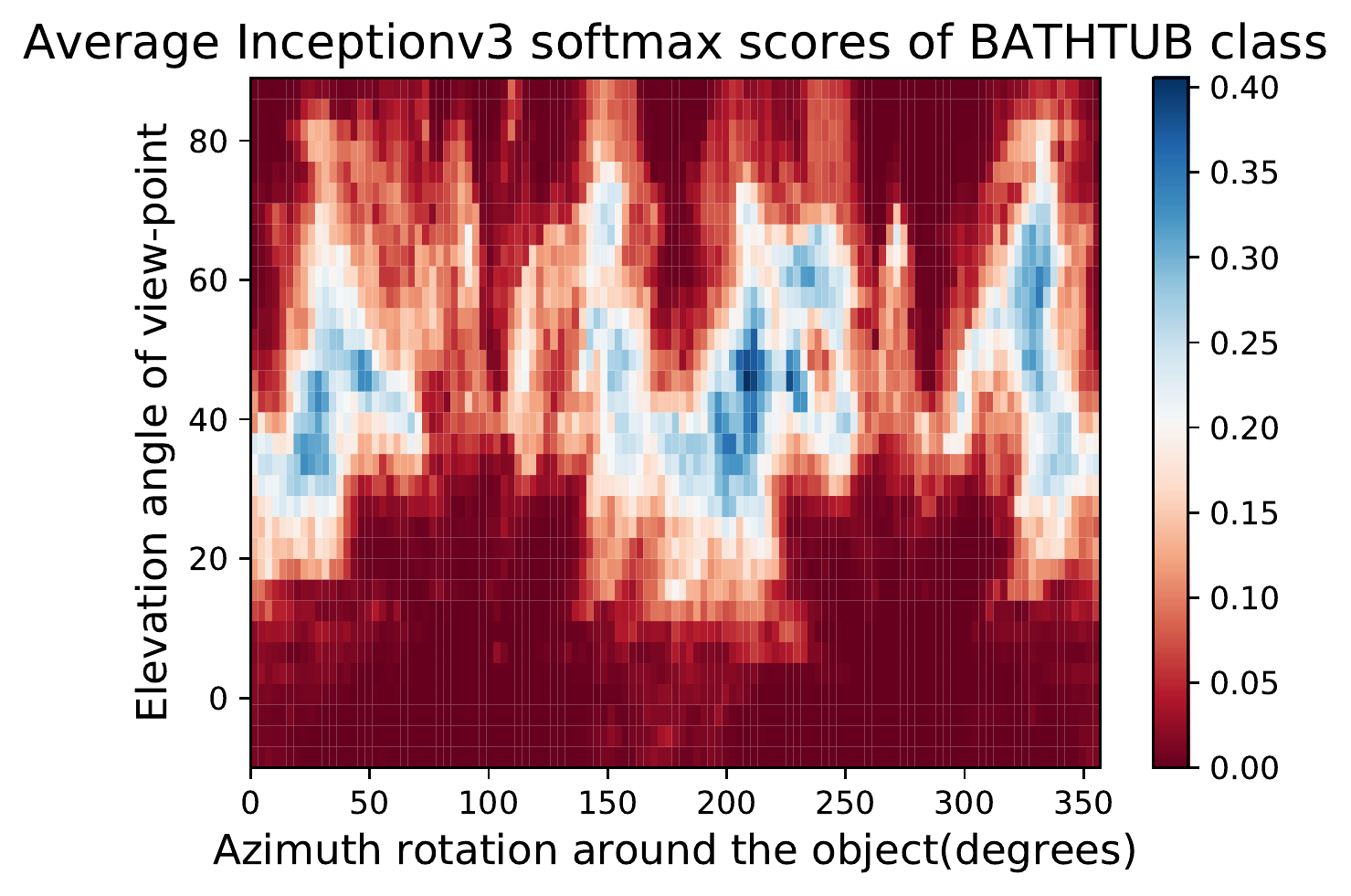}\\ \hline
\includegraphics[width = 0.24\linewidth]{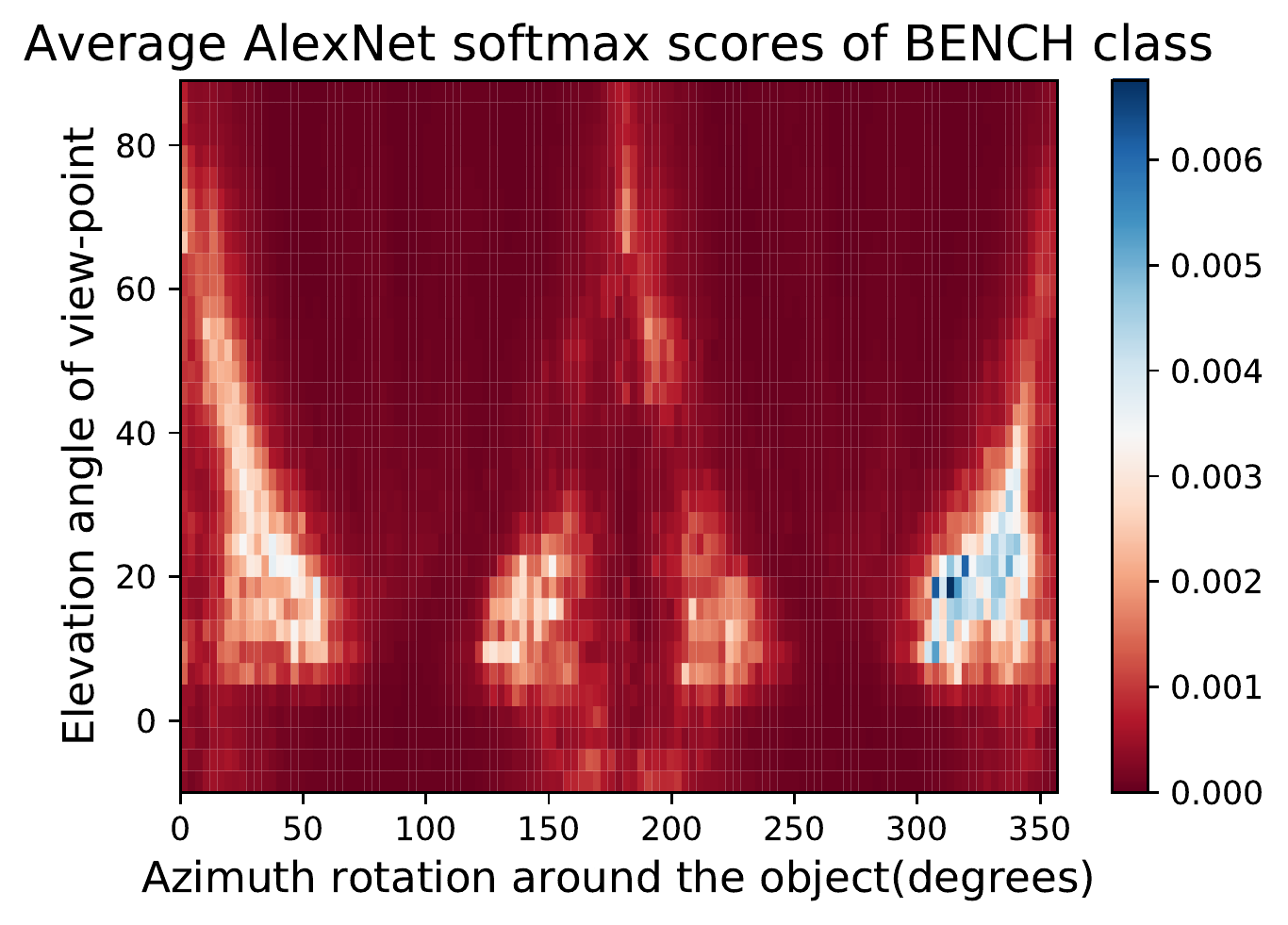}&
\includegraphics[width = 0.24\linewidth]{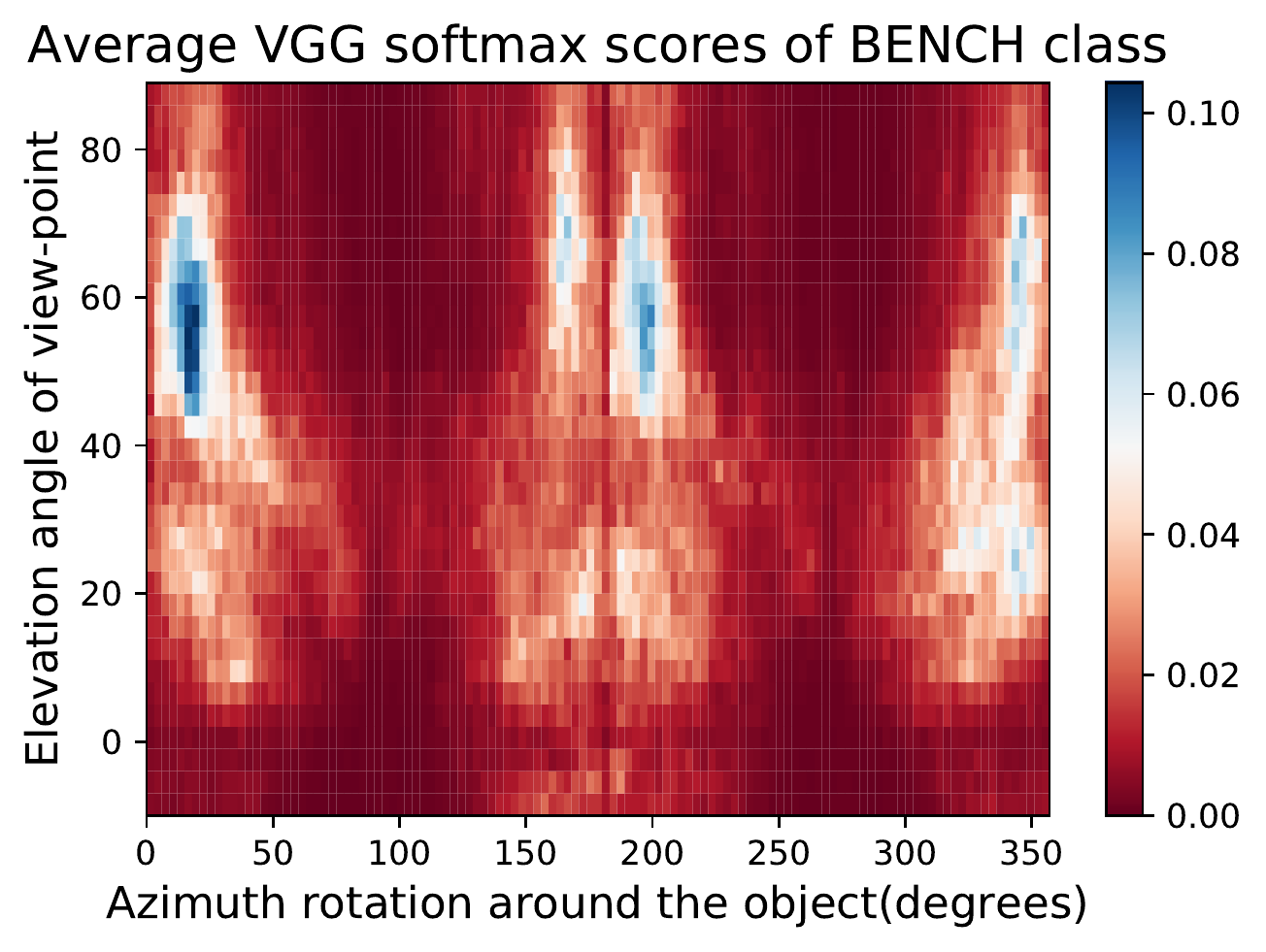}&
\includegraphics[width = 0.24\linewidth]{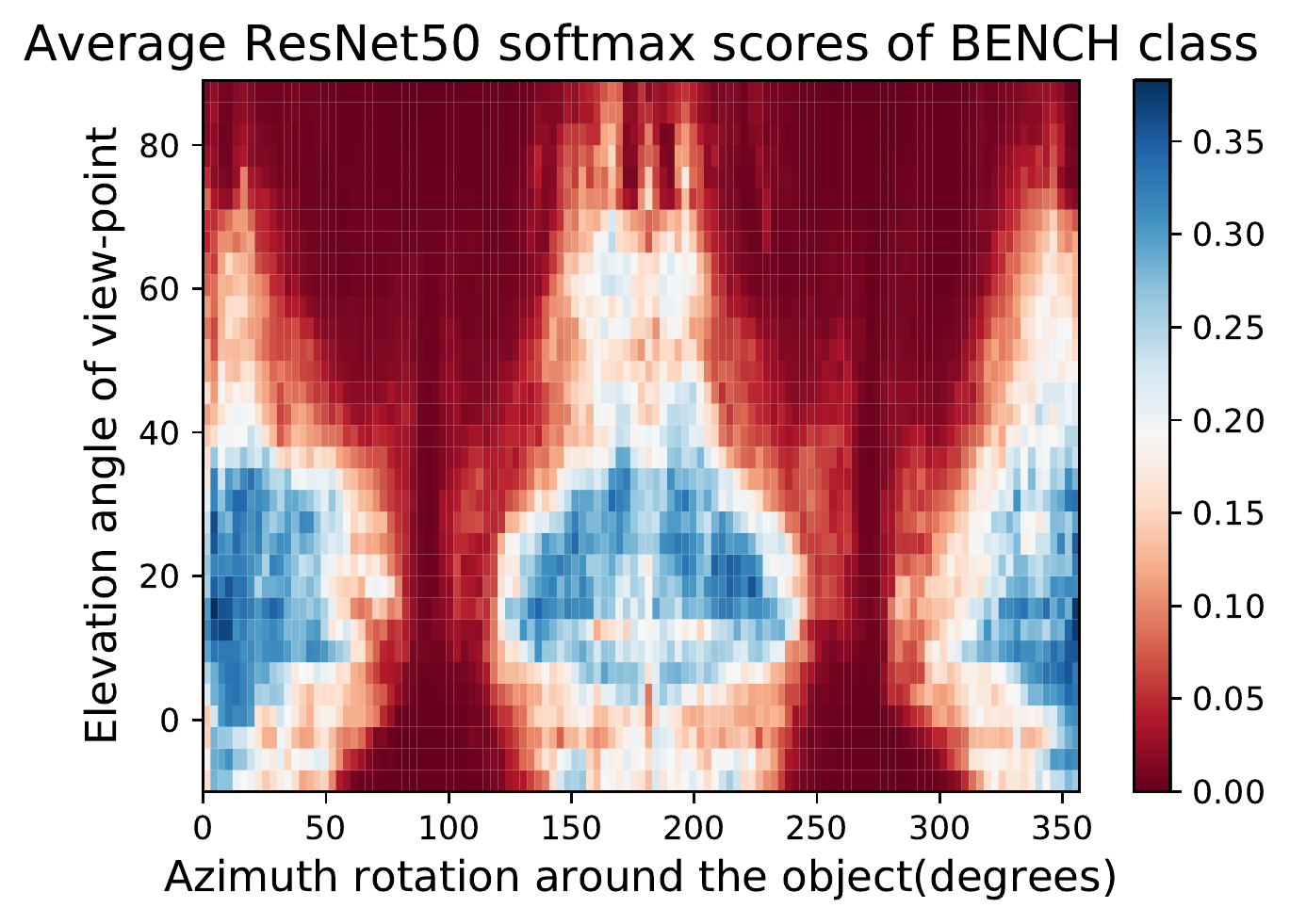}&
\includegraphics[width = 0.24\linewidth]{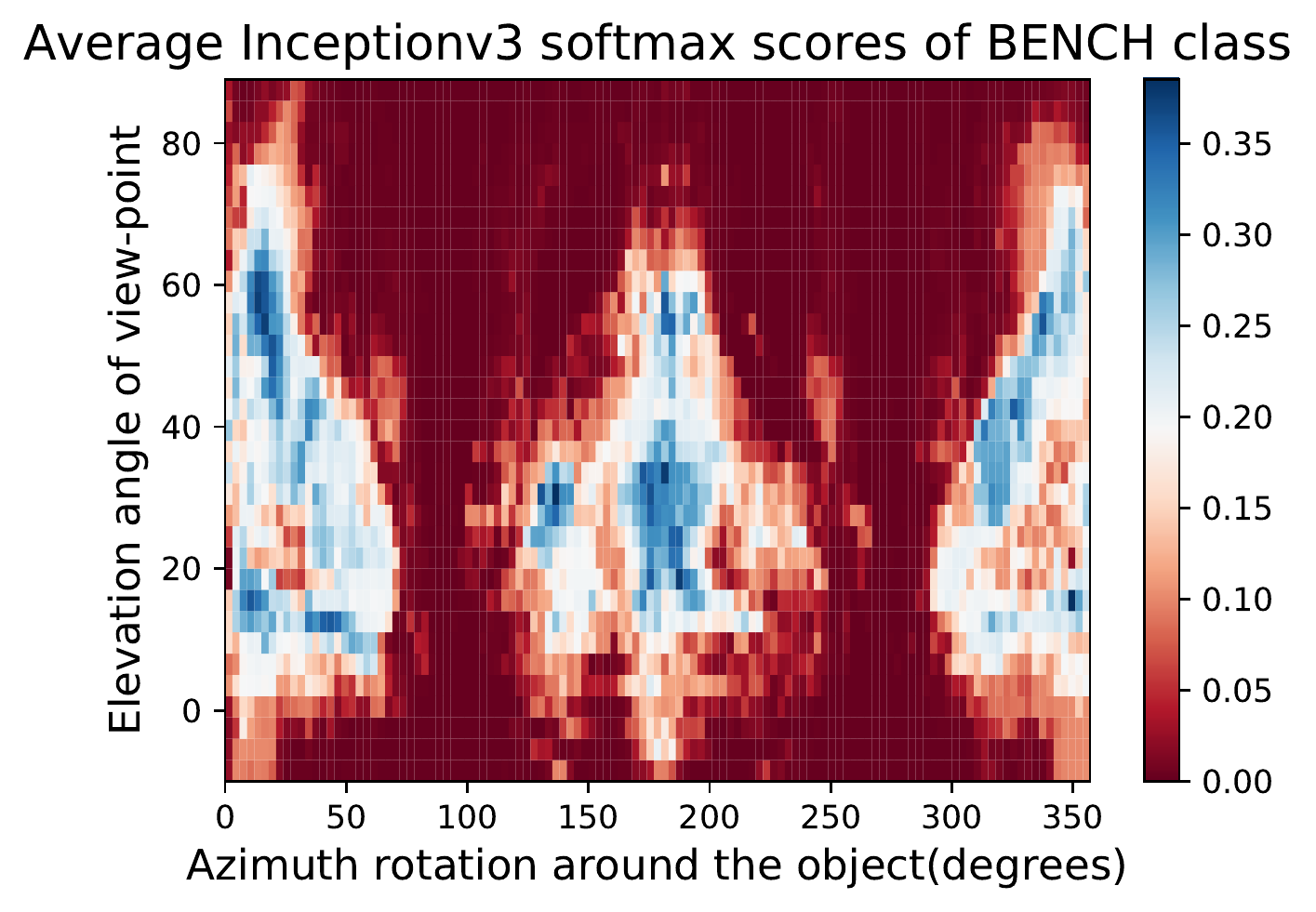}\\ \hline
\includegraphics[width = 0.24\linewidth]{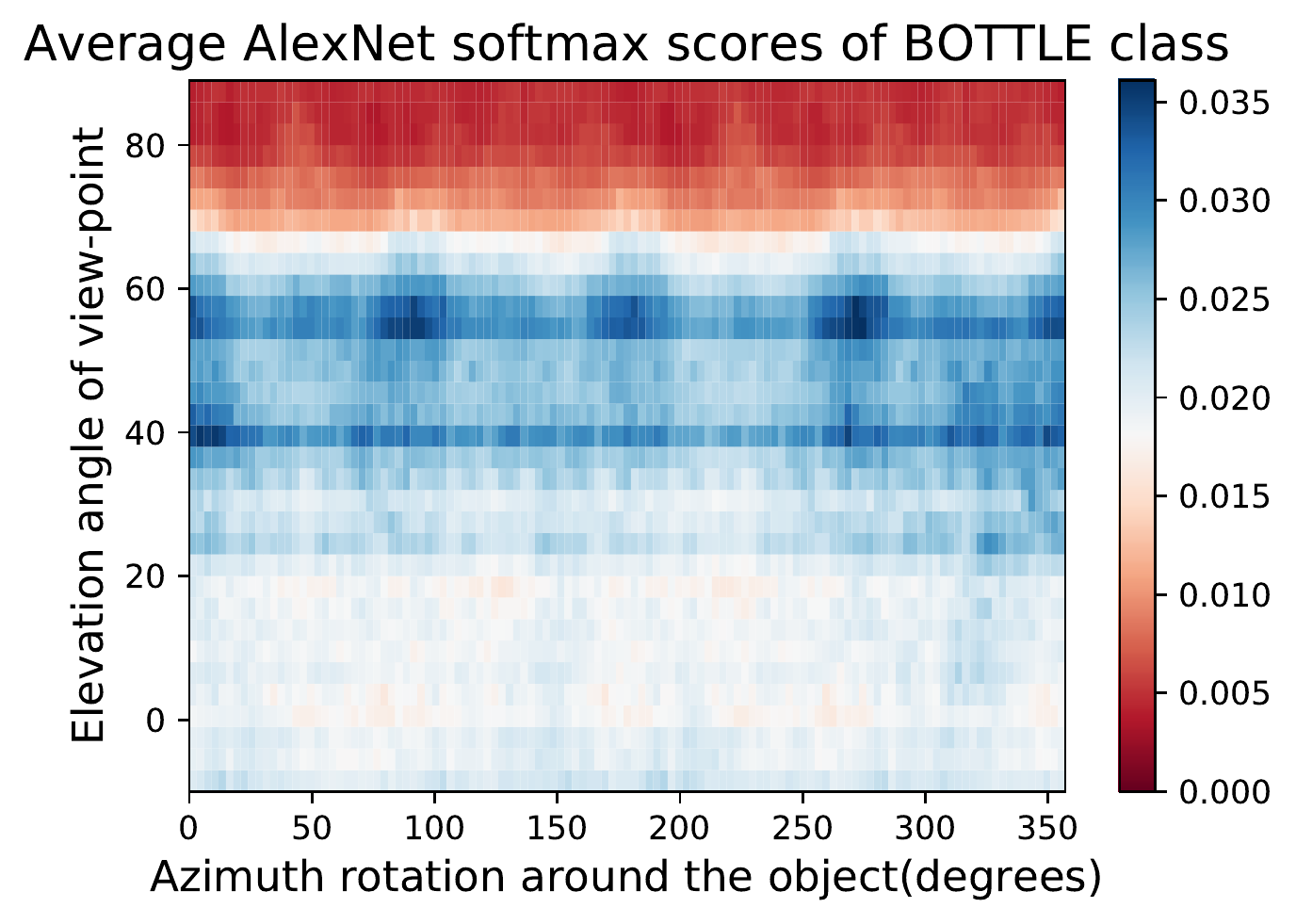}&
\includegraphics[width = 0.24\linewidth]{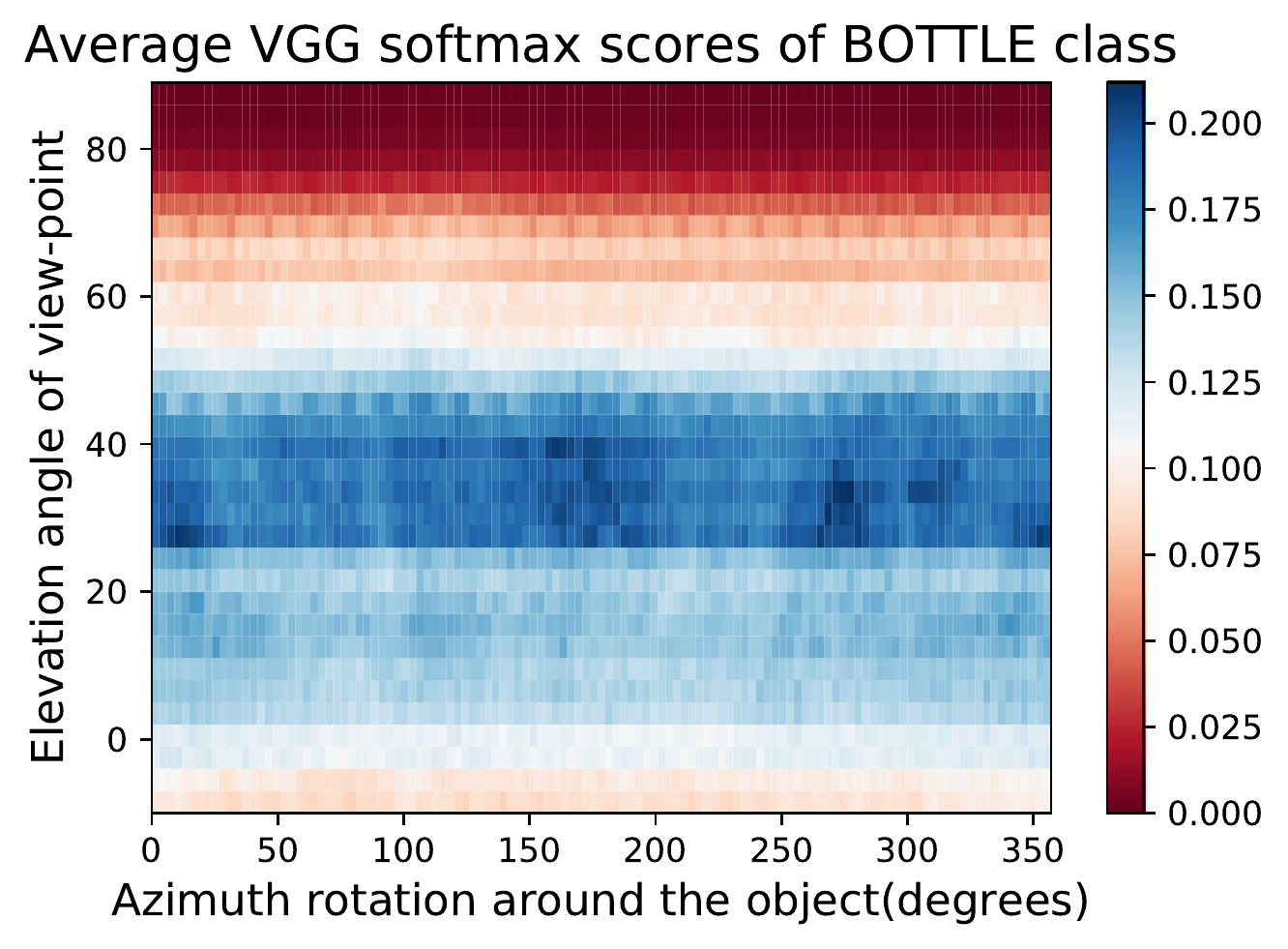}&
\includegraphics[width = 0.24\linewidth]{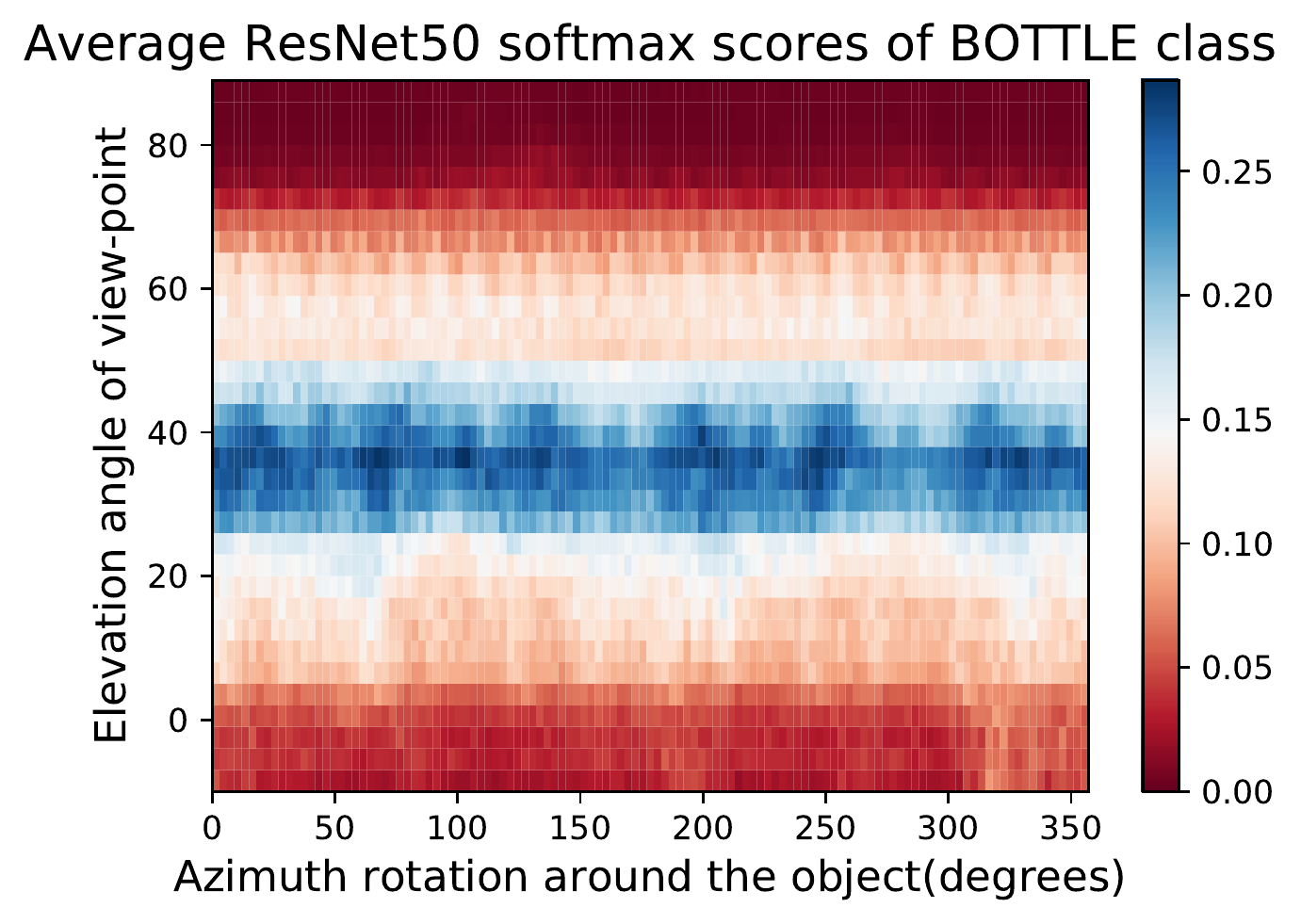}&
\includegraphics[width = 0.24\linewidth]{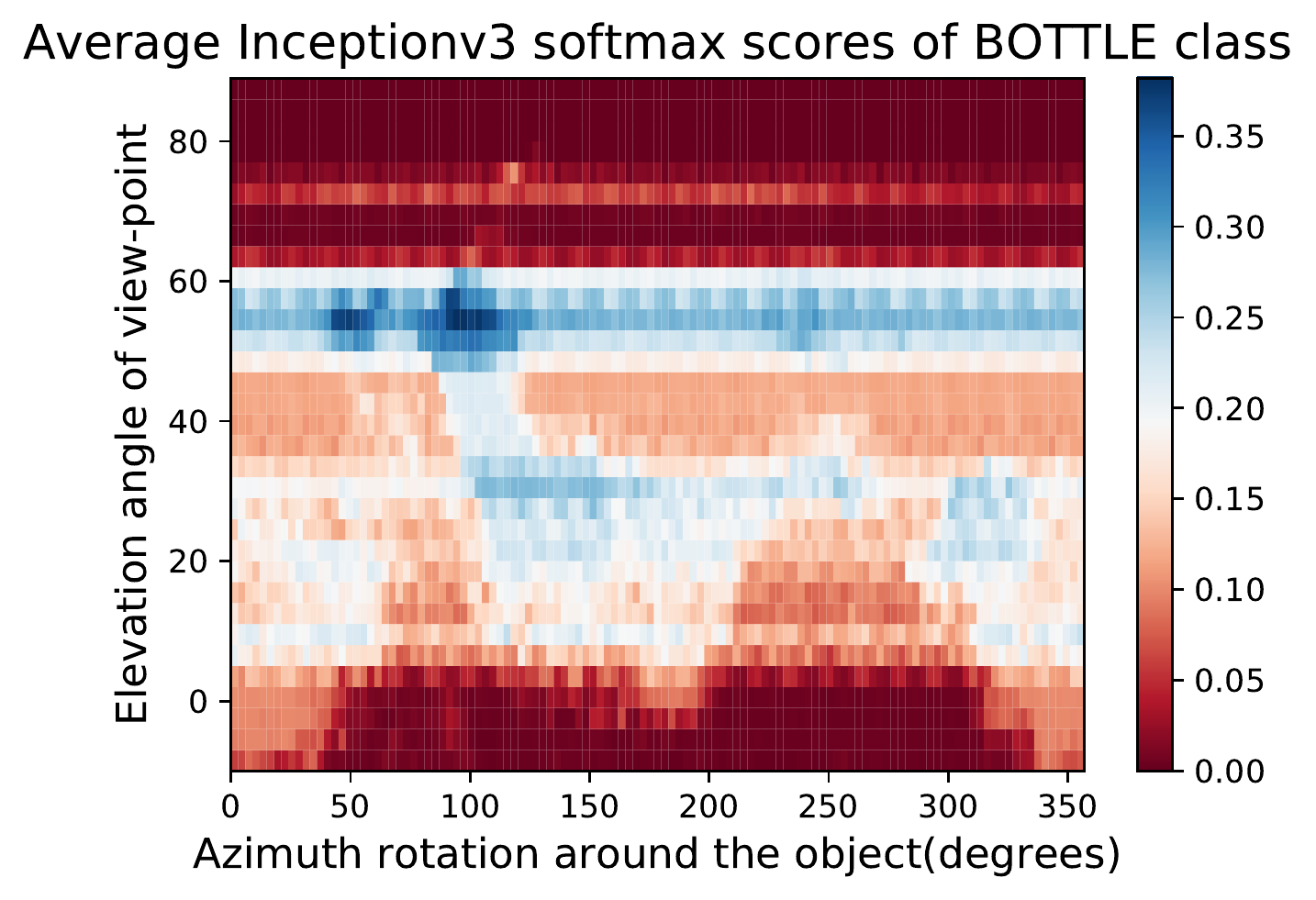} \\\hline
\includegraphics[width = 0.24\linewidth]{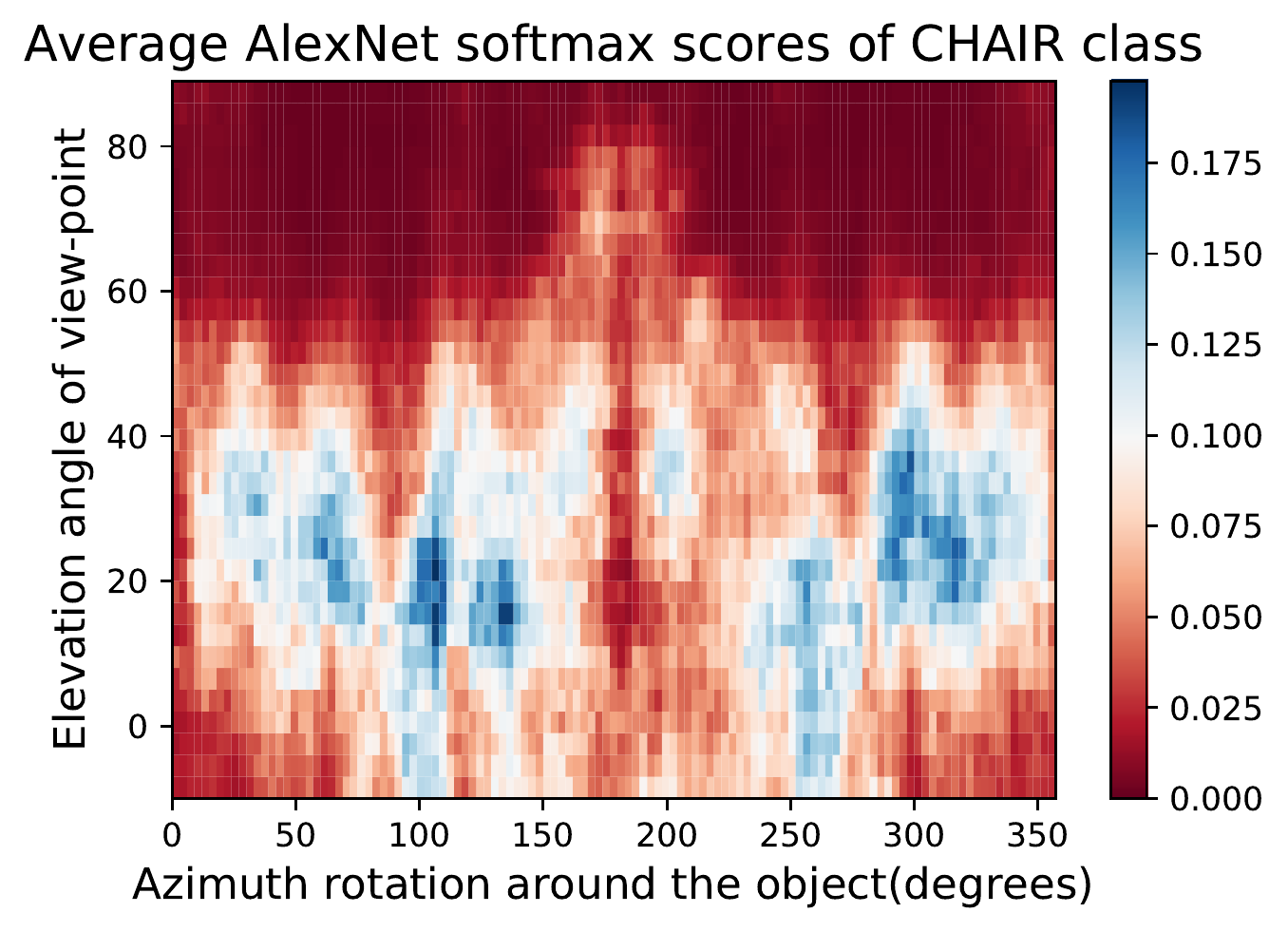}&
\includegraphics[width = 0.24\linewidth]{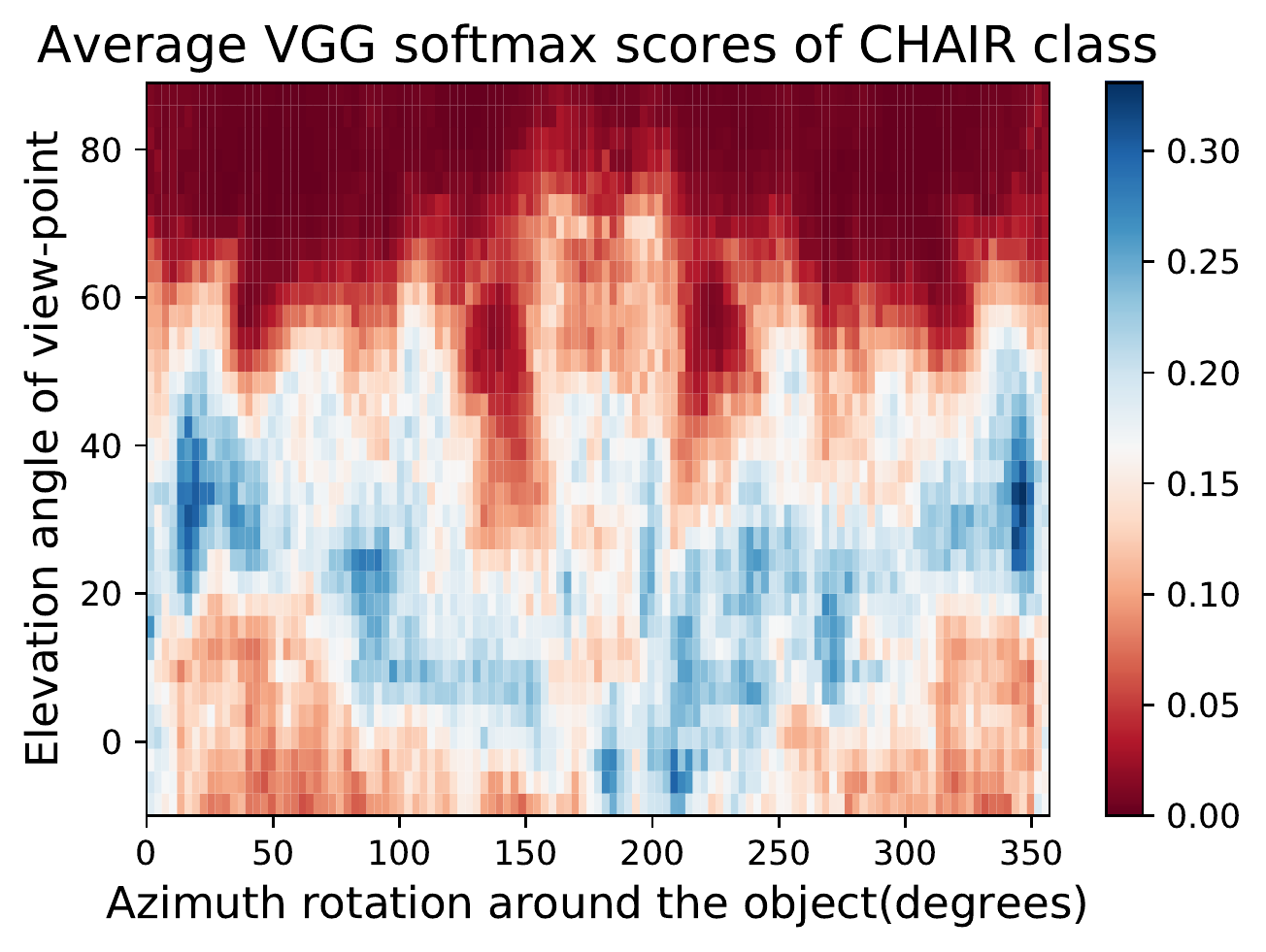}&
\includegraphics[width = 0.24\linewidth]{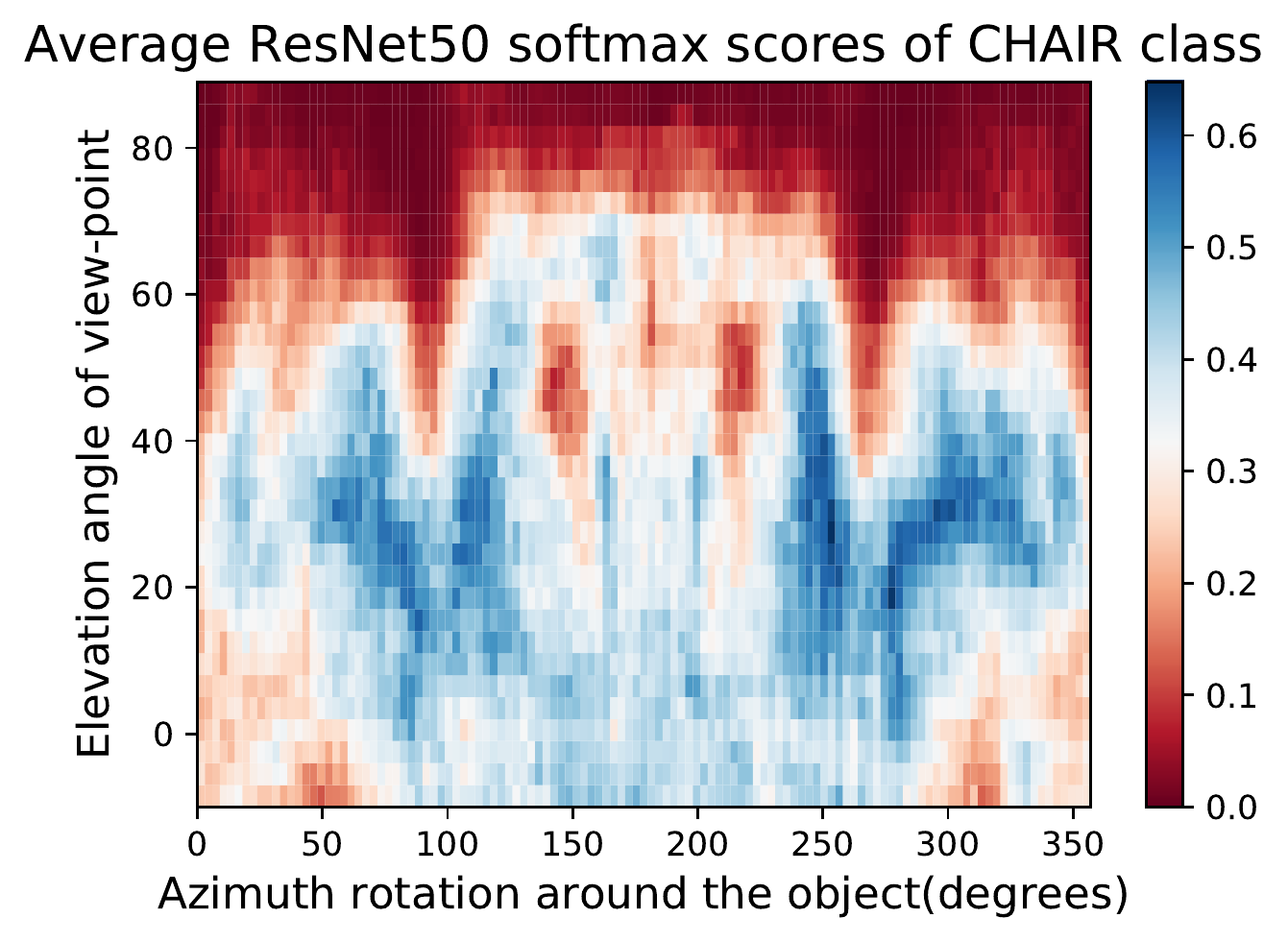}&
\includegraphics[width = 0.24\linewidth]{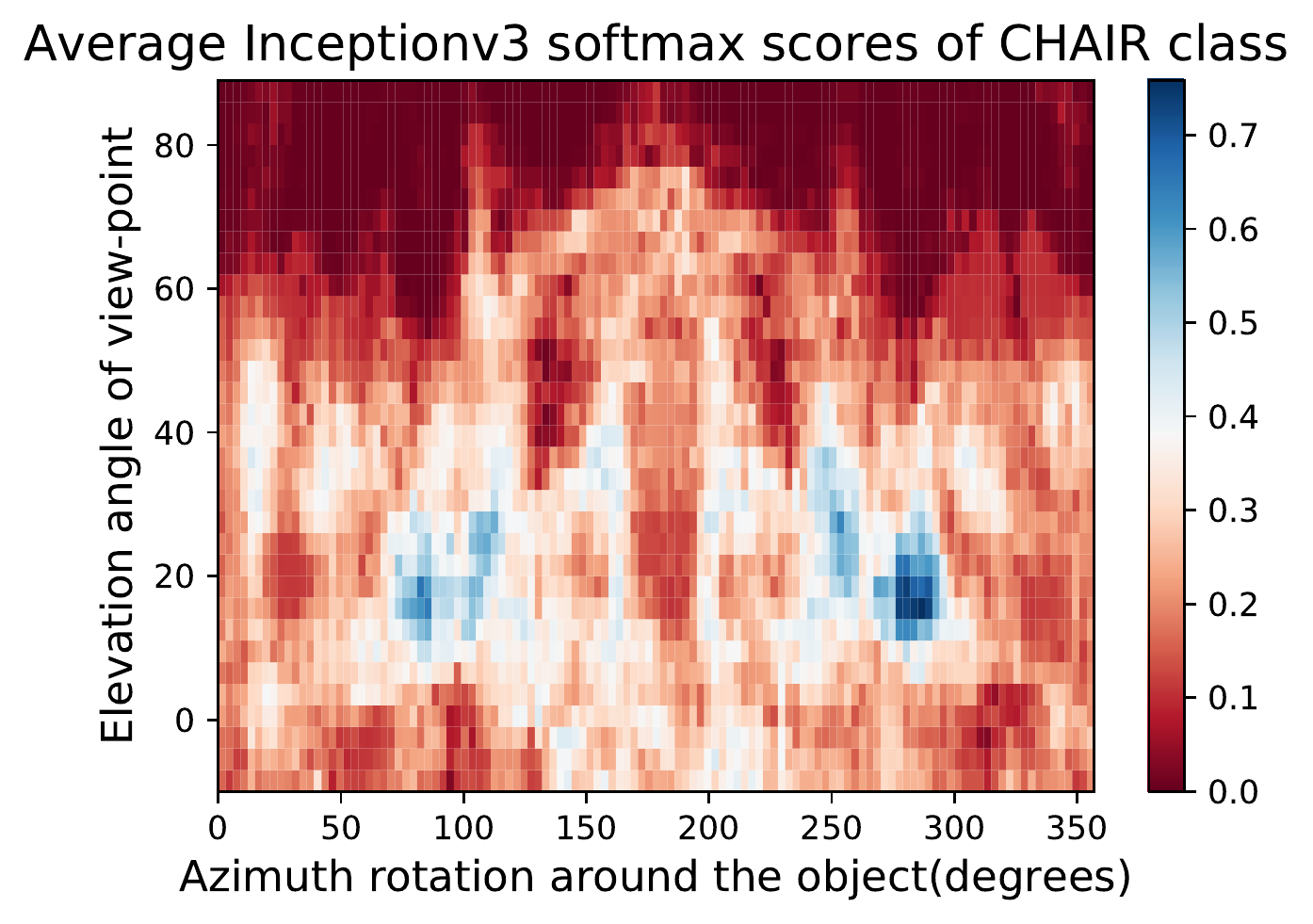} \\
\hline
\end{tabular}
   \caption{\small \textbf{2D Network Semantic Maps NMS-I}. Visualizing 2D Semantic Robustness profile for different networks averaged over 10 different shapes. Every row is different class. observe that different DNNs profiles differ depending on the training , accuracy , and network architectures that all result in a unique ''signatures" for the DNN on that class.}
   \vspace{-8pt}
   \label{fig:nsm2d-1}
\end{figure*}

\begin{figure*}[h]
\centering
\tabcolsep=0.03cm
   \begin{tabular}{||c|c|c|c||} \hline
   \textbf{AlexNet} & \textbf{VGG} &\textbf{ResNet50} & \textbf{InceptionV3} \\ \hline
\includegraphics[width = 0.24\linewidth]{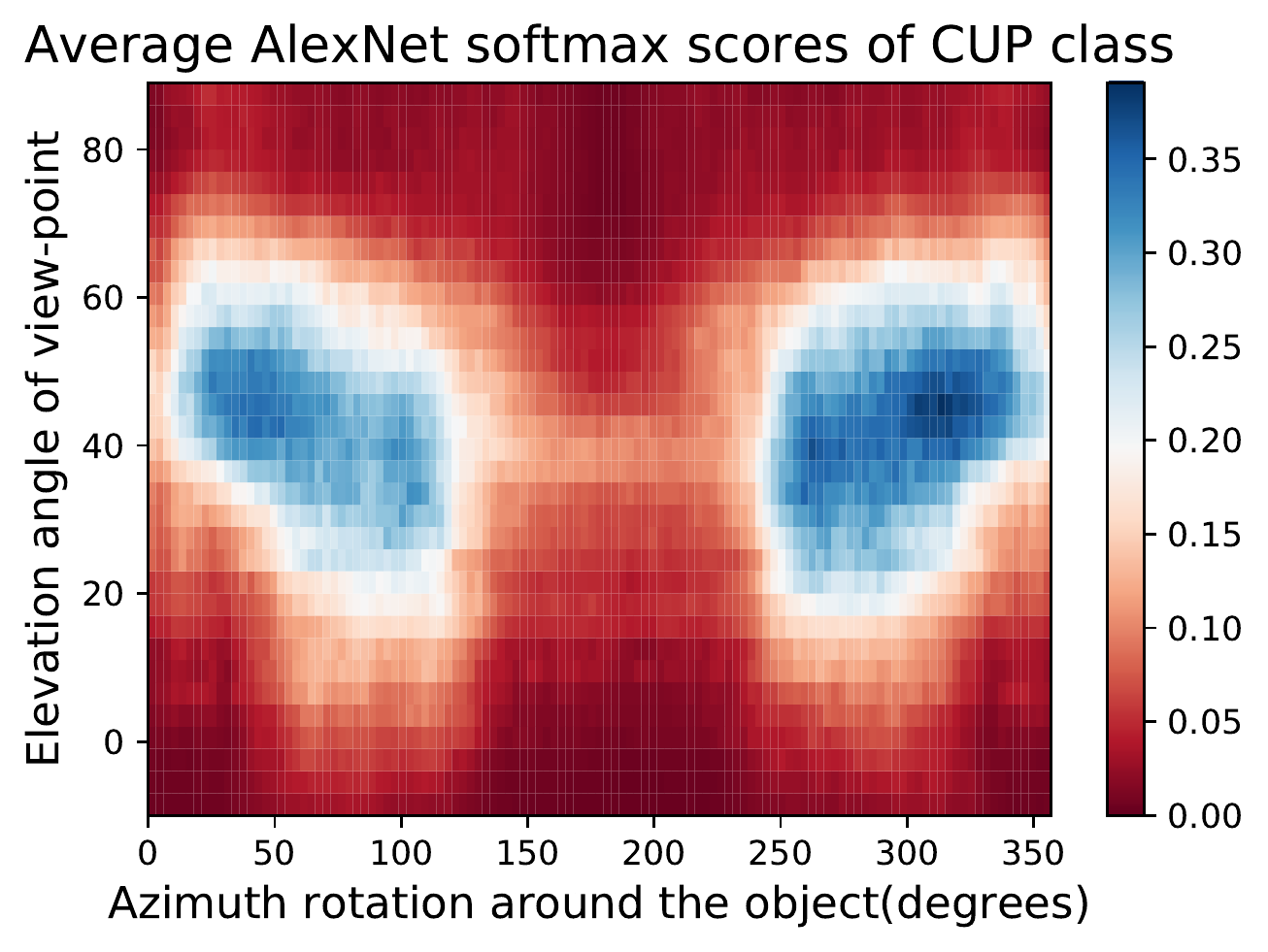}&
\includegraphics[width = 0.24\linewidth]{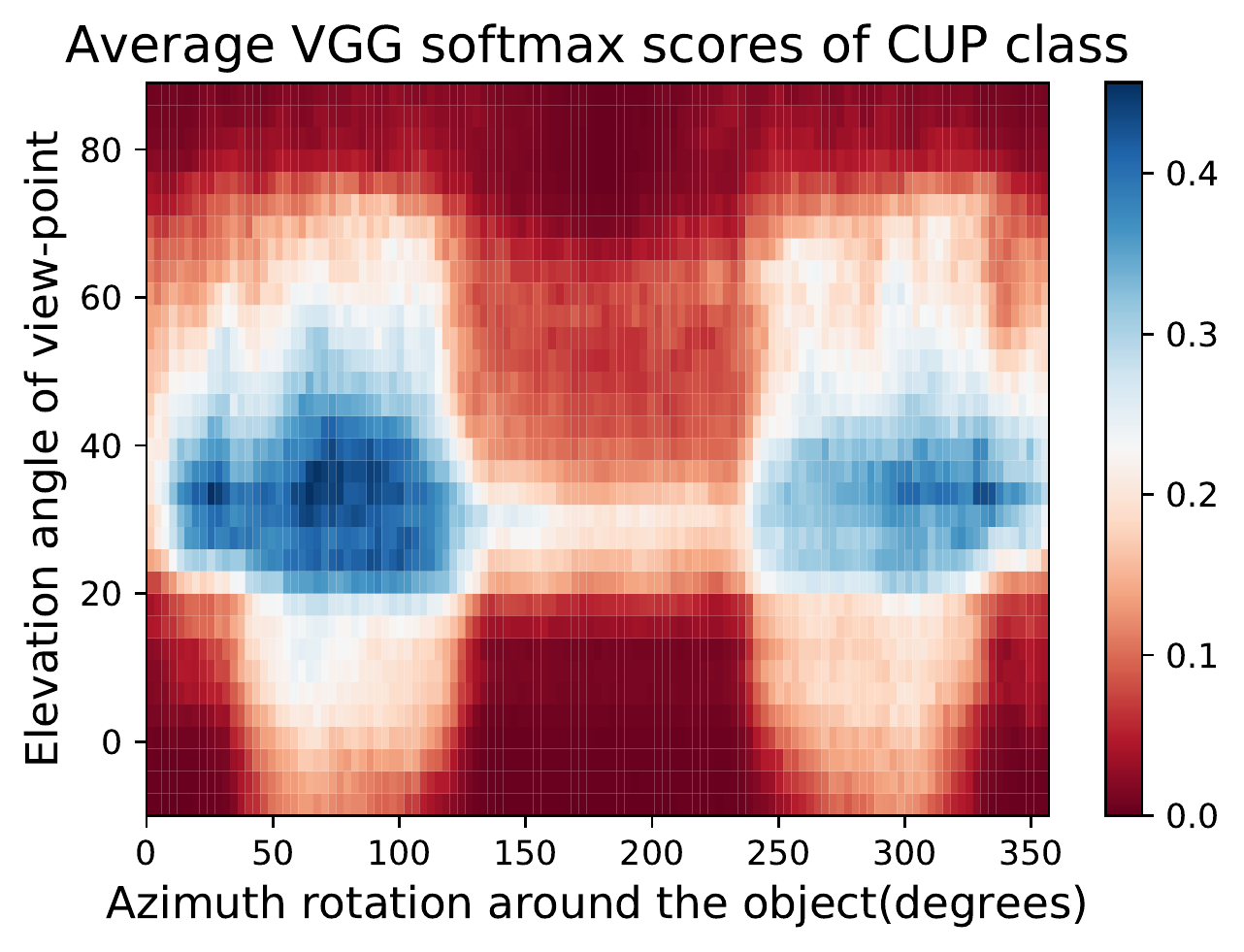}&
\includegraphics[width = 0.24\linewidth]{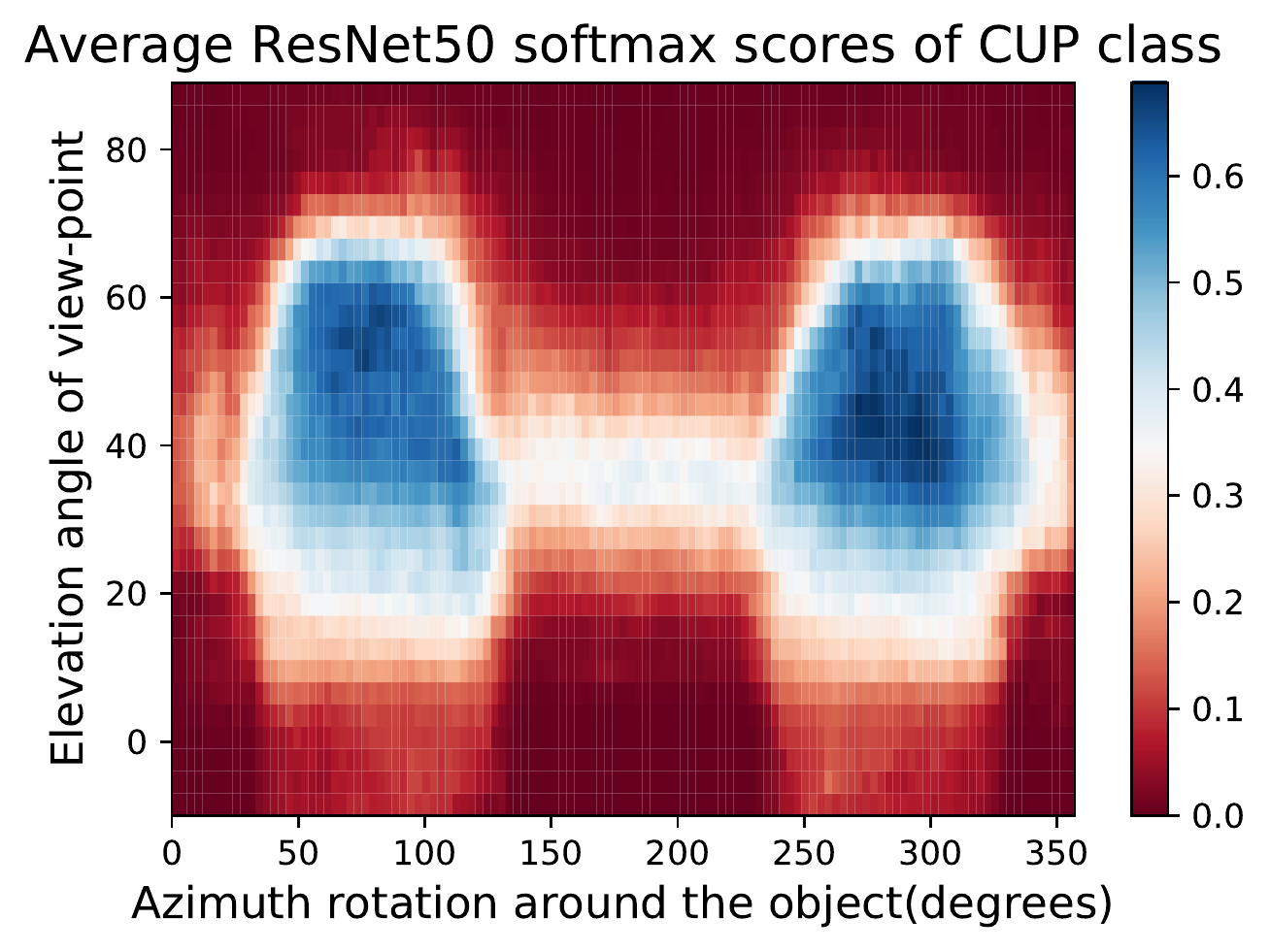}&
\includegraphics[width = 0.24\linewidth]{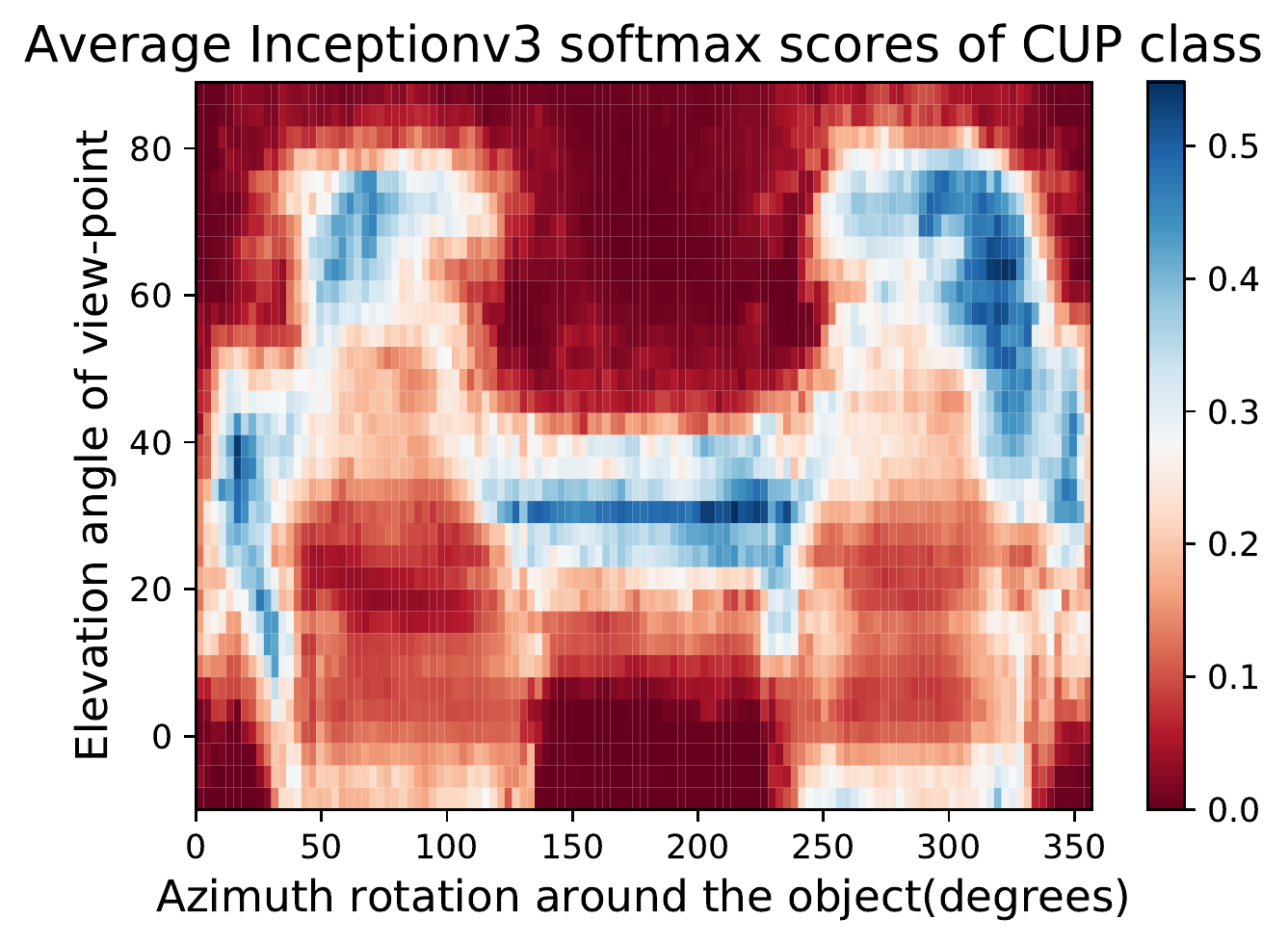}\\ \hline
\includegraphics[width = 0.24\linewidth]{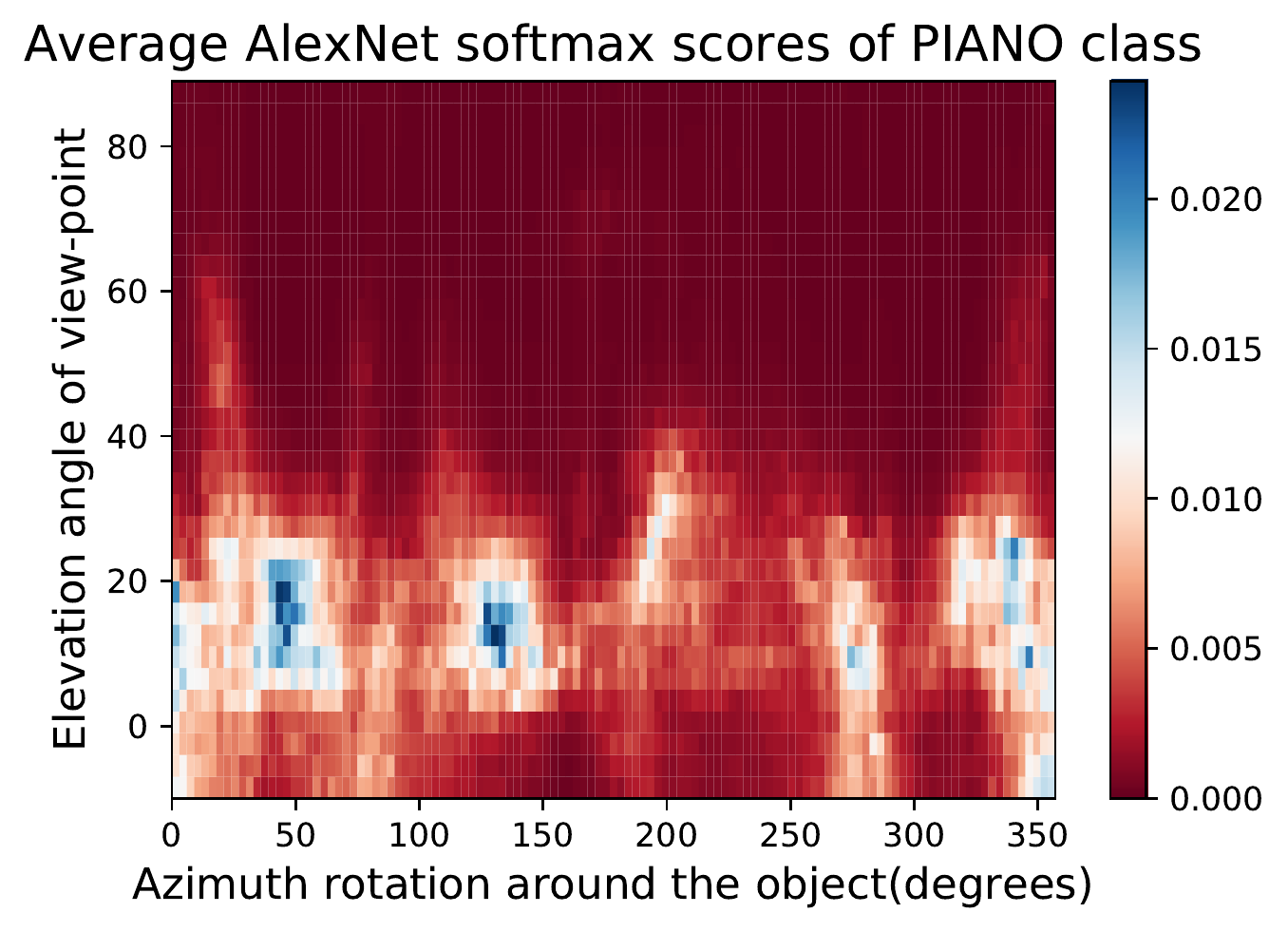}&
\includegraphics[width = 0.24\linewidth]{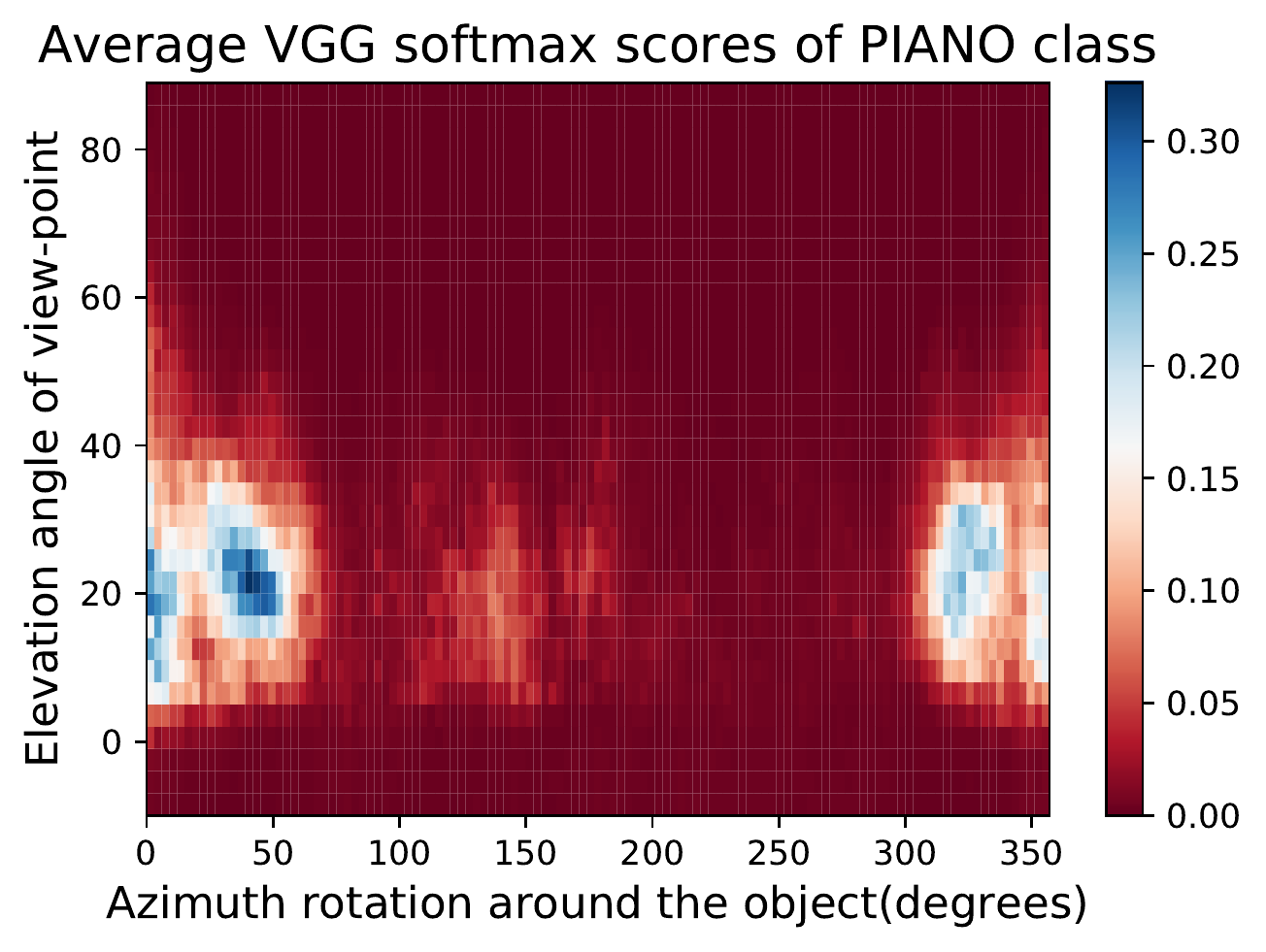}&
\includegraphics[width = 0.24\linewidth]{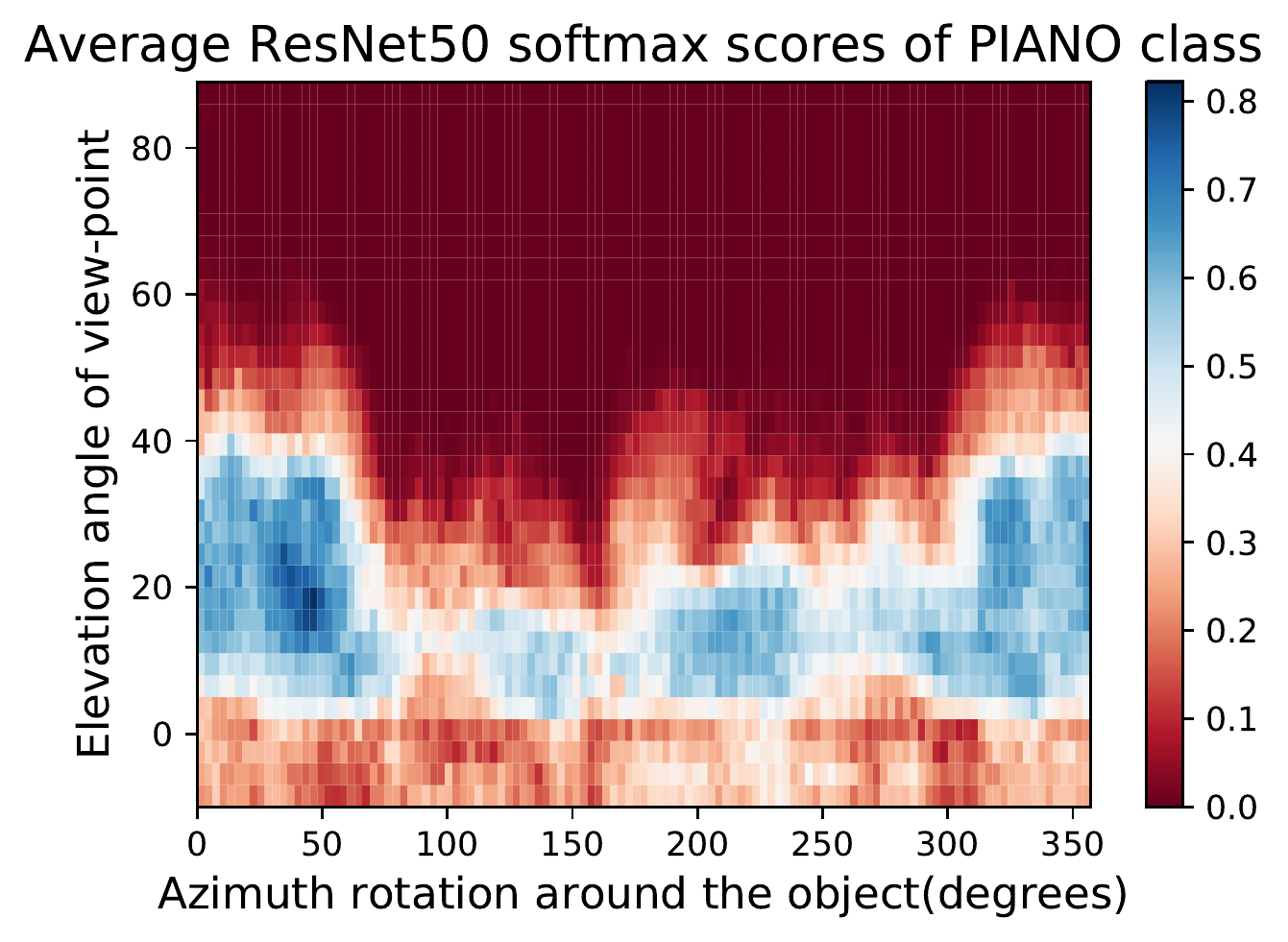}&
\includegraphics[width = 0.24\linewidth]{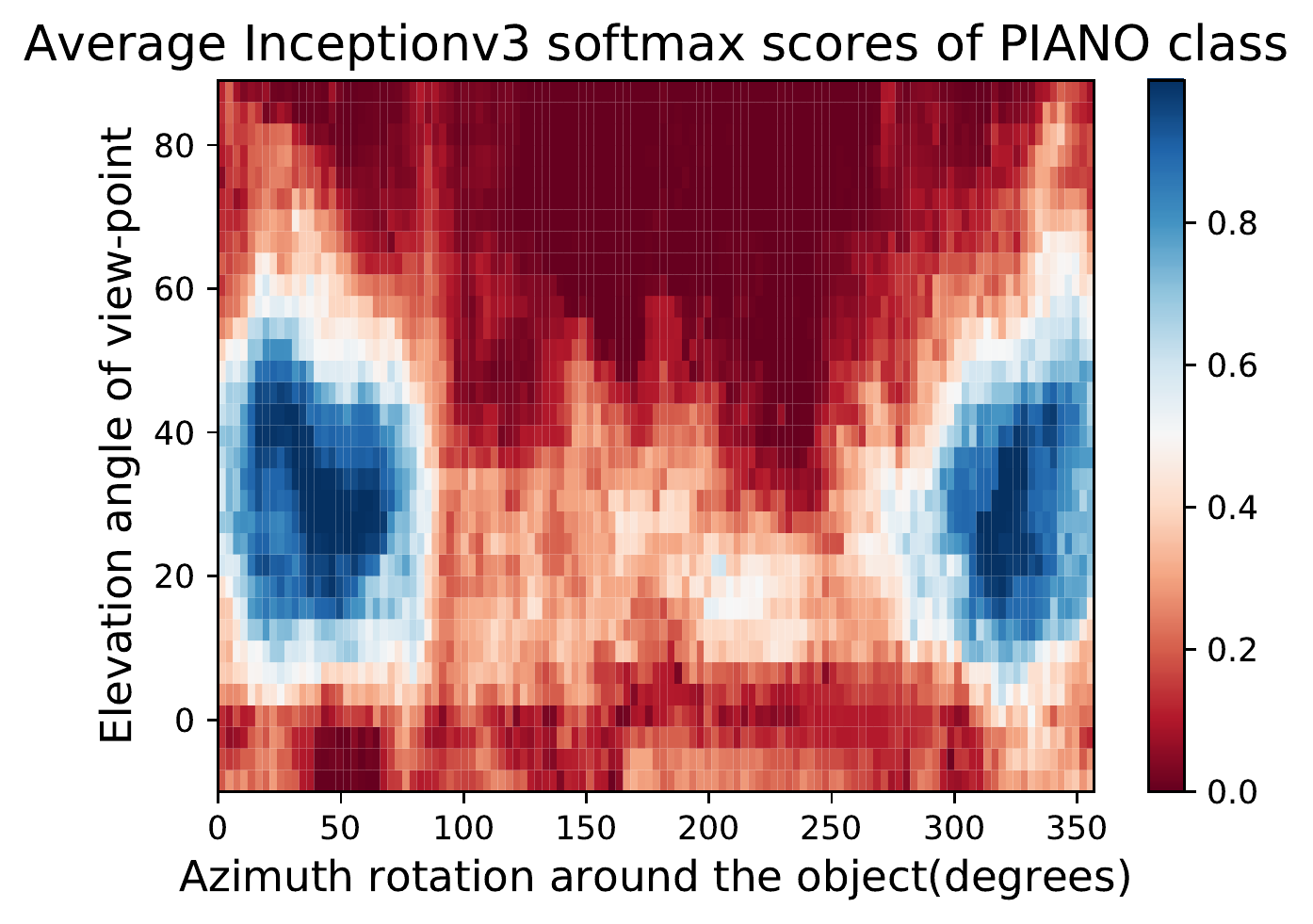}\\ \hline
\includegraphics[width = 0.24\linewidth]{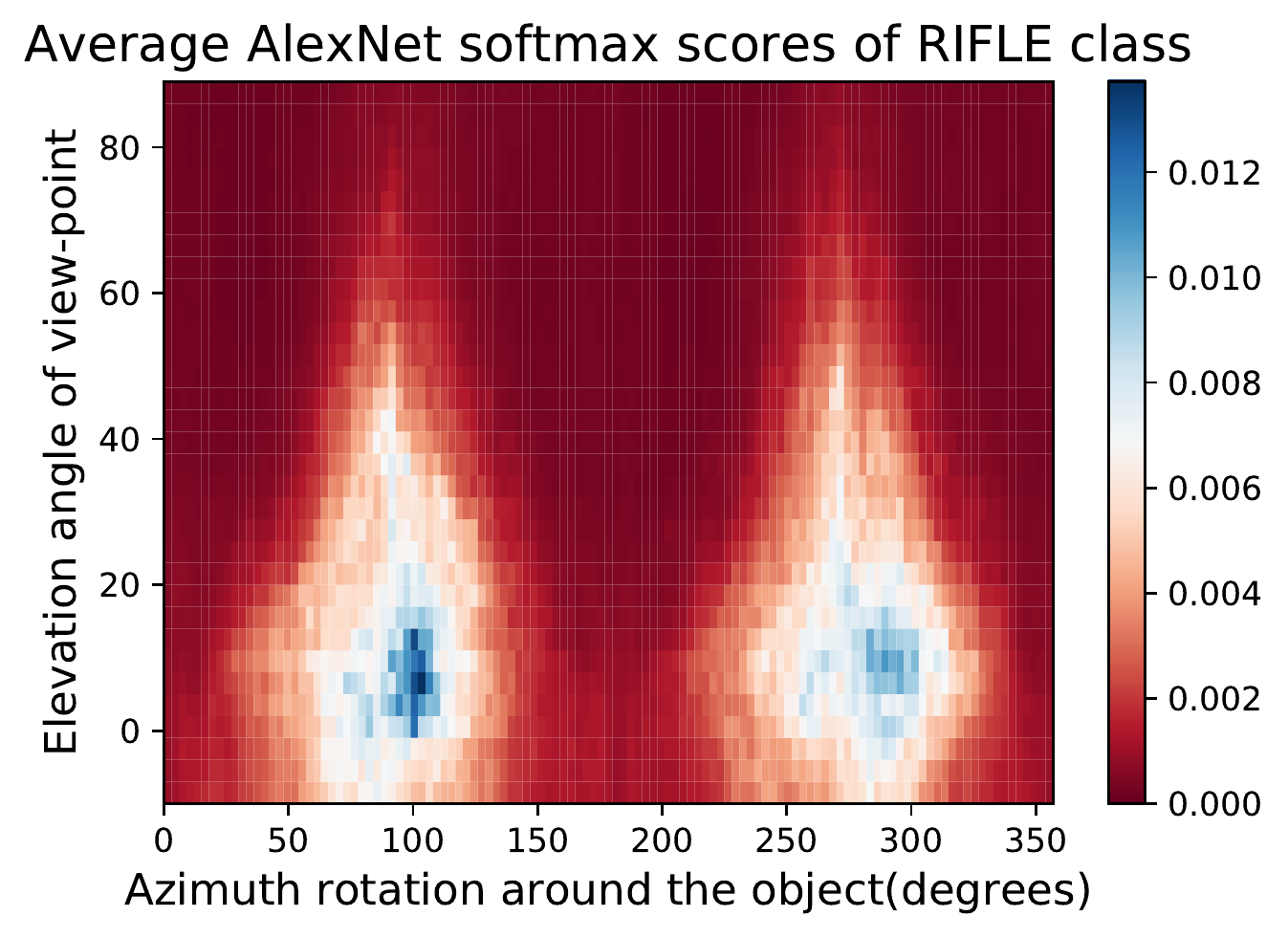}&
\includegraphics[width = 0.24\linewidth]{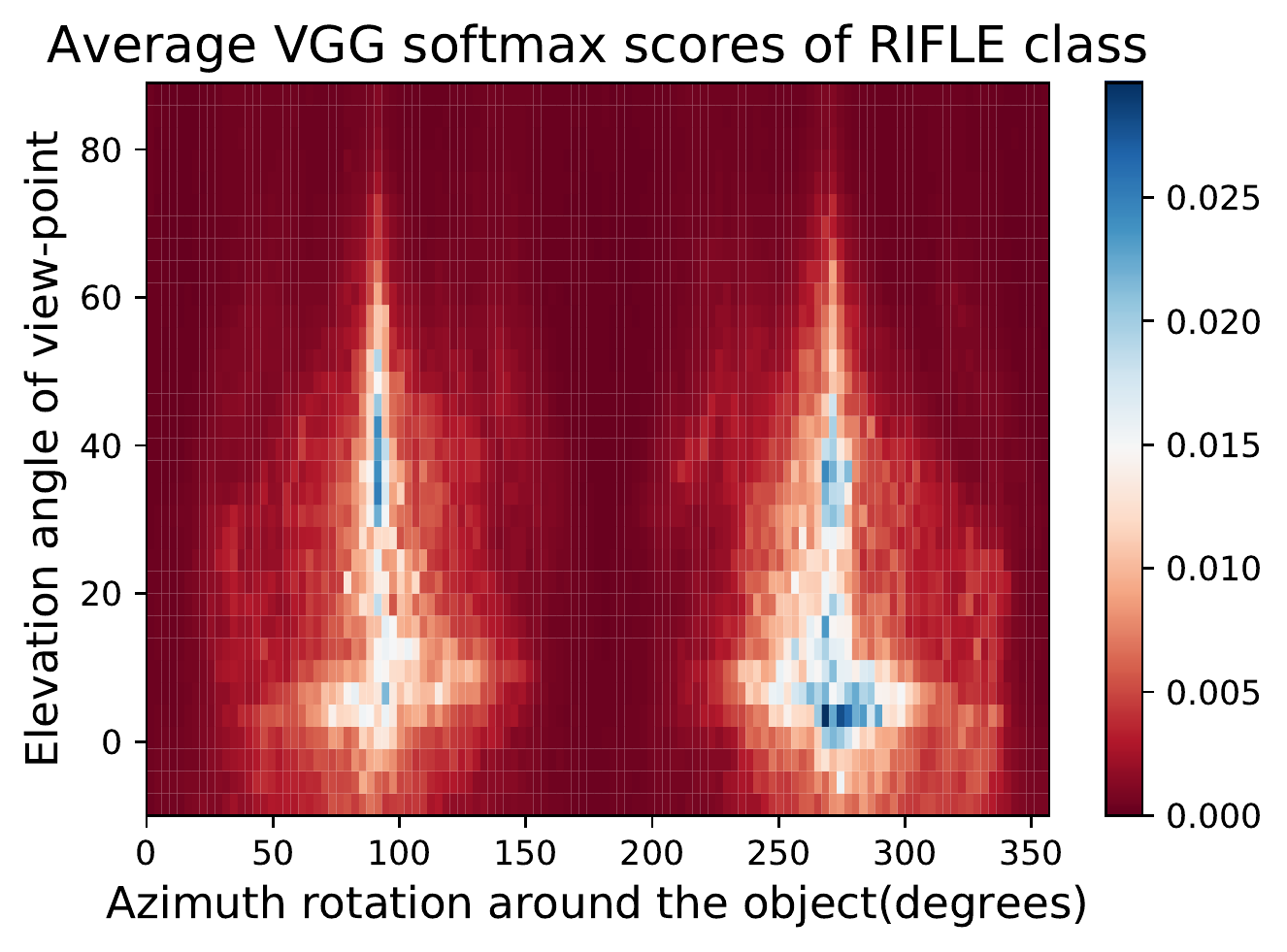}&
\includegraphics[width = 0.24\linewidth]{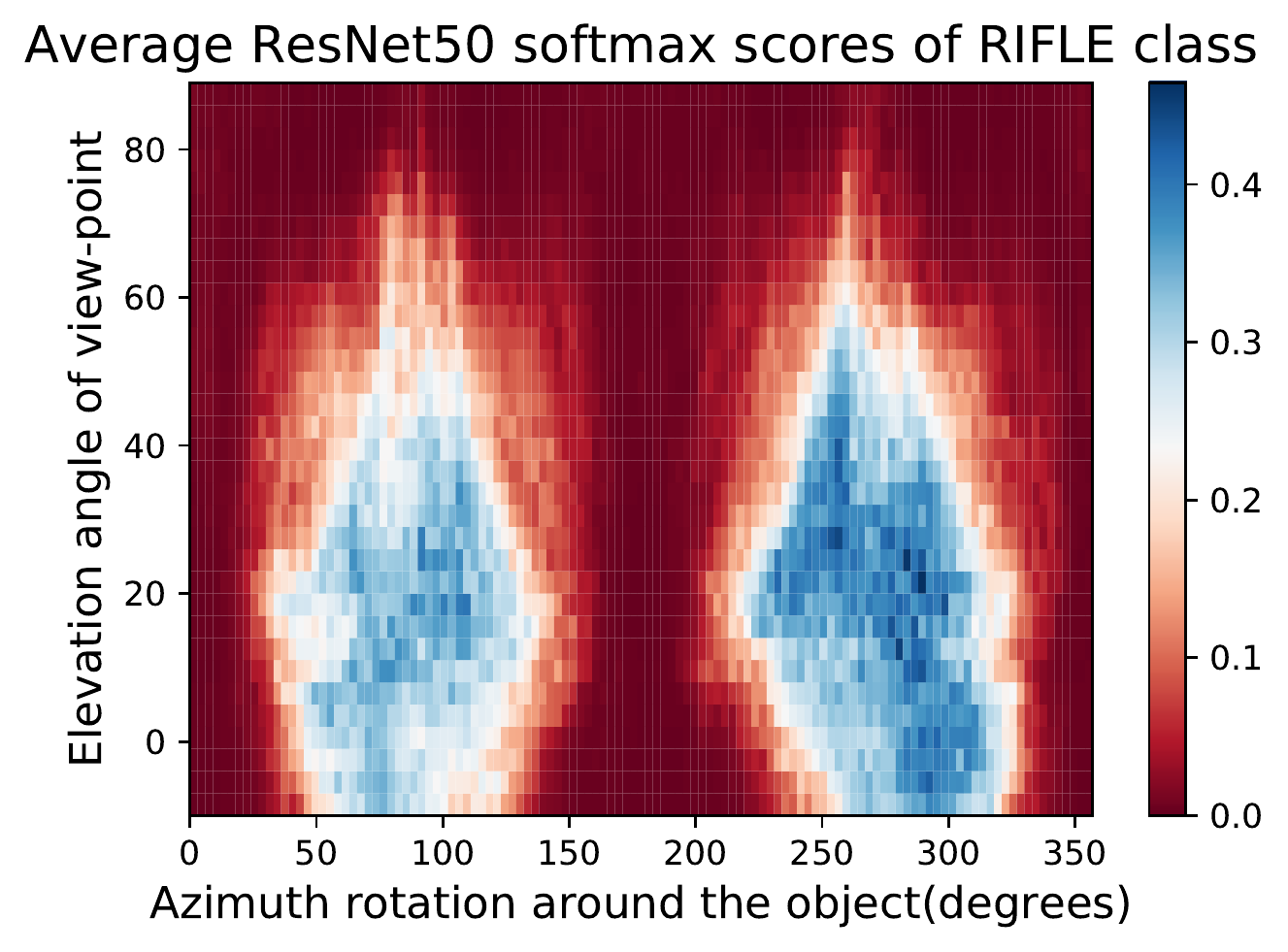}&
\includegraphics[width = 0.24\linewidth]{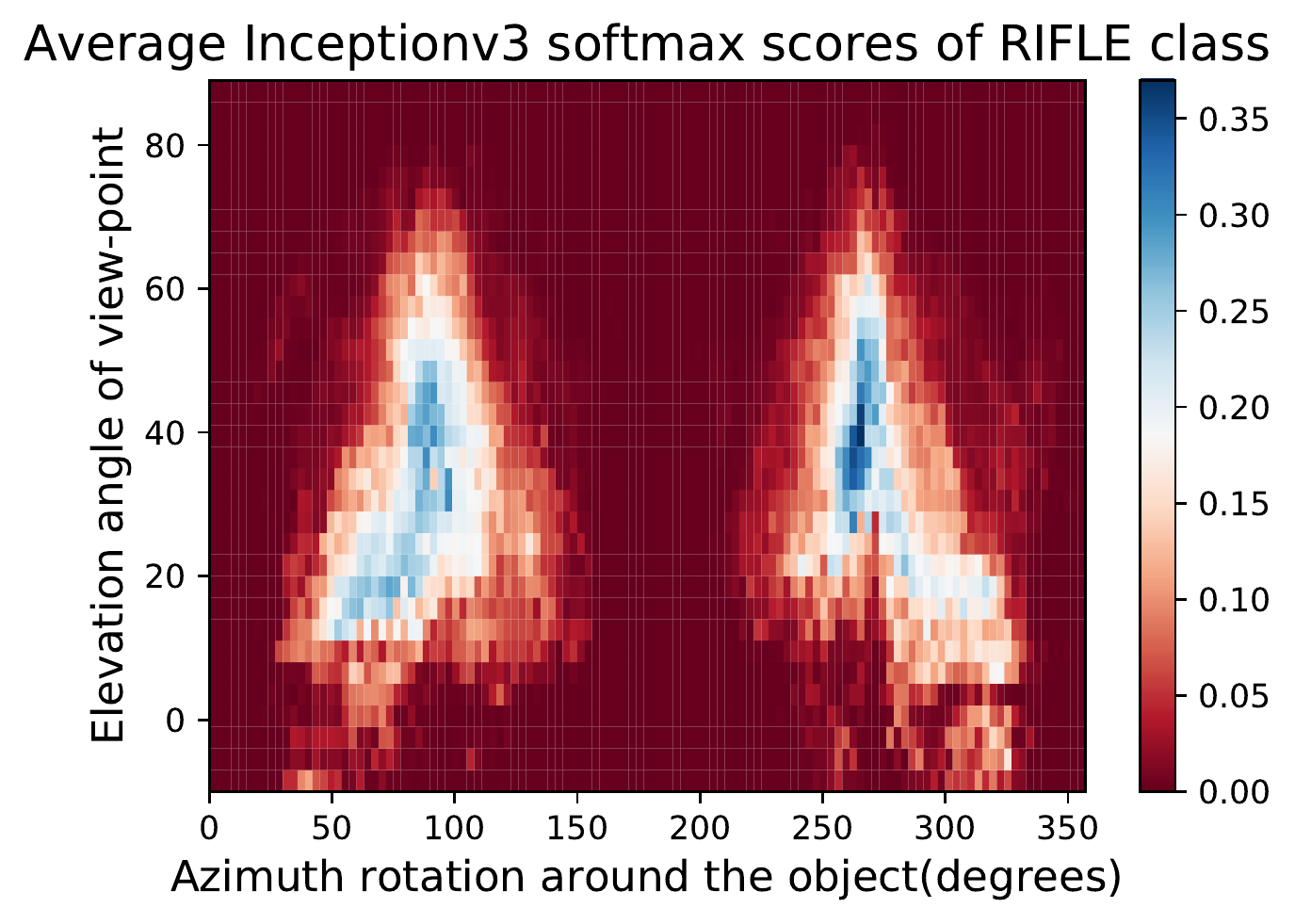}\\ \hline
\includegraphics[width = 0.24\linewidth]{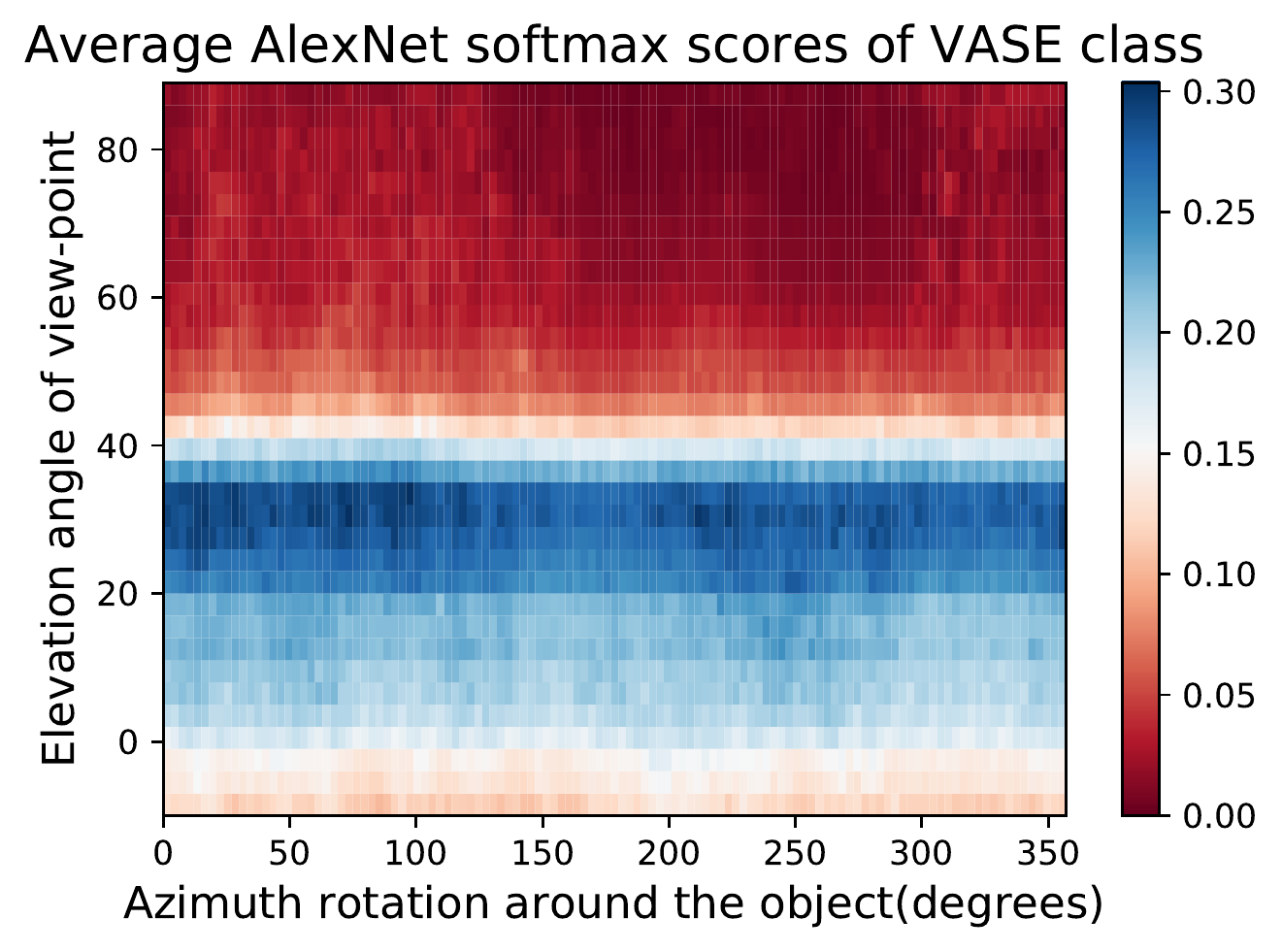}&
\includegraphics[width = 0.24\linewidth]{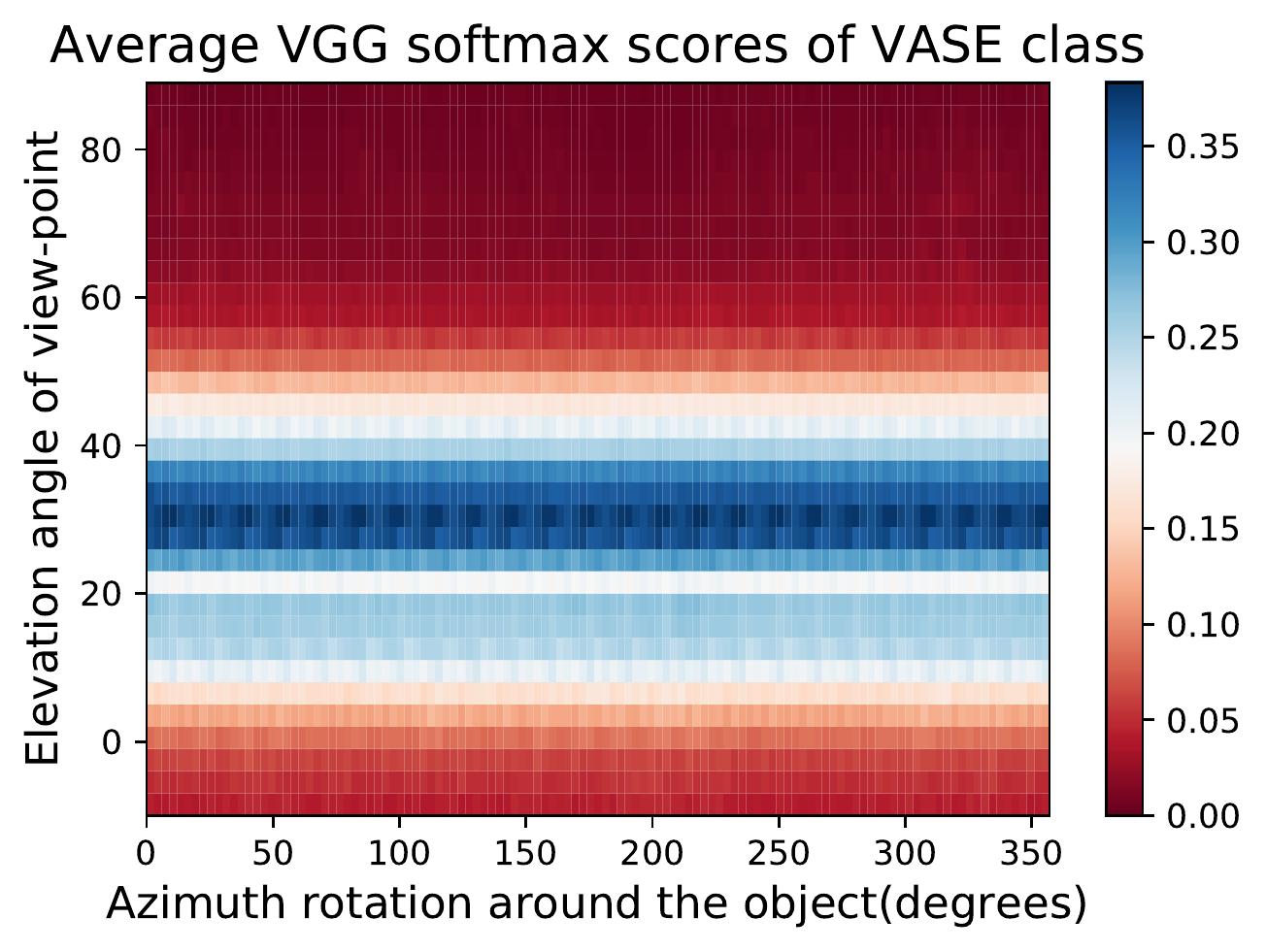}&
\includegraphics[width = 0.24\linewidth]{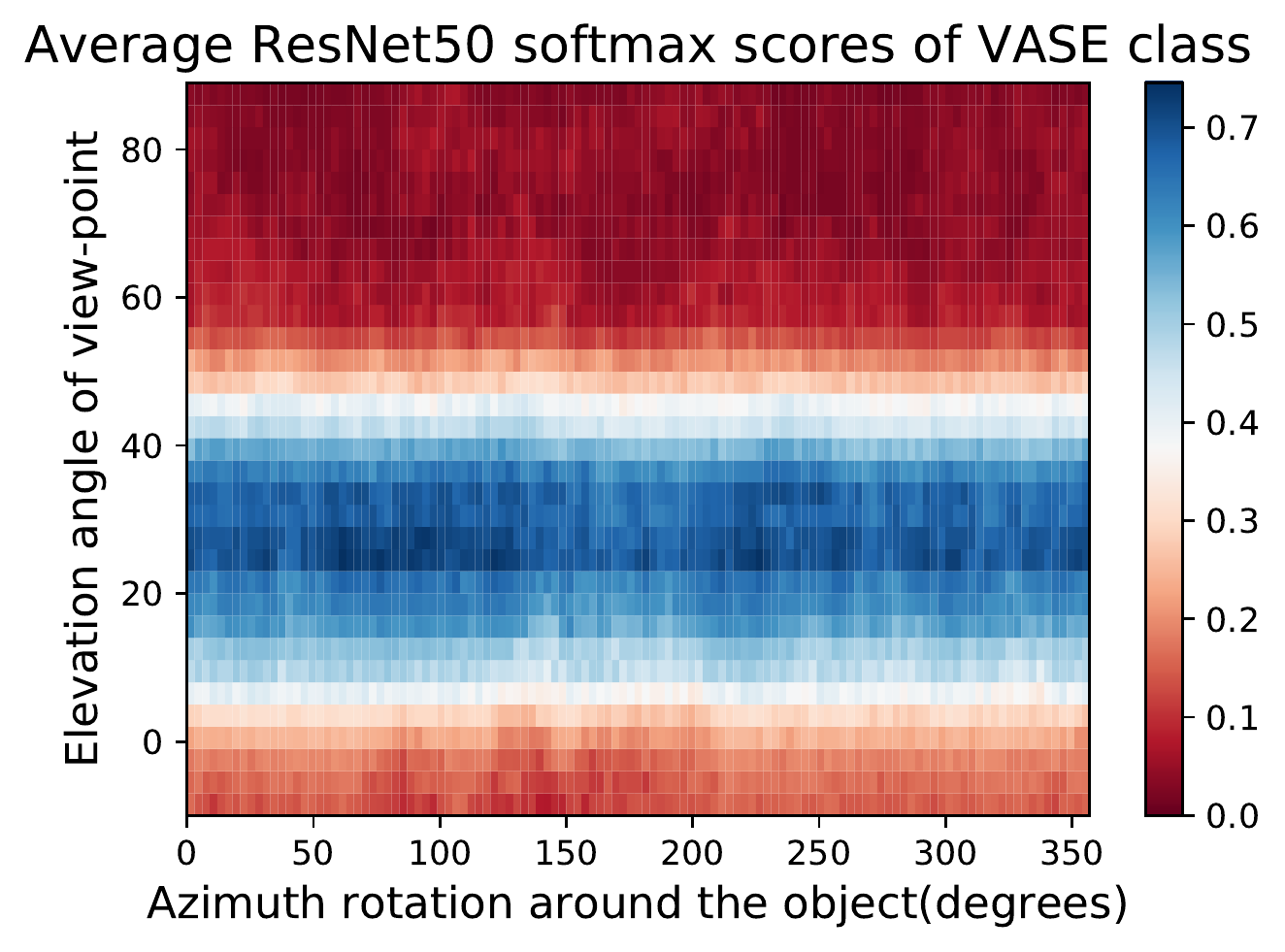}&
\includegraphics[width = 0.24\linewidth]{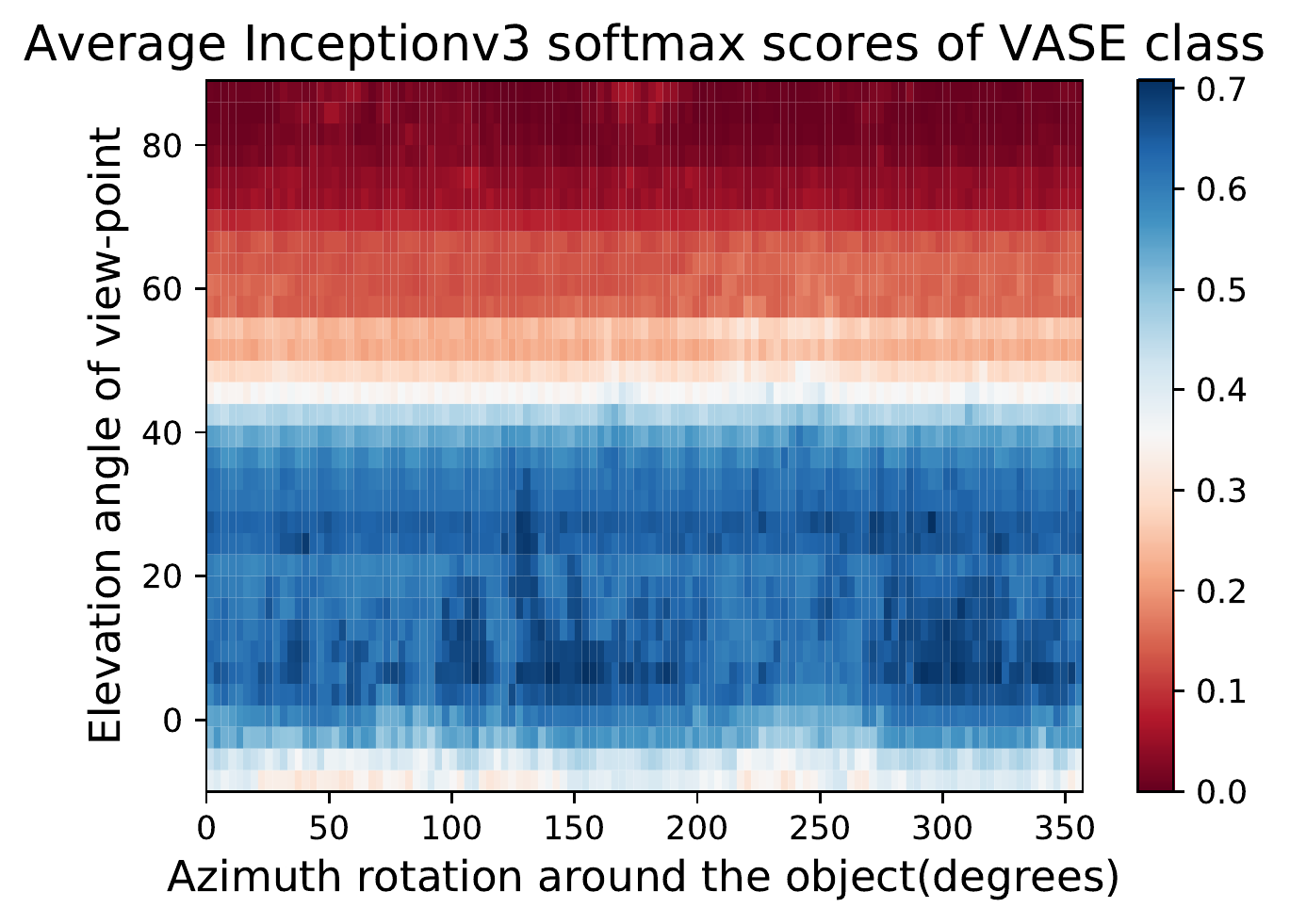}\\ \hline
\includegraphics[width = 0.24\linewidth]{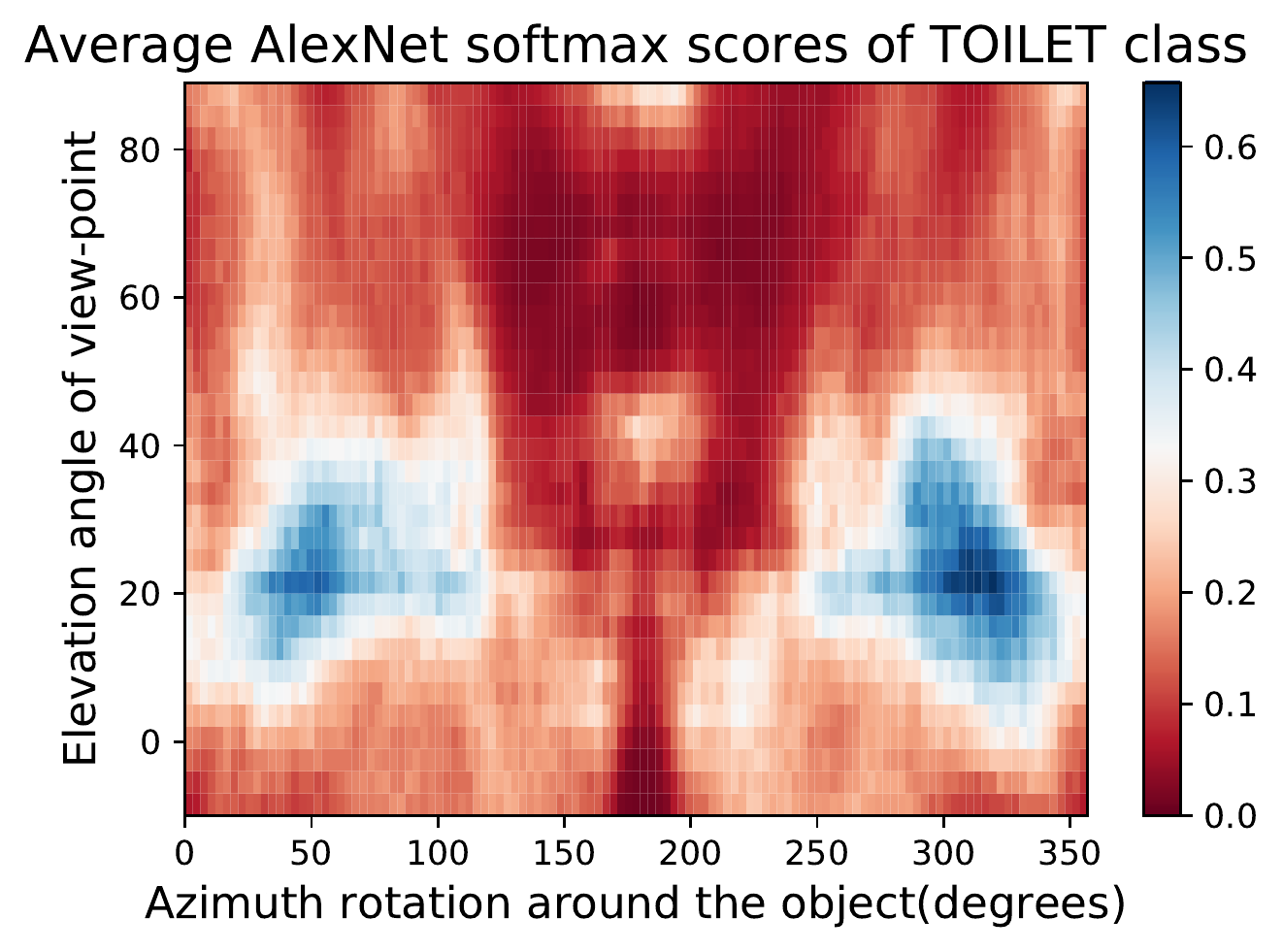}&
\includegraphics[width = 0.24\linewidth]{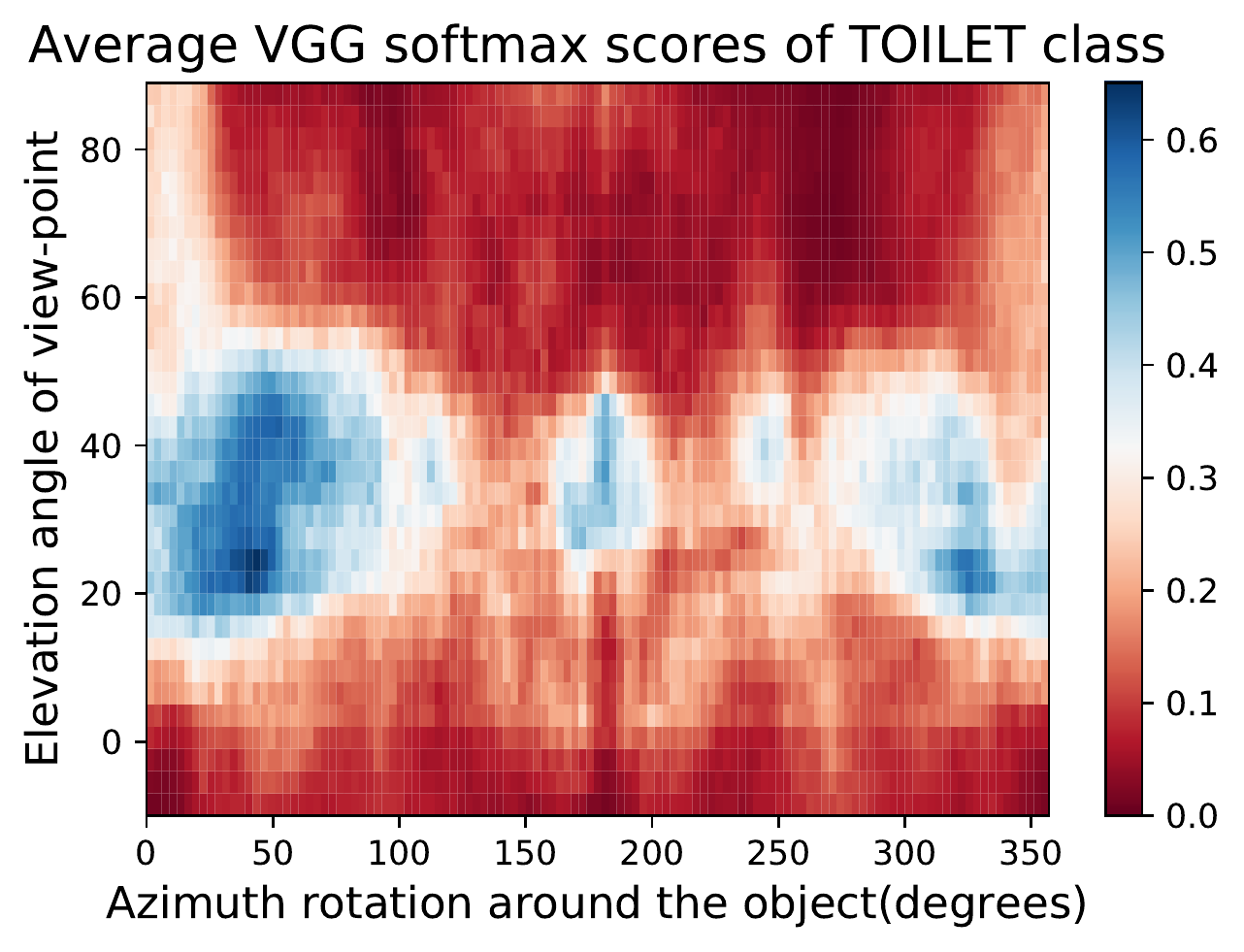}&
\includegraphics[width = 0.24\linewidth]{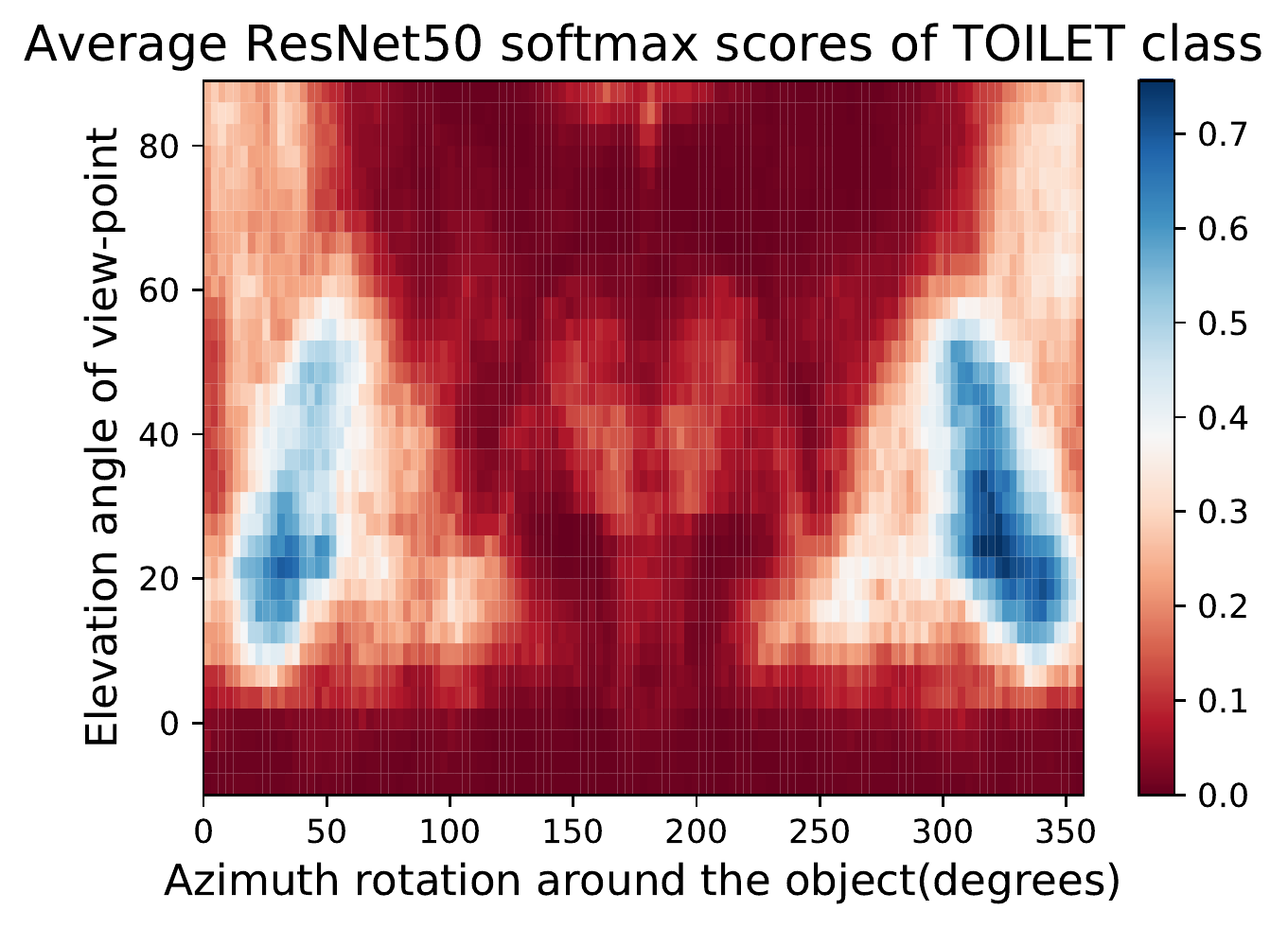}&
\includegraphics[width = 0.24\linewidth]{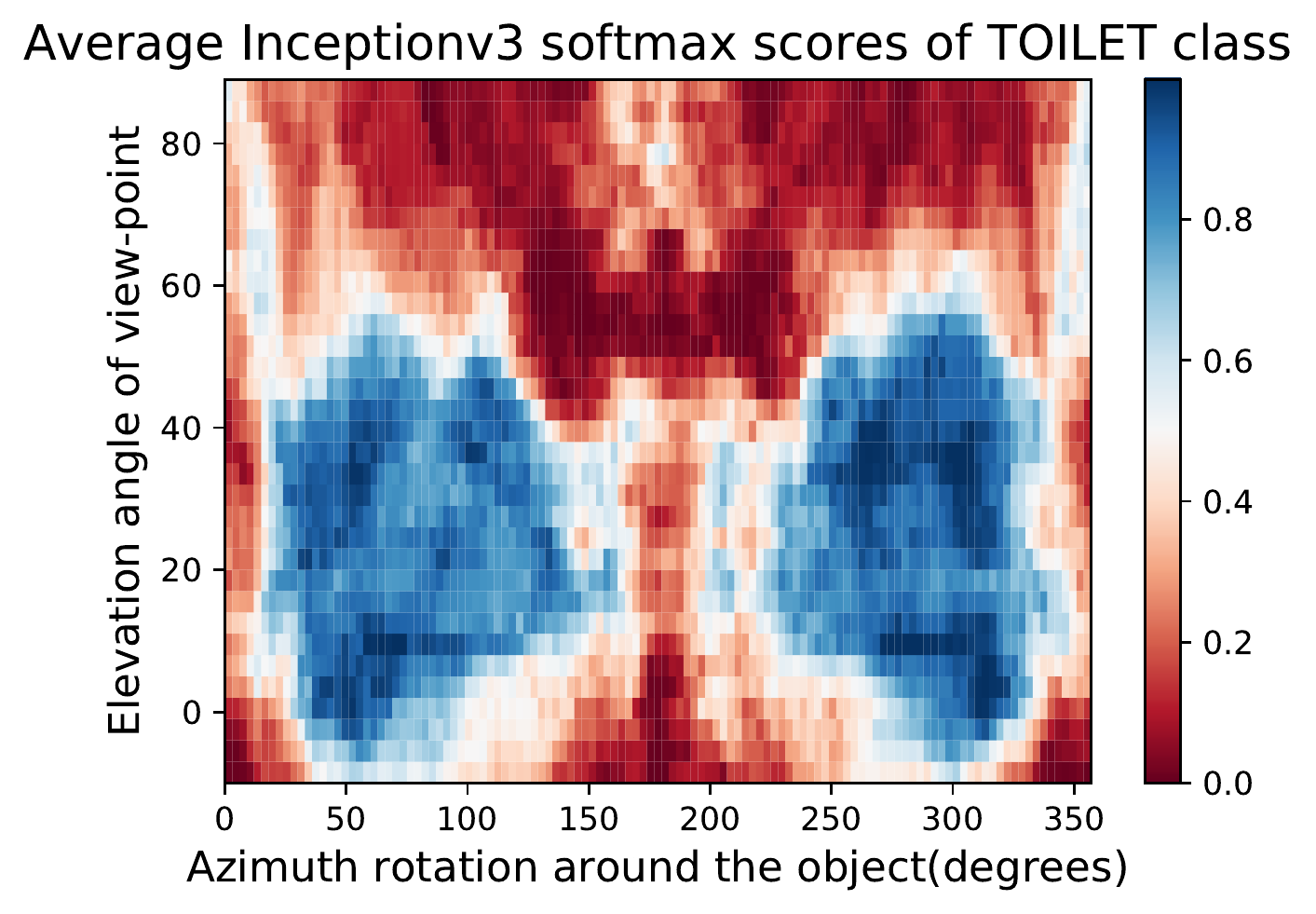} \\
\hline
\end{tabular}
   \caption{\small \textbf{2D Network Semantic Maps NMS-II}: visualizing 2D Semantic Robustness profile for different networks averaged over 10 different shapes. Every row is different class. observe that different DNNs profiles differ depending on the training , accuracy , and network architectures that all result in a unique ''signatures" for the DNN on that class.}
   \vspace{-8pt}
   \label{fig:nsm2d-2}
\end{figure*}

\subsection{Convergence of the Region Finding Algorithms}
Here we show how when we apply the region detection algorithms; the naive detect the smallest region while the OIR formulations detect bigger more general robust region. This result happens even with different initial points; they always converge to the same bounds of that robust region of the semantic maps. \figLabel{\ref{fig:converge}} show 4 different initializations for 1D case  along with predicted regions. In \figLabel{\ref{fig:conv1},\ref{fig:conv2}} shows the bounds evolving during the optimization of the three algorithms (naive , OIR\_B and OIR\_W ) for 500 steps.
\begin{figure*}[h]
\centering
\tabcolsep=0.03cm
\includegraphics[width = \textwidth]{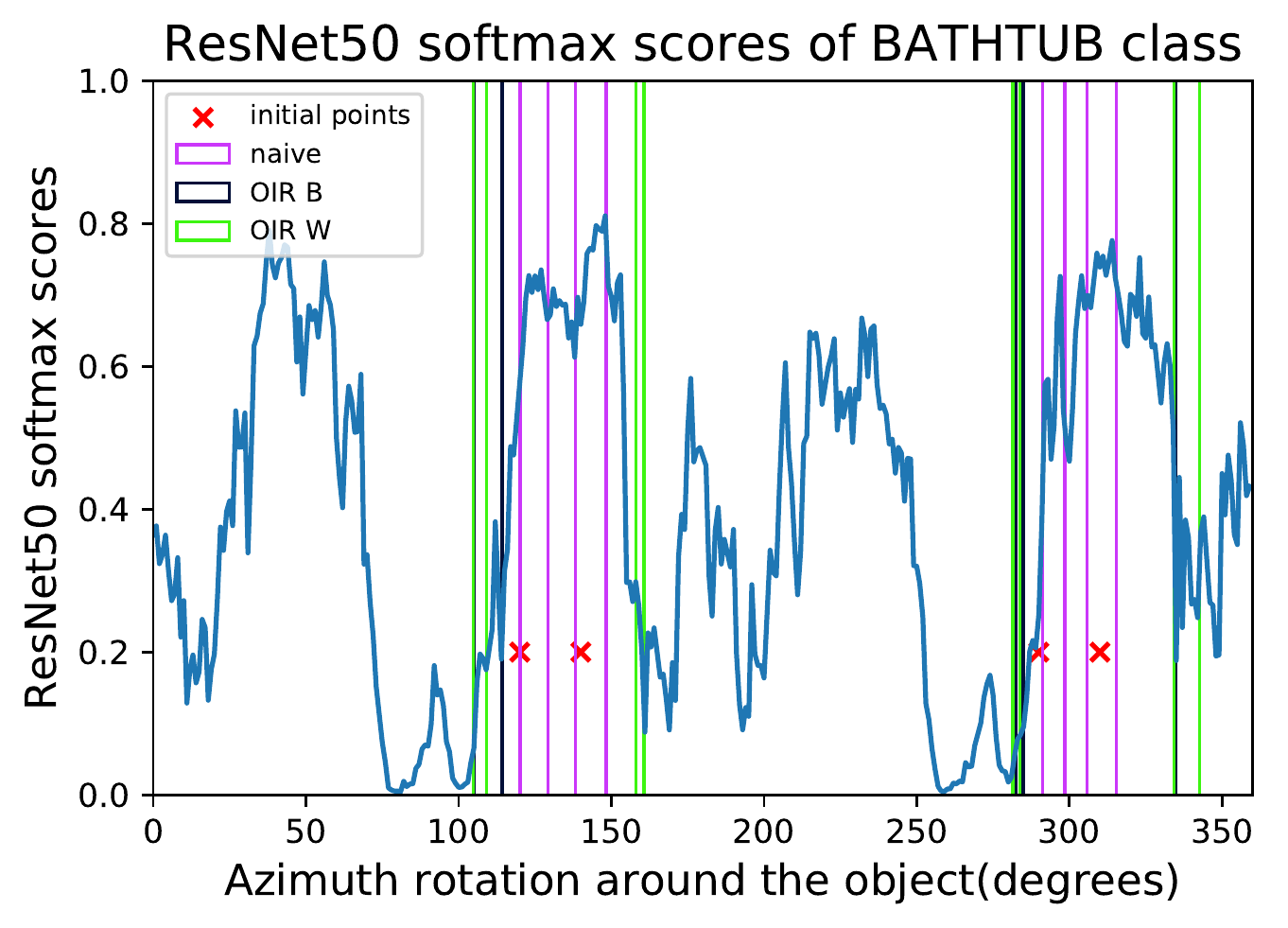}
   \caption{\small \textbf{Robust Region Detection with different initializations}: visualizing the Robust Regions Bounds found by the three algorithms for four initial points. We can see that the naive produce different bounds of the same region for different initializations while OIR detect the same region regardless of initialization.}
   \vspace{-8pt}
   \label{fig:converge}
\end{figure*}
\begin{figure*}[h]
\centering
\tabcolsep=0.03cm
   \begin{tabular}{c|c}  \hline
   \textbf{initialization = 120} & \textbf{initialization = 140} \\  \hline 
   \multicolumn{2}{c}{\textbf{Naive algorithm}} \\   
\includegraphics[width = 0.49\linewidth]{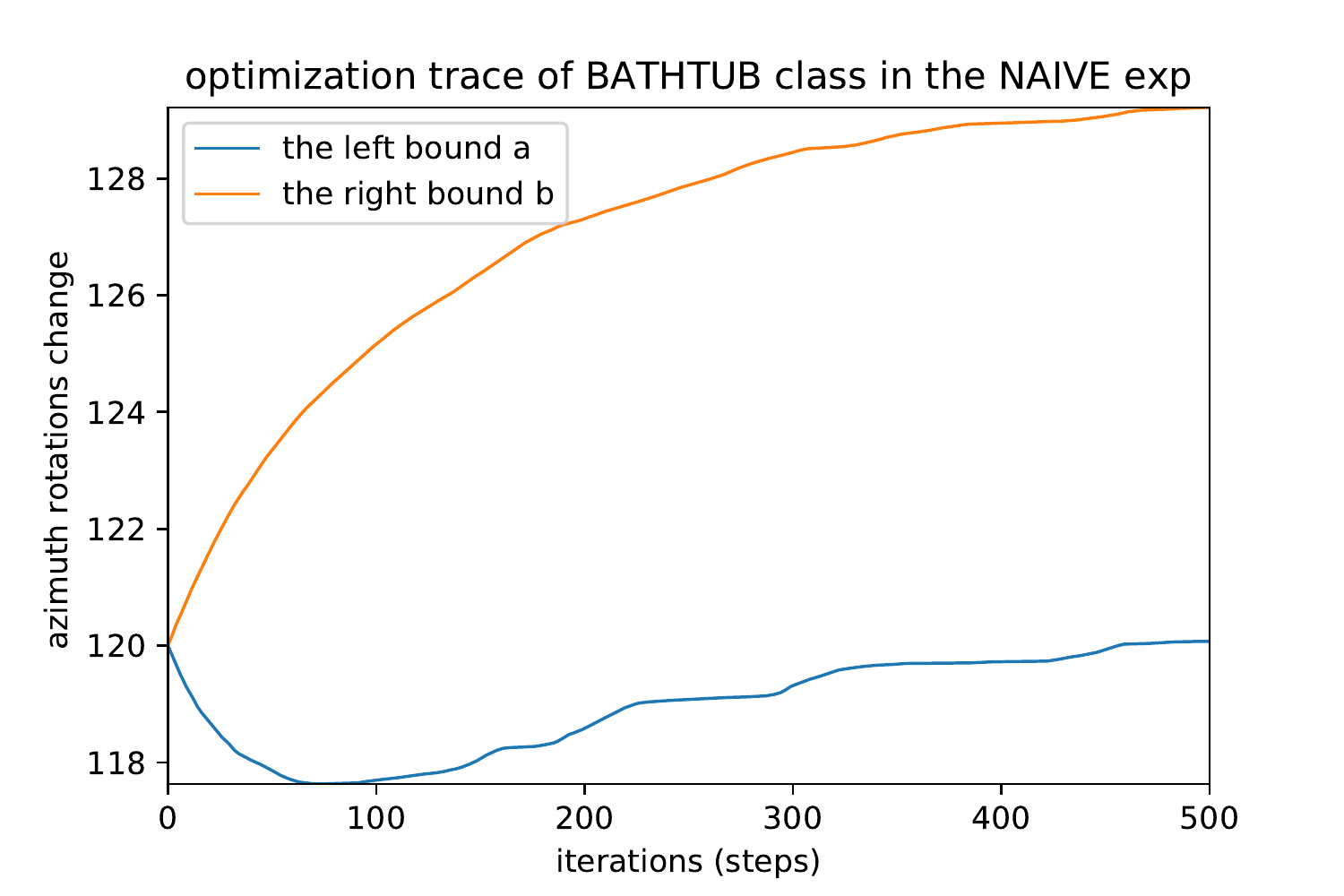} &
\includegraphics[width = 0.49\linewidth]{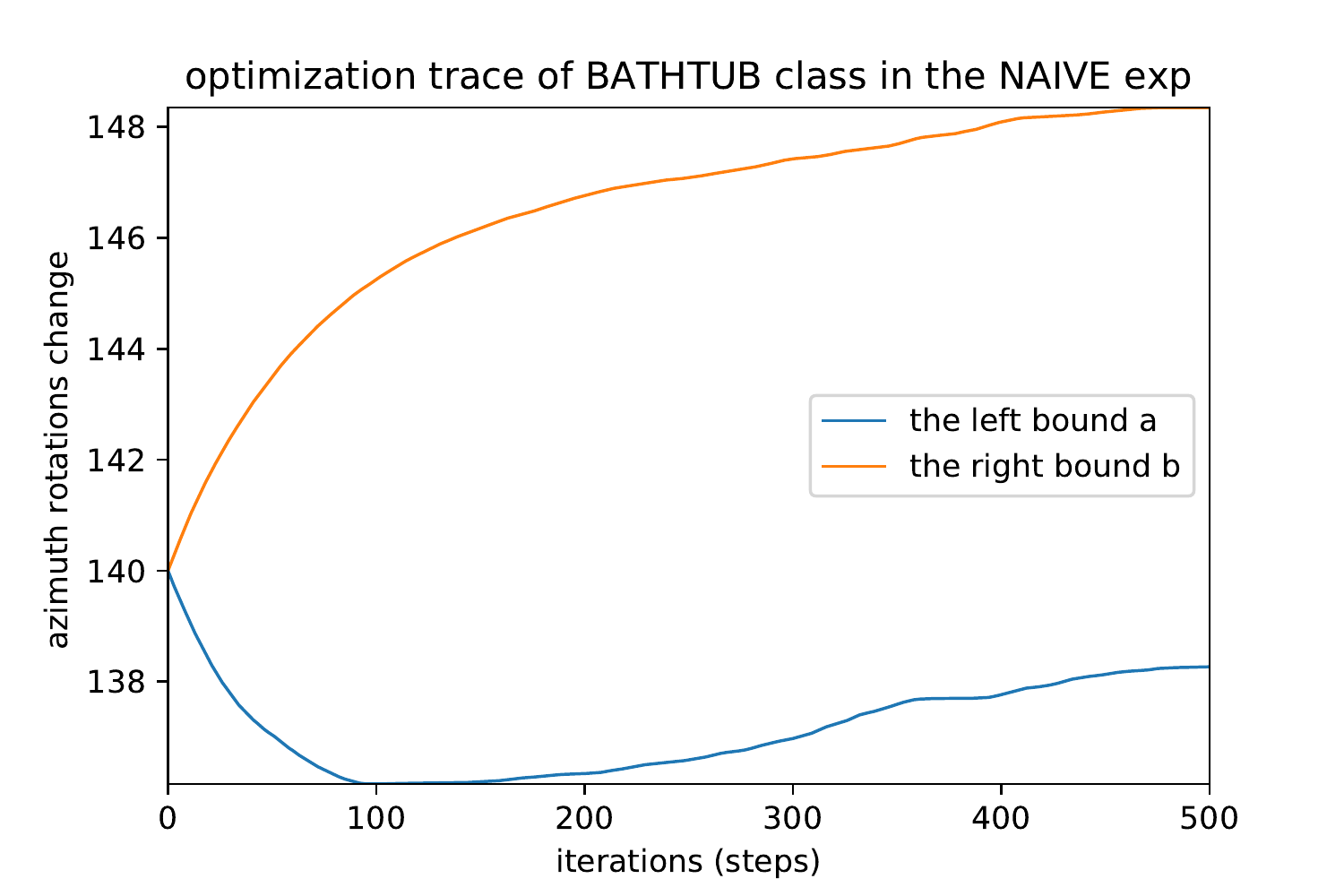} \\   \hline
\multicolumn{2}{c}{\textbf{Black-Box OIR algorithm}} \\  
\includegraphics[width = 0.49\linewidth]{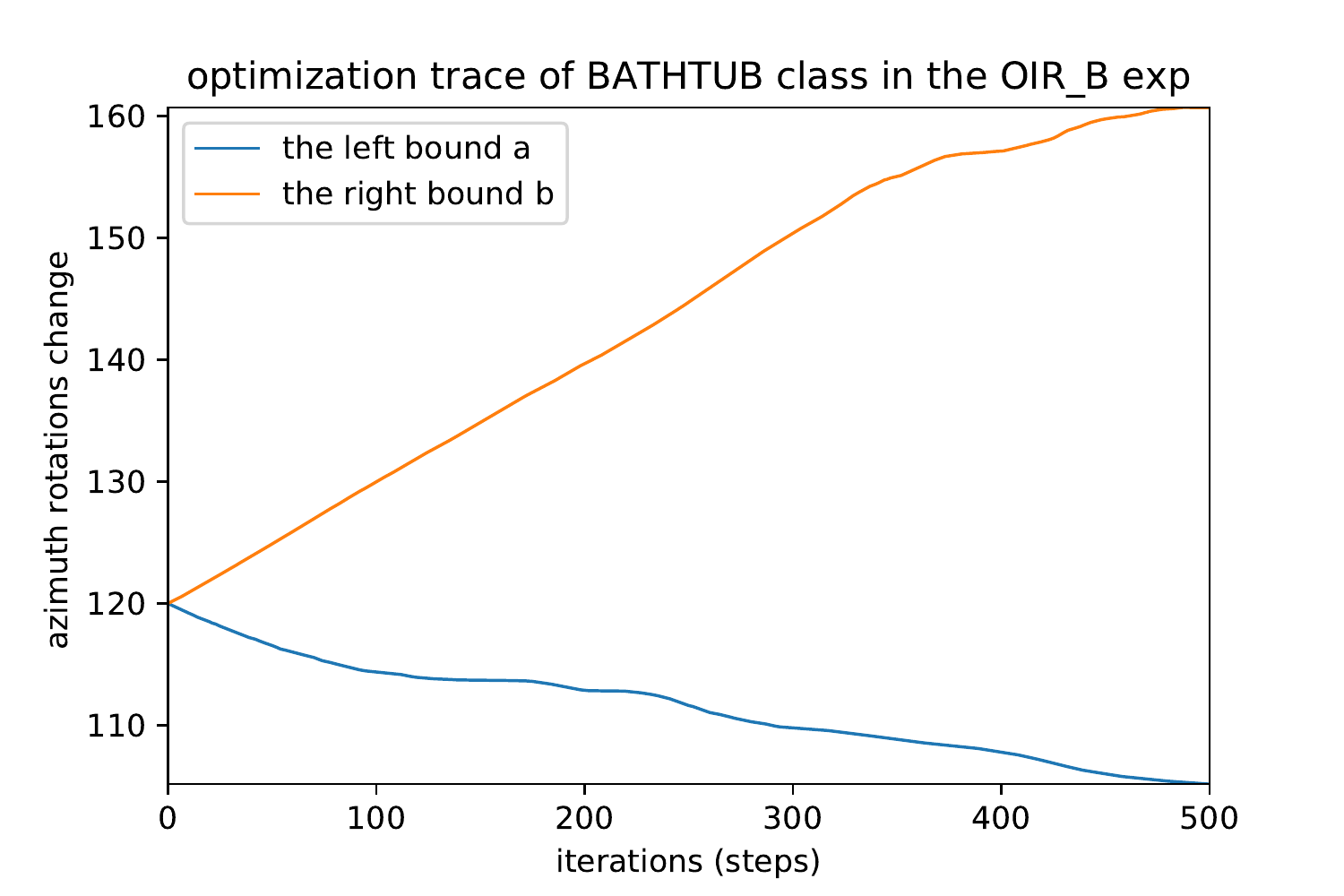} &
\includegraphics[width = 0.49\linewidth]{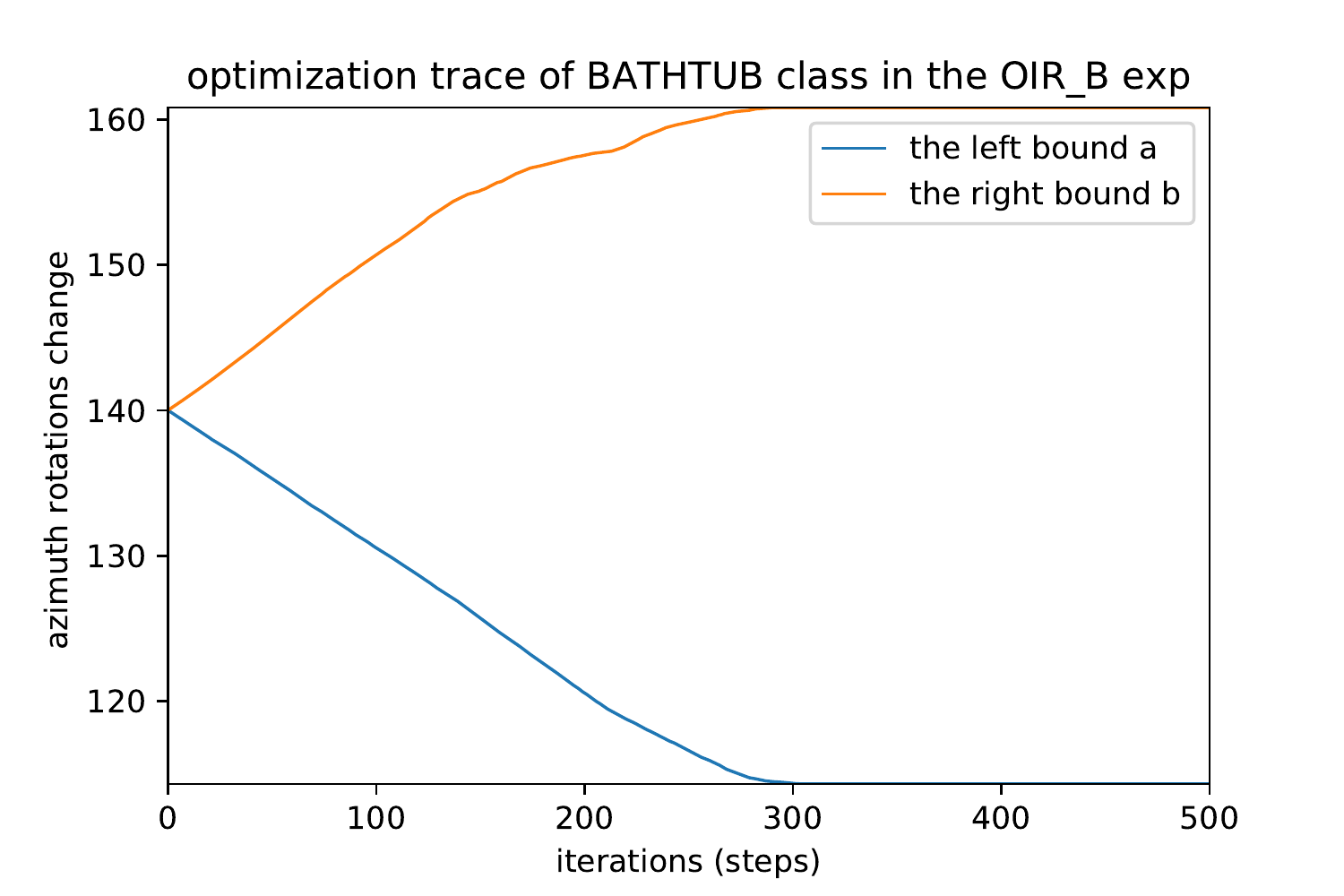} \\   \hline
\multicolumn{2}{c}{\textbf{White-Box OIR algorithm}} \\  
\includegraphics[width = 0.49\linewidth]{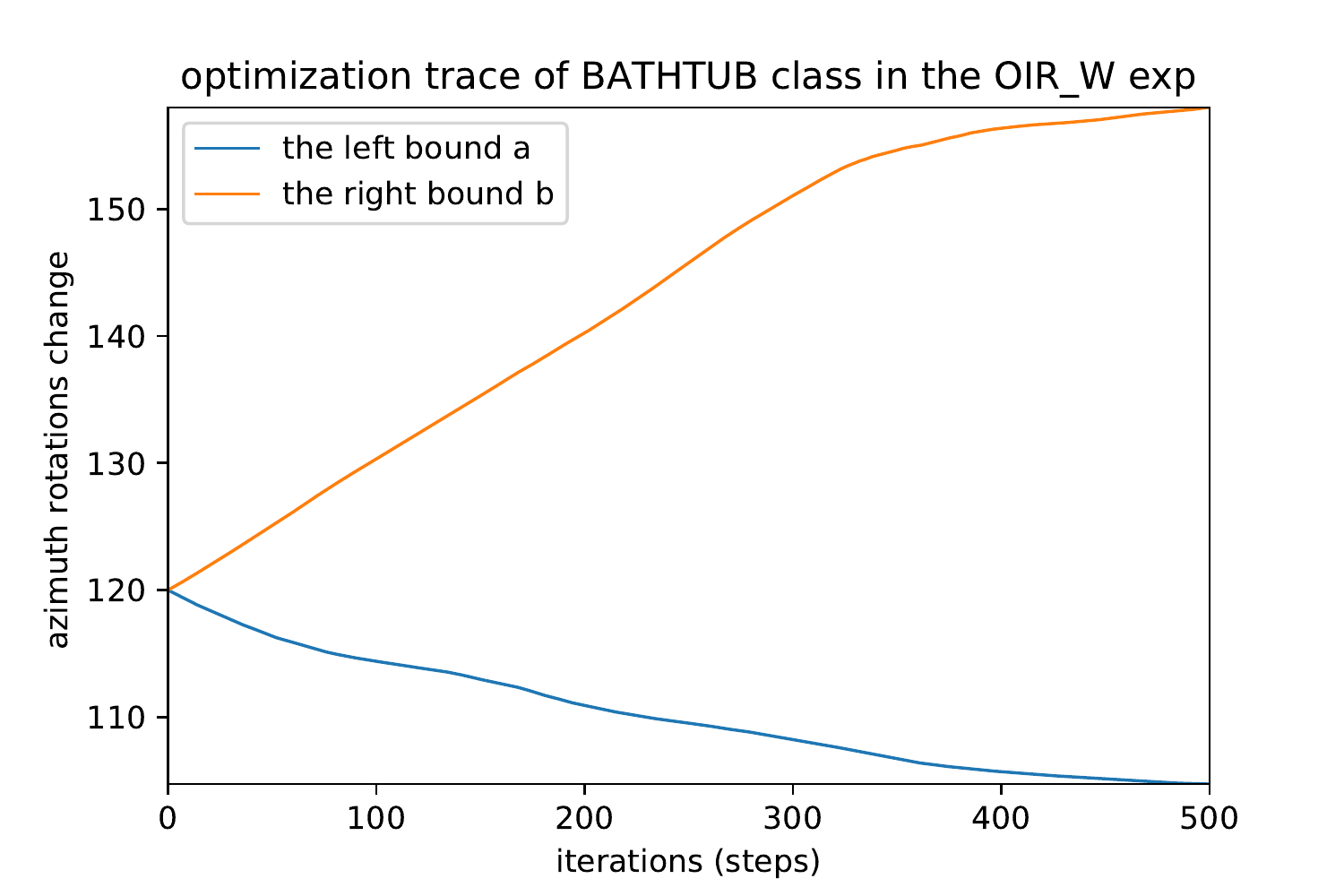} &
\includegraphics[width = 0.49\linewidth]{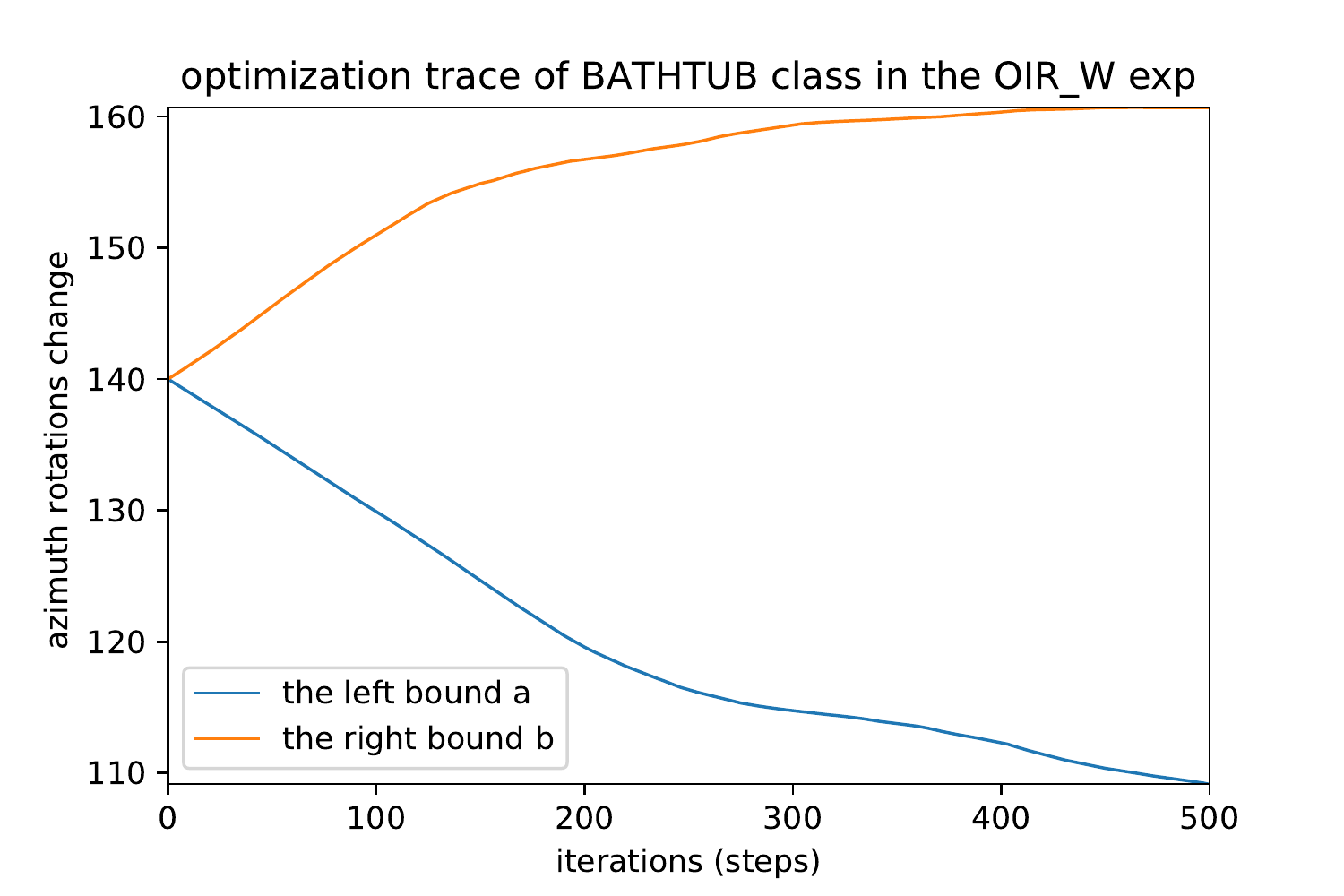} \\   \hline
\end{tabular}
   \caption{\small \textbf{Robust Region Bounds Growing I}: visualizing the bounds growing Using different algorithms from two different initializations (120 and 140) in \figLabel{\ref{fig:converge}}. We can see that OIR formulations converge to the same bounds of the robust region regardless of the initialization, which indicates effectiveness in detecting these regions, unlike the naive approach which can stop in a local optimum. }
   \vspace{-8pt}
   \label{fig:conv1}
\end{figure*}

\begin{figure*}[h]
\centering
\tabcolsep=0.03cm
   \begin{tabular}{c|c}  \hline
   \textbf{initialization = 290} & \textbf{initialization = 310} \\  \hline
      \multicolumn{2}{c}{\textbf{Naive algorithm}} \\   
\includegraphics[width = 0.49\linewidth]{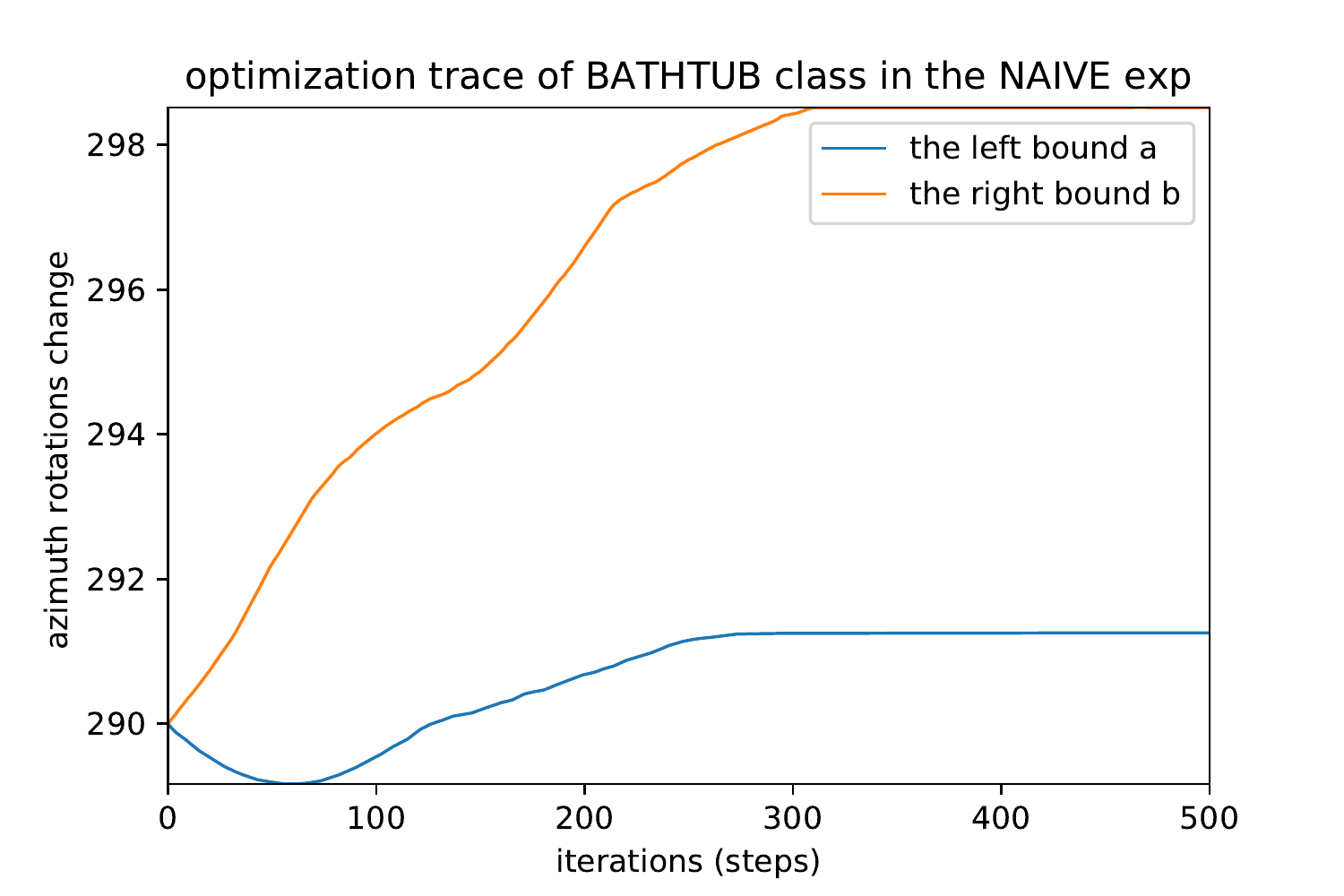} &
\includegraphics[width = 0.49\linewidth]{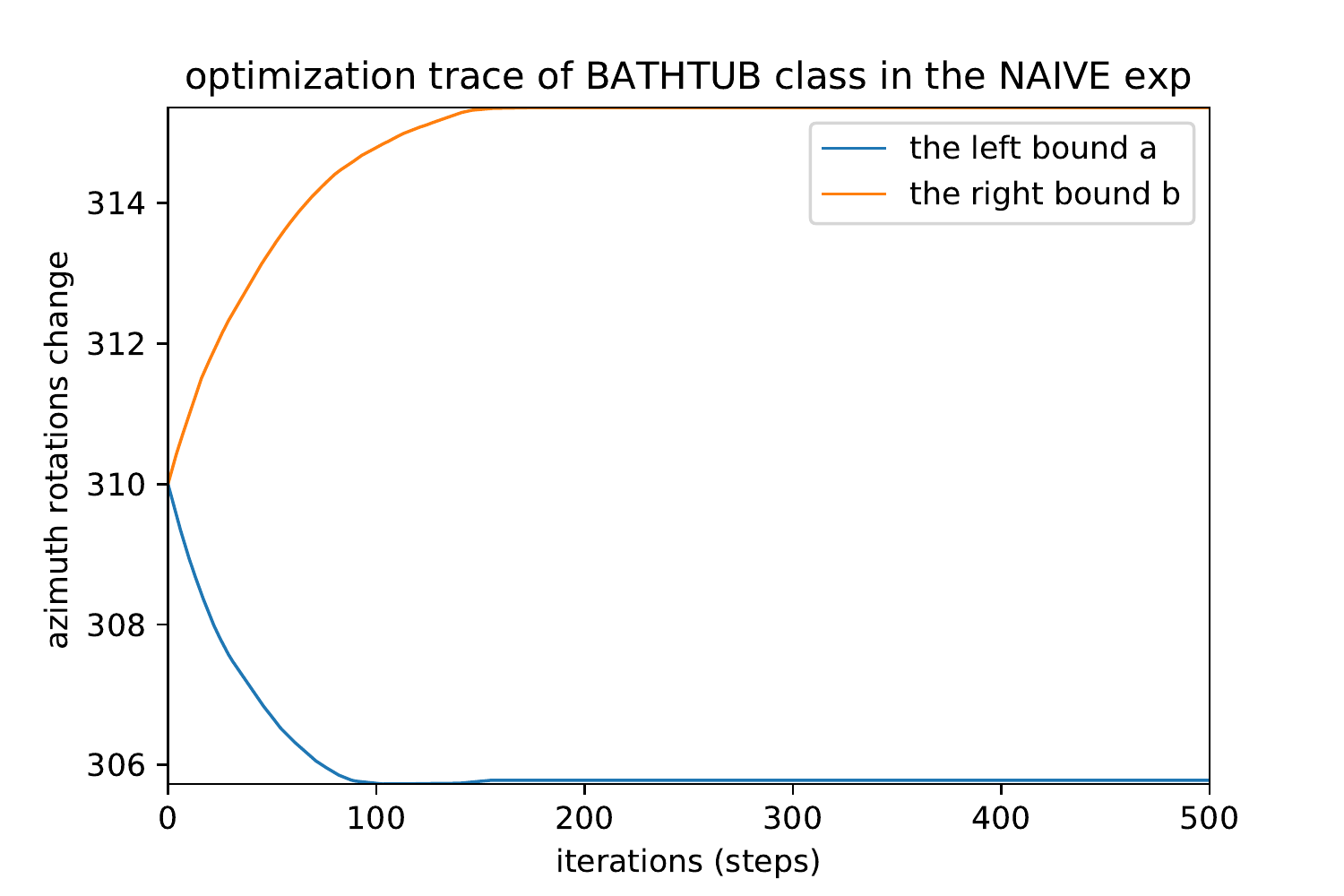} \\   \hline
\multicolumn{2}{c}{\textbf{Black-Box OIR algorithm}} \\  
\includegraphics[width = 0.49\linewidth]{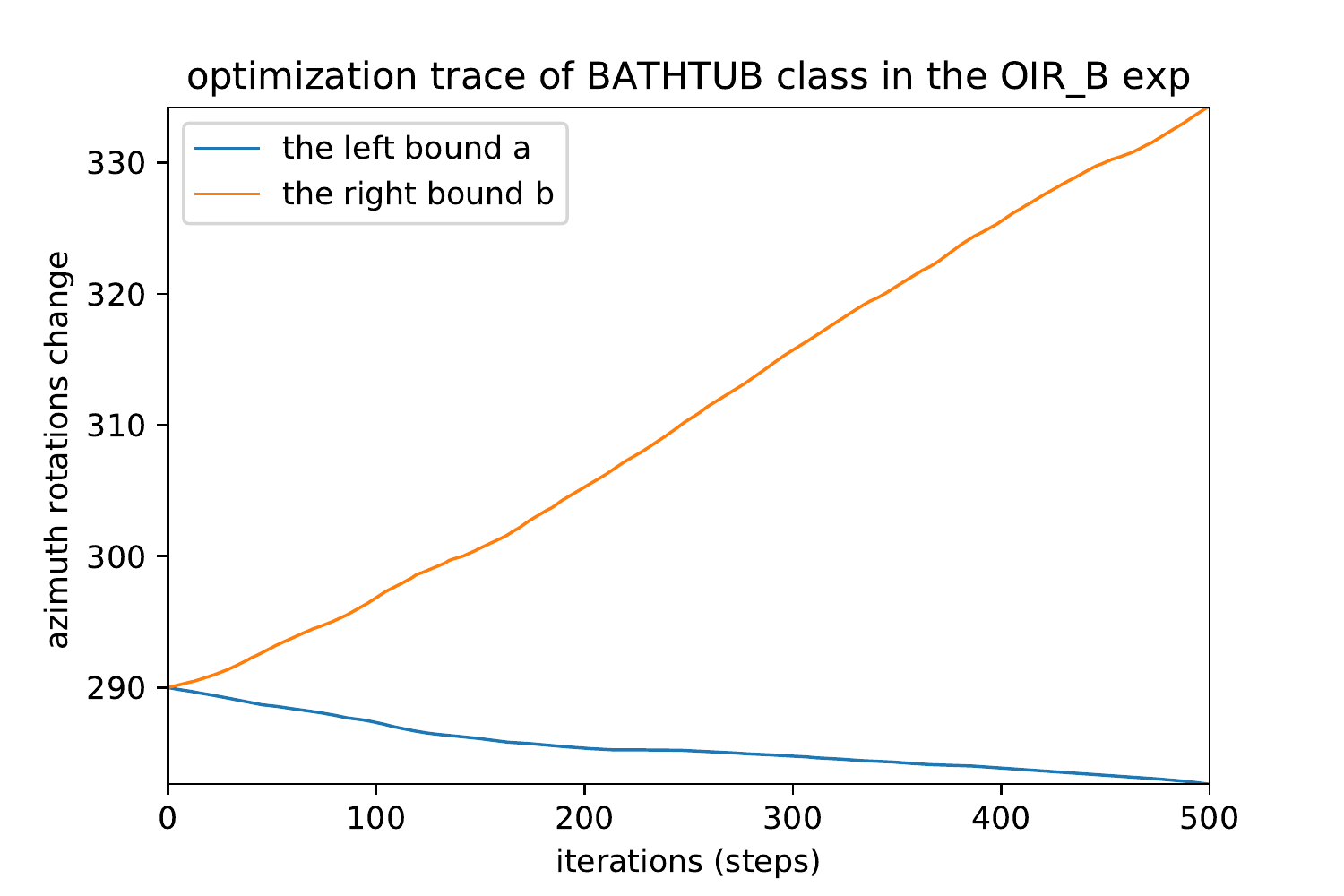} &
\includegraphics[width = 0.49\linewidth]{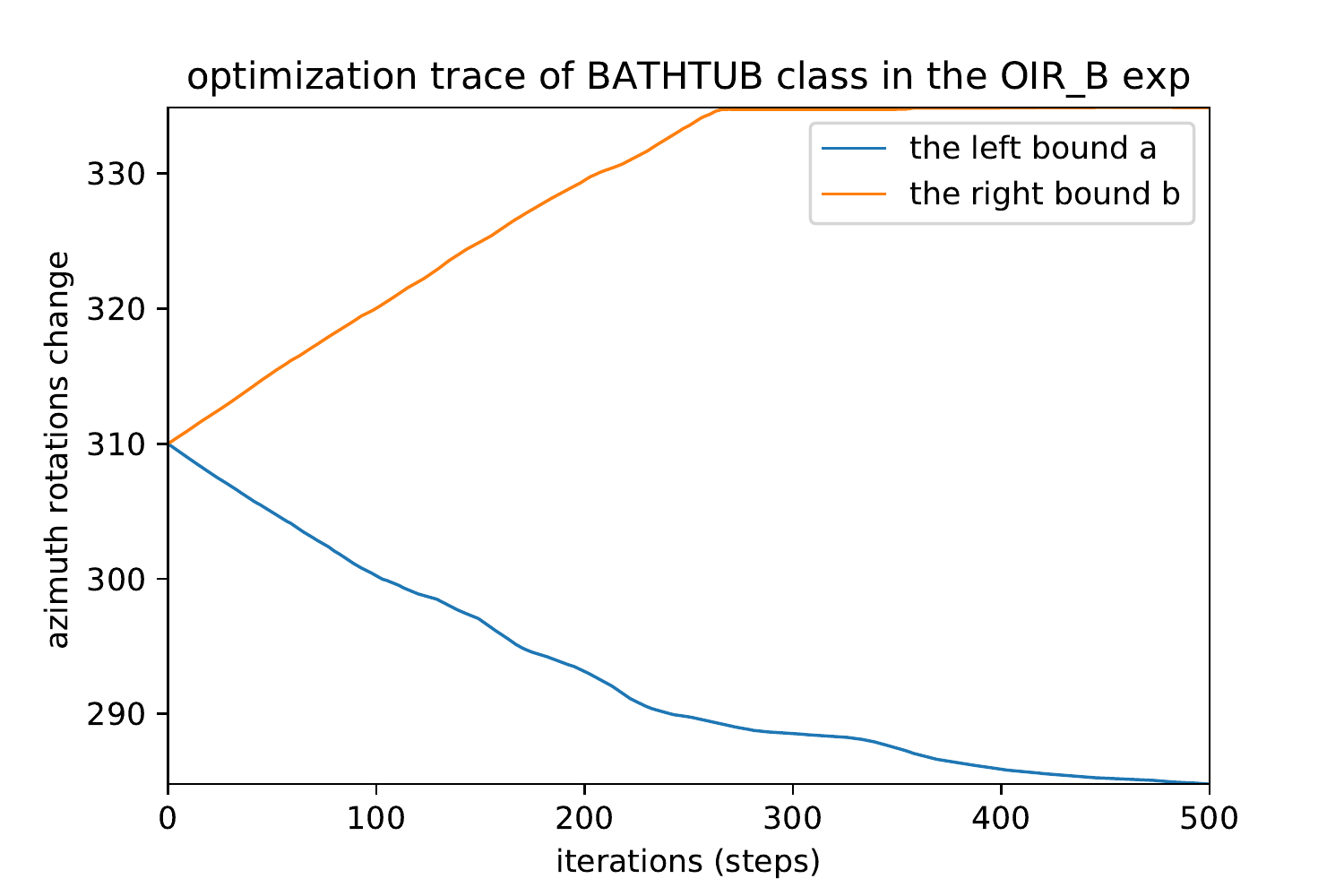} \\   \hline
\multicolumn{2}{c}{\textbf{White-Box OIR algorithm}} \\ 
\includegraphics[width = 0.49\linewidth]{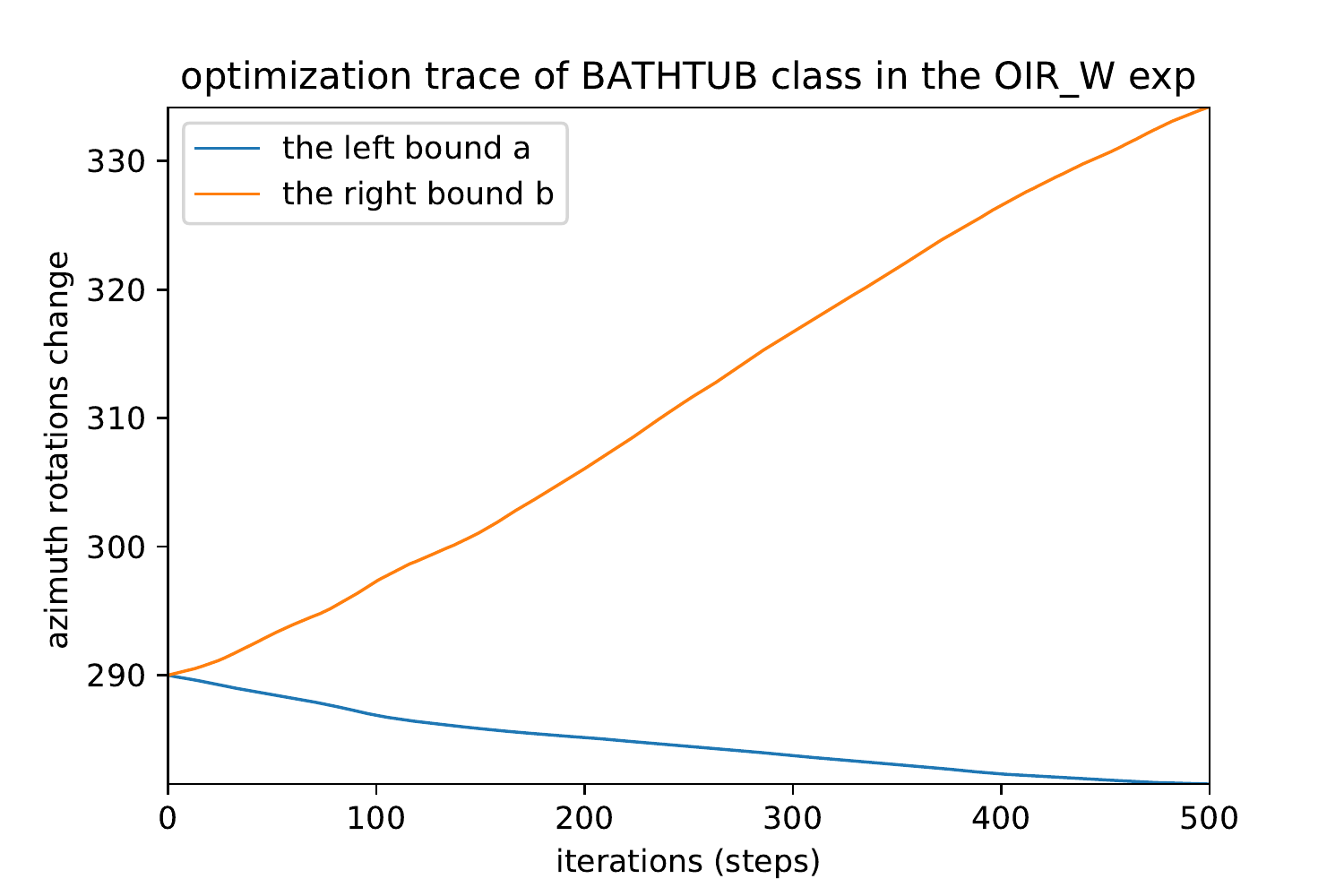} &
\includegraphics[width = 0.49\linewidth]{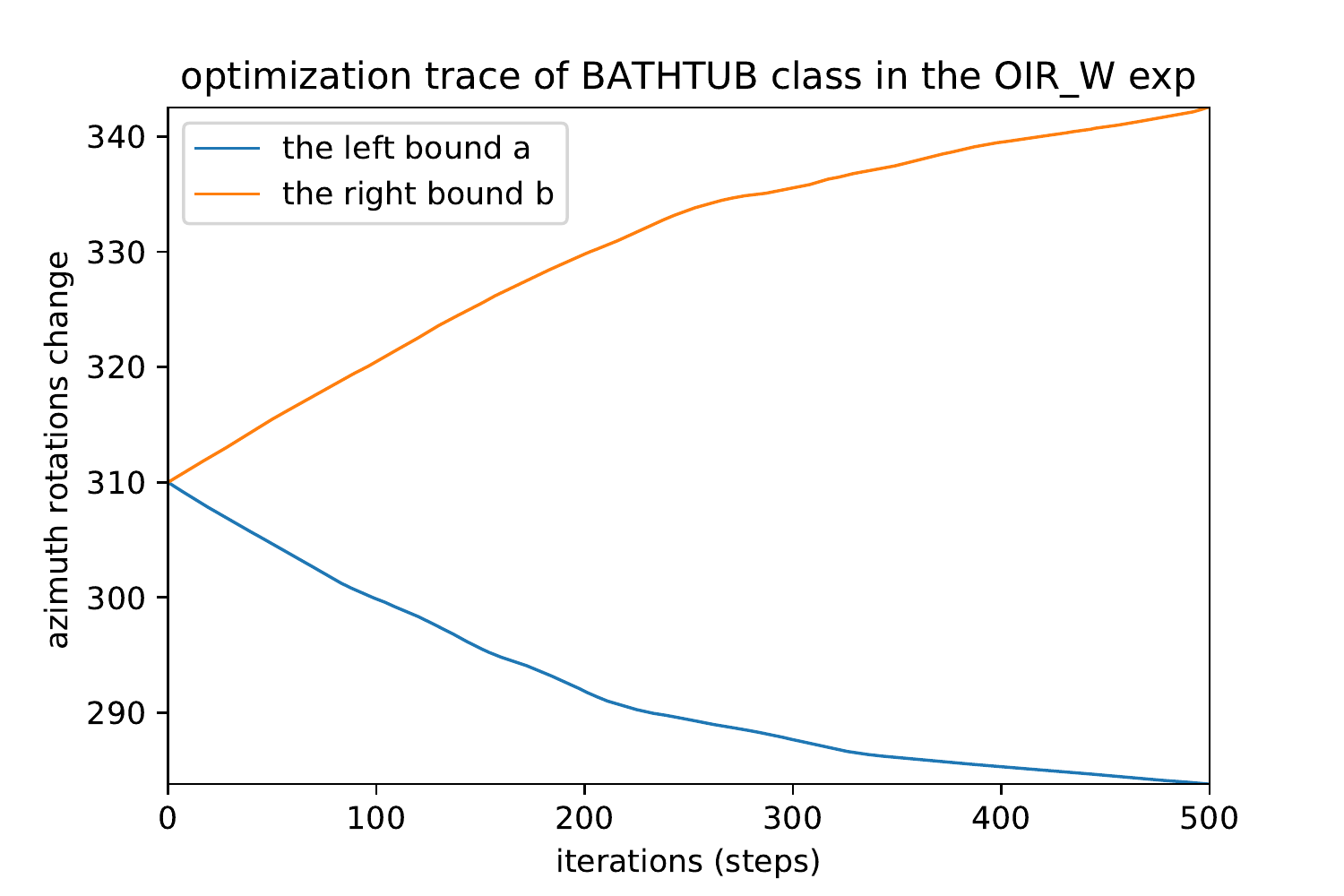} \\   \hline
\end{tabular}
   \caption{\small \textbf{Robust Region Bounds Growing I}: visualizing the bounds growing Using different algorithms from two different initializations (290, 310) in \figLabel{\ref{fig:converge}}. We can see that OIR formulations converge to the same bounds of the robust region regardless of the initialization, which indicates effectiveness in detecting these regions, unlike the naive approach which can stop in a local optimum. }
   \vspace{-8pt}
   \label{fig:conv2}
\end{figure*}

\subsection{Examples of Found Regions ( with Example Renderings)}
In \figLabel{\ref{fig:ex1},\ref{fig:ex2}} we provide examples of 2D regions found with the three algorithms along with renderings of the shapes from the robust regions detected.

\begin{figure*}[h]
\centering
\tabcolsep=0.03cm
   \begin{tabular}[!t]{c|c}  \hline
   \textbf{Detected Robust Regions} & \textbf{Renderings from Robust Regions} \\  \hline
\includegraphics[width = 0.55\linewidth]{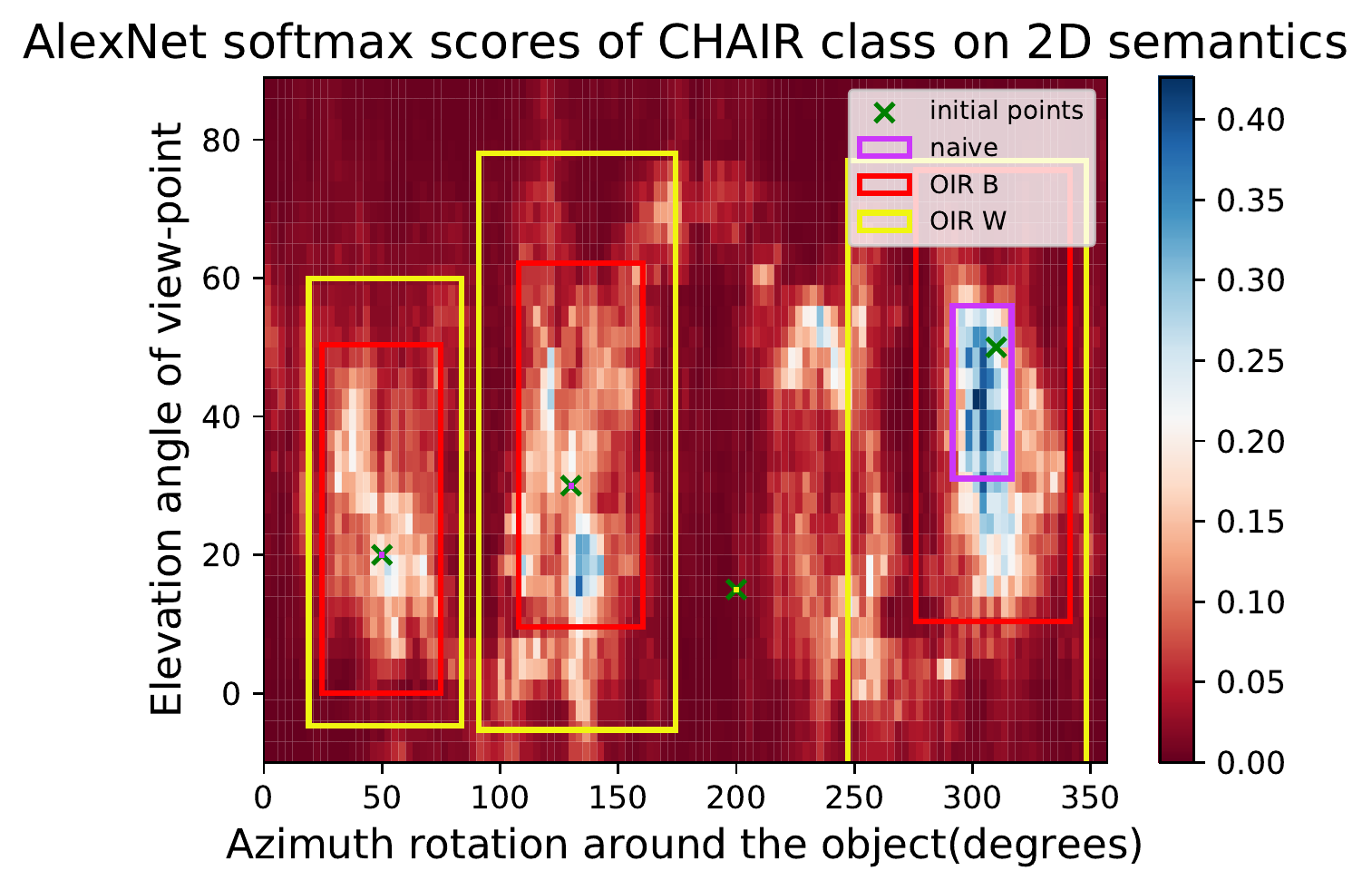} &
 \includegraphics[width = 0.43\linewidth]{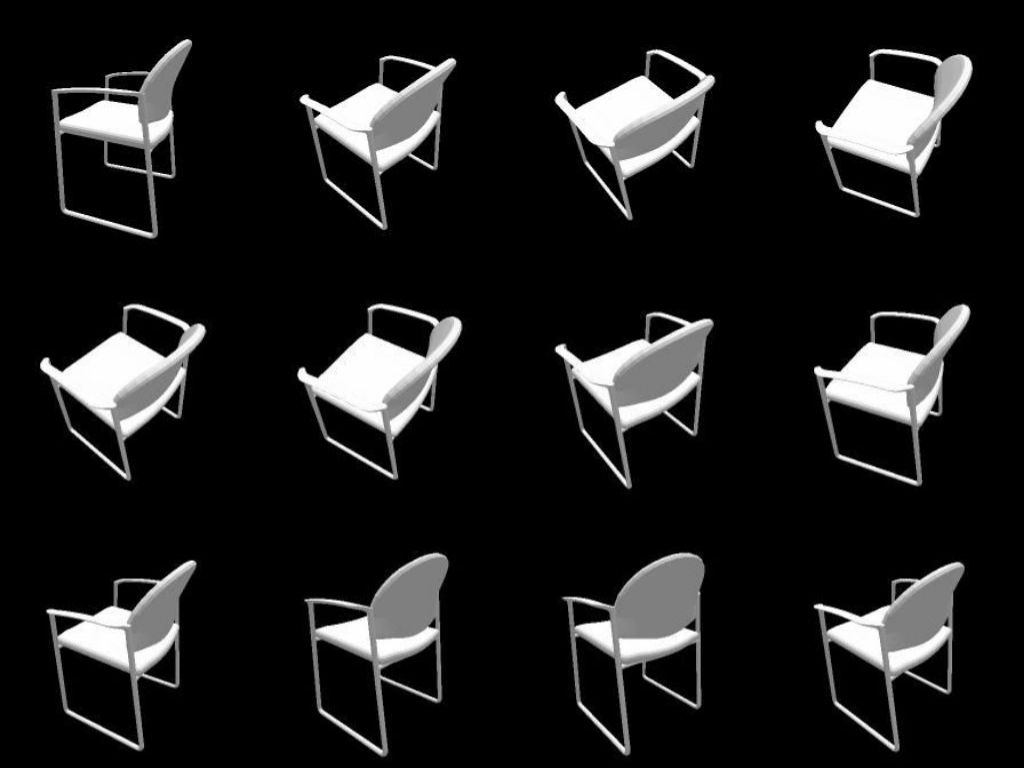}\\ \hline 
 \includegraphics[width = 0.55\linewidth]{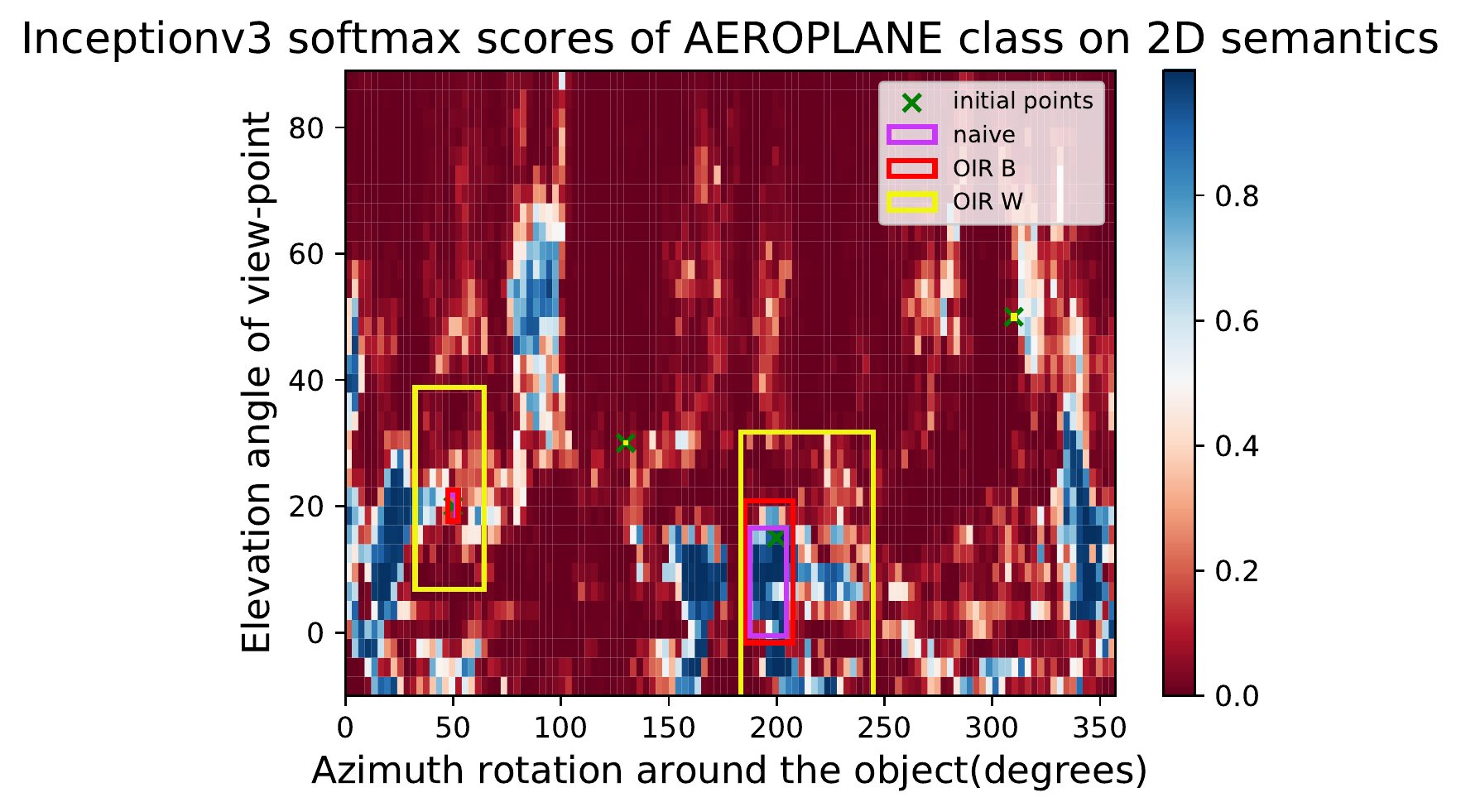} & 
 \includegraphics[width = 0.43\linewidth]{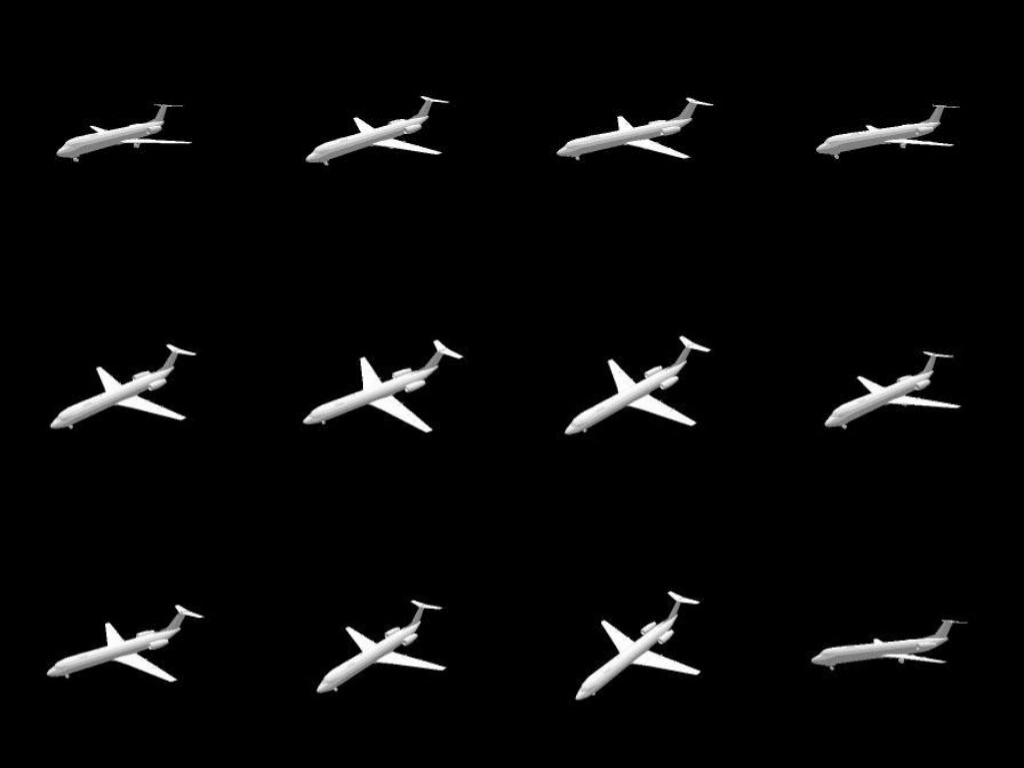}\\ \hline 
\includegraphics[width = 0.55\linewidth]{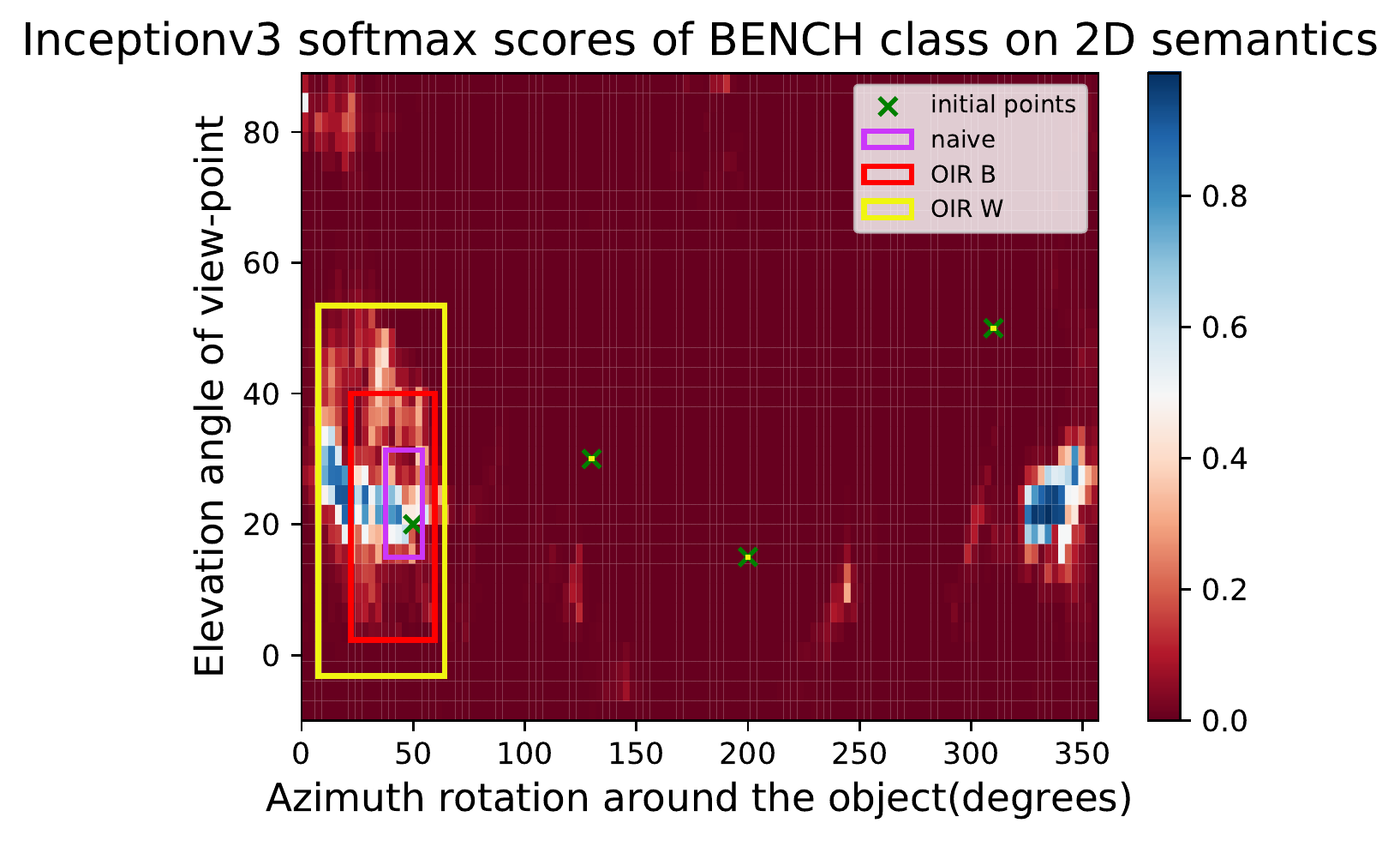} & 
 \includegraphics[width = 0.43\linewidth]{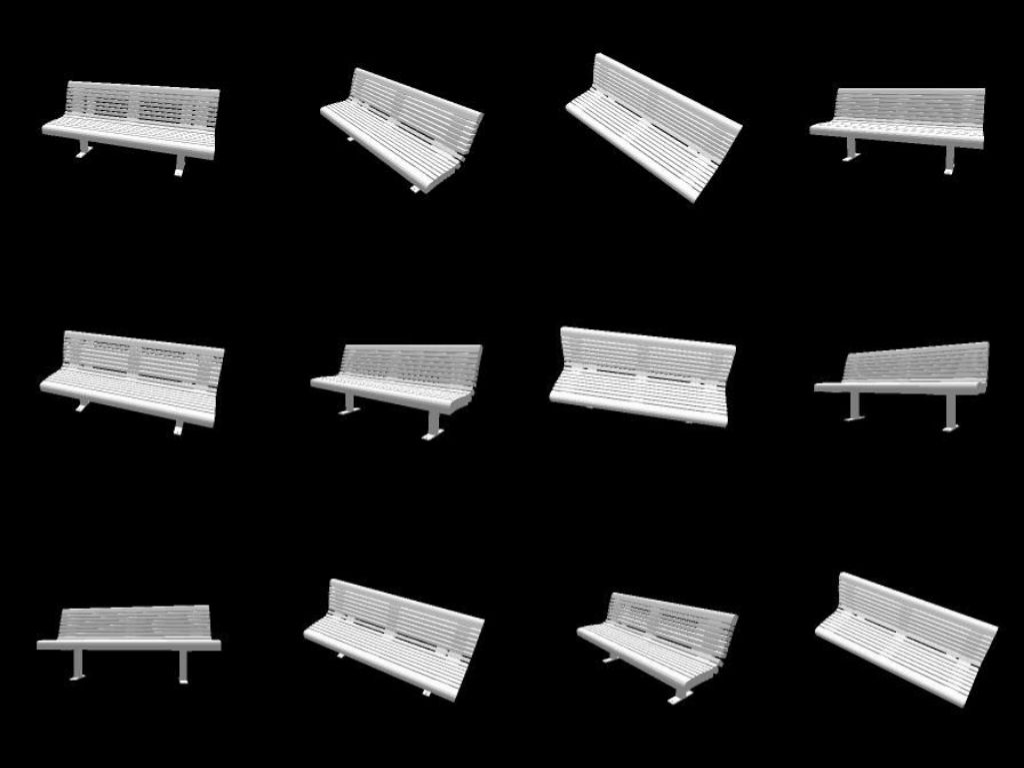}\\ \hline 
\includegraphics[width = 0.55\linewidth]{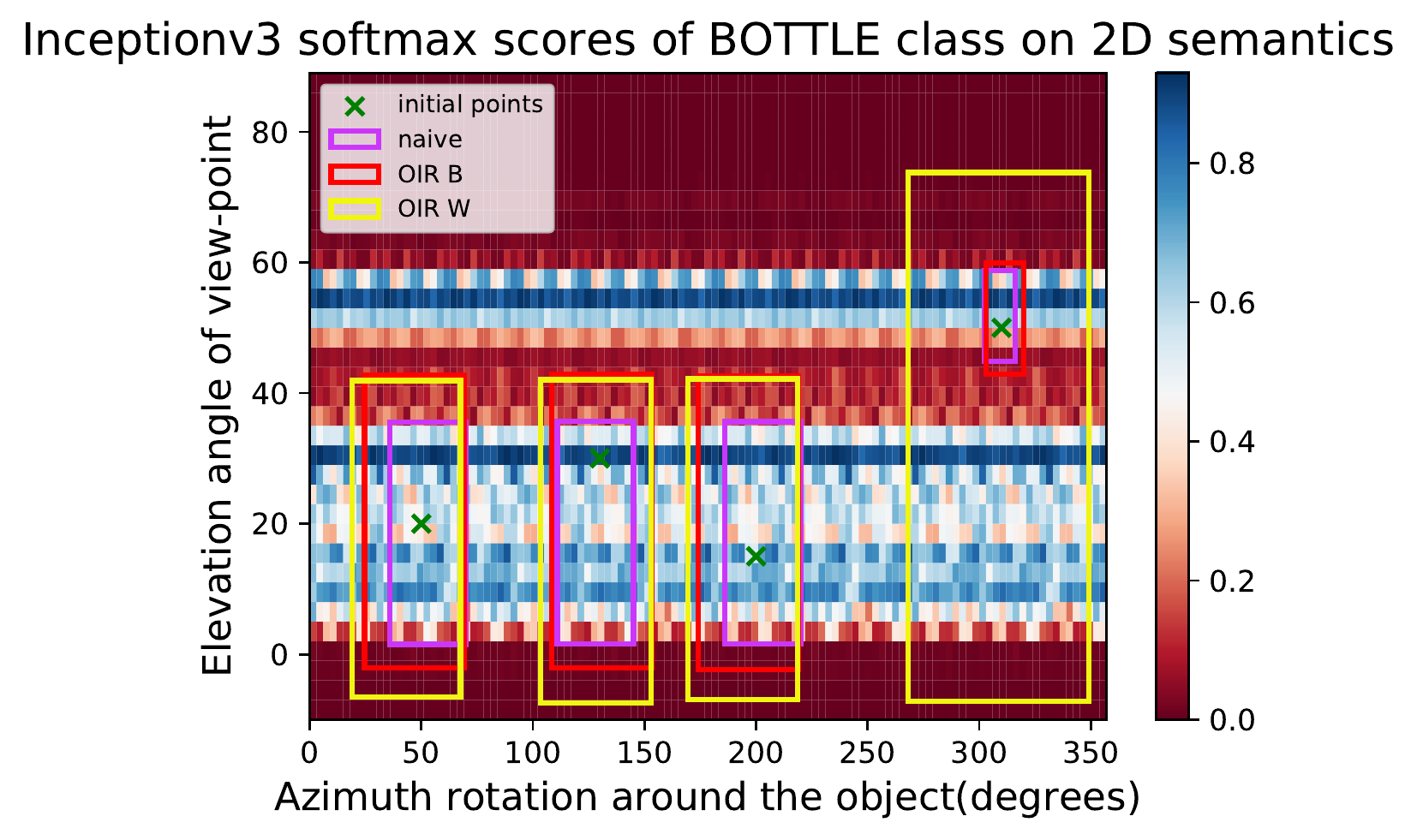}  &
 \includegraphics[width = 0.43\linewidth]{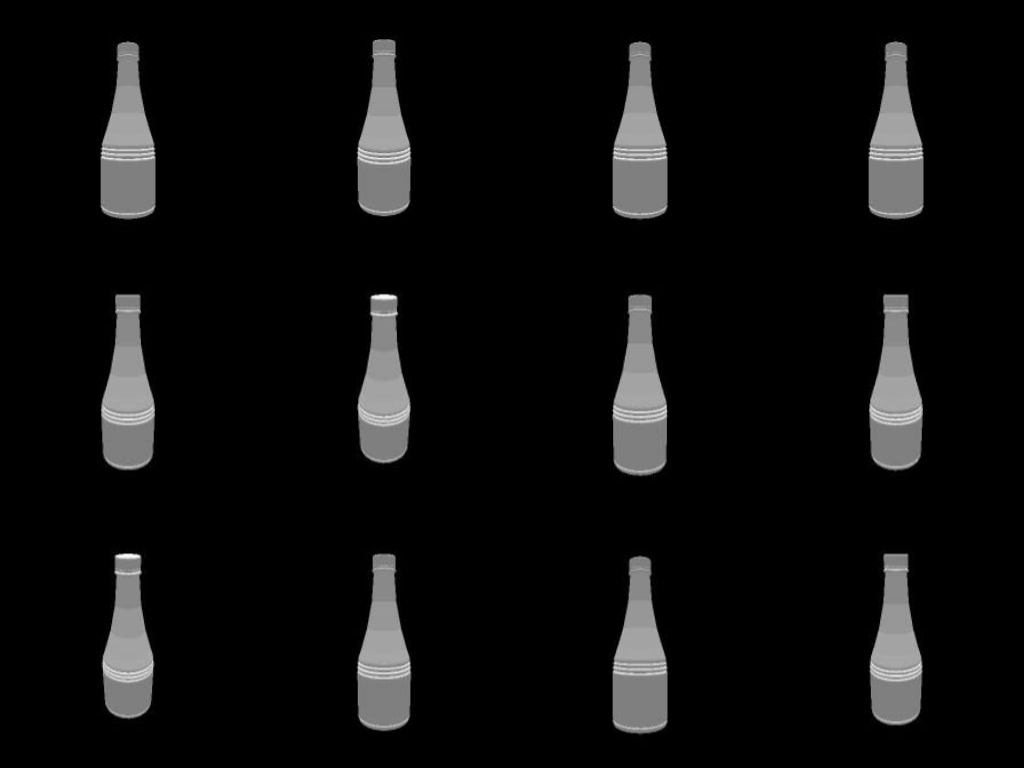}\\ \hline 
\end{tabular}
   \caption{\small \textbf{Qualitative Examples of Robust Regions I}: visualizing different runs of the algorithm to find robust regions along with different renderings from inside these regions for those specific shapes used in the experiments. }
  \vspace{-8pt}
   \label{fig:ex1}
\end{figure*}

\begin{figure*}[h]
\centering
\tabcolsep=0.03cm
   \begin{tabular}{c|c}  \hline
   \textbf{Detected Robust Regions} & \textbf{Renderings from Robust Regions} \\  \hline
\includegraphics[width = 0.55\linewidth]{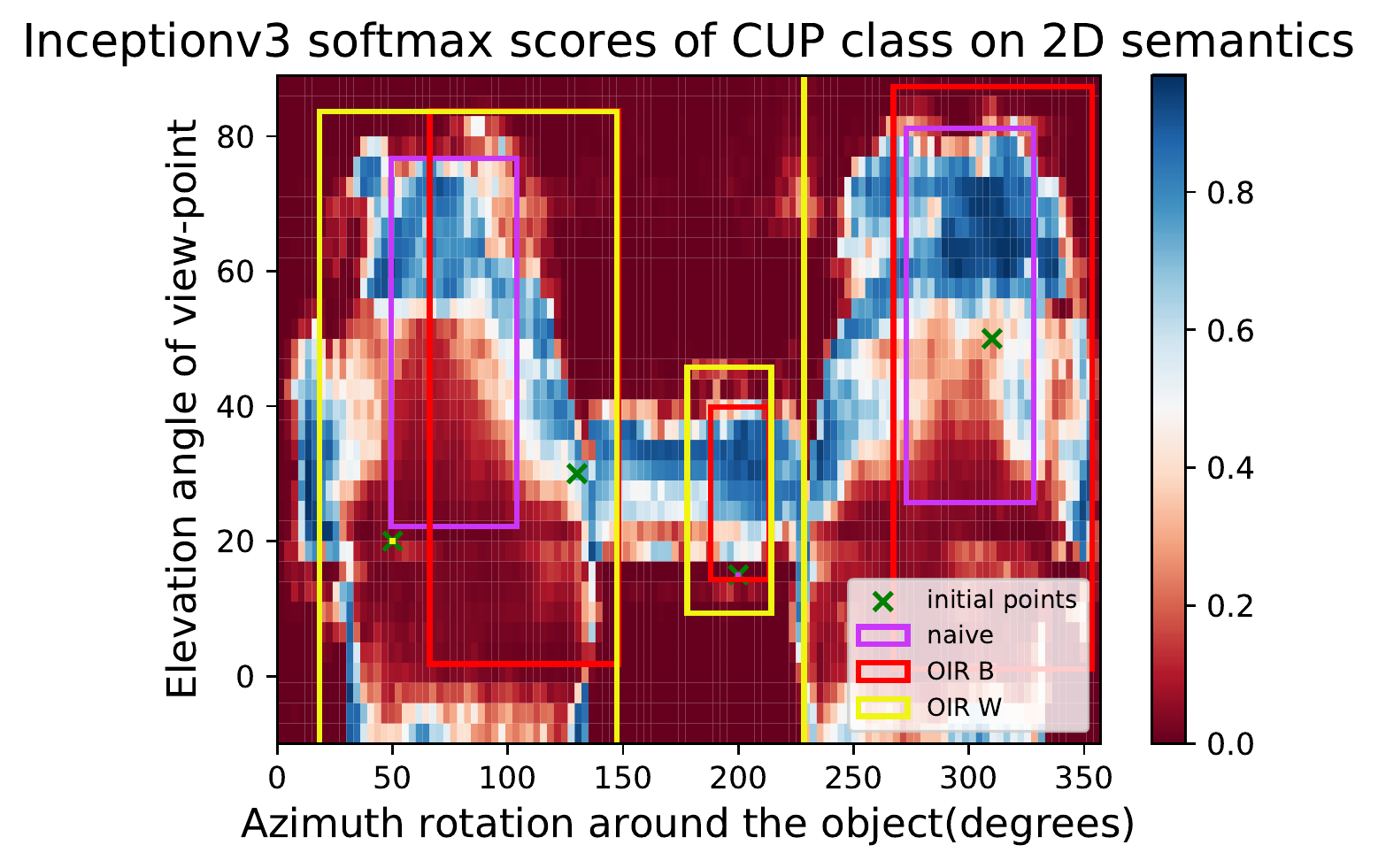} & 
 \includegraphics[width = 0.43\linewidth]{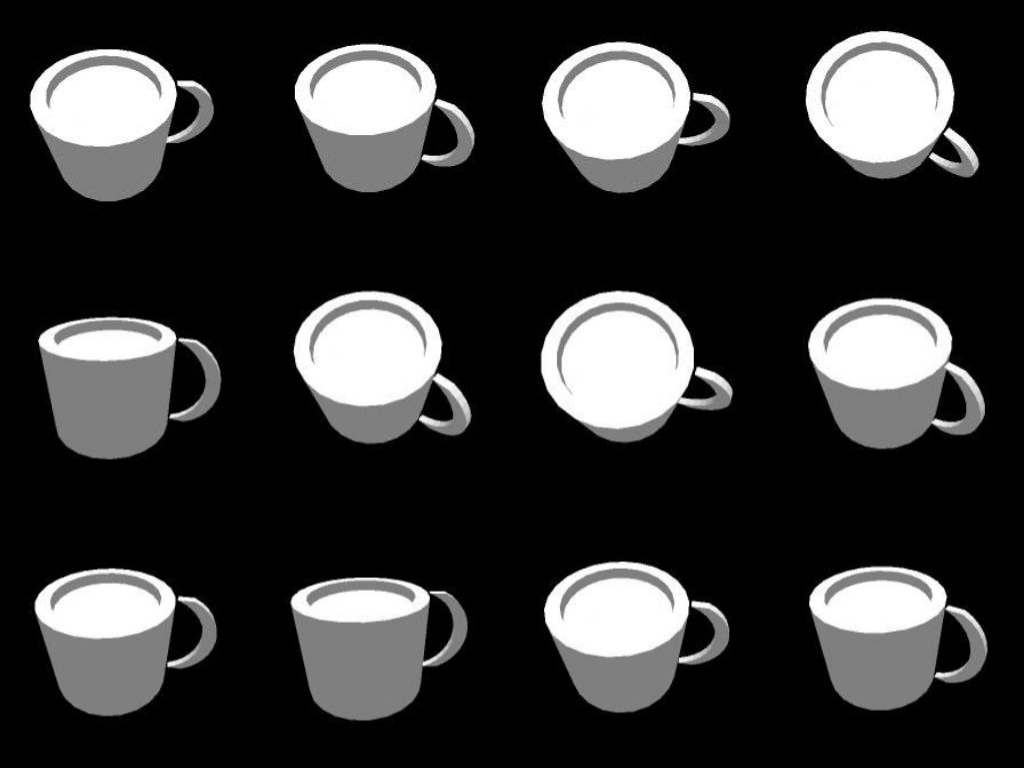}\\ \hline 
\includegraphics[width = 0.55\linewidth]{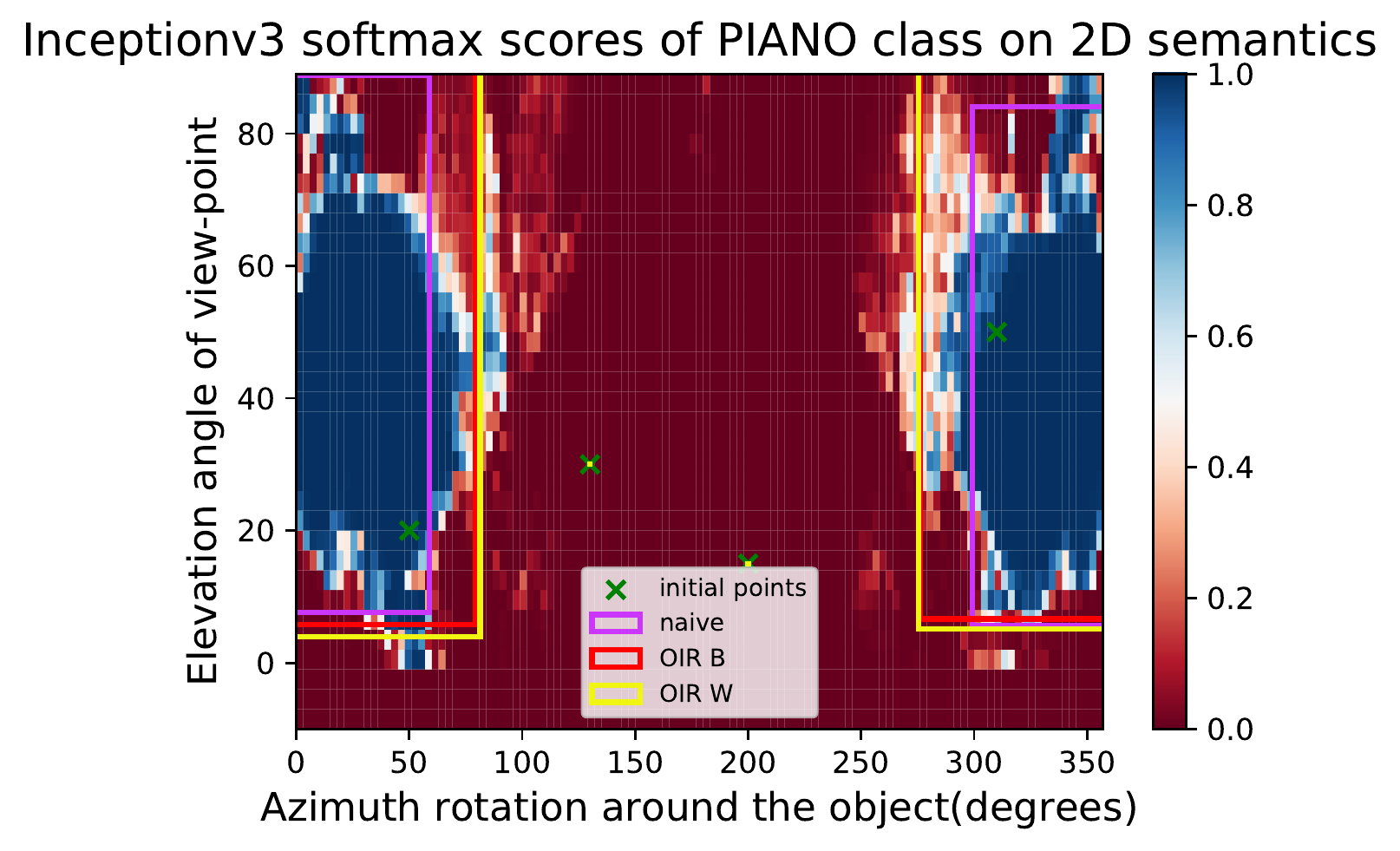} &  
\includegraphics[width = 0.43\linewidth]{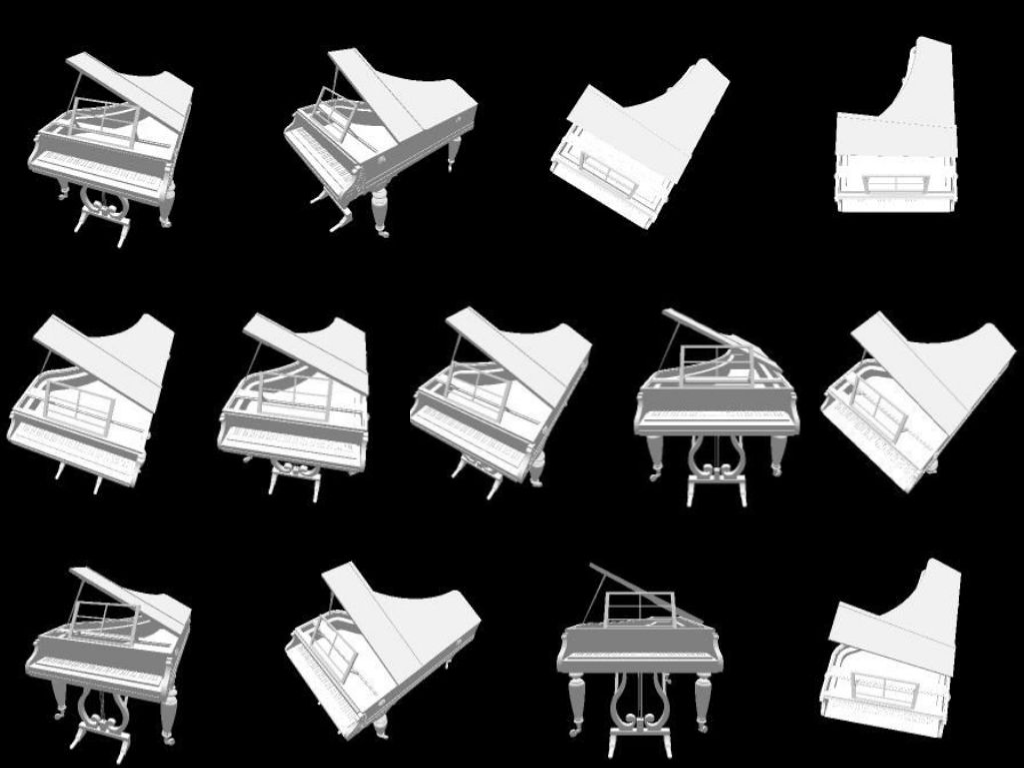}\\ \hline 
 \includegraphics[width = 0.55\linewidth]{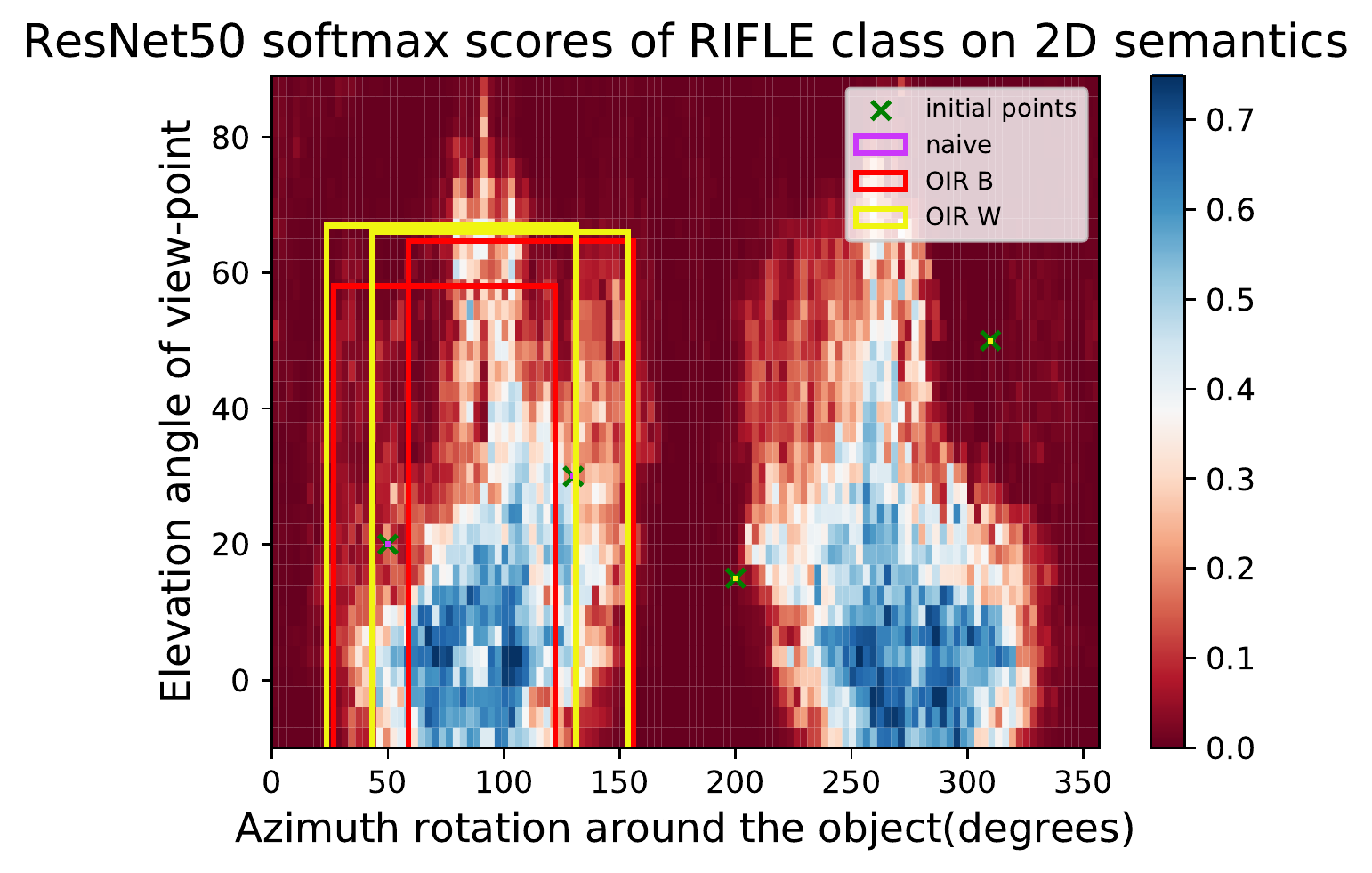} & 
  \includegraphics[width = 0.43\linewidth]{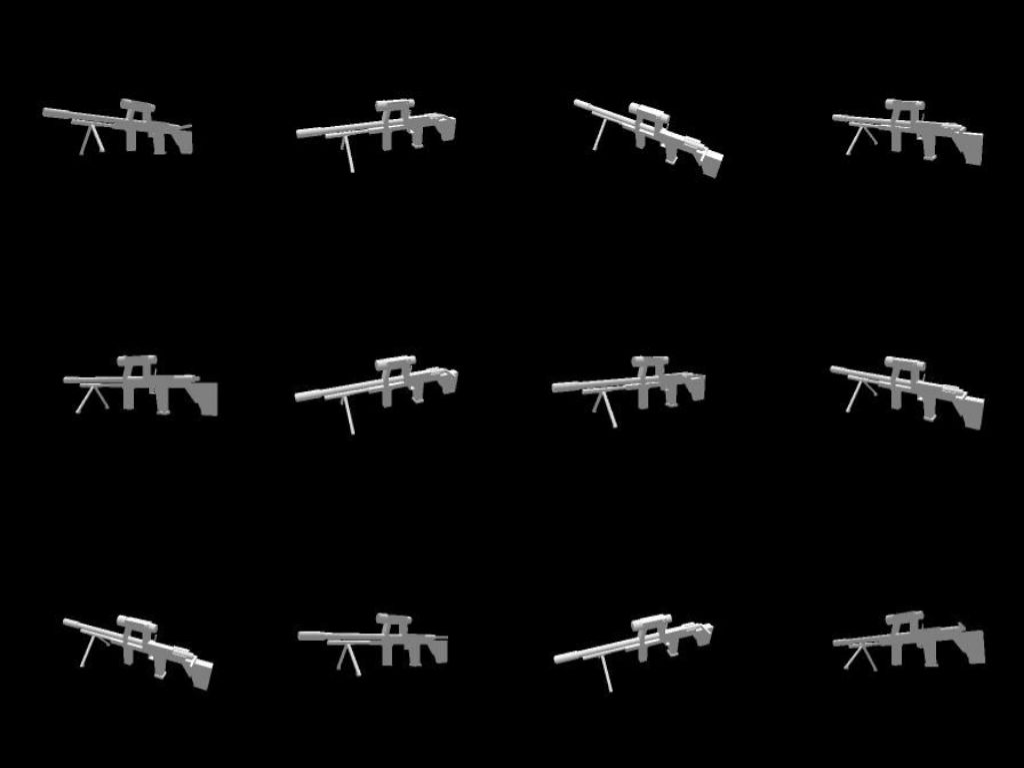}\\ \hline 
\end{tabular}
   \caption{\small \textbf{Qualitative Examples of Robust Regions II}: visualizing different runs of the algorithm to find robust regions along with different renderings from inside these regions for those specific shapes used in the experiments.}
   \vspace{-8pt}
   \label{fig:ex2}
\end{figure*}

\subsection{Analyzing Semantic Data Bias in ImageNet}
In \figLabel{\ref{fig:dsm1},\ref{fig:dsm2}}, we visualizing semantic data bias in the common training dataset (\ie ImageNet \cite{IMAGENET}) by averaging the Networks Semantic Maps (NSM) of different networks and on different shapes, Different classes have a different semantic bias in ImageNet as clearly shown in the maps above. These places of high confidence probably reveal the areas where an abundance of examples exists in ImageNet, while holes convey scarcity of such examples in ImageNet corresponding class.
\begin{figure*}[h]
\centering
\tabcolsep=0.03cm
   \begin{tabular}{c|c}
\includegraphics[width = 0.49\linewidth]{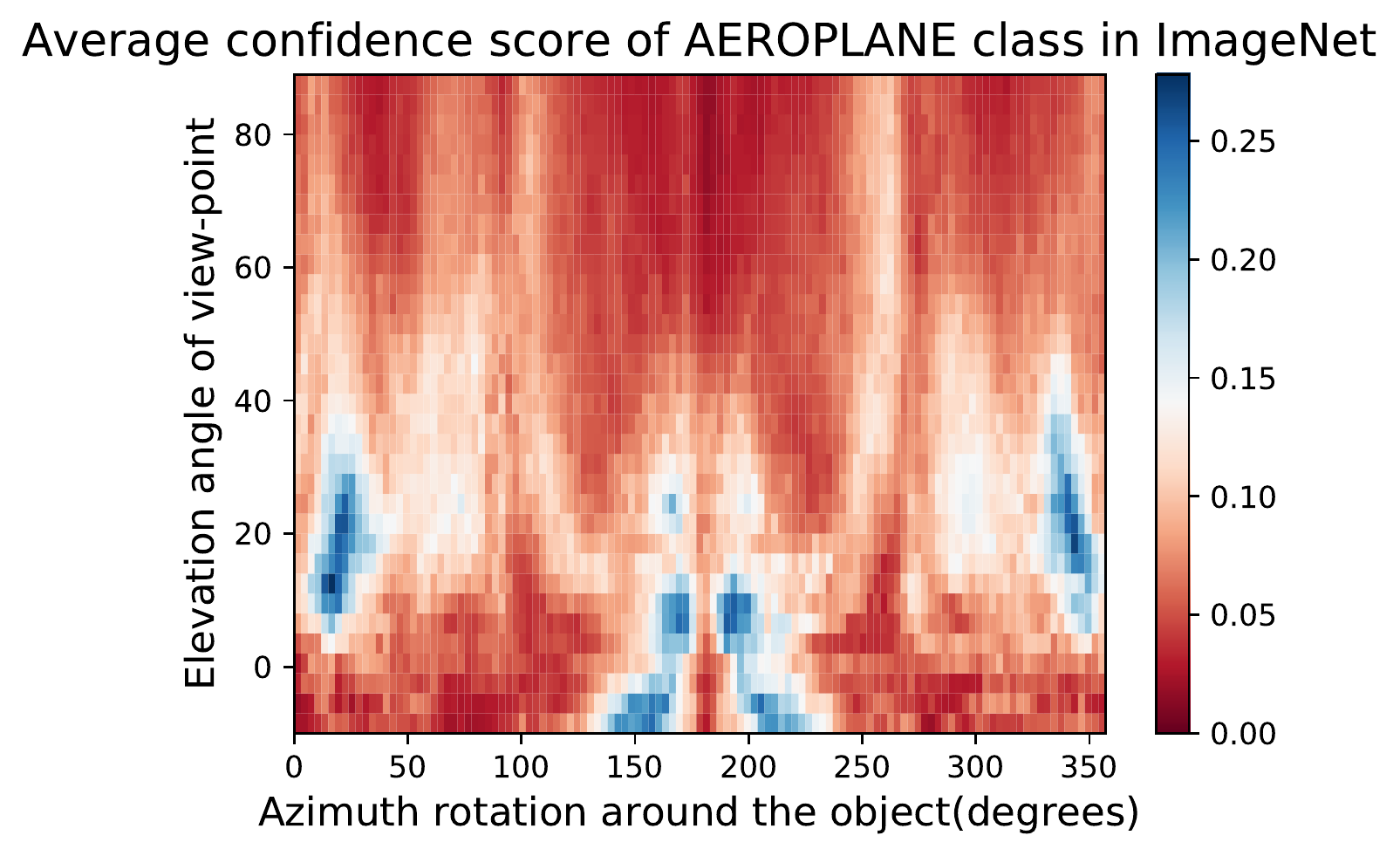}&
\includegraphics[width = 0.49\linewidth]{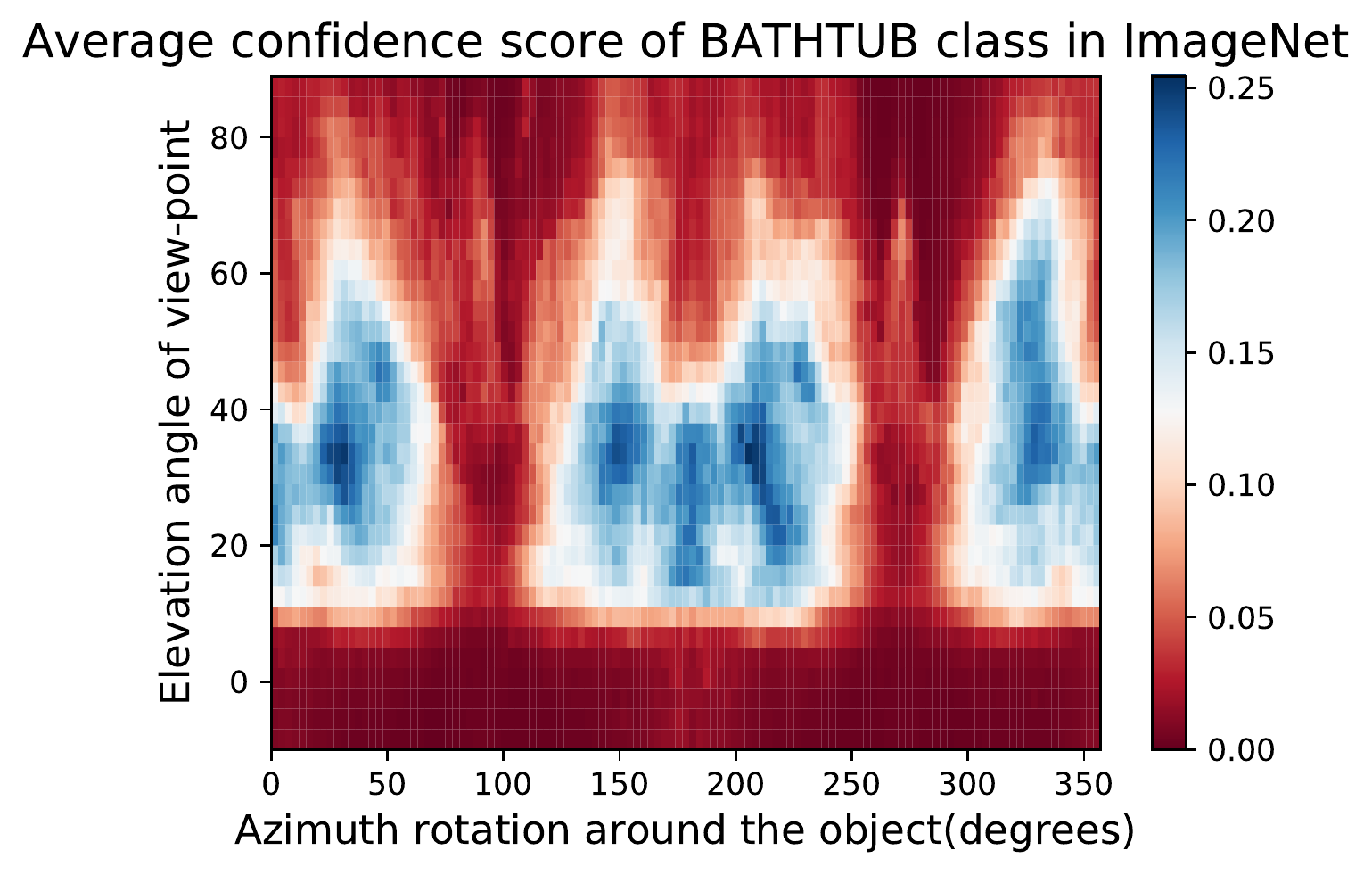}\\\hline
\includegraphics[width = 0.49\linewidth]{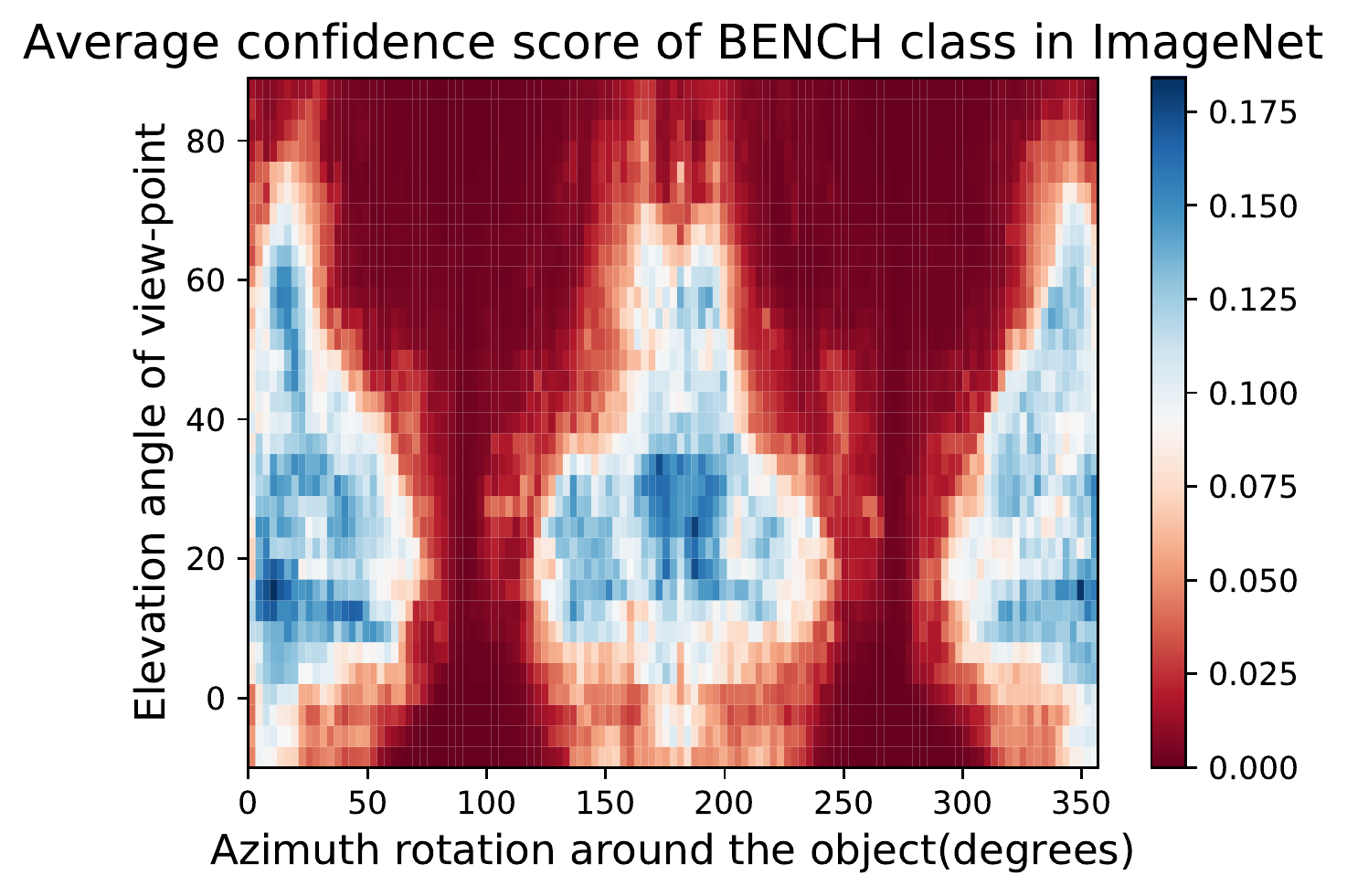}&
\includegraphics[width = 0.49\linewidth]{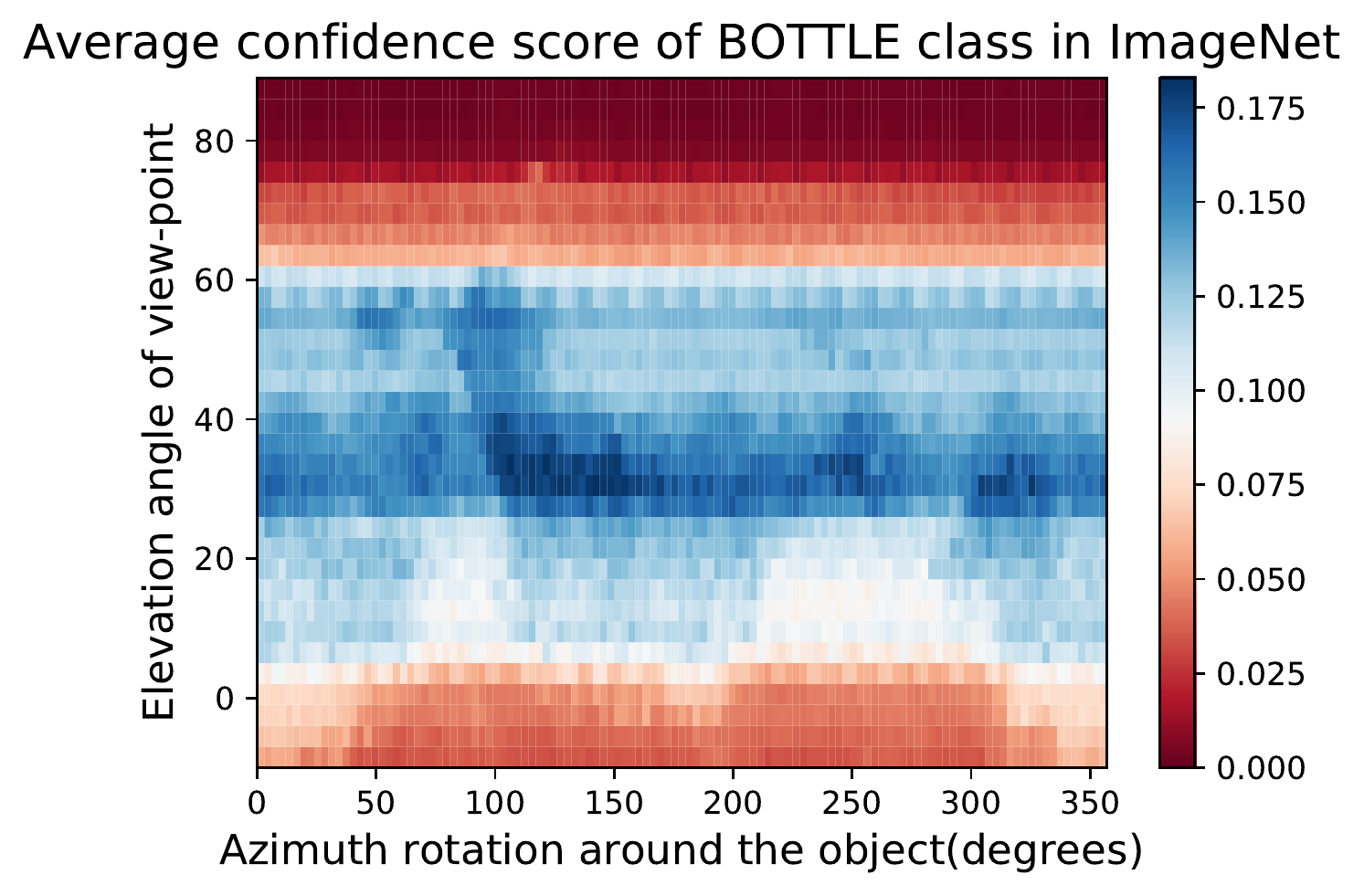}
\end{tabular}
   \caption{\small \textbf{Data Semantic Maps DSM-I}: visualizing Semantic Data Bias in the common training dataset (\ie ImageNet \cite{IMAGENET}) by averaging the Networks Semantic Maps (NSM) of different networks and on different shapes, Different classes have different semantic bias in ImageNet as clearly shown in the maps above. The symmetry in the maps are attributed to the 3D symmetry of the objects. }
   \vspace{-8pt}
   \label{fig:dsm1}
\end{figure*}

\begin{figure*}[]
\centering
\tabcolsep=0.03cm
   \begin{tabular}{c|c}
\includegraphics[width = 0.49\linewidth]{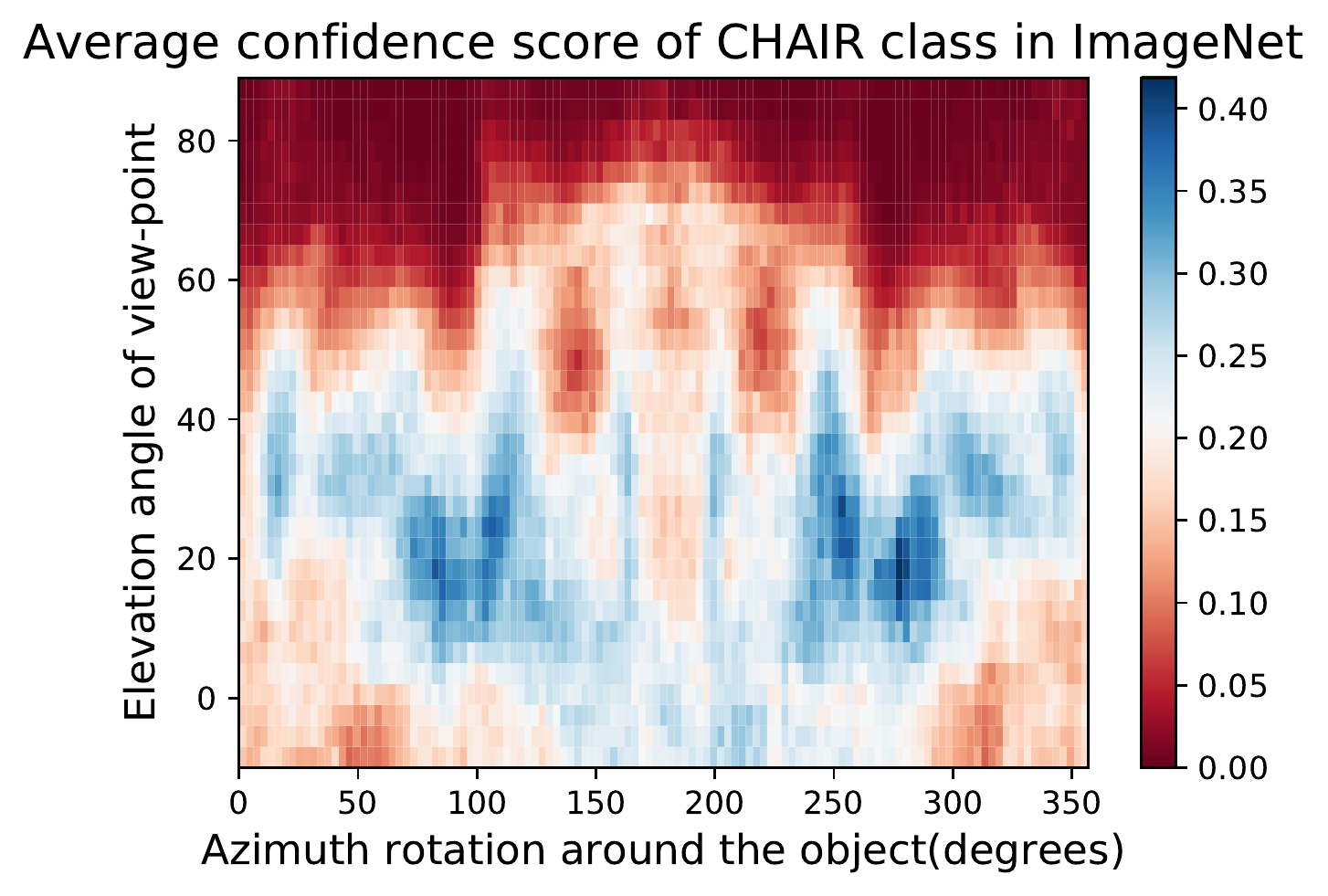}&
\includegraphics[width = 0.49\linewidth]{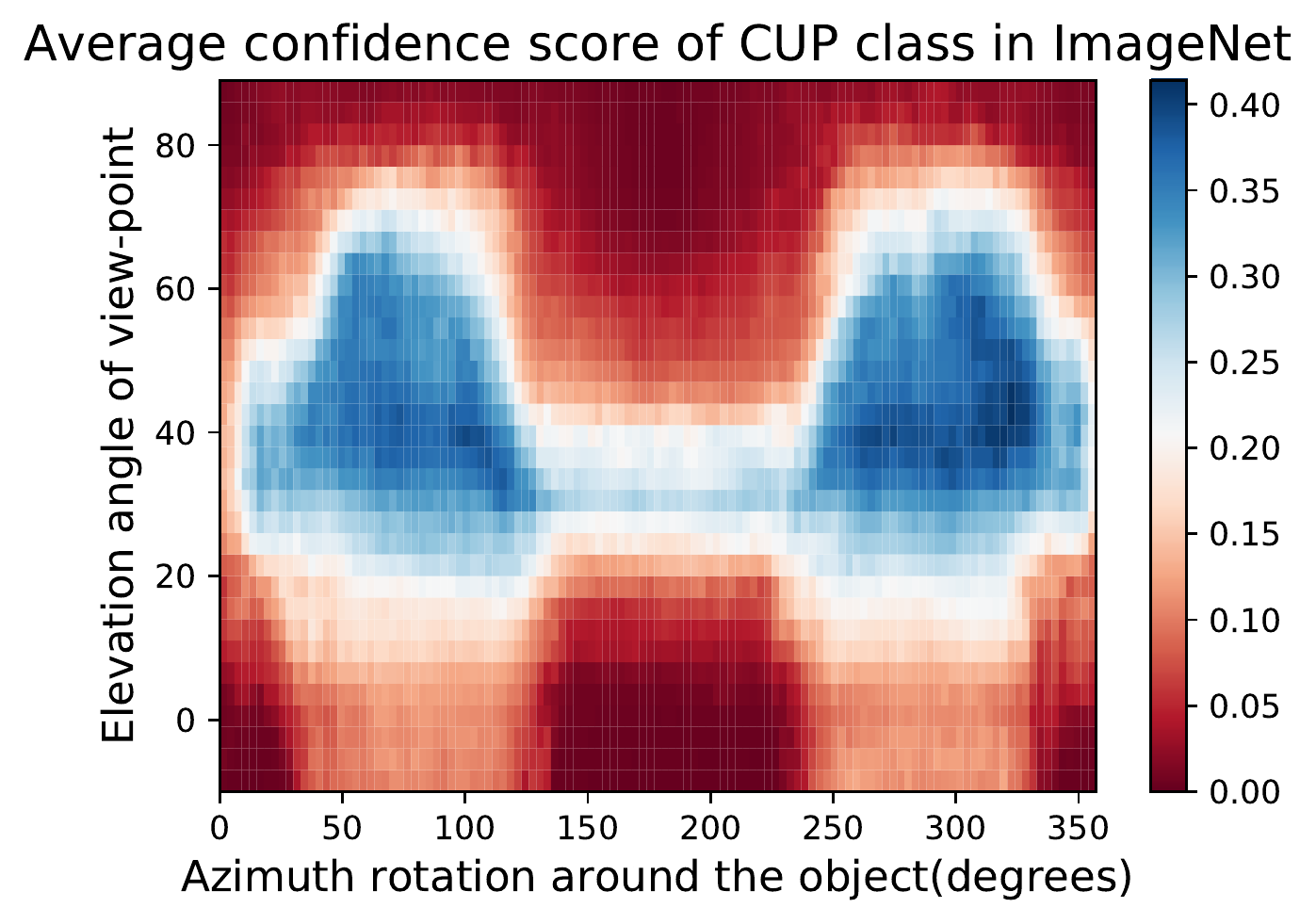}\\\hline
\includegraphics[width = 0.49\linewidth]{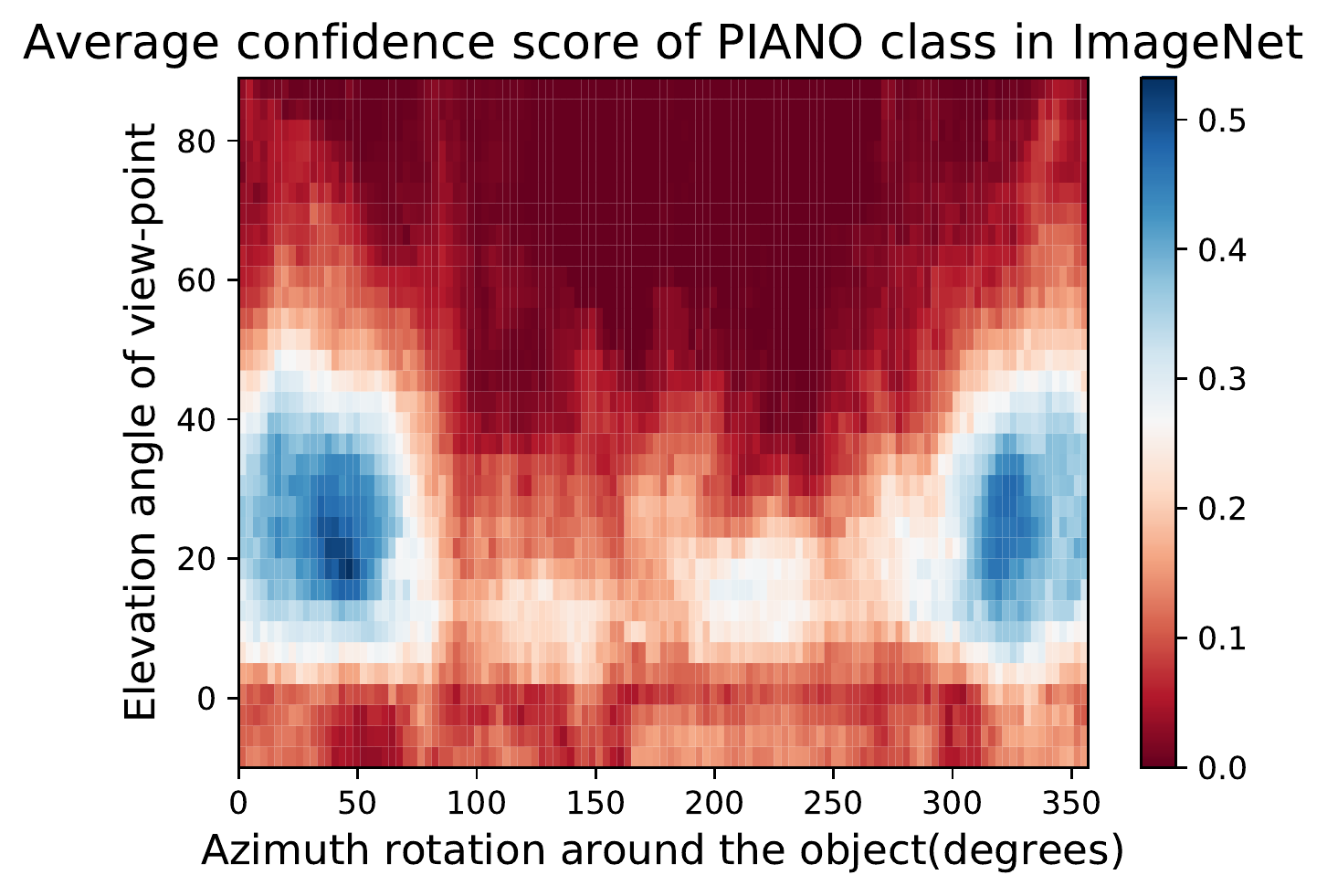}&
\includegraphics[width = 0.49\linewidth]{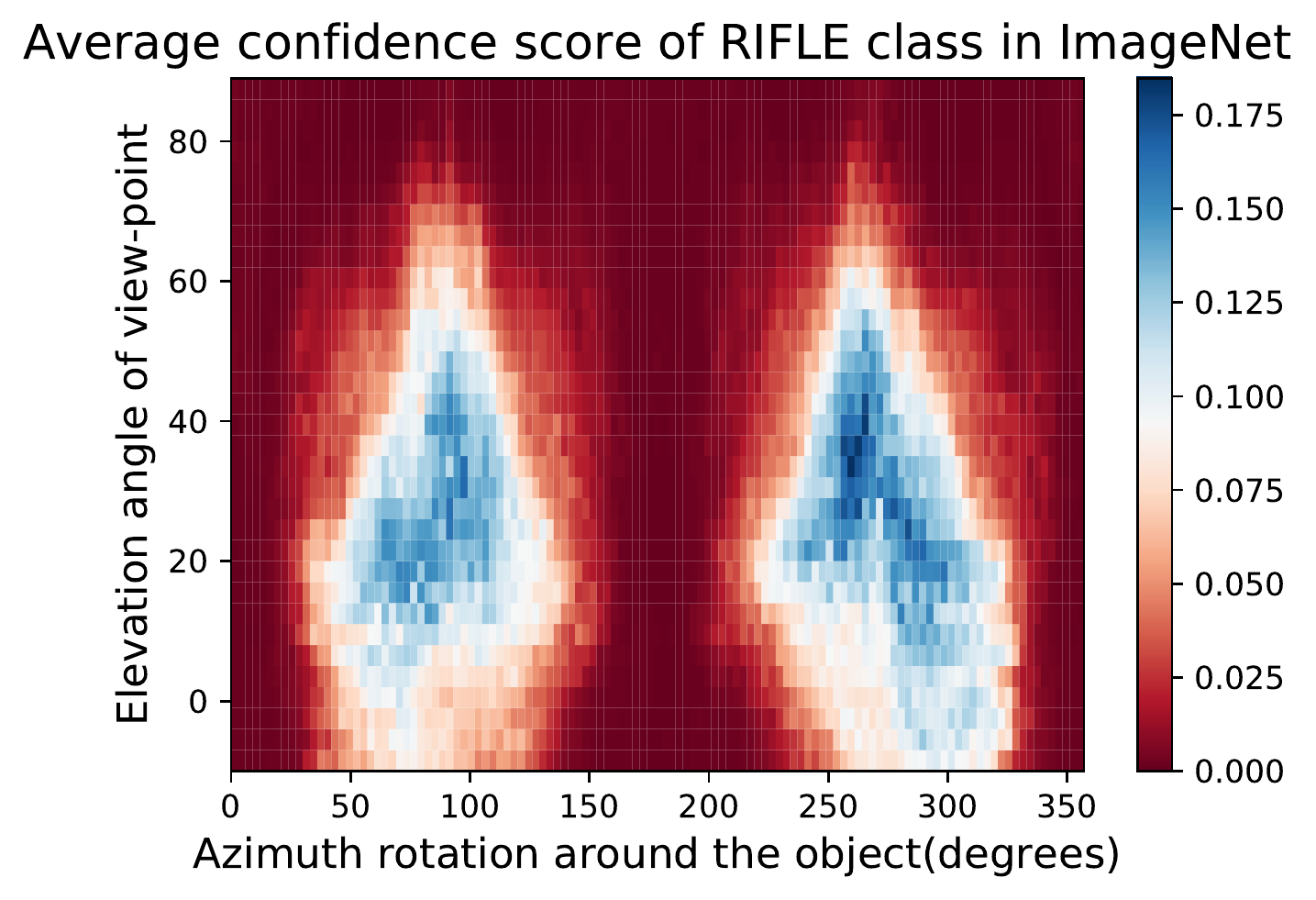}\\\hline
\includegraphics[width = 0.49\linewidth]{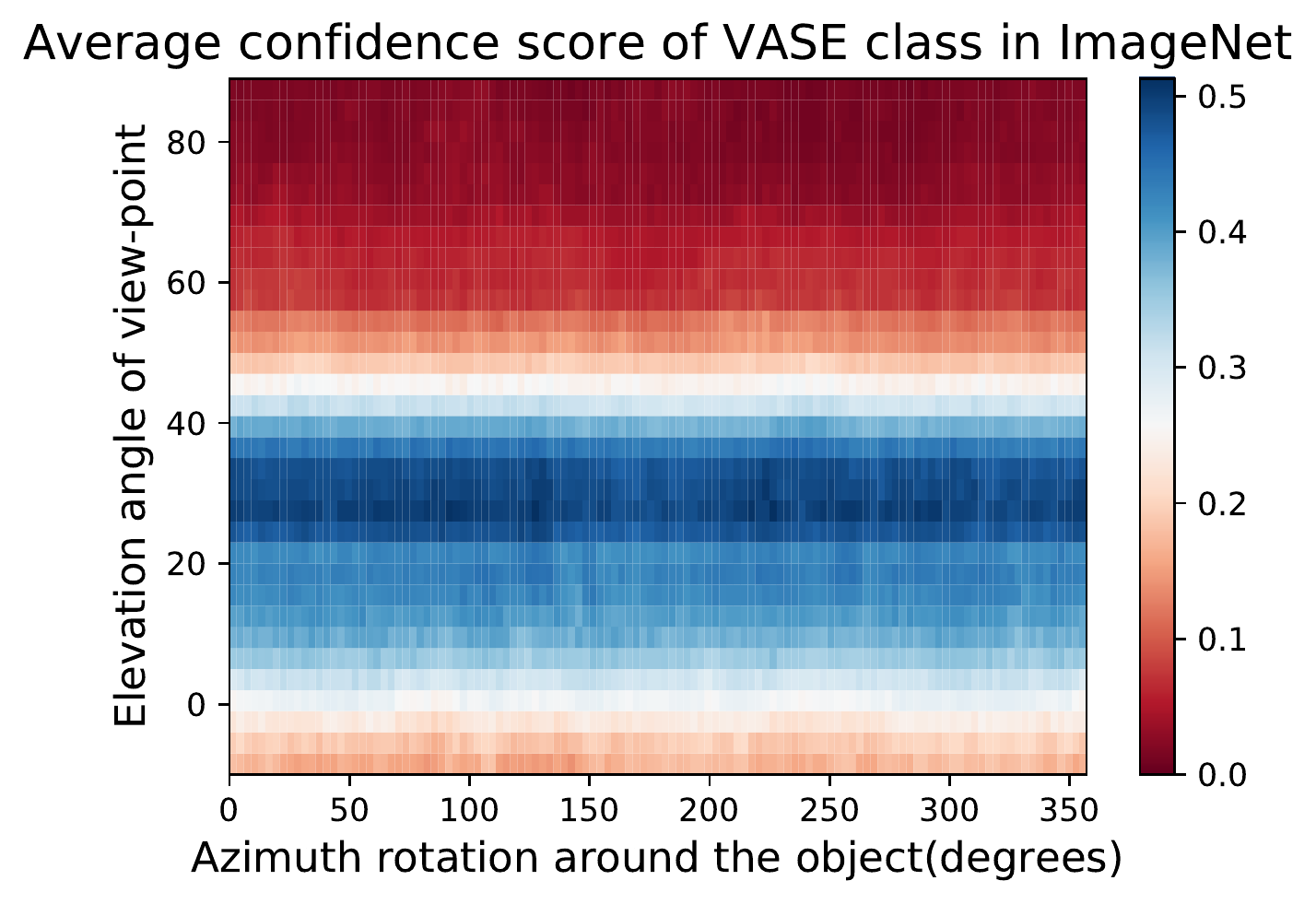}&
\includegraphics[width = 0.49\linewidth]{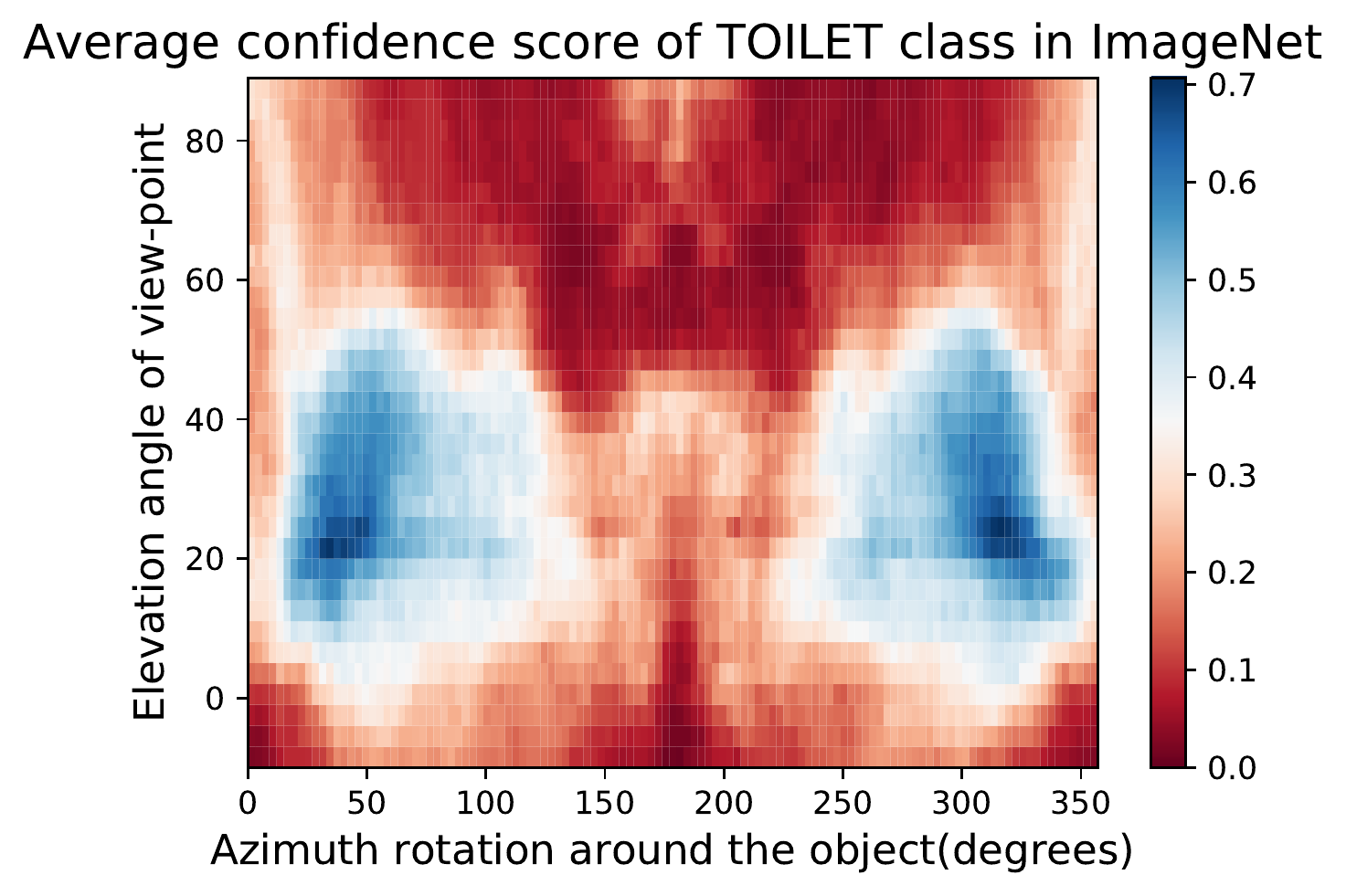}
\end{tabular}
   \caption{\small \textbf{Data Semantic Maps DSM-II}: visualizing Semantic Data Bias in the common training dataset (\ie ImageNet \cite{IMAGENET}) By averaging the Networks Semantic Maps (NSM) of different networks and on different shapes, Different classes have different semantic bias in ImageNet as clearly shown in the maps above. The symmetry in the maps are attributed to the 3D symmetry of the objects.}
   \vspace{-8pt}
   \label{fig:dsm2}
\end{figure*}

\clearpage
\clearpage
\section{Detailed Derivations of the Update Directions of the Bounds}
\subsection{Defining Robustness as an Operator}
In our case we consider a more general case where er are interested in the $\mathbf{u} \in \Omega \subset \mathbb{R}^{n}$ , a hidden latent parameter that generate the image and is passes to scene generator (\eg a renderer function $\mathbf{R}$) that takes the parameter $\mathbf{u}$ and a an object shape $\mathbf{S}$ of a class that is identified by classifier $\mathbf{C}$. $\Omega$ is the continuous semantic space for the parameters that we intend to study. The renderer creates the image $\mathbf{x} \in \mathbb{R}^{d}$, and then we study the behavior of a classifier $\mathbf{C}$ of that image across multiple shapes and multiple famous DNNs. Now, this function of interest is defined as follows. 
\begin{equation}
\begin{aligned} 
 f(\mathbf{u}) = \mathbf{C}_{z}(\mathbf{R}(\mathbf{S}_{z},\mathbf{u})) ~, ~~ 0\leq  f(\mathbf{u}) \leq 1
\label{eq:f-sup}
\end{aligned}
\end{equation}
Where $z$ is a class label of interest of study, and we observe the network score for that class by rendering a shape $\mathbf{S}_{z}$ of the same class. The shape and class labels are constants, and only the parameters vary for $f$. The robust-region-finding operator is then defined as follows 
\begin{equation}
\begin{aligned} 
& \mathbf{\Phi}_{\text{robust}}(f(\mathbf{u}),\mathbf{S}_{z},\mathbf{u}_{0}) = \mathbb{D} = \{\mathbf{u}: \mathbf{a} \leq \mathbf{u} \leq \mathbf{b}\} \\
  \text{s.t.}&~~ \mathbb{E}_{\mathbf{u}\sim \mathbb{D}} [f(\mathbf{u})] \ge 1-\epsilon_{m}~, ~~ \mathbf{u}_{0} \in \mathbb{D} ~, ~ \text{VAR}[f(\mathbf{u})] \le \epsilon_{v}
\label{eq:phi-rob-sup}
\end{aligned}
\end{equation}

where the left and right bounds of $\mathbb{D}$ are $\mathbf{a} = [a_{1},a_{2},...,a_{n}]$ and $\mathbf{b} = [b_{1},b_{2},...,b_{n}]$  respectively. The two samll thresholds $\epsilon_{m},\epsilon_{v}$ are to insure high performance and low variance of the DNN network in that robust region. We can define the opposite operator which is to find adversarial regions like follows :
\begin{equation}
\begin{aligned} 
& \mathbf{\Phi}_{\text{adv}}(f(\mathbf{u}),\mathbf{S}_{z},\mathbf{u}_{0}) = \mathbb{D} = \{\mathbf{u}: \mathbf{a} \leq \mathbf{u} \leq \mathbf{b}\} \\
 & \text{s.t.}~~ \mathbb{E}_{\mathbf{u}\sim \mathbb{D}} [f(\mathbf{u})] \leq \epsilon_{m}~, ~~ \mathbf{u}_{0} \in \mathbb{D} ~, ~ \text{VAR}[f(\mathbf{u})] \ge \epsilon_{v}
\label{eq:phi-adv-sup}
\end{aligned}
\end{equation}

We can show clearly that $\mathbf{\Phi}_{\text{adv}}$ and $\mathbf{\Phi}_{\text{robust}}$ are related as follows 
\begin{equation}
\begin{aligned} 
& \mathbf{\Phi}_{\text{adv}}(f(\mathbf{u}),\mathbf{S}_{z},\mathbf{u}_{0}) = \mathbf{\Phi}_{\text{robust}}(1-f(\mathbf{u}),\mathbf{S}_{z},\mathbf{u}_{0})
\label{eq:phi-adv-robust-sup}
\end{aligned}
\end{equation}
So we can just focus our attentions on $\mathbf{\Phi}_{\text{robust}}$ to find robust regions , and the adversarial regions follows directly from \eqLabel{\ref{eq:phi-adv-robust-sup}}.
\subsection{Divergence of the Bounds}
To develop an algorithm for $\mathbf{\Phi}$, we deploy the idea by \cite{ioc} which focus on maximizing the inner area of the function in the region and fitting the bounds to grow the region bounds. As we will show , maximizing the region by maximizing the integral can lead to divergence , as follows :
\begin{lemma} \label{thm:integral}
Let $f$ be a continuous scalar function $f: \mathbb{R}^{1} \rightarrow \mathbb{R}^{1} $, and let $L$ be the function defining the definite integral of $f$ in terms of the two integral bounds , \ie $L(a,b) = \int_{a}^{b}f(u)du$. Then, to maximize $L$, $-f(a)$ and $f(b)$ are valid ascent directions for the two bounds $a,b$ respectively. 
\end{lemma}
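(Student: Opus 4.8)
The plan is to read the two ascent directions straight off the gradient of $L$, using nothing more than the fundamental theorem of calculus (equivalently, the Leibniz rule specialized to a constant integrand parameter). Since $f$ is continuous, fix any reference point $u_0$ and set $F(u)=\int_{u_0}^{u}f(t)\,dt$; then $F$ is $C^1$ with $F'=f$, and $L(a,b)=F(b)-F(a)$ is continuously differentiable on $\mathbb{R}^2$, so the partial derivatives below are well defined.

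First I would differentiate in $b$ with $a$ held fixed, obtaining $\partial L/\partial b = F'(b) = f(b)$; then in $a$ with $b$ held fixed, obtaining $\partial L/\partial a = -F'(a) = -f(a)$. Hence
\begin{equation}
\nabla L(a,b) \;=\; \bigl(-f(a),\, f(b)\bigr).
\end{equation}
It then remains to argue that the claimed vectors are \emph{ascent} directions in the standard sense: a direction $\mathbf{v}$ is an ascent direction for $L$ at a point when the directional derivative $\langle \nabla L,\mathbf{v}\rangle \ge 0$, with strict inequality unless the point is stationary. Taking $\mathbf{v}=(-f(a),\,f(b))$ — that is, moving $a$ along $-f(a)$ and $b$ along $f(b)$ — gives $\langle \nabla L,\mathbf{v}\rangle = f(a)^2+f(b)^2 \ge 0$, with equality only when $f(a)=f(b)=0$, in which case the point is already a critical point of $L$. (Equivalently, one can argue coordinatewise: these are precisely the gradient-ascent updates $a \leftarrow a - \eta f(a)$, $b \leftarrow b + \eta f(b)$.) This establishes the lemma.

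The hard part is essentially bookkeeping rather than a genuine obstacle: the only points needing care are (i) justifying that one may differentiate through the integral, which is immediate from continuity of $f$ via the fundamental theorem of calculus, and (ii) stating precisely what happens in the degenerate stationary case $f(a)=f(b)=0$. I would also remark — since this motivates the regularized ``naive'' loss introduced earlier — that when $f>0$ everywhere (as for a softmax score $f\in(0,1)$), following these directions drives $a\to-\infty$ and $b\to+\infty$, i.e.\ the bounds diverge, which is exactly the behavior the penalty term $\tfrac{\lambda}{2}|b-a|_2^2$ is designed to prevent.
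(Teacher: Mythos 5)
Your proposal is correct and follows essentially the same route as the paper's own proof: compute $\nabla L = (-f(a), f(b))$ via the Leibniz rule / fundamental theorem of calculus, then verify that the candidate direction $\mathbf{p}=(-f(a),f(b))$ satisfies $\mathbf{p}^{T}\nabla L = f(a)^2 + f(b)^2 \ge 0$. Your added remarks on differentiability and the degenerate stationary case are slightly more careful than the paper's version but do not change the argument.
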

\begin{proof}
a direction $\mathbf{p}$ is an ascent direction of objective $L$ if it satisfies the inequality $\mathbf{p}^{T}\nabla L \geq 0$.\cite{Boyd}.\\ To find $\pd{l}{a} =  \pd{ }{a}\int_{a}^{b}f(u)du $, we use Leibniz rule from the fundamental theorem of calculus which states that $\frac { d } { d x } \int _ { a ( x ) } ^ { b ( x ) } f ( u ) d u = f ( b ( x ) ) \frac { d } { d x }b ( x ) - f ( a ( x ) )\frac { d } { d x } a ( x )$ \\
Therefore, $\pd{ }{a}\int_{a}^{b}f(u)du = f ( b ) \times 0 - f ( a ) \times 1 = - f ( a )$. Similarly, $\pd{ }{b}\int_{a}^{b}f(u)du = f ( b )$. By picking $\mathbf{p} = [-f(a), f(b)]^{T}$, then $\mathbf{p}^{T}\nabla L = f(a)^{2} + f(b)^{2} \geq 0$. This proves that $\mathbf{p}$ is a valid ascent direction for objective $L$. 
\end{proof}

\begin{theorem} \label{thm:unbounded}
Let $f$ be a positive continuous scalar function $f: \mathbb{R}^{1} \rightarrow (0,1) $, and let $L$ be the function defining the definite integral of $f$ in terms of the two integral bounds , \ie $L(a,b) = \int_{a}^{b}f(u)du$. Then, following the ascent direction in Lemma \ref{thm:integral} can diverge the bounds if followed in a gradient ascent technique with fixed learning rate  . 
\end{theorem}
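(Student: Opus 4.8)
The plan is to track the two bound sequences $\{a_t\}$ and $\{b_t\}$ produced by gradient ascent with a fixed step size $\eta>0$ on $L$, using the ascent direction $\mathbf{p}=[-f(a),f(b)]^{T}$ supplied by Lemma~\ref{thm:integral}. The updates are then $a_{t+1}=a_t-\eta f(a_t)$ and $b_{t+1}=b_t+\eta f(b_t)$. The first step is the easy observation that since $f$ takes values strictly in $(0,1)$, every increment $b_{t+1}-b_t=\eta f(b_t)$ is strictly positive, so $\{b_t\}$ is strictly increasing; symmetrically $\{a_t\}$ is strictly decreasing. Thus the only way the bounds could fail to diverge is if one of these monotone sequences is bounded.

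The second step rules this out by contradiction. Suppose $b_t\le B^{\ast}<\infty$ for all $t$. A bounded monotone sequence converges, so $b_t\to b_\infty$ for some finite $b_\infty$. By continuity of $f$, $f(b_t)\to f(b_\infty)=:c$, and by the strict positivity hypothesis $c>0$. Hence for all sufficiently large $t$ we have $b_{t+1}-b_t=\eta f(b_t)\ge \eta c/2>0$, so the partial sums $b_t=b_0+\sum_{s<t}\eta f(b_s)$ grow without bound, contradicting $b_t\le B^{\ast}$. Therefore $b_t\to+\infty$, and the identical argument with signs reversed gives $a_t\to-\infty$, so the bound vector $(a_t,b_t)$ diverges, which is the claim. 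As a closing remark one can note that $L(a_t,b_t)=\int_{a_t}^{b_t}f(u)\,du$ is non-decreasing and tends either to $\int_{\mathbb{R}}f$ or to $+\infty$; either way the iteration never stabilizes.

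The step requiring the most care is deducing divergence from the increments not vanishing, because $f$ is only assumed pointwise positive, not bounded below by a positive constant on all of $\mathbb{R}$; the monotone–convergence device is precisely what localizes the argument to a neighborhood of the hypothetical finite limit, where continuity together with pointwise positivity does yield a uniform positive lower bound on the steps. I would also stress in the write-up that the fixed learning rate hypothesis is essential: with a diminishing, summable schedule $\eta_t$ (so that $\sum_t \eta_t<\infty$) the same updates can converge, so the divergence is genuinely an artifact of the unregularized objective combined with a constant step size — which is exactly what motivates the regularized naive loss and the OIR formulations that follow.
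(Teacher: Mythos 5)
Your proof is correct, but it takes a genuinely different and strictly stronger route than the paper's. The paper treats the claim existentially: it writes $b_{k}= b_{0} + \eta \sum_{i=0}^{k}f(b_{i})$ and simply exhibits a single witness, the constant function $f \equiv c$ with $0<c<1$, for which the partial sums obviously diverge. You instead prove that divergence is unavoidable for \emph{every} continuous $f$ with values in $(0,1)$: the iterates are monotone, and if $\{b_t\}$ were bounded it would converge to some finite $b_\infty$, forcing the increments $\eta f(b_t)$ to vanish while continuity forces $f(b_t)\to f(b_\infty)>0$ --- a contradiction. The one step that needs care, and which you handle correctly, is that pointwise positivity alone gives no uniform lower bound on the step sizes over all of $\mathbb{R}$; it is the monotone-convergence device that localizes the argument near the hypothetical finite limit, where continuity does supply the uniform bound $\eta c/2$. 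What your argument buys is a sharper justification of the paper's narrative: it is not merely that unregularized ascent \emph{can} diverge for some unlucky $f$, but that it \emph{must} diverge for any function satisfying the hypotheses, which is precisely the motivation for the regularized naive loss and the OIR formulations that follow. Your closing remarks on the monotonicity of $L(a_t,b_t)$ and on the necessity of the fixed (non-summable) learning rate are correct asides and not needed for the stated claim.
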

\begin{proof}
If we follow the direction $= [-f(a), f(b)]^{T}$ , with a fixed learning rate $\eta$, then the update rules for $a,b$ will be as follws. $a_{k} = a_{k-1} - \eta f(a) , ~ b_{k} = b_{k-1} + \eta f(b) $. for initial points $a_{0},b_{0}$, then $a_{k}= a_{0} - \eta \sum_{i=0}^{k}f(\text{Area}_{\text{in}})$, and $b_{k}= b_{0} + \eta \sum_{i=0}^{k}f(b_{i})$ . We can see now if $f(u) = c, 0<c<1$, then as $k \rightarrow \infty$, the bounds $a_{k} \rightarrow - \infty, ~ b_{k} \rightarrow \infty$. This leads to the claimed divergence. 
\end{proof} \vspace{-8pt}
To solve the issue of bounds diverging we propose the following formulations for one dimensional bounds , and then we extend them to n- dimensions , which allows for finding the n-dimensional semantic robust/adversarial regions that are similar to figure 2 . some of the formulations are black-box in nature ( they dint need the gradient of the function $f $ in order to update the current estimates of the bound ) while others  . 

\subsection{Naive Approach} 
\begin{equation}
\begin{aligned} 
L = -\text{Area}_{\text{in}} + \frac{\lambda}{2} \left| b-a\right|_{2}^{2}
\label{eq:loss-naive-sup}
\end{aligned}
\end{equation}
using Libeniz rule as in Lemma \ref{thm:integral}, we get the following update steps for the objective L :
\begin{equation}
\begin{aligned} 
\pd{L}{a} =& f(a) - \lambda (b-a)\\
\pd{L}{b} =& - f(b) + \lambda (b-a)
\label{eq:update-naive-1-sup}
\end{aligned}
\end{equation}
where $\lambda$ is regularizing the boundaries not to extend too much in the case the function evaluation wa positive all the time.

\mysection{Extension to n-dimension:}
Lets start by $n=2$. Now, we have $f: \mathbb{R}^{2} \rightarrow (0,1)$, then we define the loss integral as a function of four bounds of a rectangular region as follows.
\begin{equation}
\begin{aligned} 
&L(a_{1},a_{2},b_{1},b_{2}) = \\ 
&- \int_{a_{2}}^{b_{2}}\int_{a_{1}}^{b_{1}}f(u,v)dvdu + \frac{\lambda}{2} \left| b_1 -a_1 \right|_{2}^{2} + \frac{\lambda}{2} \left| b_2 -a_2 \right|_{2}^{2} 
\label{eq:naive-integration2}
\end{aligned}
\end{equation}
We apply the trapezoidal approximation on the loss to obtain the following expression. 
\begin{equation}
\begin{aligned} 
&L(a_{1},a_{2},b_{1},b_{2})~ \approx ~\frac{\lambda}{2} \left| b_1 -a_1 \right|_{2}^{2} + \frac{\lambda}{2} \left| b_2 -a_2 \right|_{2}^{2} \\ 
& ~~- \frac{(b_{1}-a_{1})(b_{2}-a_{2})}{4}(~f(a_{1},a_{2})+f(b_{1},a_{2})+\\ &~~~~~~~~~~~~~~~~~~~~~~~~~~~~~~~~~~~~f(a_{1},b_{2})+f(b_{1},b_{2})~) 
\label{eq:naive-integration3}
\end{aligned}
\end{equation}
to find the update direction for the first bound $a_1$ by taking the partial derivative of the function in \eqLabel{\ref{eq:naive-integration2}} we get the following update direction for $a_1$ along with its trapezoidal approximation in order to able to compute it during the optimization:
\begin{equation}
\begin{aligned} 
&\pd{L}{a_{1}} =  \int_{a_{2}}^{b_{2}}f(a_1,v)dv ~~-~ \lambda (b_1-a_1)  \\  
&\approx \frac{(b_{2}-a_{2})}{2}\Big(~f(a_{1},a_{2}) + f(a_{1},b_{2}))~\Big) ~-~ \lambda (b_1-a_1)
\label{eq:update-naive-2-sup}
\end{aligned}
\end{equation}
Doing similar steps to the first bound for the other three bounds we obtain the full update directions for ($a_1 ,a_2 ,b_1 ,b_2$)
\begin{equation}
\begin{aligned} 
&\pd{L}{a_{1}} \approx \frac{(b_{2}-a_{2})}{2}\Big(~f(a_{1},a_{2}) + f(a_{1},b_{2}))~\Big) ~-~ \lambda (b_1-a_1) \\
&\pd{L}{b_{1}} \approx -\frac{(b_{2}-a_{2})}{2}\Big(~f(b_{1},a_{2}) + f(b_{1},b_{2}))~\Big) ~+~ \lambda (b_1-a_1)\\
&\pd{L}{a_{2}} \approx \frac{(b_{1}-a_{1})}{2}\Big(~f(a_{1},a_{2}) + f(b_{1},a_{2}))~\Big) ~-~ \lambda (b_2-a_2)\\
&\pd{L}{b_{2}} \approx -\frac{(b_{1}-a_{1})}{2}\Big(~f(a_{1},b_{2}) + f(b_{1},b_{2}))~\Big) ~+~ \lambda (b_2-a_2)
\label{eq:update-naive-3-sup}
\end{aligned}
\end{equation}
Now, for $f: \mathbb{R}^{n} \rightarrow (0,1)$, we define the inner region hyper-rectangle as before $\mathbb{D} = \{\mathbf{x}: \mathbf{a} \leq \mathbf{x} \leq \mathbf{b}\}$.Here , we assume the size of the region is positive at every dimension , \ie $\mathbf{r} =  \mathbf{b} -  \mathbf{a} > \mathbf{0} $. The volume of the region $\mathbb{D}$ normalized by exponent of dimension $n$ is expressed as follows
\begin{equation}
\begin{aligned} 
\text{volume}(\mathbb{D}) = \triangle = \frac{1}{2^{n}}\prod_{i=1}^{n}\mathbf{r}_{i} 
\label{eq:n-vol-sup}
\end{aligned}
\end{equation}
The region $\mathbb{D}$ can also be defined in terms of the matrix $\mathbf{D}$ of all the corner points $\{\mathbf{d}^{i}\}_{i=1}^{2^{n}}$ as follows.

\begin{equation}
\begin{aligned} 
\text{corners}&(\mathbb{D}) = \mathbf{D}_{n\times 2^{n}} = \left[\mathbf{d}^{1} | \mathbf{d}^{2} |.. | \mathbf{d}^{2^{n}}\right] \\
&\mathbf{D} = \mathbf{1}^{T}\mathbf{a}~ +~ \mathbf{M}^{T} \odot (\mathbf{1}^{T}\mathbf{r})
\label{eq:n-corners-sup}
\end{aligned}
\end{equation}

where $\mathbf{1}$ is the all-ones vector of size $2^{n}$, $\odot$ is the Hadamard product of matrices (element-wise) , and $\mathbf{M}$ is a constant  masking matrix defined as the matrix of binary numbers of n bits that range from 0 to $2{n} - 1 $ defined as follows.
\begin{equation}
\begin{aligned} 
\mathbf{M}_{n\times 2^{n}} = \left[\mathbf{m}^{0} | \mathbf{m}^{1} |.. | \mathbf{m}^{2^{n}-1}\right] ~, ~ \text{where}~~ \mathbf{m}^{i} = \text{binary}_{n}(i)
\label{eq:n-mask-sup}
\end{aligned}
\end{equation}
We define the function vector as the vector $\mathbf{f}_{\mathbb{D}}$ of all function evaluations at all corner points of $\mathbb{D}$
\begin{equation}
\begin{aligned} 
\mathbf{f}_{\mathbb{D}} &= \left[f(\mathbf{d}^{1}), f(\mathbf{d}^{2}),...,f(\mathbf{d}^{2^{n}}) \right]^{T} , ~ \mathbf{d^{i}} = \mathbf{D}_{:,i}
\label{eq:n-function-sup}
\end{aligned}
\end{equation}
We follow similar steps as in $n=2$ and obtain the following loss expressions and update directions :
\begin{equation}
\begin{aligned} 
L(\mathbf{a},\mathbf{b}) &= - \idotsint_\mathbb{D} f(u_1,\dots,u_n) \,du_1 \dots du_n  + \frac{\lambda}{2} \left| \mathbf{r}\right|^{2}\\ 
&\approx~ -\triangle\mathbf{1}^{T}\mathbf{f}_{\mathbb{D}} ~+~ \frac{\lambda}{2} \left| \mathbf{r}\right|^{2}\\ 
\nabla_{\mathbf{a}}L  &\approx~ 2\triangle\text{diag}^{-1}(\mathbf{r}) \overline{\mathbf{M}}\mathbf{f}_{\mathbb{D}} + \lambda \mathbf{r}\\
\nabla_{\mathbf{b}}L  &\approx~ -2\triangle\text{diag}^{-1}(\mathbf{r}) \mathbf{M}\mathbf{f}_{\mathbb{D}} - \lambda \mathbf{r}
\label{eq:n-loss-update-naive-sup}
\end{aligned}
\end{equation}

\subsection{Outer-Inner Ratio Loss (OIR)}
We introduce an outer region $A,B$ with bigger area that contains the small region $(a,b)$. We follow the following assumption to insure that outer area is always positive.
\begin{equation}
\begin{aligned} 
 A =  a - \alpha \frac{b-a}{2} ,  B =  b + \alpha \frac{b-a}{2}  
\label{eq:fixed-assumption}
\end{aligned}
\end{equation}
where $\alpha$ is the small boundary factor of the outer area to inner area. We formulate the problem as a ratio of outer over inner area and we try to make this ratio as close as possible to 0 . 
$L =  \frac{\text{Area}_{\text{out}}}{\text{Area}_{\text{in}}}  $
By using DencklBeck technique for solving non-linear fractional programming problems \cite{dinckl}. Using their formulation to transform $L$ as follows.
\begin{equation}
\begin{aligned} 
L &= \frac{\text{Area}_{\text{out}}}{\text{Area}_{\text{in}}} ~ =~ \text{Area}_{\text{out}} ~-~ \lambda ~ \text{Area}_{\text{in}} \\
&= \int_{A}^{B}f(a)du ~ -~ \int_{a}^{b}f(a)du ~ -~ \lambda ~\int_{a}^{b}f(a)du
\label{eq:loss-oir-sup}
\end{aligned}
\end{equation}
where $\lambda^{*} = \frac{\text{Area}_{\text{out}}^{*}}{\text{Area}_{\text{in}}^{*}}$ is the DencklBeck factor and it is equal to the small objective best achieved.

\mysection{Black-Box (OIR\_B)}

Here we set $\lambda = 1$ to simplify the problem. This yields the following expression of the loss 
\begin{equation}
\begin{aligned} 
L &=  \text{Area}_{\text{out}} - \text{Area}_{\text{in}} \\
&= \int_{A}^{a}f(u)du + \int_{b}^{B}f(u)du - \int_{a}^{b}f(u)du  \\
 &= \int_{A}^{B}f(u)du  - 2\int_{a}^{b}f(u)du \\
 &= \int_{a - \alpha \frac{b-a}{2}}^{b + \alpha \frac{b-a}{2} }f(u)du  - 2\int_{a}^{b}f(u)du\label{eq:oir-b-loss}
\end{aligned}
\end{equation}
using Libeniz ruloe as in Lemma \ref{thm:integral}, we get the following update steps for the objective L :
\begin{equation}
\begin{aligned} 
\pd{L}{a} =& -(1+ \frac{\alpha}{2})f(A) - \frac{\alpha}{2}f(B) + 2f(a) \\
\pd{L}{b} =& (1+ \frac{\alpha}{2})f(B) + \frac{\alpha}{2}f(A) - 2f(b)
\label{eq:update-ioc-1}
\end{aligned}
\end{equation}

\mysection{Extension to n-dimension:}
Lets start by $n=2$. Now, we have $f: \mathbb{R}^{2} \rightarrow (0,1)$, and with the following constrains on the outer region.
\begin{equation}
\begin{aligned} 
 A_1 =  a_{1}-\frac{b_{1}-a_{1}}{2} ,~~  B_1 =  b_{1}+\frac{b_{1}-a_{1}}{2} \\
  A_2 =  a_{2}-\frac{b_{2}-a_{2}}{2} ,~~  B_1 =  b_{2}+\frac{b_{2}-a_{2}}{2}
\label{eq:fixed-assumption-2}
\end{aligned}
\end{equation}
we define the loss integral as a function of four bounds of a rectangular region as follows.
\begin{equation}
\begin{aligned} 
&L(a_{1},a_{2},b_{1},b_{2}) =\\ &  \int_{A_2}^{B_2}\int_{A_1}^{B_1}f(u,v)dvdu ~- ~2  \int_{a_{2}}^{b_{2}}\int_{a_{1}}^{b_{1}}f(u,v)dvdu 
\label{eq:outer-integration2}
\end{aligned}
\end{equation}
We apply the trapezoidal approximation on the loss to obtain the following expression. 
\begin{equation}
\begin{aligned} 
&L(a_{1},a_{2},b_{1},b_{2}) \approx \\ 
& \frac{(B_{1}-A_{1})(B_{2}-A_{2})}{4}(~f(A_{1},A_{2})+f(B_{1},A_{2})+ \\ &~~~~~~~~~~~~~~~~~~~~~~~~~~~~~~~~~~~~f(A_{1},B_{2})+f(B_{1},B_{2})~)\\
&~~- \frac{(b_{1}-a_{1})(b_{2}-a_{2})}{2}(~f(a_{1},a_{2})+f(b_{1},a_{2})+\\ &~~~~~~~~~~~~~~~~~~~~~~~~~~~~~~~~~~~~f(a_{1},b_{2})+f(b_{1},b_{2})~) \\
&= \frac{(b_{1}-a_{1})(b_{2}-a_{2})}{4}( \\
&~~~~(1+ \alpha )^2 (~f(A_{1},A_{2})+f(B_{1},A_{2})+ \\ &~~~~~~~~~~~~~~~~~~~~~~~~~~~~~~~~~~~~f(A_{1},B_{2})+f(B_{1},B_{2})~)\\
&~~~~- 2(~f(a_{1},a_{2})+f(b_{1},a_{2})+f(a_{1},b_{2})+f(b_{1},b_{2})~)~~)
\label{eq:outer-integration3}
\end{aligned}
\end{equation}
to find the update direction for the first bound $a_1$ by taking the partial derivative of the function in \eqLabel{\ref{eq:outer-integration2}} we get the following update direction for $a_1$ along with its trapezoidal approximation in order to able to compute it during the optimization:
\begin{equation}
\begin{aligned} 
&\pd{L}{a_{1}} = -(1+\frac \alpha2)\int_{A_2}^{B_2}\Big(f(A_1,v) + \frac \alpha2 f(B_1,v) \Big)dv  \\ & +~2  \int_{a_{2}}^{b_{2}}f(a_1,v)dv  \\  
&\approx \frac{(b_{2}-a_{2})}{2}( \\
&~~~~-(1+ \alpha )(~(1+ \frac \alpha2 )(f(A_{1},A_{2}) + f(A_{1},B_{2}))\\ &~~~~~~~~~~~~~~~~~~~~~~~~~ +\frac \alpha2 ~~(f(B_{1},A_{2}) + f(B_{1},B_{2}))~)\\
&~~~~+ 2\Big(~f(a_{1},a_{2}) + f(a_{1},b_{2}))~\Big)
\label{eq:update-outer-2}
\end{aligned}
\end{equation}
Doing similar steps to the first bound for the other three bounds we obtain the full update directions for ($a_1 ,a_2 ,b_1 ,b_2$)
\begin{equation}
\begin{aligned} 
&\pd{L}{a_{1}} \approx \frac{(b_{2}-a_{2})}{2}( \\
&~~~~-(1+ \alpha )(~(1+ \frac \alpha2 )(f(A_{1},A_{2}) + f(A_{1},B_{2}))\\ &~~~~~~~~~~~~~~~~~~~~~~~~~ +\frac \alpha2 ~~(f(B_{1},A_{2}) + f(B_{1},B_{2}))~)\\
&~~~~+ 2\Big(~f(a_{1},a_{2}) + f(a_{1},b_{2}))~\Big)
\label{eq:update-outer-2-1}
\end{aligned}
\end{equation}
\begin{equation}
\begin{aligned} 
&\pd{L}{b_{1}} \approx \frac{(b_{2}-a_{2})}{2}( \\
&~~~~(1+ \alpha )(~(1+ \frac \alpha2 )(f(B_{1},A_{2}) + f(B_{1},B_{2}))~)\\ &~~~~~~~~~~~~~~~~~~~~~~~~~ +\frac \alpha2 ~~(f(A_{1},A_{2}) + f(A_{1},B_{2}))\\
&~~~~- 2\Big(~f(b_{1},a_{2}) + f(b_{1},b_{2}))~\Big)
\label{eq:update-outer-2-2}
\end{aligned}
\end{equation}
\begin{equation}
\begin{aligned} 
&\pd{L}{a_{2}} \approx \frac{(b_{1}-a_{1})}{2}( \\
&~~~~-(1+ \alpha )(~(1+ \frac \alpha2 )(f(A_{1},A_{2}) + f(B_{1},A_{2}))\\ &~~~~~~~~~~~~~~~~~~~~~~~~~ +\frac \alpha2 ~~(f(A_{1},B_{2}) + f(B_{1},B_{2}))~)\\
&~~~~+ 2\Big(~f(a_{1},a_{2}) + f(b_{1},a_{2}))~\Big)
\label{eq:update-outer-2-3}
\end{aligned}
\end{equation}
\begin{equation}
\begin{aligned} 
&\pd{L}{b_{2}} \approx \frac{(b_{1}-a_{1})}{2}( \\
&~~~~(1+ \alpha )(~(1+ \frac \alpha2 )(f(A_{1},B_{2}) + f(B_{1},B_{2}))\\ &~~~~~~~~~~~~~~~~~~~~~~~~~ +\frac \alpha2 ~~(f(A_{1},A_{2}) + f(B_{1},A_{2}))~)\\
&~~~~- 2\Big(~f(a_{1},b_{2}) + f(b_{1},b_{2}))~\Big)
\label{eq:update-outer-2-4}
\end{aligned}
\end{equation}
Now, for $f: \mathbb{R}^{n} \rightarrow (0,1)$, we define the inner region hyper-rectangle as before $\mathbb{D} = \{\mathbf{x}: \mathbf{a} \leq \mathbf{x} \leq \mathbf{b}\}$, but now define an outer bigger region $\mathbb{Q}$ that include the smaller region $\mathbb{D} $ and defined as follows : $\mathbb{Q} = \{\mathbf{x}: \mathbf{a} - \frac{\alpha}{2}\mathbf{r} \leq \mathbf{x} \leq \mathbf{b} + \frac{\alpha}{2}\mathbf{r} \}$ , where $\mathbf{a},\mathbf{b},\mathbf{r}$ are defined as before, while $\alpha$ is defined as the boundary factor of the outer region in all the dimensions equivilantly. The inner and outer regions can also be defined in terms of the corner points as follows.
\begin{equation}
\begin{aligned} 
\text{corners}(\mathbb{D}) &= \mathbf{D}_{n\times 2^{n}} = \left[\mathbf{d}^{1} | \mathbf{d}^{2} |.. | \mathbf{d}^{2^{n}}\right] \\
\mathbf{D} &= \mathbf{1}^{T}\mathbf{a}~ +~ \mathbf{M}^{T} \odot (\mathbf{1}^{T}\mathbf{r}) \\
\text{corners}(\mathbb{Q}) &= \mathbf{Q}_{n\times 2^{n}} = \left[\mathbf{q}^{1} | \mathbf{q}^{2} |.. | \mathbf{q}^{2^{n}}\right] \\
\mathbf{Q} &= \mathbf{1}^{T}(\mathbf{a} - \frac{\alpha}{2}\mathbf{r})~ +~ (1+\alpha)\mathbf{M}^{T} \odot (\mathbf{1}^{T}\mathbf{r}) 
\label{eq:n-corners2-sup}
\end{aligned}
\end{equation}
where $\mathbf{1}$ is the all-ones vector of size $2^{n}$, $\odot$ is the Hadamard product of matrices (elemnt-wise) , and $\mathbf{M}$ is a constant  masking matrix defined in \eqLabel{\ref{eq:n-mask-sup}}. Now we define two function vectors evaluated at all possible inner and outer corner points respectively.
\begin{equation}
\begin{aligned} 
\mathbf{f}_{\mathbb{D}} &= \left[f(\mathbf{d}^{1}), f(\mathbf{d}^{2}),...,f(\mathbf{d}^{2^{n}}) \right]^{T} , ~ \mathbf{d^{i}} = \mathbf{D}_{:,i}\\
\mathbf{f}_{\mathbb{Q}} &= \left[f(\mathbf{q}^{1}), f(\mathbf{q}^{2}),...,f(\mathbf{q}^{2^{n}}) \right]^{T} , ~ \mathbf{c^{i}} = \mathbf{Q}_{:,i}
\label{eq:n-function-outer-sup}
\end{aligned}
\end{equation}
Now the loss and update directions for the n-dimensional case becomes as follows .
\begin{equation}
\begin{aligned} 
L(\mathbf{a},\mathbf{b}) &= \idotsint_\mathbb{Q} f(u_1,\dots,u_n) \,du_1 \dots du_n \\ 
&- 2 \idotsint_\mathbb{D} f(u_1,\dots,u_n) \,du_1 \dots du_n \\ 
&\approx \triangle\Big((1+\alpha)^{n} \mathbf{1}^{T}\mathbf{f}_{\mathbb{Q}} ~-~ 2 ~ \mathbf{1}^{T}\mathbf{f}_{\mathbb{D}}\Big)\\ 
\nabla_{\mathbf{a}}L  \approx &2\triangle\text{diag}^{-1}(\mathbf{r}) \Big(2\overline{\mathbf{M}}\mathbf{f}_{\mathbb{D}} ~-~ \overline{\mathbf{M}}_{\mathbb{Q}}\mathbf{f}_{\mathbb{Q}}  \Big) \\
\nabla_{\mathbf{b}}L  \approx &2\triangle\text{diag}^{-1}(\mathbf{r}) \Big(-2\mathbf{M}\mathbf{f}_{\mathbb{D}} ~+~ \mathbf{M}_{\mathbb{Q}}\mathbf{f}_{\mathbb{Q}}  \Big)
\label{eq:n-loss-update-outer-sup}
\end{aligned}
\end{equation}
wheere  $\overline{\mathbf{M}}_{\mathbb{Q}}$ is the outer region mask defined as follows. 
\begin{equation}
\begin{aligned} 
\overline{\mathbf{M}}_{\mathbb{Q}} = &~  (1+\alpha)^{n-1}\Big((1+\frac{\alpha}{2})\overline{\mathbf{M}}+\frac{\alpha}{2}\mathbf{M}\Big)\\ 
\mathbf{M}_{\mathbb{Q}} = &~  (1+\alpha)^{n-1}\Big((1+\frac{\alpha}{2})\mathbf{M}+\frac{\alpha}{2}\overline{\mathbf{M}}\Big)
\label{eq:n-mask-outer-sup}
\end{aligned}
\end{equation}

\mysection{White-Box OIR (OIR\_W})
 The following formulation is white-box in nature ( it need the gradient of the function $f $ in order to update the current estimates of the bound ) this is useful when the function in hand is differntiable ( \eg DNN ) , to obtain more intelligent regions, rather then the regions surrounded by near 0 values of the function $f$. We set $\lambda = \frac{\alpha}{\beta}$ in \eqLabel{\ref{eq:loss-oir-sup}}, where $\alpha$ is the small boundary factor of the outer area, $\beta$ is the emphasis factor (we will show later how it determines the emphasis on the function vs the gradient). Hence, the objective in \eqLabel{\ref{eq:loss-oir-sup}} becomes :
\begin{equation}
\begin{aligned} 
&\argmin_{a,b} L = \argmin_{a,b}~ \text{Area}_{\text{out}} - \lambda ~ \text{Area}_{\text{in}} \\
=& \argmin_{a,b}~ \int_{A}^{a}f(u)du + \int_{b}^{B}f(u)du - \frac{\alpha}{\beta} \int_{a}^{b}f(u)du \\
=& \argmin_{a,b}~ \frac{\beta}{\alpha} \int_{a - \alpha \frac{b-a}{2}}^{b + \alpha \frac{b-a}{2} }f(u)du  - (1+\frac{\beta}{\alpha})\int_{a}^{b}f(u)du \\ 
\label{eq:loss-oir2-sup}
\end{aligned}
\end{equation}
using Libeniz ruloe as in Lemma \ref{thm:integral}, we get the following derivatives of the bound $a$ :
\begin{equation}
\begin{aligned} 
\pd{L}{a} &= \frac{\beta}{\alpha}\Big(f(a) - f\Big(a - \alpha \frac{b-a}{2}\Big) \Big) \\&~- \frac{\beta}{2}f\Big(b + \alpha \frac{b-a}{2}\Big) ~ - \frac{\beta}{2}f\Big(a - \alpha \frac{b-a}{2}\Big) + f(a) 
\label{eq:update-oir-1}
\end{aligned}
\end{equation}
now since $\lambda^{*}$ should be small for the optimal objective , then as $\lambda \rightarrow 0 ,~ \alpha \rightarrow 0$ and hence the derivative in \eqLabel{\ref{eq:update-oir-1}} becomes the following.  
\begin{equation}
\begin{aligned} 
\lim_{\alpha \to 0}\pd{L}{a} &= \lim_{\alpha \to 0} \beta\frac{\Big(f(a) - f\Big(a - \alpha \frac{b-a}{2}\Big) \Big)}{\alpha} \\&~- \frac{\beta}{2}f(b) ~- \frac{\beta}{2}f(a) + f(a) 
\label{eq:update-oir-2}
\end{aligned}
\end{equation}
We can see that the first term is proportional to the derivative of $f$ at $a$, and hence the expression becomes :
\begin{equation}
\begin{aligned} 
\lim_{\alpha \to 0}\pd{L}{a} &= \frac{\beta}{2}\Big((b-a)f^{\prime}(a) + f(b)\Big) ~+~ (1-\frac{\beta}{2})f(a) 
\label{eq:update-oir-3-sup}
\end{aligned}
\end{equation}
we can see that the update rule depends on the function value and the derivative of $f$ at the boundary $a$ with $\beta$ controlling the dependence. Similarly for the boundary $b$ we can see the following direction
\begin{equation}
\begin{aligned} 
\lim_{\alpha \to 0}\pd{L}{b} &= \frac{\beta}{2}\Big((b-a)f^{\prime}(b) + f(a)\Big) ~-~ (1-\frac{\beta}{2})f(b) 
\label{eq:update-oir-4}
\end{aligned}
\end{equation}
If $\beta \rightarrow 0 $, the update directions in \eqLabel{\ref{eq:update-oir-3-sup},\ref{eq:update-oir-4}} collapse to the unregularized naive update in \eqLabel{\ref{eq:update-naive-1-sup}} .\\

\mysection{Extension to n-dimension}.
Lets start by $n=2$. Now, we have $f: \mathbb{R}^{2} \rightarrow (0,1)$, and with the following constrains on the outer region.
\begin{equation}
\begin{aligned} 
 A_1 =  a_{1}-\frac{b_{1}-a_{1}}{2} ,~~  B_1 =  b_{1}+\frac{b_{1}-a_{1}}{2} \\
  A_2 =  a_{2}-\frac{b_{2}-a_{2}}{2} ,~~  B_1 =  b_{2}+\frac{b_{2}-a_{2}}{2}
\label{eq:fixed-assumption-2-w}
\end{aligned}
\end{equation}
we follow similar approach as in \eqLabel{\ref{eq:loss-oir2-sup}} to obtain the following expression.
\begin{equation}
\begin{aligned} 
&L(a_{1},a_{2},b_{1},b_{2}) =\\ &  \frac{\beta}{\alpha} \int_{A_2}^{B_2}\int_{A_1}^{B_1}f(u,v)dvdu ~- ~(1+\frac{\beta}{\alpha}) \int_{a_{2}}^{b_{2}}\int_{a_{1}}^{b_{1}}f(u,v)dvdu 
\label{eq:outer-integration2-w}
\end{aligned}
\end{equation}
We apply the trapezoidal approximation on the loss to obtain the following approximation. 
\begin{equation}
\begin{aligned} 
&L(a_{1},a_{2},b_{1},b_{2}) \approx -1 + (1+ \alpha )^2  \\ 
& \frac{(~f(A_{1},A_{2})+f(B_{1},A_{2})+f(A_{1},B_{2})+f(B_{1},B_{2})~)}{(~f(a_{1},a_{2})+f(b_{1},a_{2})+f(a_{1},b_{2})+f(b_{1},b_{2})~)}
\label{eq:outer-integration3-w}
\end{aligned}
\end{equation}
to find the update direction for the first bound $a_1$ by taking the partial derivative of the function in \eqLabel{\ref{eq:outer-integration2}} we get the fol owing update direction for $a_1$ along with its trapezoidal approximation in order to able to compute it during the optimization:
\begin{equation}
\begin{aligned} 
&\pd{L}{a_{1}} = -\frac{\beta}{\alpha}(1+\frac \alpha2)\int_{A_2}^{B_2}\Big(f(A_1,v) + \frac \alpha2 f(B_1,v) \Big)dv  \\ & ~~~~~~~~~~~~+~(1+\frac{\beta}{\alpha})  \int_{a_{2}}^{b_{2}}f(a_1,v)dv  \\  
&\approx \frac{(b_{2}-a_{2})}{2}( \\
&~~~~-(1+ \alpha )(~(\frac{\beta}{\alpha}+ \frac \beta2 )(f(A_{1},A_{2}) + f(A_{1},B_{2}))\\ &~~~~~~~~~~~~~~~~~~~~~~~~~ +\frac \beta2 ~~(f(B_{1},A_{2}) + f(B_{1},B_{2}))~)\\
&~~~~+ (1+\frac{\beta}{\alpha})\Big(~f(a_{1},a_{2}) + f(a_{1},b_{2}))~\Big)
\label{eq:update-outer-2-w}
\end{aligned}
\end{equation}
grouping the terms which are divided by $\alpha$ together and then taking the limit of $\alpha \rightarrow \infty $ (as explained in the 1-d case ), we get the following expressions.
\begin{equation}
\begin{aligned} 
& \lim_{\alpha \to 0} \pd{L}{a_{1}} \approx \frac{(b_{2}-a_{2})}{2}( \\
&\lim_{\alpha \to 0} \frac{\beta}{\alpha}(~(~f(a_{1},a_{2}) + f(a_{1},b_{2})) - (f(A_{1},A_{2}) + f(A_{1},B_{2})) ~)\\ &~~~~-\lim_{\alpha \to 0}\frac{3\beta}{2} ~~(f(A_{1},A_{2}) + f(A_{1},B_{2}) ~) \\ &~~~~-\lim_{\alpha \to 0} \frac \beta2 (~f(B_{1},A_{2}) + f(B_{1},B_{2})~)\\
&~~~~+ \Big(~f(a_{1},a_{2}) + f(a_{1},b_{2})~)~\Big)
\label{eq:update-outer-2-w-2}
\end{aligned}
\end{equation}
Noting that the first term is related to the directional derivatives of $f$, we get the following limit expression
\begin{equation}
\begin{aligned} 
& \lim_{\alpha \to 0} \pd{L}{a_{1}} \approx \frac{(b_{2}-a_{2})}{2}( \\
&\beta(~~\nabla f(a_1,a_2)\cdot \bigl( \begin{smallmatrix} \frac{b_1 -a_1 }{2}\\ \frac{b_2 -a_2}{2} \end{smallmatrix} \bigr) + \nabla f(a_1,b_2)\cdot \bigl( \begin{smallmatrix} \frac{b_1 -a_1 }{2}\\ -\frac{b_2 -a_2}{2} \end{smallmatrix} \bigr)~~)\\ &~~~~+\Big(1-\frac{3\beta}{2}\Big)               \Big(~f(a_{1},a_{2}) + f(a_{1},b_{2})~)~\Big) \\ &~~~~- \frac \beta2 (~f(b_{1},a_{2}) + f(b_{1},b_{2})~)
\label{eq:update-outer-2-w-3}
\end{aligned}
\end{equation}
Doing similar steps to the first bound for the other three bounds we obtain the full update directions for ($a_1 ,a_2 ,b_1 ,b_2$)
\begin{equation}
\begin{aligned} 
& \lim_{\alpha \to 0} \pd{L}{a_{1}} \approx \frac{(b_{2}-a_{2})}{2}( \\
&\beta(~~\nabla f(a_1,a_2)\cdot \bigl( \begin{smallmatrix} (b_1 -a_1)\\ (b_2 -a_2) \end{smallmatrix} \bigr) + \nabla f(a_1,b_2)\cdot \bigl( \begin{smallmatrix} (b_1 -a_1)\\ -(b_2 -a_2) \end{smallmatrix} \bigr)~~)\\ &~~~~+\Big(1-\frac{3\beta}{2}\Big)               \Big(~f(a_{1},a_{2}) + f(a_{1},b_{2})~)~\Big) \\ &~~~~- \frac \beta2 (~f(b_{1},a_{2}) + f(b_{1},b_{2})~)
\label{eq:update-outer-2-1-w}
\end{aligned}
\end{equation}
\begin{equation}
\begin{aligned} 
& \lim_{\alpha \to 0} \pd{L}{b_{1}} \approx \frac{(b_{2}-a_{2})}{2}( \\
&\beta(~~\nabla f(b_1,a_2)\cdot \bigl( \begin{smallmatrix} (b_1 -a_1)\\ -(b_2 -a_2) \end{smallmatrix} \bigr) + \nabla f(b_1,b_2)\cdot \bigl( \begin{smallmatrix} (b_1 -a_1)\\ (b_2 -a_2) \end{smallmatrix} \bigr)~~)\\ &~~~~-\Big(1-\frac{3\beta}{2}\Big) \Big(~f(b_{1},a_{2}) + f(b_{1},b_{2})~)~\Big) \\ &~~~~+ \frac \beta2 (~f(a_{1},a_{2}) + f(a_{1},b_{2})~)
\label{eq:update-outer-2-2-w}
\end{aligned}
\end{equation}
\begin{equation}
\begin{aligned} 
& \lim_{\alpha \to 0} \pd{L}{a_{2}} \approx \frac{(b_{2}-a_{2})}{2}( \\
&\beta(~~\nabla f(a_1,a_2)\cdot \bigl( \begin{smallmatrix} (b_1 -a_1)\\ (b_2 -a_2) \end{smallmatrix} \bigr) + \nabla f(b_1,a_2)\cdot \bigl( \begin{smallmatrix} -(b_1 -a_1)\\ (b_2 -a_2) \end{smallmatrix} \bigr)~~)\\ &~~~~+\Big(1-\frac{3\beta}{2}\Big)               \Big(~f(a_{1},a_{2}) + f(b_{1},a_{2})~)~\Big) \\ &~~~~- \frac \beta2 (~f(a_{1},b_{2}) + f(b_{1},b_{2})~)
\label{eq:update-outer-2-3-w}
\end{aligned}
\end{equation}
\begin{equation}
\begin{aligned} 
& \lim_{\alpha \to 0} \pd{L}{b_{2}} \approx \frac{(b_{2}-a_{2})}{2}( \\
&\beta(~~\nabla f(a_1,b_2)\cdot \bigl( \begin{smallmatrix} -(b_1 -a_1)\\ (b_2 -a_2) \end{smallmatrix} \bigr) + \nabla f(a_2,b_2)\cdot \bigl( \begin{smallmatrix} (b_1 -a_1)\\ (b_2 -a_2) \end{smallmatrix} \bigr)~~)\\ &~~~~-\Big(1-\frac{3\beta}{2}\Big)               \Big(~f(a_{1},b_{2}) + f(b_{1},b_{2})~)~\Big) \\ &~~~~+ \frac \beta2 (~f(a_{1},a_{2}) + f(b_{1},a_{2})~)
\label{eq:update-outer-2-4-w}
\end{aligned}
\end{equation}
Now, for $f: \mathbb{R}^{n} \rightarrow (0,1)$Following previous notations we have the following expressions for the loss and update directions for the bound 
\begin{equation}
\begin{aligned} 
L(\mathbf{a},\mathbf{b})  \approx&~~ \frac{(1+\alpha)^{n} \mathbf{1}^{T}\mathbf{f}_{\mathbb{Q}}}{\mathbf{1}^{T}\mathbf{f}_{\mathbb{D}}} ~-~ 1\\ 
\nabla_{\mathbf{a}}L  \approx & ~\triangle  \Big(\text{diag}^{-1}(\mathbf{r})\overline{\mathbf{M}}_{\mathbb{D}}\mathbf{f}_{\mathbb{D}} ~+~ \beta\text{diag}(\overline{\mathbf{M}}\mathbf{G}_{\mathbb{D}})~+ \beta \overline{\mathbf{s}}  \Big)  \\
\nabla_{\mathbf{b}}L  \approx & ~\triangle  \Big(- \text{diag}^{-1}(\mathbf{r})\mathbf{M}_{\mathbb{D}}\mathbf{f}_{\mathbb{D}} ~+~ \beta\text{diag}(\mathbf{M}\mathbf{G}_{\mathbb{D}})~+ \beta \mathbf{s}  \Big)
\label{eq:n-loss-update-grad-sup}
\end{aligned}
\end{equation}
where the mask is the special mask 
\begin{equation}
\begin{aligned} 
 \overline{\mathbf{M}}_{\mathbb{D}} =&~ \Big( \gamma_n \overline{\mathbf{M}} ~-~\beta \mathbf{M}  \Big) \\
  \mathbf{M}_{\mathbb{D}} =&~ \Big( \gamma_n \mathbf{M} ~-~\beta \overline{\mathbf{M}}  \Big) \\
  \gamma_n =&~ 2~-~\beta(2n-1) 
\label{eq:n-mask-grad-sup}
\end{aligned}
\end{equation}
Where diag(.) is the diagonal matrix of the vector argument or the diagonal vector of the matrix argument. $\mathbf{s}$ is a weighted sum of the gradient from other dominions ($i \neq k$) contributing to the update direction of dimension $k$, where $k \in \{1 , 2 ,...,n\}$.
\begin{equation}
\begin{aligned} 
\mathbf{s}_{k} &= \frac{1}{\mathbf{r}_{k}}\sum_{i=1, i\neq k }^{n}\mathbf{r}_{i}( (\overline{\mathbf{M}}_{i,:} - \mathbf{M}_{i,:})\odot \overline{\mathbf{M}}_{k,:} ) \mathbf{G}_{:,i} \\
\overline{\mathbf{s}}_{k} &= \frac{1}{\mathbf{r}_{k}}\sum_{i=1, i\neq k }^{n}\mathbf{r}_{i}( ( \mathbf{M}_{i,:} - \overline{\mathbf{M}}_{i,:} )\odot \mathbf{M}_{k,:} ) \mathbf{G}_{:,i} \\
&~~~k \in \{1 , 2 ,...,n\}
\label{eq:n-update-grad-selection-sup}
\end{aligned}
\end{equation}

\subsection{Trapezoidal Approximation Formulation} 
Here we use the trapezoidal approximation of the integral, a first-order approximation from Newton-Cortes formulas for numerical integration \cite{numerical}. The rule states that a definite integral can be approximated as follows: 
\begin{equation}
\begin{aligned} 
\int_{a}^{b}f(u)du \approx (b-a)\frac{f(a)+f(b)}{2}
\label{eq:trapezoidal-integration}
\end{aligned}
\end{equation}
asymptotic error estimate is given by $-\frac { ( b - a ) ^ { 2 } } { 48 } \left[ f ^ { \prime } ( b ) - f ^ { \prime } ( a ) \right] + \mathcal{O} \Big( 0.125 \Big)$. So as long the derivatives are bounded by some lipschitz constant $\mathbb{L}$, then the error becomes bounded by the following $|\text{error}| \leq \mathbb{L}( b - a ) ^ { 2 }  $. The regularized loss of interest in \eqLabel{\ref{eq:loss-naive-sup}} becomes the following .
\begin{equation}
\begin{aligned} 
L &= -\text{Area}_{\text{in}} + \lambda \left| b-a\right|_{2}^{2}\\ &\approx -(b-a)\frac{f(a)+f(b)}{2} + \lambda \left| b-a\right|_{2}^{2}  
\label{eq:loss-trap-1}
\end{aligned}
\end{equation}
taking the derivative of $L$ approximation directly with respect to these bounds , yields the following update directions which are different from the expressions in \eqLabel{\ref{eq:update-naive-1-sup}}
\begin{equation}
\begin{aligned} 
\pd{L}{a} = -\frac{b - a}{2}f^{\prime}(a)~ +~\frac{f(a) + f(b)}{2}  ~- \lambda (b-a)\\
\pd{L}{b} = -\frac{b - a}{2}f^{\prime}(b) ~-~\frac{f(a) + f(b)}{2}  ~+ \lambda (b-a)\\
\label{eq:update-trap-1}
\end{aligned}
\end{equation}
note that it needs the first derivative $f^{\prime}(.)$ of the function $f$ evaluated at the bound to update that bound.

\mysection{Extension to n-dimensions}\\
Lets start by $n=2$. Now, we have $f: \mathbb{R}^{2} \rightarrow (0,1)$, and we define the loss integral as a function of four bounds of a rectangler region and apply the trapezoidal approximation as follows.
\begin{equation}
\begin{aligned} 
&L(a_{1},a_{2},b_{1},b_{2}) = - \int_{a_{1}}^{b_{1}}\int_{a_{2}}^{b_{2}}f(u,v)dvdu \\&~~~~~~~~~~~~~~~~~~~~~~~~+ \frac{\lambda}{2} \left| b_1 -a_1 \right|_{2}^{2} + \frac{\lambda}{2} \left| b_2 -a_2 \right|_{2}^{2}  \\
&\approx ~~~~ \frac{\lambda}{2} \left| b_1 -a_1 \right|_{2}^{2} + \frac{\lambda}{2} \left| b_2 -a_2 \right|_{2}^{2}  \\&~~~~~- \frac{(b_{1}-a_{1})(b_{2}-a_{2})}{4}\Big( ~f(a_{1},a_{2})+f(b_{1},a_{2})+\\ &~~~~~~~~~~~~~~~~~~~~~~~~~~~~~~~~~~~~f(a_{1},b_{2})+f(b_{1},b_{2})~ \Big)
\label{eq:trapezoidal-integration2}
\end{aligned}
\end{equation}
Then following similar steps as in the one-dimensional case we can obtain the following update directions for the four bounds 
\begin{equation}
\begin{aligned} 
~~~~&\pd{L}{a_{1}} = (b_{1}-a_{1})(b_{2}-a_{2})\frac{f^{\prime}_{1}(a_{1},a_{2}) + f^{\prime}_{1}(a_{1},b_{2})}{4}\\ &-(b_{2} - a_{2})\frac{f(a_{1},a_{2}) + f(a_{1},b_{2}) + f(b_{1},a_{2}) + f(b_{1},b_{2}) }{4} \\ &-~ \lambda (b_1-a_1)
\label{eq:update-trap-2-1}
\end{aligned}
\end{equation}
\begin{equation}
\begin{aligned} 
~~~~&\pd{L}{a_{2}} = (b_{1}-a_{1})(b_{2}-a_{2})\frac{f^{\prime}_{2}(b_{1},a_{2}) + f^{\prime}_{2}(b_{1},b_{2})}{4}\\ &-(b_{2} - a_{2})\frac{f(a_{1},a_{2}) + f(a_{1},b_{2}) + f(b_{1},a_{2}) + f(b_{1},b_{2}) }{4}  \\ &-~ \lambda (b_2-a_2)
\label{eq:update-trap-2-2}
\end{aligned}
\end{equation}
\begin{equation}
\begin{aligned} 
~~~~&\pd{L}{b_{1}} = (b_{1}-a_{1})(b_{2}-a_{2})\frac{f^{\prime}_{1}(a_{1},a_{2}) + f^{\prime}_{1}(b_{1},a_{2})}{4}\\ &+(b_{1} - a_{1})\frac{f(a_{1},a_{2}) + f(a_{1},b_{2}) + f(b_{1},a_{2}) + f(b_{1},b_{2}) }{4}  \\ &+~ \lambda (b_1-a_1)
\label{eq:update-trap-2-3}
\end{aligned}
\end{equation}
\begin{equation}
\begin{aligned} 
~~~~&\pd{L}{b_{2}} = (b_{1}-a_{1})(b_{2}-a_{2})\frac{f^{\prime}_{2}(a_{1},b_{2}) + f^{\prime}_{2}(b_{1},b_{2})}{4}\\ &+(b_{1} - a_{1})\frac{f(a_{1},a_{2}) + f(a_{1},b_{2}) + f(b_{1},a_{2}) + f(b_{1},b_{2}) }{4}  \\ &+~ \lambda (b_2-a_2)
\label{eq:update-trap-2-4}
\end{aligned}
\end{equation}
Where $f^{\prime}_{1}(.) = \pd{f(u,v)}{u}, f^{\prime}_{2}(.) = \pd{f(u,v)}{v}$.
extending the 2-dimensional to general n-dimensions is straight forward. For $f: \mathbb{R}^{n} \rightarrow (0,1)$  , we define the following. Let the left bound vector be $\mathbf{a} = [a_{1},a_{2},...,a_{n}]$ and the right bound vector $\mathbf{b} = [b_{1},b_{2},...,b_{n}]$ define the n-dimensional hyper-rectangle region of interest. The region is then definesd as follows : $\mathbb{D} = \{\mathbf{x}: \mathbf{a} \leq \mathbf{x} \leq \mathbf{b}\}$, Here, we assume the size of the region is positive at every dimension , \ie $\mathbf{r} =  \mathbf{b} -  \mathbf{a} > \mathbf{0} $. The volume of the region $\mathbb{D}$ normalized by exponent of dimension $n$ is expressed as in \eqLabel{\ref{eq:n-vol-sup}}, the region  $\mathbb{D}$ is defined as in \eqLabel{\ref{eq:n-corners-sup}}, $\mathbf{M}$ is defined as in \eqLabel{\ref{eq:n-mask-sup}}, and $\mathbf{f}_{\mathbb{D}}$ is defined as in \eqLabel{\ref{eq:n-function-sup}}.
now we can see that the loss integral in \eqLabel{\ref{eq:trapezoidal-integration2}} becomes as follows .
\begin{equation}
\begin{aligned} 
L(\mathbf{a},\mathbf{b}) &= \idotsint_\mathbb{D} f(u_1,\dots,u_n) \,du_1 \dots du_n + \frac{\lambda}{2} \left| \mathbf{r}\right|^{2} \\ 
&\approx \triangle \mathbf{1}^{T}\mathbf{f}_{\mathbb{D}} ~+~~ \frac{\lambda}{2} \left| \mathbf{r}\right|^{2}
\label{eq:n-loss-trap}
\end{aligned}
\end{equation}
Similarly to \eqLabel{\ref{eq:n-function-sup}}, we define the gradient matrix $\mathbf{G}_{\mathbb{D}}$ as the matrix of all gradient vectors evaluated at all corner points of $\mathbb{D}$
\begin{equation}
\begin{aligned} 
\mathbf{G}_{\mathbb{D}} &= \left[\nabla f(\mathbf{d}^{1})~|~\nabla f(\mathbf{d}^{2})~|~...~|~\nabla f(\mathbf{d}^{2^{n}}) \right]^{T} 
\label{eq:n-gradient-sup}
\end{aligned}
\end{equation}

The update directions for the left bound $\mathbf{a}$ and the right bound $\mathbf{b}$ becomes as follows by the trapezoid approximation
\begin{equation}
\begin{aligned} 
\nabla_{\mathbf{a}}L  &\approx \triangle \Big(\text{diag}(\overline{\mathbf{M}}\mathbf{G}_{\mathbb{D}}) ~+~ \mathbf{1}^{T}\mathbf{f}_{\mathbb{D}} ~\text{diag}^{-1}(\mathbf{r})\mathbf{1}  \Big)+ \lambda \mathbf{r} \\
\nabla_{\mathbf{b}}L  &\approx \triangle \Big(\text{diag}(\mathbf{M}\mathbf{G}_{\mathbb{D}}) ~-~ \mathbf{1}^{T}\mathbf{f}_{\mathbb{D}} ~\text{diag}^{-1}(\mathbf{r})\mathbf{1}  \Big) - \lambda \mathbf{r}
\label{eq:n-update-trap}
\end{aligned}
\end{equation}
Where $\overline{\mathbf{M}}$ is the complement of the binary mask matrix $\mathbf{M}$.

\clearpage 
\section{Analysis}
\subsection{Detected Robust Regions}
We apply the algorithms on two semantic parameters ( the azimuth angle of the camera around the object, and the elevation angle around the object) that are regularly used in the literature \cite{sada,strike}. When we use one parameter ( the azimuth), we fix the elevation to $\ang{35}$. We used two instead of larger numbers because it is easier to verify and visualize 2D, unlike higher dimensions. Also, the complexity of sampling increase exponentially with dimensionality for those algorithms ( albeit being much better than grid sampling, see Table \ref{tbl:complexity-sup}). The regions in \figLabel{\ref{fig:ex1},\ref{fig:ex2}} look vertical rectangles most of the time. This is because the scale of the horizontal-axis (0,360) is much smaller than the vertical axis (-10,90), so most regions are squares but looks rectangles because of figure scales. 

\subsection{hyper-parameters}
How to select the hyper parameters in all the above algorithms? The answer is not straight forward. For the $\lambda$ in the naive approach, it is set experimentally by trying different values and using the one which detects some regions that known to behave robustly. The values we found for this are $\lambda = 0.07\sim0.1$. The learning rate $\eta$ is easier to detect with observing the convergence and depends on the full range of study. A rule of thumb is to make 0.001of the full range. For the OIR formulations, we have the boundary factor $\alpha$ which we set to 0.05. A rule of thumb is to set it to be between  $ 0.5 \sim 1/N$, where $N$ is the number of samples needed for that dimensions to adequately characterize the space. In our case $N=180$, so $1/180 \approx 0.005$. The only hyperparameter with mathematically established bound is the emphasis factor of the OIR\_W formulation $\beta$. The bound shown in Table \ref{tbl:complexity-sup} which is $0 \leq \beta \leq \frac{2}{2n-1}$ can be shown as follows. We start from \eqLabel{\ref{eq:n-mask-grad-sup}}. This expression is the actual expression for the special masks ( we apologize in the typos in the main paper ). As we can see, the most important term is $\gamma_n$. IT dictates how the function at the boundaries determine the next move of the bounds. Here $\gamma_n$ should always be positive to ensure the correct direction of the movement for positive function evaluation.
    \begin{equation}
\begin{aligned} 
 \gamma_n ~~~~~&> 0 \\
 2-(2n-1)\beta  &> 0  \\
 \beta~~~~~~~ &< \frac{2}{2n-2}
\label{eq:gamma}
\end{aligned}
\end{equation}
\subsection{Detest}
The data set used is collected from ShapeNet \cite{shapenet} and consists of 10 classes and 10 shapes eaach that are all identified by at least ResNet50 trained on ImagNet. This criteria is important to obtain valid NSM and DSM. The classes are \big['aeroplane',"bathtub",'bench', 'bottle','chair',"cup","piano",'rifle','vase',"toilet"\big]. Part of the dataset is shown in \figLabel{\ref{fig:conv1},\ref{fig:conv2}}. 4 shapes faced difficulty of rendering during the SRVR experiments , therefore , they were replaced by another shapes from the same class.

\subsection{Possible Future Directions}
We analyze DNNs from a semantic lens and show how more confident networks tend to create adversarial semantic regions inside highly confident regions. We developed a bottom-up approach to analyzing the networks semantically by growing adversarial regions, which scales well with dimensionality and we use it to benchmark the semantic robustness of DNNs. We aim to investigate how to use the insights we gain from this work to develop and train semantically robust networks from the start while maintaining accuracy. Another direct extension of our work is to develop large scale semantic robustness challenge where we label these robust/adversarial regions in the semantic space and release some of them to allow for training. Then, we test the trained models on the hidden test set to measure robustness while reporting the accuracy on ImageNet validation set to make sure that the features of the model did not get affected by the robust training.

\begin{table}[t]
\footnotesize
\setlength{\tabcolsep}{6pt} %
\renewcommand{\arraystretch}{1} %
\centering
\resizebox{\hsize}{!}{
\begin{tabular}{c||c|c|c|c|c|c} 
\toprule
\specialcell{\textbf{Analysis}\\ \textbf{Approach}} & \textbf{Paradigm}& \specialcell{\textbf{Total} \\ \textbf{Sampling} \\ \textbf{complexity} } & \specialcell{\textbf{Black} \\\textbf{-box}\\ \textbf{Functions} } & \specialcell{\textbf{Forward} \\ \textbf{pass} \\\textbf{/step} } & \specialcell{\textbf{Backward} \\ \textbf{pass} \\\textbf{/step} } & \specialcell{\textbf{Identification} \\ \textbf{Capabaility} } \\
\midrule
\textbf{Grid Sampling} &top-down &\specialcell{ $\mathcal{O}(N^{n})$\\$ N \gg 2$  }  & \textcolor{green}{\checkmark} & - & - & \specialcell{Fully identifies the \\semantic map of DNN}\\ \hline
\textbf{Naive} & bottom-up &$\mathcal{O}(2^{n})$  & \textcolor{green}{\checkmark}& $2^{n}$ &0 & \specialcell{finds strong robust\\ regions only around $\mathbf{u}_{0}$}  \\ \hline
\textbf{OIR\_B}& bottom-up &$\mathcal{O}(2^{n+1})$ & \textcolor{green}{\checkmark} & $2^{n+1}$ &0 & \specialcell{finds strong and\\ week robust regions\\ around $\mathbf{u}_{0}$ }  \\ \hline
\textbf{OIR\_W}& bottom-up &$\mathcal{O}(2^{n})$ & \textcolor{red}{\xmark}  & $2^{n}$& $2^{n}$ & \specialcell{finds strong and\\ week robust regions\\ around $\mathbf{u}_{0}$ }  \\ 
 \bottomrule
\end{tabular}
}
\vspace{-4pt}
\caption{\small \textbf{Semantic Analysis Techniques}: comparing different approaches to analyse the semantic robustness of DNN.}
\vspace{-4pt}
\label{tbl:complexity-sup}
\end{table}
\newpage
\begin{algorithm}[h] 
\caption{Robust n-dimensional Region Finding for Black-Box DNNs by Outer-Inner Ratios}\label{alg: black-sup}
\small
\SetAlgoLined
  \textbf{Requires: } Senatic Function of a DNN $f(\mathbf{u})$ in \eqLabel{\ref{eq:f-sup}}, initial semantic parameter $\mathbf{u}_{0}$, number of iterations T , learning rate $\eta$ , object shape $\mathbf{S}_{z}$ of class label $z$, boundary factor $\alpha$, smalll $\epsilon$ \\
   Form constant binary matrices $\mathbf{M}, \overline{\mathbf{M}},\mathbf{M}_{\mathbb{Q}},\overline{\mathbf{M}_{\mathbb{Q}}}, \mathbf{M}_{\mathbb{D}},\overline{\mathbf{M}_{\mathbb{D}}} $ \\
   Initialize bounds $\mathbf{a}_{0}\leftarrow \mathbf{u}_{0} - \epsilon \mathbf{1} $, $\mathbf{b}_{0} \leftarrow \mathbf{u}_{0}+- \epsilon \mathbf{1}$ \\
    $\mathbf{r}_{0} \leftarrow \mathbf{a}_{0}-\mathbf{b}_{0} $ , update region volume $ \triangle_{0} $ as in \eqLabel{\ref{eq:n-vol-sup}}\\
  \For{$t \leftarrow 1$ \KwTo $T$}{
   form the all-corners function vectors ${f}_{\mathbb{D}},{f}_{\mathbb{Q}}$ as in \eqLabel{\ref{eq:n-function-outer-sup}}\\
    $\nabla_{\mathbf{a}}L  \leftarrow 2\triangle_{t-1}\text{diag}^{-1}(\mathbf{r}_{t-1}) \Big(2\overline{\mathbf{M}}\mathbf{f}_{\mathbb{D}} ~-~ \overline{\mathbf{M}}_{\mathbb{Q}}\mathbf{f}_{\mathbb{Q}}  \Big)$ \\
$\nabla_{\mathbf{b}}L  \leftarrow 2\triangle_{t-1}\text{diag}^{-1}(\mathbf{r}_{t-1}) \Big(-2\mathbf{M}\mathbf{f}_{\mathbb{D}} ~+~ \mathbf{M}_{\mathbb{Q}}\mathbf{f}_{\mathbb{Q}}  \Big)$\\
    update bounds: $\mathbf{a}_{t}\leftarrow \mathbf{a}_{t-1} - \eta \nabla_{\mathbf{a}}L$, $\mathbf{b}_{t}\leftarrow \mathbf{b}_{t-1} - \eta \nabla_{\mathbf{b}}L$ \\
     $\mathbf{r}_{t} \leftarrow \mathbf{a}_{t}-\mathbf{b}_{t} $ , update region volume $ \triangle_{t} $ as in \eqLabel{\ref{eq:n-vol-sup}}
    }
    \textbf{Returns: }robust region bounds: $ \mathbf{a}_{T},\mathbf{b}_{T}$ .
\end{algorithm}

\begin{algorithm}[!b] 
\caption{Robust n-dimensional Region Finding for White-Box DNNs by Outer-Inner Ratios}\label{alg: white-sup}
\small
\SetAlgoLined
  \textbf{Requires: }  Senatic Function of a DNN $f(\mathbf{u})$ in \eqLabel{\ref{eq:f-sup}}, initial semantic parameter $\mathbf{u}_{0}$, , learning rate $\eta$ , object shape $\mathbf{S}_{z}$ of class label $z$, emphasis factor $\beta$, smalll $\epsilon$ \\
   Form constant binary matrices $\mathbf{M}, \overline{\mathbf{M}}, \mathbf{M}_{\mathbb{D}},\overline{\mathbf{M}_{\mathbb{D}}} $ \\
   Initialize bounds $\mathbf{a}_{0}\leftarrow \mathbf{u}_{0} - \epsilon \mathbf{1} $, $\mathbf{b}_{0} \leftarrow \mathbf{u}_{0}+- \epsilon \mathbf{1}$ \\
    $\mathbf{r}_{0} \leftarrow \mathbf{a}_{0}-\mathbf{b}_{0} $ , update region volume $ \triangle_{0} $ as in \eqLabel{\ref{eq:n-vol-sup}}\\
  \For{$t \leftarrow 1$ \KwTo $T$}{
   form the all-corners function vector ${f}_{\mathbb{D}}$ as in \eqLabel{\ref{eq:n-function-outer-sup}}\\
   form the all-corners gradients matrix $\mathbf{G}_{\mathbb{D}}$ as in \eqLabel{\ref{eq:n-gradient-sup}}\\ 
   form the gradient selection vectors $\mathbf{s} ,\overline{\mathbf{s}}$ as in \eqLabel{\ref{eq:n-update-grad-selection-sup}}
   $\nabla_{\mathbf{a}}L \leftarrow  \triangle_{t-1} \Big(\text{diag}^{-1}(\mathbf{r}_{t-1})\overline{\mathbf{M}}_{\mathbb{D}}\mathbf{f}_{\mathbb{D}} + \beta\text{diag}(\overline{\mathbf{M}}\mathbf{G}_{\mathbb{D}}+ \beta \overline{\mathbf{s}}  \Big)$  \\
$\nabla_{\mathbf{b}}L  \leftarrow  \triangle_{t-1} \Big(- \text{diag}^{-1}(\mathbf{r}_{t-1})\mathbf{M}_{\mathbb{D}}\mathbf{f}_{\mathbb{D}} + \beta\text{diag}(\mathbf{M}\mathbf{G}_{\mathbb{D}})+ \beta \mathbf{s}  \Big)$\\
    update bounds: $\mathbf{a}_{t}\leftarrow \mathbf{a}_{t-1} - \eta \nabla_{\mathbf{a}}L$, $\mathbf{b}_{t}\leftarrow \mathbf{b}_{t-1} - \eta \nabla_{\mathbf{b}}L$ \\
     $\mathbf{r}_{t} \leftarrow \mathbf{a}_{t}-\mathbf{b}_{t} $ , update region volume $ \triangle_{t} $ as in \eqLabel{\ref{eq:n-vol-sup}}
    }
    \textbf{Returns: }robust region bounds: $ \mathbf{a}_{T},\mathbf{b}_{T}$ .
\end{algorithm}

\end{document}